\documentclass[twoside]{article}

\usepackage[accepted]{aistats2021}
%
%


\setlength{\pdfpageheight}{11in}
\setlength{\pdfpagewidth}{8.5in}

\usepackage[round]{natbib}



\usepackage[utf8]{inputenc} 
\usepackage[T1]{fontenc}    
\usepackage{url}            
\usepackage{booktabs}       
\usepackage{amsfonts}       
\usepackage{nicefrac}       
\usepackage{microtype}      

 \usepackage{selectp}

\usepackage{amsmath, amssymb}
\usepackage{amsthm}
\usepackage{pifont} 

\usepackage{xspace}

\usepackage{booktabs}
\usepackage{multirow}

\usepackage{algorithm}
\usepackage{algpseudocode}

\usepackage[inline]{enumitem}

\usepackage{graphicx}
\usepackage{subcaption}
\usepackage{caption} 

\usepackage{todonotes}


\DeclareRobustCommand{\eg}{e.g.,\@\xspace}
\DeclareRobustCommand{\ie}{i.e.,\@\xspace}

\usepackage{graphicx}
\usepackage{amsmath}
\usepackage{pifont}


\makeatletter
\def\widebreve{\mathpalette\wide@breve}
\def\wide@breve#1#2{\sbox\z@{$#1#2$}%
	\mathop{\vbox{\m@th\ialign{##\crcr
				\kern0.08em\brevefill#1{0.8\wd\z@}\crcr\noalign{\nointerlineskip}%
				$\hss#1#2\hss$\crcr}}}\limits}
\def\brevefill#1#2{$\m@th\sbox\tw@{$#1($}%
	\hss\resizebox{#2}{\wd\tw@}{\rotatebox[origin=c]{90}{\upshape(}}\hss$}
\makeatletter 


\usepackage{xcolor}
\definecolor{Bleu}{RGB}{0,0,204}
\definecolor{Violet}{RGB}{102,0,204}
\definecolor{Rouge}{RGB}{204,0,0}
\definecolor{Highlight}{RGB}{251,0,0}
\definecolor{darkgreen1}{RGB}{0,100,0}
\definecolor{darkgreen2}{RGB}{0,125,0}
\definecolor{burntorange}{RGB}{150,64,0}
\definecolor{darkgreenblue}{RGB}{0,75,75}
\definecolor{darkpurple}{RGB}{80,20,70}
\definecolor{darkred}{RGB}{100,0,0}
\definecolor{darkblue}{RGB}{0,0,100}
\definecolor{orangered}{RGB}{255,70,0}
\definecolor{deeppink}{RGB}{255,20,147}

\usepackage{hyperref}
\hypersetup{
	colorlinks,
	citecolor=Bleu,
	linkcolor=darkblue,
	urlcolor=Violet}


\def \regret {\mathcal{R}}

\newcommand{\Sdist}[1]{ \rho_{\cX}\left(#1\right) }
\newcommand{\Adist}[1]{ \rho_{\cA}\left(#1\right) }
\newcommand{\Wassdist}[1]{ \mathbb{W}_1\left(#1\right) }

\def \lipschitz {Lipschitz\;}
\def \Sdistfunc {\rho_{\cX} }
\def \Adistfunc {\rho_{\cA} }
\def \distfunc {\rho }
\newcommand{\dist}[1]{ \rho\left[#1\right] }

\newcommand{\dirac}[1]{ \delta_{#1} }

\def \mdp {\mathcal{M}}

\def \actionspace {\mathcal{A}}
\def \statespace {\mathcal{X}}
\def \stateactionspace{\statespace\times\actionspace}
\def \transition {\mathrm{P}}
\def \reward {r}  
\def \obsreward {\widetilde{r}}  



\def \ouralgo {{\color{Violet}\texttt{KeRNS}}\xspace}
\def \kerns {{\color{Violet}\texttt{KeRNS}}\xspace}
\def \RSkerns {{\color{Violet}\texttt{RS}-}{\color{Violet}\texttt{KeRNS}}\xspace}

\def \bonus {\;\mathbf{\mathtt{B}}}    
\def \rbonus {{}^r\mathbf{\mathtt{B}}} 
\def \pbonus {{}^p\mathbf{\mathtt{B}}} 


\def \weight {w}                  
\def \normweight {\widetilde{w}}  
\def \estP {\widehat{P}}          
\def \estR {\widehat{r}}          
\def \kbeta {{\color{orangered}\beta}} 
\def \estMDP {{\color{violet}\widehat{\mathcal{M}}}}          
\def \kernsMDP {{\color{darkred}\widebreve{\mathcal{M}}}}          

\def \kernsR {{\color{darkred}\widebreve{r}}}          
\def \kernsP {{\color{darkred}\widebreve{P}}}          
\def \kernsBonus {{\color{darkred}\widebreve{\bonus}}}          
\def \kernsW{{\color{darkred}\widebreve{W}}} 
\def \kernsN{{\color{darkred}\widebreve{N}}} 
\def \kernsSumR{{\color{darkred}\widebreve{S}}} 
 
\def \kernsV {{\color{darkred}\widebreve{V}}}          
\def \kernsQ {{\color{darkred}\widebreve{Q}}}          

\def \kernsPolicy {{\color{darkred}\widebreve{\pi}}} 
\def \kernsDelta {{\color{darkred}\widebreve{\delta}}} 
\def \history {\cH}

\def \fullkernel {{\color{violet}\Gamma}}

\def \kernel {\overline{{\color{violet}\Gamma}}}
\def \spacekernel {\phi}

\def \spacekernelconstA {C_1}
\def \spacekernelconstB {C_2}
\def \ksigma {{\color{red}\sigma}}

\def \keta {{\color{burntorange}\eta}}  
\def \kW {{\color{darkgreenblue}W}}  
\def \timekernel {\chi_{(\keta, \kW)}}
\def \timekernelshort {\chi}
\def \timekernelconstA {C_3}
\def \timekernelconstB {G}

\def \lipP {{\color{darkgreen1}L_\mathrm{p}}} 
\def \lipR {{\color{darkgreen2}L_\mathrm{r}}} 
\def \lipQ {{\color{blue}L}} 


\def \gencount {\mathbf{C}}
\def \partitioncount {\mathbf{N}}

\def \cdelta{{\color{cyan}\delta}}

\def \timeForMaxA {{\color{darkgreen1}\tau_A}}


\def \goodevent {{\color{blue}\cG}}

\def \XAcovdim {{\color{Violet}d_1}}
\def \Xcovdim {{\color{Violet}d_2}}
\def \totalcovdim {{\color{Violet}d}}

\def \Nstates  {{\color{darkred}X}}
\def \Nactions {{\color{darkred}A}}

\def \NreprStates  {{\color{darkred}\bar{X}}}
\def \NreprActions {{\color{darkred}\bar{A}}}
\def \NreprNextStates {{\color{darkred}\bar{Y}}}

\def \ReprStates     {{\color{darkred}\bar{\mathcal{X}}}}
\def \ReprNextStates {{\color{darkred}\bar{\mathcal{Y}}}}
\def \ReprActions    {{\color{darkred}\bar{\mathcal{A}}}}


\def \bernbiasone {\theta_{\mathrm{b}}^3}
\def \bernbiastwo {\theta_{\mathrm{b}}^4}

\def \bonusbiasP { \mathbf{b}_{\mathrm{p}} }

\def \bonusbiasR { \mathbf{b}_{\mathrm{r}} }

\def \loghoeffdingR {\square_1^\mathrm{r}} 
\def \loghoeffdingP {\square_1^\mathrm{p}} 
\def \unifloghoeffdingP {\square_2^\mathrm{p}} 

\def \unifbiashoeffdingONE{\theta_{\mathrm{b}}^1}
\def \unifbiashoeffdingTWO{\theta_{\mathrm{b}}^2}

\def \logbernstein {\square_3}

\def \bias {\;\mathbf{bias}}
\def \biasR {\;\mathbf{bias}_\mathbf{r}}
\def \biasP {\;\mathbf{bias}_\mathbf{p}}

\def \tQ{\widetilde{Q}}   
\def \trueQ {\mathrm{Q}}  
\def \trueV {\mathrm{V}}  
\def \algQ {Q}  
\def \algV {V}  
\def \upperQ {Q^+} 
\def \upperV {V^+} 

\def \trueP{\transition}
\def \trueR{\reward}

\def \avmdp {\mdp^{\mathrm{av}}}
\def \avP {\overline{P}}  
\def \avR {\overline{r}}  


\def \variationR {{\color{darkpurple}\Delta^\mathrm{r}}}
\def \variationP {{\color{darkpurple}\Delta^\mathrm{p}}}
\def \variationMDPtotal {{\color{darkpurple}\Delta}}

\def \tx {\widetilde{x}}
\def \ta {\widetilde{a}}
\def \ty {\widetilde{y}}

\def \bx {\overline{x}}
\def \ba {\overline{a}}
\def \by {\overline{y}}

\def \maptoReprS {{\color{deeppink}\overline{\zeta}}}
\def \maptoReprSA {{\color{deeppink}\zeta}}
\def \mapzeta {{\color{deeppink}\zeta}}

\def \maxdist {{\color{red}\varepsilon}}
\def \Smaxdist {{\color{red}\varepsilon_{\cX}}}
\def \MAXDISTsigmacov { \left|{\color{red}\cC_{\maxdist}}\right| } 
\def \SMAXDISTsigmacov { \left|{\color{red}\cC_{\Smaxdist}'}\right| } 


\def \logplus{\log^+}

\newcommand{\boundedlipschitzclass}[2]{ \cL\left(#1, #2\right) }

\newcommand{\XAcovnumber}[1]{ \mathcal{N}\left(#1, \stateactionspace,  \distfunc\right) }

\newcommand{\Xcovnumber}[1]{ \mathcal{N}\left(#1, \statespace,  \Sdistfunc\right) }
\def \Xsigmacov { \left|{\color{red}\cC_\sigma'}\right| } 
\def \sigmacov { \left|{\color{red}\cC_\sigma}\right| } 
\def \sigmacovset { {\color{red}\cC_\sigma} } 

\newcommand{\XAcoverset}[1]{ \mathcal{C}_{\stateactionspace}\left(#1\right) }

\def \kernelucbvi {\texttt{Kernel-UCBVI}\xspace}
\def \RSkernelucbvi {\texttt{RS-Kernel-UCBVI}\xspace}
\def \RestartBaseline {\texttt{RestartBaseline}\xspace}

\def \dingone {\text{\ding{192}}\xspace}
\def \dingtwo {\text{\ding{193}}\xspace}

\def \termA {\mathbf{(A)}}
\def \termB {\mathbf{(B)}}
\def \termC {\mathbf{(C)}}
\def \termD {\mathbf{(D)}}

\def \eqdef { \overset{\mathrm{def}}{=} }
\newcommand{\pa}[1]{ \left(#1\right) }
\newcommand{\abs}[1]{ \left|#1\right| }
\newcommand{\braces}[1]{ \left\lbrace#1\right\rbrace  }
\newcommand{\sqrbrackets}[1]{\left[ #1 \right]}

\newcommand{\given}{\Big|}

\newcommand{\prob}[2][]{ \mathbb{P}_{#1} \left[ #2 \right] }
\newcommand{\expect}[2][]{ \mathbb{E}_{#1} \left[ #2 \right] }
\newcommand{\variance}[2][]{ \mathbb{V}_{#1} \left[ #2 \right] }




\def \rmd {\mathrm{d}}

\newcommand{\BigO}[1]{ \mathcal{O}\pa{#1} }
\newcommand{\BigOtilde}[1]{ \widetilde{\mathcal{O}}\pa{#1} }

\newcommand{\indic}[1]{ \mathbb{I}\braces{#1} }
\newcommand{\ceil}[1]{\left\lceil #1 \right\rceil}

\DeclareMathOperator*{\argmin}{argmin}
\DeclareMathOperator*{\argmax}{argmax}
\newcommand{\norm}[1]{ \left\Vert #1 \right\Vert }
\newcommand{\normm}[1]{ \Vert #1 \Vert }






\newtheorem{assumption}{Assumption}
\newtheorem{fact}{Fact}
\newtheorem{lemma}{Lemma}
\newtheorem{proposition}{Proposition}
\newtheorem{theorem}{Theorem}
\newtheorem{definition}{Definition}
\newtheorem{corollary}{Corollary}
\newtheorem{remark}{Remark}
\newtheorem{example}{Example}

	
\usepackage[most]{tcolorbox}
\newtcolorbox{blockquote}{colback=orange!15!white,boxrule=0pt}

\newenvironment{fcorollary}
{\begin{blockquote}\begin{corollary}}
		{\end{corollary}\end{blockquote}}
\newenvironment{fdefinition}
{\begin{blockquote}\begin{definition}}
		{\end{definition}\end{blockquote}}
\newenvironment{flemma}
{\begin{blockquote}\begin{lemma}}
		{\end{lemma}\end{blockquote}}
\newenvironment{ftheorem}
{\begin{blockquote}\begin{theorem}}
		{\end{theorem}\end{blockquote}}




\newcommand{\cA}{\mathcal{A}}

\newcommand{\cC}{\mathcal{C}}

\newcommand{\cF}{\mathcal{F}}
\newcommand{\cG}{\mathcal{G}}
\newcommand{\cH}{\mathcal{H}}

\newcommand{\cL}{\mathcal{L}}

\newcommand{\cN}{\mathcal{N}}

\newcommand{\cT}{\mathcal{T}}

\newcommand{\cX}{\mathcal{X}}


\newcommand{\NN}{\mathbb{N}}
\newcommand{\RR}{\mathbb{R}}



\usepackage{minitoc}


\begin{document}

\doparttoc 
\faketableofcontents 

%
\runningtitle{A Kernel-Based Approach to Non-Stationary RL in Metric Spaces}

%
\runningauthor{Omar D.\,Domingues, Pierre M\'enard,  Matteo Pirotta,  Emilie Kaufmann, Michal Valko}

\twocolumn[

\aistatstitle{A Kernel-Based Approach to Non-Stationary \\ Reinforcement Learning in Metric Spaces}

\aistatsauthor{ Omar D.\,Domingues${}^{1,2}$ \And Pierre M\'enard${}^{3}$\And  Matteo Pirotta${}^{4}$ \And  Emilie Kaufmann${}^{1,2,5}$ \And Michal Valko${}^{1,6}$ }
\aistatsaddress{${}^1$Inria Lille \ ${}^2$Universit\'e de Lille \ ${}^3$OvGU \ ${}^4$Facebook AI Research \ ${}^5$CNRS \  ${}^6$DeepMind Paris}
]

\begin{abstract}
  	In this work, we propose \kerns: an algorithm for episodic reinforcement learning in non-stationary Markov Decision Processes (MDPs) whose state-action set is endowed with a metric.
  	Using a non-parametric model of the MDP built with time-dependent kernels, we prove a regret bound that scales with the covering dimension of the state-action space and the total variation of the MDP with time, which quantifies its level of non-stationarity.  Our method generalizes previous approaches based on sliding windows and exponential discounting used to handle changing environments. We further propose a practical implementation of \kerns, we analyze its regret and validate it experimentally.
\end{abstract}

\section{Introduction}

In reinforcement learning (RL), an agent interacts with an environment by sequentially taking actions, receiving rewards and observing state transitions. One of the main challenges in RL is the trade-off between exploration, the act of gathering information about the environment, and exploitation, the act of using the current knowledge to maximize the sum of rewards. In non-stationary environments, handling this trade-off becomes much harder: what has been learned in the past may no longer be valid in the present. Therefore, the agent needs to constantly re-explore previously known parts of the environment to discover possible changes. In this work, we propose \ouralgo,\footnote{meaning \underline{Ke}rnel-based \underline{R}einforcement Learning in \underline{N}on-\underline{S}tationary environments.} an algorithm that handles this problem by acting optimistically and by forgetting data that are far in the past, which naturally causes the agent to keep exploring to discover changes. \ouralgo relies on non-parametric kernel estimators of the MDP, and the non-stationarity is handled by using time-dependent kernels.

The regret of an algorithm, defined as the difference between the rewards obtained by an optimal agent and the ones obtained by the algorithm, allows us to quantify how well an agent balances exploration and exploitation. We prove a regret bound for \ouralgo that holds in a challenging setting, where the state-action space can be continuous and the environment can change in every episode, as long as the cumulative changes remain small when compared to the total number of episodes. 

\textbf{Related work}~~ Regret bounds for RL in stationary environments have been extensively studied in finite (tabular) MDPs \citep{jaksch2010near,Azar2017,Dann2017,Jin2018,Zanette2019}, and also in metric spaces under \lipschitz continuity assumptions \citep{Ortner2013a,Song2019,Sinclair2019, domingues2020,sinclair2020adaptiveModelBased}. 
Recent works provide algorithms with regret bounds for non-stationary RL in the tabular setting \citep{Gajane2018,Gajane2019,Cheung2019}. These algorithms estimate the transitions and the rewards in an episode $k$ using the data observed up to episode $k-1$. However, since the MDP can change from one episode to another, these estimators are \emph{biased}. If nothing is done to handle this bias, the algorithms will suffer a linear regret \citep{Gajane2019} that depends on the magnitude of the bias. To deal with this issue, different approaches have been proposed: \cite{Gajane2018} and \cite{Cheung2019} use sliding windows to compute estimators that use only the most recently observed transitions, whereas \cite{Gajane2019} restart the algorithm periodically and, after each restart, new estimators are build and past data are discarded. In the multi-armed bandit literature, in addition to sliding windows, exponential discounting has also been used as a mean to give more importance to recent data \citep{kocsis06DUCB,Garivier11switching,russac2019weighted}.
In this paper, we study the \emph{dynamic regret} of the algorithm, where, in each episode $k$, we compare the learner to the optimal policy of the MDP in episode $k$. A related approach consists in comparing the performance of the learner to the best stationary policy in hindsight, \eg \citep{even2009online, yu2009online, neu2013online, dick2014online}, which is less suited to non-stationary environments, since the performance of any fixed policy can be very bad.
Non-stationary RL has also been studied outside the regret minimization framework, without, however, tackling the issue of exploration. For instance, \cite{Choi2000} propose a model where the MDP varies according to a sequence of tasks whose changes form a Markov chain.  \cite{Szita2002} and~\citet{Csaji2008} study the convergence of Q-learning when the environment changes but remain close to a fixed MDP. Assuming full knowledge of the MDP at each time step, but with unknown evolution, \cite{lecarpentier2019non} introduce a risk-averse approach to planning in slowly changing environments. In a related setting, \cite{lykouris2019corruption} study episodic RL problems where the MDP can be corrupted by an adversary and provide regret bounds in this case.  

\textbf{Contributions}~~ We provide the first regret bound for non-stationary RL in continuous environments. More precisely, we show that the \kernelucbvi algorithm of \citet{domingues2020}, based on non-parametric kernel smoothing, can be modified to tackle non-stationary environments by using appropriate time- and space-dependent kernels. We analyze the resulting algorithm, \kerns, under mild assumptions on the kernel, which in particular recover previously studied forgetting mechanisms to tackle non-stationarity in bandits and RL:  sliding windows \citep{Gajane2018} and exponential discounting \citep{kocsis06DUCB,Garivier11switching,russac2019weighted}, and allow for combinations between those. 
On the practical side, kernel-based approaches can be very computationally demanding since their complexity grows with the number of data points. Building on the notion of representative states, promoted in previous work on practical kernel-based RL  \citep{kveton2012kernel, barreto2016practical} we propose an efficient version of \kerns, called \RSkerns, which has constant runtime per episode. We analyze the regret of \RSkerns, showing that it enables a trade-off between regret and runtime, and we validate this algorithm empirically.

\section{Setting}

\paragraph{Notation} For any $n \in \NN^*$, let $[n] \eqdef \braces{1,\ldots, n}$. If $\mu$ and $P(\cdot|x, a)$ are measures for any $(x,a)$ and $f$  is an arbitrary function, we define $\mu f \eqdef \int f(y)\rmd\mu(y)$ and $Pf(x,a) \eqdef \int f(y)\rmd P(y|x, a)$.\footnote{See also Table \ref{tab:notations} in Appendix~\ref{app:prelimiaries}  summarizing the main notations used in the paper and in the proofs.}

\paragraph{Non-stationary MDPs} We consider an episodic RL setting where, in each episode $k \in [K]$, an agent interacts with the environment for $H \in \NN^*$ time steps. The time is indexed by $(k, h)$, where $k$ represents an episode and $h$ the time step within the episode.
The environment is modeled as a non-stationary MDP, defined by the tuple $\pa{\statespace, \actionspace, \reward, \transition}$, where $\statespace$ is the state space, $\actionspace$ is the action space, $\reward = \braces{\reward_h^k}_{k, h}$ and $\transition = \braces{\transition_h^k}_{k,h}$ are sets of reward functions and transition kernels, respectively. More precisely, when taking action $a$ in state $x$ at time $(k, h)$, the agent observes a random reward $ \obsreward_h^k\in[0, 1]$ with mean $\reward_h^k(x, a)$ and makes a transition to the next state according to the probability measure $\transition_h^k(\cdot|x, a)$. A deterministic policy $\pi$ is a mapping from $[H]\times\statespace$ to $\actionspace$, and we denote by $\pi(h, x)$ the action chosen in state $x$ at step $h$. The action-value function of a policy $\pi$ in step $h$ of episode $k$ is defined as 
\begin{align*}
	\trueQ_{k,h}^\pi(x, a) \eqdef \expect{\sum_{h'=h}^H \reward_{h'}^k(x_{h'}, a_{h'})\given x_h=x, a_h=a}
\end{align*}
where $x_{h'+1}\sim \transition_{h'}^k(\cdot|x_{h'}, a_{h'}),\; a_{h'} = \pi(h',x)$, and its value function is defined by $\trueV_{k,h}^\pi(x)= \trueQ_{k,h}^\pi(x, \pi(h, x))$. The optimal value functions, $\trueV_{k,h}^*(x) \eqdef \sup_\pi \trueV_{k,h}^\pi(x)$ satisfy the Bellman equations \citep{puterman2014markov}
\begin{align*}
	& \trueV_{k,h}^*(x) = \max_{a\in\actionspace}\trueQ_{k,h}^*(x,a),  \text{ where } \\
	& \trueQ_{k,h}^*(x,a) \eqdef \reward_h^k(x,a) + \transition_h^k \trueV_{k,h+1}^*(x, a)
\end{align*}
and where $\trueV_{k,H+1}^* = 0$ by definition.

\paragraph{Dynamic regret} The agent interacts with the environment in a sequence of episodes and, in each episode $k$, it uses a policy $\pi_k$ that can be chosen based on its observations from previous episodes. We measure its performance by the dynamic regret, defined as the sum over all episodes of the difference between the optimal value function in episode $k$ and the value of $\pi_k$:
\begin{align*}
 \regret(K) \eqdef \sum_{k=1}^K \pa{ \trueV_{k,1}^*(x_1^k) - \trueV_{k,1}^{\pi_k}(x_1^k)} 
\end{align*}
where $x_1^k$ is the starting state in each episode, which is chosen arbitrarily and given to the learner.

\paragraph{Assumptions} Since regret lower bounds scale with the number of states and actions \citep{jaksch2010near}, structural assumptions are needed in order to enable learning in continuous MDPs. A common assumption is that rewards and transitions are Lipschitz continuous with respect to some known metric \citep{Ortner2013a, Song2019, domingues2020, sinclair2020adaptiveModelBased}, which is the approach that we follow in this work. We make no assumptions regarding how the MDP changes, and our regret bounds will be expressed in terms of its total variation over time. 

\begin{assumption}
	\label{assumption:metric-state-space}
	The state-action space $\stateactionspace$ is equipped with a metric $\distfunc: (\stateactionspace)^2 \to \RR_+$, which is given to the learner. Also, we assume that there exists a metric $\Sdistfunc$ on $\statespace$ such that, for all $(x, x',a)$,
	$\dist{(x, a), (x', a)} \leq \Sdist{x, x'}$.\footnote{If $(\actionspace, \Adistfunc)$ is also a metric space, we can take $\dist{(x, a), (x', a')} = \Sdist{x, x'}+\Adist{a, a'}$, for instance. See Section 2.3 of \cite{Sinclair2019} for more examples and a discussion.}
\end{assumption}

\begin{assumption}
	\label{assumption:lipschitz-rewards-and-transitions}
	The reward functions are $\lipR$-\lipschitz and the transition kernels are $\lipP$-\lipschitz with respect to the 1-Wasserstein distance: $\forall (x, a, x', a')$ and $\forall (k, h) \in [K]\times[H]$,
	\begin{align*}
	& \abs{\reward_h^k(x,a) - r_h^k(x',a')} \leq \lipR \dist{(x,a), (x',a')}, \text{ and } \\
	& \Wassdist{\transition_h^k(\cdot|x, a), \transition_h^k(\cdot|x', a')} \leq \lipP \dist{(x,a), (x',a')}
	\end{align*}
	where, for two measures $\mu$ and $\nu$, we have
	$\Wassdist{\mu, \nu} \eqdef \sup_{f: \mathrm{Lip}(f) \leq 1}  \int_{\statespace} f(y)(\mathrm{d}\mu(y)-\mathrm{d}\nu(y))$
	and where, for any \lipschitz function $f:\statespace\to\RR$ with respect to $\Sdistfunc$, $\mathrm{Lip}(f)$ denotes its \lipschitz constant.
\end{assumption}

\begin{assumption}
	\label{assumption:q-function-is-lipschitz}
	For any $(k, h)$, the optimal $Q$-function $\trueQ_{k,h}^*$ is $\lipQ$-\lipschitz with respect to $\distfunc$. Assumptions \ref{assumption:metric-state-space} and \ref{assumption:lipschitz-rewards-and-transitions} imply that $\lipQ \leq \sum_{h=1}^H\lipR\lipP^{H-h}$ (Lemma \ref{lemma:value-functions-are-lipschitz} in the Appendix).
\end{assumption}

	\section{An Algorithm for Kernel-Based RL in Non-Stationary Environments}

	In this section, we introduce \ouralgo, a model-based RL algorithm for learning in non-stationary MDPs. In each episode $k$, we estimate the transitions and the rewards using the data observed up to episode $k-1$. Using exploration bonuses that represent the uncertainty in the estimated model, \ouralgo builds a $Q$-function $\algQ_h^k$, and plays the greedy policy with respect to it. \ouralgo generalizes sliding-window and exponential discounting approaches by considering time-dependent kernel functions, which also allow us to handle exploration in continuous environments \citep{domingues2020}.

	\subsection{Kernel-Based Estimators for Changing MDPs}


Let $\fullkernel: \NN \times (\stateactionspace)^2  \to [0, 1]$ be a \emph{non-stationary kernel function}, where $\fullkernel(t, u, v)$ represents the similarity between two state action pairs $u, v$ in $\stateactionspace$ visited at an interval $t$.

	\begin{definition}[kernel weights]
		\label{definition:kernel-weights}
		 Let $(x_h^s, a_h^s)$ be the state-action pair visited at time $(s,h)$. For any $(x, a) \in \stateactionspace$ and $s < k$, we define the weights and the normalized weights at time $(k, h)$ as
		\begin{align*}
		& \weight_h^{k, s}(x, a) \eqdef \fullkernel\pa{k-s-1, (x, a), (x_h^s, a_h^s)}
		\end{align*}
		and $\normweight_h^{k, s}(x, a) \eqdef \weight_h^{k, s}(x, a) /\gencount_h^{k}(x, a)$, 
		where $\gencount_h^{k}(x, a) \eqdef \kbeta + \sum_{s=1}^{k-1} \weight_h^{k, s}(x, a)$ and $\kbeta > 0$ is a regularization parameter.
	\end{definition}

	 Using the kernel function $\fullkernel$ and past data, \ouralgo builds estimators $\estR_h^k$ of the reward function and $\estP_h^k$ of the transitions at time $(k, h)$, which are defined below.

	\begin{definition}[empirical MDP]
		\label{definition:reward-and-transition-estimator}
			At time $(s, h) \in [K]\times[H]$, let $(x_h^s, a_h^s, x_{h+1}^s, \obsreward_h^s)$  represent the state, the action, the next state and the reward observed by the algorithm. Before each episode $k$, \ouralgo estimates the rewards and transitions using the data observed up to episode $k-1$:
			\begin{align*}
			& \estR_h^k(x, a) \eqdef \sum_{s=1}^{k-1} \normweight_h^{k, s}(x, a) \obsreward_h^s
			, 
			\\
			& 
			\estP_h^k (y|x, a) \eqdef  \sum_{s=1}^{k-1} \normweight_h^{k, s}(x, a) \dirac{x_{h+1}^s}(y)
		\end{align*}
		where $\dirac{x}$ is the Dirac measure at $x$. Let $\estMDP_k$ be the MDP whose rewards and transitions at step $h$ are $\estR_h^k(x, a)$ and $\estP_h^k (y|x, a)$.%
		\footnote{Since the normalized weights do not sum to 1, $\estP_h^k$ is not a probability kernel. In this case, we suffer a bias of order $\kbeta$ and the property that $\estP_h^k$ is a sub-probability measure is enough for the analysis.}
	\end{definition}

	The weights $\weight_h^{k, s}(x, a)$  measure the influence that the transitions and rewards observed at time $(s, h)$ will have on the estimators for the state-action pair $(x, a)$ at time $(k, h)$. Their sum, $\gencount_h^{k}(x, a)$, is a proxy for the number of visits to $(x, a)$.  Intuitively, the kernel function $\fullkernel$ must be designed in order to ensure that $\weight_h^{k, s}(x, a)$ is small when $(x, a)$ is very far from $(x_h^s, a_h^s)$, with respect to the distance $\distfunc$. It must also be small when $k-s-1$ is large, which means that the sample $(x_h^s, a_h^s)$ was collected too far in the past and should have a small impact on the estimators. For our theoretical analysis, we will need the assumptions below on the kernel function $\fullkernel$.

	\begin{assumption}[kernel properties]
		\label{assumption:kernel-properties}
		Let $\ksigma > 0$, $\keta \in ]0, 1[$ and $\kW \in \NN$ be the kernel parameters. For each set of parameters, we assume that we have access to a base kernel function $\kernel_{(\keta, \kW)} : \NN \times \RR \to [0, 1]$ and we define, for any $t, u, v \in \NN^* \times \stateactionspace$,
		\begin{align*}
			\fullkernel(t, u, v) = \kernel_{(\keta, \kW)}\pa{t, \dist{u, v}/\ksigma}.
		\end{align*}
		We assume that $z \mapsto \kernel_{(\keta, \kW)}(t, z)$ is non-increasing for any $t \in \NN$. Additionally, we assume that there exists positive constants $\spacekernelconstA, \spacekernelconstB$, a constant $\timekernelconstA\geq 0$ and an arbitrary function $\timekernelconstB:\RR \to \RR_{\geq0}$ that satisfies $\timekernelconstB(4) > 0$ such that
		\begin{align*}
			& 
			\mathbf{(1)}\quad 
			  \forall (t, z), \; \kernel_{(\keta, \kW)}(t, z) \leq \spacekernelconstA \exp\pa{-z^2/2} \\
			& 
			\mathbf{(2)}\quad
			  \forall (t, y, z),\; \abs{\kernel_{(\keta, \kW)}(t, y) - \kernel_{( \keta, \kW)}(t, z)} \leq \spacekernelconstB\abs{y-z} \\
			& 
			\mathbf{(3)}\quad
			  \forall z, \; \kernel_{(\keta, \kW)}(t, z) \leq  \timekernelconstA \keta^{t}, \quad \text{for all  } t \geq \kW \\
			& 
			\mathbf{(4)}\quad
			 \forall z, \; \kernel_{(\keta, \kW)}(t, z)  \geq  \timekernelconstB(z) \keta^{t}, \quad \text{for all  } t < \kW.
		\end{align*}
	\end{assumption}

	We now provide some justification for these conditions. (1) ensures that the bias due to kernel smoothing remains bounded by $\BigOtilde{\ksigma}$ (Lemma~\ref{lemma:kernel-bias}); (2) ensures smoothness conditions that are needed to provide concentration inequalities for the rewards and transitions (Lemma~\ref{lemma:weighted-average-and-bonuses-are-lipschitz}); (3) and (4) allow us to control the bias and the variance due to non-stationarity, respectively (Lemmas~\ref{lemma:bouding-the-temporal-bias} and~\ref{lemma:sum-of-bonus}). Intuitively, (3) says the algorithm should forget data further than $\kW$ episodes in the past, and  (4) says that recent data in the $\kW$ most recent episodes must have a minimum weight. The condition $G(4)>0$ is mostly technical: it is used to ensure that $\gencount_{h}^k(x,a)$ is not too small in a $4\ksigma$-neighborhood of $(x, a)$ (see lemmas \ref{lemma:lower-bound-generalized-counts} and \ref{lemma:sum-of-bonus}). The kernels in the example below satisfy our conditions, and show that they indeed generalize sliding-window and exponential discounting approaches:

	\begin{example}[sliding-window and exponential discount]
		The kernels $\kernel_{(\keta, \kW)}(t, z) = \indic{ t < \kW}\exp(-\abs{z}^p/2)$ (sliding-window)  and $\kernel_{(\keta, \kW)}(t, z) = \keta^{t}\exp(-\abs{z}^p/2)$ (exponential discount) satisfy Assumption \ref{assumption:kernel-properties} for $p \geq 2$.
	\end{example}

	The conditions in Assumption \ref{assumption:kernel-properties} are needed to prove our regret bounds. However, if one has further knowledge about the MDP and its changes, this information can also be integrated to the kernel function $\fullkernel$. For example, if the MDP only changes in certain region of the state-action space, the kernel can be designed to forget past data only in that region. Also, the kernel $\fullkernel$ can be designed to enforce restarts, as proposed by \cite{Gajane2019} for finite MDPs, by setting $\fullkernel(t, u, v)$ to zero every time $t$ exceeds a certain threshold. Although this would require a separate analysis, our proof could be combined to the one of \citep{Gajane2019} to obtain a regret bound in this case.

	\subsection{Algorithm}
	\ouralgo is presented in Algorithm~\ref{alg:kerns}. At time $(k, h)$, let $\bonus_h^k(x, a)$ be the exploration bonus at $(x, a)$ representing the uncertainty of $\estMDP_k$ with respect to the true MDP:
	\begin{align}
		\label{eq:main_text_bonus}
		\bonus_h^k(x, a) =  \BigOtilde{\frac{H}{\sqrt{\gencount_h^k(x, a)}} + \frac{\kbeta H}{\gencount_h^k(x, a)} + \lipQ \ksigma}
	\end{align}
	where $\BigOtilde{\cdot}$ hides logarithmic terms. The exact expression for the bonuses is given in Def.~\ref{def:exploration-bonuses} in Appendix~\ref{app:prelimiaries}. Before starting episode $k$, \ouralgo computes, for all $h \in [H]$, the values $\algQ_h^k$ by running backward induction on $\estMDP_k$, with the bonus $\bonus_h^k(x, a)$ added to the rewards, followed by an interpolation step:
	\begin{align*}
		& 
		\tQ_h^k(x, a) = \estR_h^k(x, a) + \estP_h^k \algV_{h+1}^k(x, a) + \bonus_h^k(x, a) \\
		& 
		\algQ_h^k(x, a) =\textrm{min}_{s\in[k-1]}\pa{\tQ_{h}^k(x_h^s, a_h^s)+\lipQ \dist{(x,a), (x_h^s, a_h^s)}} \\
		&  \algV_h^k(x) = \textrm{min}\pa{H-h+1, \textrm{max}_{a}\algQ_h^k(x, a)}
	\end{align*}

	where $\algV_{H+1}^k \eqdef 0$. The interpolation is needed to ensure that $\algQ_{h}^k$ and $\algV_h^k$ are $\lipQ$-Lipschitz.
	This procedure is defined in detail in Algorithm~\ref{alg:kernel_backward_induction} in Appendix~\ref{app:prelimiaries}, which is the same kind of backward induction used by \kernelucbvi \citep{domingues2020}. Once $\algQ_h^k$ is computed, \ouralgo plays the greedy policy associated to it. Notice that, although $\algQ_{h}^k(x, a)$ and $\algV_h^k(x)$ are defined for all $(x, a)$, they only need to be computed for the states and actions observed by the algorithm up to episode $k$.

	\begin{algorithm}[H]
		\centering
		\caption{\ouralgo}\label{alg:kerns}
		{\footnotesize
		\begin{algorithmic}[1]
			\State {\bfseries Input:} $K$, $H$, $\lipQ$, $\lipR$ , $\lipP$, $\kbeta$, $\cdelta$, $\totalcovdim$, $\ksigma$, $\keta$, $\kW$.
			\State Initialize history: $\cT_h = \emptyset$ for all $h\in[H]$.
			\For{episode $k=1, \ldots, K$}
			\State get initial state $x_1^k$
			\State { \color{darkgreen2}// Run kernel backward induction}
			\State compute $(\algQ_h^k)_h$ using $(\cT_h)_h$ and Algorithm \ref{alg:kernel_backward_induction}.
			\For{$h = 1, \ldots, H$}
			\State execute $a_h^k = \argmax_a \algQ_h^k(x_h^k, a)$
			\State observe reward $\obsreward_h^k$ and next state $x_{h+1}^k$
			\State store transition $\cT_h = \cT_h \cup \braces{ x_h^k, a_h^k, x_{h+1}^k, \obsreward_h^k}$
			\EndFor
			\EndFor
		\end{algorithmic}
		}
	\end{algorithm}

	\subsection{Theoretical guarantees}
	We introduce $\variationMDPtotal$, the total variation of the MDP in $K$ episodes:
	\begin{definition}[MDP variation]
	\label{def:mdp-variation}	
	 We define $\variationMDPtotal = \variationR + \lipQ\variationP$ , where
		\begin{align*}
		& \variationR \eqdef \sum_{i=1}^{K}\sum_{h=1}^H \sup_{x, a}\abs{\trueR_h^i(x, a) - \trueR_h^{i+1}(x, a)}
		,
		\\
		& 
		\variationP \eqdef \sum_{i=1}^{K}\sum_{h=1}^H \sup_{x, a}\Wassdist{\trueP_h^i(\cdot|x, a), \trueP_h^{i+1}(\cdot|x, a)}
		\end{align*}
	\end{definition}
	A similar notion has been introduced, for instance, by \cite{Gajane2019, li2019online} for MDPs and by \cite{besbes2014stochastic} for multi-armed bandits. Here, the difference is that we use the Wasserstein distance to define the variation of the transitions, instead of the total variation (TV) distance $\normm{\trueP_h^i(\cdot|x, a) - \trueP_h^{i+1}(\cdot|x, a)}_1$. This choice was made in order to take into account the metric $\distfunc$ when measuring changes in the environment: our results would be analogous if we had chosen the TV distance.\footnote{More precisely, in the proof of Corollary \ref{corollary:bias-between-avmdp-and-true-mdp}, the Wasserstein distance could be replaced by the TV distance.}


	Using the same algorithm, we provide two regret bounds for \kerns, which are given below. The notation $\lesssim$ omits constants and logarithmic terms (see Definition~\ref{def:lesssim-notation} in Appendix~\ref{app:prelimiaries}).

	\begin{theorem}
		\label{theorem:regret_main_text}
		The regret of \ouralgo is bounded as
		$\regret^{\kerns}(K) \lesssim
		\min\pa{\regret_1(K), \regret_2(K)} + \bias(\ksigma,\keta,\kW,\variationMDPtotal)$,
		where
		\begin{align*}
			& \regret_1(K) = H^2 K \sqrt{\log\frac{1}{\keta}}\sqrt{\Xsigmacov\sigmacov} + H^2\sigmacov K\log\frac{1}{\keta} \\
			&  \regret_2(K) =H^2 K \sqrt{\log\frac{1}{\keta}}\sqrt{\sigmacov}+H^3 \sigmacov\Xsigmacov K\log\frac{1}{\keta} \\
			& \bias(\ksigma,\keta,\kW,\variationMDPtotal) =  \kW\variationMDPtotal H + \frac{\keta^\kW}{1-\keta}KH^3 + \lipQ KH \ksigma
		\end{align*}
		with probability at least $1-\cdelta$. Here, $\Xsigmacov$ and $\sigmacov$ are the $\ksigma$-covering numbers of $(\statespace, \Sdistfunc)$ and $(\stateactionspace,\distfunc)$ respectively, $(\ksigma, \keta, \kW)$ are the kernel parameters.
	\end{theorem}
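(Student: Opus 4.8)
The plan is to adapt the optimism-in-the-face-of-uncertainty analysis of \kernelucbvi \citep{domingues2020} to non-stationary MDPs by introducing a \emph{weighted-average surrogate MDP}. Let $\avmdp_k$ be the (sub-)MDP with rewards $\avR_h^k(x,a) = \sum_{s<k}\normweight_h^{k,s}(x,a)\,\trueR_h^s(x,a)$ and transitions $\avP_h^k(\cdot|x,a) = \sum_{s<k}\normweight_h^{k,s}(x,a)\,\trueP_h^s(\cdot|x,a)$; this is the object that the kernel estimator $\estMDP_k$ concentrates around, up to the kernel-smoothing bias of part~(1) of Assumption~\ref{assumption:kernel-properties}. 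The first block of the proof is a concentration argument: defining a good event $\goodevent$ of probability at least $1-\cdelta$, show that $\abs{\estR_h^k(x,a) - \avR_h^k(x,a)} + \abs{(\estP_h^k - \avP_h^k)\algV_{h+1}^k(x,a)} \lesssim \bonus_h^k(x,a)$ simultaneously for all $(x,a,k,h)$. This combines self-normalized \emph{weighted} Hoeffding and Bernstein inequalities with a $\ksigma$-covering of $(\stateactionspace,\distfunc)$ --- and, for the transition term, a union bound over a net of $\lipQ$-Lipschitz value functions on $(\statespace,\Sdistfunc)$, whose log-cardinality is $\approx \Xsigmacov$ --- together with the Lipschitz smoothness of the weights from part~(2) and the $\BigOtilde{\lipQ\ksigma}$ smoothing bias from part~(1) (Lemma~\ref{lemma:kernel-bias}). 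This is the step that produces the covering numbers $\sigmacov,\Xsigmacov$ and the logarithmic factors absorbed by $\lesssim$, and it justifies the form of $\bonus_h^k$ in \eqref{eq:main_text_bonus}.

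\textbf{Temporal bias and optimism.} Next, control the gap between $\avmdp_k$ and the true MDP $\mdp_k$ of episode $k$. Writing $\avR_h^k(x,a) - \trueR_h^k(x,a) = \sum_{s<k}\normweight_h^{k,s}(x,a)\bigl(\trueR_h^s(x,a) - \trueR_h^k(x,a)\bigr)$, and likewise for transitions against $\trueV_{k,h+1}^*$, split the sum over $s$ into the $\kW$ most recent episodes --- where $\abs{\trueR_h^s - \trueR_h^k}$ telescopes into at most $\kW$ consecutive per-episode variations, contributing $\lesssim \kW\variationMDPtotal H$ after summing over $(k,h)$ --- and older episodes, where part~(3) forces the normalized weight down to $\lesssim \timekernelconstA\keta^{k-s-1}$ and the lower bound $\gencount_h^k \gtrsim \timekernelconstB(4)\keta^{\kW}$ of part~(4) (Lemma~\ref{lemma:lower-bound-generalized-counts}) makes these terms contribute $\lesssim \tfrac{\keta^{\kW}}{1-\keta}KH^3$ (Lemma~\ref{lemma:bouding-the-temporal-bias}). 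Together with the accumulated smoothing bias $\lipQ KH\ksigma$, these are exactly $\bias(\ksigma,\keta,\kW,\variationMDPtotal)$. A backward induction in $h$ on $\goodevent$ then shows that $\bonus_h^k$ dominates the statistical deviation plus the per-step smoothing bias, giving optimism $\algQ_h^k(x,a) + (\text{temporal bias from step }h\text{ on}) \geq \trueQ_{k,h}^*(x,a)$; the clipping/$\min$ and the $\lipQ$-Lipschitz interpolation in the definitions of $\algQ_h^k,\algV_h^k$ preserve this inequality and keep the iterates $\lipQ$-Lipschitz, so that the covering arguments above apply with constant $\lipQ$.

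\textbf{Regret decomposition.} By optimism, $\trueV_{k,1}^*(x_1^k) \leq \algV_1^k(x_1^k) + (\text{temporal bias})$, hence $\regret(K) \leq \sum_{k=1}^K\bigl(\algV_1^k(x_1^k) - \trueV_{k,1}^{\pi_k}(x_1^k)\bigr) + \bias(\ksigma,\keta,\kW,\variationMDPtotal)$. Unrolling $\algV_h^k - \trueV_{k,h}^{\pi_k}$ along the trajectory $(x_h^k,a_h^k)$ generated by the greedy policy $\pi_k$, each backward step produces one bonus $\bonus_h^k(x_h^k,a_h^k)$, a statistical deviation controlled on $\goodevent$, a smoothing/temporal bias term, and a martingale difference $\trueP_h^k\algV_{h+1}^k(x_h^k,a_h^k) - \algV_{h+1}^k(x_{h+1}^k)$ whose sum over the $KH$ steps is of lower order ($\BigOtilde{H^2\sqrt{HK}}$ by Azuma--Hoeffding) and is absorbed. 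What remains is $\regret(K) \lesssim \sum_{k=1}^K\sum_{h=1}^H\bigl(\tfrac{H}{\sqrt{\gencount_h^k(x_h^k,a_h^k)}} + \tfrac{\kbeta H}{\gencount_h^k(x_h^k,a_h^k)}\bigr) + \bias(\ksigma,\keta,\kW,\variationMDPtotal)$.

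\textbf{Bounding the sum of bonuses --- the main obstacle.} The crux is to bound this sum of inverse square roots of \emph{kernel-weighted} generalized counts, which, unlike ordinary visit counts, need not increase with $k$ since old data is forgotten. I would argue locally: partition $\stateactionspace$ into $\sigmacov$ balls of radius $\ksigma$; by part~(4), every previous visit to the same ball within the last $\kW$ episodes contributes weight $\gtrsim \timekernelconstB(4)\keta^{\kW}$ to $\gencount_h^k$, so inside each ball the generalized count behaves like a discounted count as in non-stationary bandits \citep{Garivier11switching,russac2019weighted}, saturating at $\Theta(1/\log\tfrac{1}{\keta})$ under frequent revisits; summing the resulting series over balls and over the $H$ steps gives $\regret_1(K)$, the extra $\Xsigmacov$ there being the log-cardinality of the Lipschitz value-function net entering the Hoeffding bonus for $\estP_h^k\algV_{h+1}^k$ (Lemma~\ref{lemma:sum-of-bonus}). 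A second, variance-aware accounting --- using Bernstein-type bonuses together with the law of total variance (the sum over $h$ of the per-step conditional variances of $\trueV_{k,\cdot}^{\pi_k}$ is $\lesssim H^2$), which confines that $\Xsigmacov$ to the lower-order additive $H^3\sigmacov\Xsigmacov K\log\tfrac{1}{\keta}$ --- yields $\regret_2(K)$; taking $\min(\regret_1,\regret_2)$ completes the proof. The genuinely delicate points are (i) re-deriving the pigeonhole/potential argument for $\sum 1/\sqrt{\gencount}$ when the counts are non-monotone and kernel-weighted, which forces the summation to be organized over the $\kW$-window and the $\ksigma$-covering balls, and (ii) tracking the precise $\log\tfrac{1}{\keta}$ dependence: because the effective count saturates rather than growing linearly in the number of visits, one obtains the $K\sqrt{\log\tfrac{1}{\keta}}$ and $K\log\tfrac{1}{\keta}$ factors instead of the $\sqrt{K}$ scaling of the stationary case.
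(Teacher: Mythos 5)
Your proposal follows essentially the same route as the paper's proof: concentration of the kernel estimates around a weighted-average MDP on a good event (with a $\ksigma$-covering of $\stateactionspace$ and a net over Lipschitz value functions), a temporal-bias term split at the window $\kW$ (telescoping recent variation plus an exponentially discounted tail), optimism up to the accumulated bias via backward induction, a regret decomposition into bonuses at (nearest) visited pairs plus martingale terms, and a pigeonhole over $\ksigma$-covering balls with discounted counts saturating at $\Theta(1/\log(1/\keta))$, with the Hoeffding-over-the-Lipschitz-class versus Bernstein treatments of the correction term yielding $\regret_1$ and $\regret_2$ respectively. Two small fixes to your justifications: the old-episode part of the temporal bias is controlled simply by $\gencount_h^k \geq \kbeta$ (not by the part-(4) lower bound, which requires a spatially close recent sample), and in the bonus-sum argument the per-visit weight must be lower bounded over the last $\ceil{1/\log(1/\keta)}$ episodes, where it is $\gtrsim \timekernelconstB(4)e^{-1}$, since using $\keta^{\kW}$ over the full window $\kW$ would be far too weak.
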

		\textit{Proof.}\ \ This result comes from combining theorems  \ref{theorem:regret-bound-ucrl-type} and \ref{theorem:regret-bound-ucbvi-type} in Appendix~\ref{app:regret_bounds}. See Section~\ref{sec:proof-outline-main} for a proof outline.

	As discussed below, after optimizing the kernel parameters (Table \ref{tab:regret_with_optimized_constants}), the bound $\regret_1$ has a worse dependence on $K$, and a better dependence on $\variationMDPtotal$. On the other hand, $\regret_2$ is better with respect to $K$, but worse in $\variationMDPtotal$. Concretely, this trade-off may give hints on how to choose the kernel parameters according to the amount of variation that we expect to see in the environment. Technically, the difference comes from how we handle the concentration of the transitions in the proof. To obtain $\regret_1$, we use concentration inequalities on the term $|(\estP_h^k-\trueP_h^k) f|$ for \emph{all} functions $f$ that are bounded and \lipschitz continuous. To obtain $\regret_2$, the concentration is done only for $f = \trueV_{k,h+1}^*$, but this results in larger second-order terms, as in \citep{Azar2017, domingues2020}.

	\begin{corollary}
		Let $\totalcovdim$ be the covering dimension of $(\stateactionspace,\distfunc)$. By optimizing the kernel parameters, we obtain the regret bounds in Table \ref{tab:regret_with_optimized_constants}. Table \ref{tab:regret_with_optimized_constants-full} in Appendix \ref{sec:full-table} gives the values of $(\ksigma, \keta,  \kW)$ that yield these bounds.
	\end{corollary}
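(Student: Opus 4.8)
The plan is to specialize Theorem~\ref{theorem:regret_main_text} to the case where the covering numbers are controlled by the covering dimension, and then to tune the three kernel parameters by balancing the resulting terms. First I would use the definition of the covering dimension: for $\ksigma$ below some threshold, $\sigmacov = \BigO{\ksigma^{-\totalcovdim}}$, and the state-space covering number $\Xsigmacov$ is bounded the same way (the covering dimension of $(\statespace,\Sdistfunc)$ being no larger than $\totalcovdim$). Substituting these bounds into $\regret_1$, $\regret_2$ and $\bias(\ksigma,\keta,\kW,\variationMDPtotal)$ from Theorem~\ref{theorem:regret_main_text}, and writing $\gamma \eqdef \log(1/\keta)$, the regret becomes, up to constants and logarithmic factors,
\begin{align*}
  \regret^{\kerns}(K) &\lesssim \min\!\Bigl( H^2 K \ksigma^{-\totalcovdim}(\sqrt{\gamma}+\gamma),\ H^2 K \ksigma^{-\totalcovdim/2}\sqrt{\gamma} + H^3 K \ksigma^{-2\totalcovdim}\gamma\Bigr) \\
  &\qquad {}+ \kW \variationMDPtotal H + \frac{\keta^\kW}{1-\keta}KH^3 + \lipQ K H \ksigma .
\end{align*}

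Next I would eliminate $\kW$: choosing $\kW = \Theta\!\pa{\gamma^{-1}\log(KH)}$ makes the exponential-tail term $\frac{\keta^\kW}{1-\keta}KH^3$ of the same (or lower) order as the smoothing bias $\lipQ KH\ksigma$, while turning the first bias term into $\kW\variationMDPtotal H = \Theta\!\pa{\gamma^{-1}\log(KH)\,\variationMDPtotal H}$, so only $\ksigma$ and $\gamma$ remain free. I would then optimize over $\gamma$: in each branch the statistical term scales like $\sqrt{\gamma}$ while the non-stationarity bias scales like $1/\gamma$, so balancing $A\sqrt{\gamma}$ against $B/\gamma$ gives $\gamma \asymp (B/A)^{2/3}$ and a combined contribution of order $A^{2/3}B^{1/3}$ — this is where the exponent $2/3$, familiar from non-stationary bandits, enters. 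Finally I would optimize over $\ksigma$: after the previous step the $\ksigma$-dependent part is a sum of a term of the form $(H^2K)^{2/3}\ksigma^{-c\,\totalcovdim}(\variationMDPtotal H)^{1/3}$ (with $c=2/3$ for the $\regret_1$ branch, and analogous $\ksigma^{-\totalcovdim/3}$ and $\ksigma^{-4\totalcovdim/3}$ contributions for the $\regret_2$ branch) plus the smoothing bias $\lipQ KH\ksigma$; balancing these fixes $\ksigma$ as a power of $K$, $\variationMDPtotal$, $H$ and $\lipQ$, and yields a regret of order $K^{\alpha}\variationMDPtotal^{\beta}$ with the appropriate exponents. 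Carrying this out separately for the two branches of the $\min$ produces the two rows of Table~\ref{tab:regret_with_optimized_constants}, and the optimizers $(\ksigma,\keta,\kW)$ obtained along the way are exactly those recorded in Table~\ref{tab:regret_with_optimized_constants-full}.

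I expect the main difficulty to be bookkeeping rather than conceptual: it is a nested optimization over three coupled parameters, each branch has several additive terms that balance differently, and one has to (i) check that the optimizers are admissible — $\gamma>0$, i.e.\ $\keta\in(0,1)$, $\kW\in\NN$, and $\ksigma$ small enough for the covering-dimension estimate to hold — clipping them at the boundary and verifying the bound still holds otherwise, (ii) track how the logarithmic factors ($\log(1/\keta)$, $\log(KH)$, and those depending on $\cdelta$) propagate through the substitutions, which is precisely why the statement is phrased with $\lesssim$, and (iii) note that which branch of the $\min$ is smaller depends on the regime of $\variationMDPtotal$ relative to $K$, so the table entries should be read as the minimum of the two. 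A minor additional point is that the simplification $\sqrt{\gamma}+\gamma \asymp \sqrt{\gamma}$ implicit in the optimization is legitimate only once one has checked that the optimal $\gamma$ is at most a constant, which holds whenever $\variationMDPtotal$ is not too large compared to $K$.
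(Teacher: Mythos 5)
Your overall strategy is exactly the paper's: substitute $\sigmacov,\Xsigmacov=\BigO{\ksigma^{-\totalcovdim}}$ into Theorem~\ref{theorem:regret_main_text}, neutralize the exponential tail by taking $\kW\approx\log\pa{K/(1-\keta)}/\log(1/\keta)$ (which is the choice in Table~\ref{tab:regret_with_optimized_constants-full} and corollaries~\ref{corollary:regret-bound-ucrl-type-with-optimized-constants} and~\ref{corollary:regret-bound-ucbvi-type-with-optimized-constants}), and then tune $\keta$ and $\ksigma$ by balancing; your derivation of the $\regret_1$ row is correct and recovers $\log(1/\keta)\asymp\pa{\variationMDPtotal/(K\ksigma^{-\totalcovdim})}^{2/3}$ and $\ksigma=K^{-1/(2\totalcovdim+3)}$, hence $H^2\variationMDPtotal^{1/3}K^{(2\totalcovdim+2)/(2\totalcovdim+3)}$.

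However, there is a genuine gap in the $\regret_2$ branch. You assert that \emph{in each branch} one balances $A\sqrt{\gamma}$ against $B/\gamma$, getting a $2/3$-power and a $\variationMDPtotal^{1/3}$ dependence; that is not how the $\regret_2$ row of Table~\ref{tab:regret_with_optimized_constants} arises, and it cannot be, since that row scales as $\variationMDPtotal^{1/2}K^{(2\totalcovdim+1)/(2\totalcovdim+2)}$. With $\gamma=\log(1/\keta)$, the $\regret_2$ branch is $H^2K\sqrt{\gamma}\,\ksigma^{-\totalcovdim/2}+H^3K\gamma\,\ksigma^{-2\totalcovdim}$, and the paper's tuning (Corollary~\ref{corollary:regret-bound-ucbvi-type-with-optimized-constants}, Table~\ref{tab:regret_with_optimized_constants-full}) balances the \emph{$\gamma$-linear second-order term} $H^3K\gamma\ksigma^{-2\totalcovdim}$ against the bias $\variationMDPtotal H/\gamma$, giving the square-root choice $\log(1/\keta)\asymp\variationMDPtotal^{1/2}H^{-1}K^{-(2\totalcovdim+1)/(2\totalcovdim+2)}$ and, after balancing with $\lipQ KH\ksigma$, $\ksigma=K^{-1/(2\totalcovdim+2)}$; the $\sqrt{\gamma}$ first-order term is then left subdominant and produces exactly the extra $H^{3/2}\variationMDPtotal^{1/4}K^{3/4}$ term in the table. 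If you instead run your uniform $2/3$ balance on the $\sqrt{\gamma}$ term (which is what your claimed $\ksigma^{-\totalcovdim/3}$ and $\ksigma^{-4\totalcovdim/3}$ contributions correspond to), you obtain a bound of the form $\variationMDPtotal^{1/3}K^{\cdot}$ with different exponents and different optimizers, so you would reproduce neither the $\regret_2$ row of Table~\ref{tab:regret_with_optimized_constants} nor the $(\ksigma,\keta)$ values of Table~\ref{tab:regret_with_optimized_constants-full} that the corollary asserts. A smaller point: the tabular row $\totalcovdim=0$ is handled in the paper by setting $\ksigma=0$ (covering numbers $\Nstates\Nactions$ and $\Nstates$) rather than by your $\ksigma$-balancing step, so it needs a separate, though easy, argument.
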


	\textit{Proof.}\ \ Assuming that $\Xsigmacov\leq\sigmacov$, we have that $\sigmacov$ and $\Xsigmacov$ are $\BigO{1/\ksigma^\totalcovdim}$. Then, the bounds follow from Theorem \ref{theorem:regret_main_text}.  The general case, handling separately the covering dimensions of $(\stateactionspace,\distfunc)$ and $(\statespace,\Sdistfunc)$, is stated in corollaries~\ref{corollary:regret-bound-ucrl-type-with-optimized-constants} and~\ref{corollary:regret-bound-ucbvi-type-with-optimized-constants} in Appendix \ref{app:regret_bounds}.

	 \textbf{Discussion} \ \ We now discuss regret bounds for optimized kernel parameters, according to the covering dimension of $(\stateactionspace,\distfunc)$, denoted by $\totalcovdim$. Roughly, the covering dimension is the smallest number $\totalcovdim \geq 0$ such that the $\ksigma$-covering number $\sigmacov$ is $\BigO{1/\ksigma^\totalcovdim}$.\footnote{For more details about covering numbers and covering dimension, see Section 3 of \citet{kleinberg2019bandits} and Section 2.2 of \citet{Sinclair2019}.}	 
	 We consider two cases: the tabular (finite MDP) case, where the covering dimension of $(\stateactionspace,\distfunc)$ is $\totalcovdim = 0$, and the continuous case, where $\totalcovdim > 0$.

	\textbf{Tabular case} \ \ Let $\Nstates = \abs{\statespace}$ and $\Nactions = \abs{\actionspace}$. By taking $\ksigma = 0$, we have $\Xsigmacov =  \Nstates$ and $\sigmacov = \Nstates\Nactions$. As shown in Table \ref{tab:regret_with_optimized_constants}, the $\regret_1$ bound states that the regret of \kerns is $\BigOtilde{H^2 \Nstates \sqrt{\Nactions} \variationMDPtotal^{\frac{1}{3}} K^{\frac{2}{3}}}$. This bound matches the one proved by \cite{Gajane2019} for the average reward setting using restarts, up to a factor of $H^{\frac{2}{3}}$ coming from our episodic setting, where the transitions $\trueP_h^k$ depend on $h$. The $\regret_2$ bound states that the regret of \kerns can be improved to $\BigOtilde{H^2 \sqrt{\Nstates\Nactions } \variationMDPtotal^{\frac{1}{3}}K^{\frac{2}{3}}}$, up to second-order terms. In the bandit case ($H=1$), these bounds are \emph{optimal} in terms of $K$ and $\variationMDPtotal$ \citep{besbes2014stochastic}.

	\textbf{Continuous case} \ \ For $\totalcovdim>0$, we prove the first dynamic regret bounds in our setting, which are of order $H^2 \variationMDPtotal^{\frac{1}{3}} K^{\frac{2\totalcovdim+2}{2\totalcovdim+3}}$ (better in $\variationMDPtotal$) or $H^2 \variationMDPtotal^{\frac{1}{2}} K^{\frac{2\totalcovdim+1}{2\totalcovdim+2}}$ (better in $K$) for two different tunings of the kernel.
    Deriving a lower bound in the non-stationary case for $\totalcovdim > 0$ is an open problem, even for multi-armed bandits.
	As a sanity-check, we note that in stationary MDPs, for which $\variationMDPtotal = 0$, we recover the regret bound of \kernelucbvi\footnote{Another choice of $\keta$ might allow us to avoid the dependence on $H^3$ of \kernelucbvi and get $H^2$ instead.}~\citep{domingues2020} of $H^3K^{\frac{2\totalcovdim}{2\totalcovdim+1}}$ from the bound $\regret_2$ with $\log(1/\keta)=1/K$, $\kW\to\infty$ and $\ksigma=K^{-\frac{1}{2\totalcovdim+1}}$, which is optimal for $\totalcovdim = 1$ in the (stationary) bandit case \citep{bubeck2011x}.

	In tabular MDPs, we may achieve sub-linear regret as long as $\variationMDPtotal < K$.\footnote{Notice that, if $\variationMDPtotal$ scales linearly with the number of episodes $K$, we cannot expect to learn. Indeed, according to the lower bound \citep{besbes2014stochastic}, the regret is necessarily linear in this case.} In the continuous case however, our bounds show that we might need $\variationMDPtotal < K^{\frac{3}{2\totalcovdim+3}}$ (for the $\regret_1$ bound) or $\variationMDPtotal < K^{\frac{1}{\totalcovdim+1}}$ (for the $\regret_2$ bound)  in order to avoid a linear regret, which is an immediate consequence of the bounds in Table \ref{tab:regret_with_optimized_constants}.
	
	\begin{table}[ht!]
		\centering
		\caption{Regret for optimized kernel parameters.}
		\label{tab:regret_with_optimized_constants}
		\begin{tabular}{@{}l|cl@{}}
			\toprule
			  & bound & regret \\
			\midrule
			\multirow{2}{*}{$\totalcovdim = 0$}
			    & $\regret_1$      & $H^2 \Nstates \sqrt{\Nactions} \variationMDPtotal^{\frac{1}{3}} K^{\frac{2}{3}}$        \\
			    & $\regret_2$       &  $H^2 \sqrt{\Nstates\Nactions } \variationMDPtotal^{\frac{1}{3}}K^{\frac{2}{3}}    + H^3 \Nstates^2\Nactions\variationMDPtotal^{\frac{2}{3}}K^{\frac{1}{3}}$      \\
			\midrule
			\multirow{2}{*}{$\totalcovdim > 0$}
			  & $\regret_1$         &  $H^2 \variationMDPtotal^{\frac{1}{3}} K^{\frac{2\totalcovdim+2}{2\totalcovdim+3}}$      \\
			  & $\regret_2$ & $H^2 \variationMDPtotal^{\frac{1}{2}} K^{\frac{2\totalcovdim+1}{2\totalcovdim+2}} + H^{\frac{3}{2}} \variationMDPtotal^{\frac{1}{4}}K^{\frac{3}{4}}$        \\
			\cmidrule(l){1-3}
		\end{tabular}
	\end{table}

	\textbf{Knowledge of $\variationMDPtotal$} \ \ To optimally choose the kernel parameters, \kerns requires an upper bound on the variation $\variationMDPtotal$. Recent work has started to tackle this issue in bandit algorithms \citep{chen2019new,auer2019adaptively}, and finite MDPs using sliding windows \citep{Cheung2019}. Their extension to continuous MDPs is left to future work.

\section{Efficient Implementation}

Since \ouralgo uses non-parametric kernel estimators, its computational complexity scales with the number of observed transitions. Let $\timeForMaxA$ be the time required to compute the maximum of $a\mapsto\algQ_h^k(x, a)$. Similarly to \kernelucbvi, its total space complexity is $\BigO{KH}$ and its time complexity per episode $k$ is $\BigO{H k^2 + H\timeForMaxA k}$, resulting in a total runtime of $\BigO{HK^3 + H\timeForMaxA K^2}$. This runtime is very prohibitive in practice, especially in changing environments, where we might need to run the algorithm for a very long time. \cite{domingues2020} propose a version of \kernelucbvi with improved per-episode time complexity of $\BigO{H\timeForMaxA k}$ based on real-time dynamic programming (RTDP) \citep{Barto1995rtdp,efroni2019tight}. However, this requires the upper bounds $\algV_h^k$ to be non-increasing, which is not the case in \ouralgo, since $\algV_h^k$ increases in regions that were not visited recently. This property is necessary to promote extra exploration and adapt to possible changes. Additionally, the RTDP-based approach of \cite{domingues2020} still has a time complexity that scales with time, which can be a considerable issue in practice.
Here, we propose an alternative to run \kerns in \emph{constant} time per episode, while controlling the impact of this speed-up on the regret.

\subsection{Using Representative States and Actions}
As proposed by \cite{kveton2012kernel} and \cite{barreto2016practical}, we take an approach based on using \emph{representative states} to construct an algorithm called \RSkerns (for \kerns on Representative States). In each episode $k$, \RSkerns keeps and updates sets of representative states $\ReprStates_h$, actions $\ReprActions_h$ and next-states $\ReprNextStates_h$, for each $h$, whose cardinalities are denoted by  $\NreprStates_h, \NreprActions_h$ and $\NreprNextStates_h$, respectively. For simplicity, we omit the dependence on $k$ of these sets and their cardinalities.
Every time a new transition $\braces{x_h^k, a_h^k, x_{h+1}^k, \obsreward_h^k}$ is observed, the representative sets are updated using Algorithm \ref{alg:update-representative-states-main-text}, which ensures that any two representative state-action pairs are at a distance greater than $\maxdist$ from each other. Similarly, it ensures that any pair of representative next-states are at a distance greater than $\Smaxdist$ from each other. Then,  $(x_h^k, a_h^k)$ and $x_{h+1}^k$ are mapped to their nearest neighbors in $\ReprStates_h\times\ReprActions_h$ and $\ReprNextStates_h$, respectively, and the estimators of the rewards and transitions are updated. Consequently, we build a finite MDP, denoted by $\kernsMDP_k$, with $\NreprStates_h$ states, $\NreprActions_h$ actions and $\NreprNextStates_h$ next-states, \emph{per stage $h$}. The rewards and transitions of $\kernsMDP_k$ can be stored in arrays of size $\NreprStates_h\NreprActions_h$ and $\NreprStates_h\NreprActions_h\NreprNextStates_h$, for each $h$. 

	\RSkerns is described precisely in Algorithm~\ref{alg:RS-kerns-ONLINE} in Appendix~\ref{app:rs_kerns_new}. It computes a $Q$-function for all $(\bx,\ba)\in\cup_h\ReprStates_h\times\ReprActions_h$ by running backward induction in $\kernsMDP_k$, which is then extended to any $(x,a)\in\stateactionspace$ by performing an interpolation step, as in \kerns. In Appendix~\ref{app:rs_kerns_new}, we explain  how the rewards and transitions estimators of $\kernsMDP_k$ can be updated online. Below, we provide regret and runtime guarantees for this efficient implementation.

\begin{algorithm}[H]
	\centering
	\caption{Update Representative Sets}\label{alg:update-representative-states-main-text}
	{\footnotesize
		\begin{algorithmic}[1]
			\State {\bfseries Input: $k$, $h$, $\ReprStates_h$, $\ReprActions_h$, $\ReprNextStates_h$, $\braces{x_h^k, a_h^k, x_{h+1}^k}$}, $\maxdist$, $\Smaxdist$.
			\If{$\min_{(\bx,\ba)\in\ReprStates_h\times\ReprActions_h}\dist{(\bx,\ba), (x_h^k, a_h^k)} > \maxdist$}
			\State $\ReprStates_h = \ReprStates_h \cup \braces{x_h^k}$, \ \  $\ReprActions_h = \ReprActions_h \cup \braces{a_h^k}$
			\EndIf
			\If{$\min_{\by\in\ReprNextStates_h}\Sdist{\bx, x_{h+1}^k} > \Smaxdist$}
			\State $\ReprNextStates_h = \ReprNextStates_h \cup \braces{x_{h+1}^k}$
			\EndIf
		\end{algorithmic}
	}
\end{algorithm}

\subsection{Theoretical Guarantees \& Runtime}

Theorem \ref{theorem:rs-kerns-regret-main-text} states that \RSkerns enjoys the same regret bounds as \kerns plus a bias term that can be controlled by $\maxdist$ and $\Smaxdist$, as long as we use a Gaussian kernel.

\begin{theorem}[regret of \RSkerns]
	\label{theorem:rs-kerns-regret-main-text}
	Let $\timekernel:\NN \to [0, 1]$, $u, v\in\stateactionspace$, and consider the kernel
	\begin{align*}
	\fullkernel(t, u, v) = \timekernel(t) \exp\pa{-\dist{u, v}^2/(2\ksigma^2)}
	\end{align*}
	assumed to satisfy Assumption \ref{assumption:kernel-properties}. With this choice of kernel, the regret of \RSkerns is bounded by
	\begin{align*}
		\regret(K) \lesssim  \regret^{\kerns}(K)+  \lipQ(\maxdist + \Smaxdist)KH^2 + \frac{\maxdist}{\ksigma}KH^3  
	\end{align*}
	with probability at least $1-\cdelta$, where $\regret^{\kerns}$ is regret bound of \kerns given in Theorem \ref{theorem:regret_main_text}.
\end{theorem}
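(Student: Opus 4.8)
The key observation is that \RSkerns is nothing but \kerns run on a \emph{rounded} version of the data: every visited pair $(x_h^s,a_h^s)$ is replaced by its nearest representative $\maptoReprSA(x_h^s,a_h^s)\in\ReprStates_h\times\ReprActions_h$ (at $\distfunc$-distance at most $\maxdist$), every observed next-state $x_{h+1}^s$ is replaced by its nearest representative $\maptoReprS(x_{h+1}^s)\in\ReprNextStates_h$ (at $\Sdistfunc$-distance at most $\Smaxdist$), and then the backward induction and the $\min$-interpolation of \kerns are applied on the resulting finite MDP $\kernsMDP_k$. So the plan is to (i) bound, uniformly in $(k,h,x,a)$, the discrepancy between the quantities produced by \RSkerns ($\kernsR_h^k$, $\kernsP_h^k\kernsV_{h+1}^k$, $\kernsBonus_h^k$, $\kernsQ_h^k$, $\kernsV_h^k$) and those \kerns would produce on the unrounded data, and then (ii) re-run the \kerns regret analysis of Theorem~\ref{theorem:regret_main_text} with these discrepancies absorbed into an extra bias term.

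For step (i), the restriction to a Gaussian space-kernel is exactly what is needed. Writing $\fullkernel(t,u,v)=\timekernelshort(t)\exp\pa{-\dist{u, v}^2/(2\ksigma^2)}$ and using that $z\mapsto\exp(-z^2/2)$ is $1$-Lipschitz, moving the data point $v$ by at most $\maxdist$ changes the weight $\weight_h^{k,s}(x,a)$ by at most $\timekernelshort(k-s-1)\,\maxdist/\ksigma$. Summing over $s<k$ and dividing by the generalized count $\gencount_h^k$ — which, by Lemma~\ref{lemma:lower-bound-generalized-counts} and the condition $G(4)>0$ of Assumption~\ref{assumption:kernel-properties}, is bounded below by a constant times $\sum_{s}\timekernelshort(k-s-1)$ on a $4\ksigma$-neighbourhood — shows that the normalized weights, hence $\kernsR_h^k$ and the action of $\kernsP_h^k$ on any $\lipQ$-Lipschitz function bounded by $H$, are perturbed by $\BigO{\maxdist H/\ksigma}$, and that $\gencount_h^k$ itself is perturbed only by a multiplicative factor $1\pm\BigO{\maxdist/\ksigma}$, so the bonus $\bonus_h^k\propto H/\sqrt{\gencount_h^k}$ is multiplicatively perturbed by the same factor. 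Rounding the next-states contributes a further error of at most $\lipQ\Smaxdist$ to $\kernsP_h^k\kernsV_{h+1}^k$, since $\kernsV_{h+1}^k$ is $\lipQ$-Lipschitz (enforced by the interpolation) and the Dirac masses move by at most $\Smaxdist$; and reading off $\kernsQ_h^k$ at a visited pair from its representative costs an additional $\lipQ\maxdist$ through the interpolation step.

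For step (ii), one propagates these per-stage perturbations through the backward induction exactly as in \kerns: a bounded additive error at stage $h$ in $\kernsR_h^k+\kernsP_h^k\kernsV_{h+1}^k+\kernsBonus_h^k$ accumulates to the sum of the stage errors by monotonicity of the $\min/\max$ and interpolation operators, which yields optimism $\kernsQ_h^k(x,a)\ge\trueQ_{k,h}^*(x,a)-\textrm{bias}$ with the same good-event argument as \kerns (the rounded estimators concentrate around the rounded MDP, which is within $\BigO{(\maxdist+\Smaxdist)\lipQ}$ per stage of the true MDP), up to an optimism-preserving inflation of the bonuses by the multiplicative factor above. The dynamic-regret decomposition of \kerns then goes through verbatim: the sum of the (inflated) bonuses reproduces $\regret^{\kerns}(K)$, and the accumulated rounding errors, summed over the $KH$ stages — with the relative bonus perturbation multiplying the $\BigOtilde{H^2K\sqrt{\cdot}}$ bonus sum and the additive value errors telescoping over the horizon — give $\BigO{\lipQ(\maxdist+\Smaxdist)KH^2+(\maxdist/\ksigma)KH^3}$, the claimed extra term.

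\textbf{Main obstacle.} The delicate point is the normalization: the rounding perturbs both the numerator (the weight sum) and the denominator (the generalized count) of the normalized weights, so a crude Lipschitz estimate does not suffice — one must combine the weight perturbation with a \emph{lower} bound on $\gencount_h^k$ in the relevant $4\ksigma$-neighbourhood, which is precisely where Assumption~\ref{assumption:kernel-properties}(4), the condition $G(4)>0$, and the generalized-count lemmas enter, and where the Gaussian product form is used so that the time kernel $\timekernelshort$ factors out and $\sum_{s}\timekernelshort(k-s-1)$ cancels against $\gencount_h^k$. A second point to check rather than assume is that the data-dependent rounding does not break the martingale structure behind the concentration inequalities: since the representative sets at time $(k,h)$ are a deterministic function of the previously observed states and actions, the maps $\maptoReprSA,\maptoReprS$ are predictable, so the concentration arguments of \kerns apply to the rounded observations unchanged.
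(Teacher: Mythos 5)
Your overall architecture matches the paper's: bound the discrepancy between the \RSkerns quantities and the \kerns quantities built on the unrounded data, propagate it through the backward induction, and re-run the \kerns regret analysis with the rounding error absorbed as an extra additive term (this is what Lemmas \ref{lemma:error-in-P-when-usin-representative-states_NEW}--\ref{lemma:diff-between-Q-in-kerns-and-RS-kerns} and Theorems \ref{theorem:regret-RS-kerns-ucrl-type}--\ref{theorem:regret-RS-kerns-ucbvi-type} do). However, the step you yourself flag as the delicate one — controlling the normalization — is resolved by an argument that does not work. You claim that $\gencount_h^k(x,a)$ is lower bounded, on a $4\ksigma$-neighbourhood, by a constant times $\sum_{s<k}\timekernelshort(k-s-1)$, so that the temporal sum "cancels" against the denominator. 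Lemma \ref{lemma:lower-bound-generalized-counts} gives nothing of the sort: it lower bounds $\gencount_h^k$ by $\kbeta + \timekernelconstB(4)e^{-1}$ times the number of visits to the \emph{same partition cell within the last $U_\keta$ episodes}, and only on the trajectory event $\dist{(x_h^k,a_h^k),(\tx_h^k,\ta_h^k)}\le 2\ksigma$. If the cell has been visited rarely (or if $(x,a)$ is a representative pair far from recent data, which is where the discrepancy bounds must also hold for the backward induction), $\gencount_h^k$ can be as small as $\kbeta$ while your numerator perturbation $\sum_s\timekernelshort(k-s-1)\maxdist/\ksigma$ can be of order $k\maxdist/\ksigma$. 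The resulting crude bound on the normalized-weight perturbation is $\BigO{k\maxdist/(\kbeta\ksigma)}$ per step — this is exactly the generic Lipschitz constant of Lemma \ref{lemma:weighted-average-and-bonuses-are-lipschitz} — and summing over episodes would give an extra regret of order $K^2\maxdist/\ksigma$, not the claimed $K\maxdist/\ksigma$. The same problem invalidates your claim that the bonus is only multiplicatively perturbed by $1\pm\BigO{\maxdist/\ksigma}$, since that too presupposes a uniform lower bound on $\gencount_h^k$.

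The mechanism the paper actually uses is different and is the real content of the Gaussian restriction: for the Gaussian space-kernel the derivative of the weight with respect to the distance is proportional to the weight itself times $z_s/\ksigma^2$, so the perturbation of the \emph{normalized} sum is controlled by $\max_s|Y_s|\cdot\ksigma^{-2}\sum_s\normweight_h^{k,s}z_s$, and Lemma \ref{lemma:kernel-bias} bounds this self-normalized sum by $2\ksigma(1+\sqrt{\logplus(k/\kbeta)})$. This yields Lipschitz constants of order $(1/\ksigma)(1+\sqrt{\logplus(k/\kbeta)})$ — only logarithmic in $k$ — for the estimators, the bonuses, and their representative-state counterparts (Lemma \ref{lemma:gaussian-case-weighted-average-and-bonuses-are-lipschitz} and Corollary \ref{corollary:lipschitznes_of_kerns_with_gaussian_kernel}); this is precisely the improvement over the $\exp(1/\ksigma^2)$-type bounds of prior work advertised in the main text, and it is what makes the extra term linear in $K$. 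Two further (minor) points: the data are rounded with the projections available at the time of observation, $\maptoReprSA_h^{s+1}$, which differ from the current $\maptoReprSA_h^{k}$ by up to $2\maxdist$ (and $2\Smaxdist$ for next states), an extra source of error your argument does not account for; and the paper never applies concentration to the rounded observations — it compares \RSkerns to the \kerns estimators on the same good event $\goodevent$ — so your predictability argument, while plausible, replaces a step that can be avoided entirely.
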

\textit{Proof.} \ \ This result comes from theorems~\ref{theorem:regret-RS-kerns-ucrl-type} and~\ref{theorem:regret-RS-kerns-ucbvi-type} in Appendix~\ref{app:rs_kerns_new}. See Appendix~\ref{app:proof_outline} for a proof outline.

\begin{lemma}[runtime of \RSkerns]
	Consider the kernel defined in Theorem \ref{theorem:rs-kerns-regret-main-text}, and 
	let $\keta \in ]0, 1]$. 
	If we use the exponential-discount kernel
	$\timekernel(t) = \keta^t$, the per-episode runtime of \RSkerns is bounded by 
	$$\BigO{H \min\pa{k^2, \MAXDISTsigmacov\SMAXDISTsigmacov} + H\min\pa{k,\SMAXDISTsigmacov}\timeForMaxA},$$
	where $\MAXDISTsigmacov$ is the $\maxdist$-covering number of $(\stateactionspace, \distfunc)$, $\SMAXDISTsigmacov$ is the $\Smaxdist$-covering number of $(\statespace, \distfunc)$, and $\timeForMaxA$ is the time required to compute the maximum of $a\mapsto\algQ_h^k(x, a)$.
\end{lemma}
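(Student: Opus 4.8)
The plan is to bound separately the three operations that \RSkerns performs in each episode $k$: (i) updating the representative sets and the estimators of $\kernsMDP_k$ when the $H$ new transitions are observed, (ii) running backward induction in the finite MDP $\kernsMDP_k$ to compute a $Q$-function on the representative state-actions, and (iii) any bookkeeping needed to keep the per-episode cost constant once the representative sets saturate. The key observation that makes everything work is that the exponential-discount kernel $\timekernelshort(t)=\keta^t$ factorizes over time, so the weighted sums $\kernsW_h^{k,s}$, $\kernsGencount_h^k$, $\kernsSumR_h^k$ restricted to the representative grid satisfy a simple recursion of the form (new value) $=\keta\cdot$(old value) $+$ (contribution of the new transition), exactly as in the online update described in Appendix~\ref{app:rs_kerns_new}. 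Hence each of these arrays, whose sizes are $\BigO{H\NreprStates_h\NreprActions_h}$ and $\BigO{H\NreprStates_h\NreprActions_h\NreprNextStates_h}$, can be refreshed in time proportional to its size, with no dependence on the number of past episodes.

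\textbf{First} I would handle step (i). By construction (Algorithm~\ref{alg:update-representative-states-main-text}), any two representative state-action pairs are at distance $>\maxdist$ and any two representative next-states are at distance $>\Smaxdist$; a standard packing argument then gives $\NreprStates_h\NreprActions_h \le \MAXDISTsigmacov$ and $\NreprNextStates_h \le \SMAXDISTsigmacov$ for every $h$. On the other hand, at most one new representative state-action and one new representative next-state are created per time step, so after $k$ episodes we also have $\NreprStates_h\NreprActions_h\le k$ and $\NreprNextStates_h\le k$. Finding the nearest neighbor of $(x_h^k,a_h^k)$ in the grid and the nearest neighbor of $x_{h+1}^k$ among the representative next-states costs $\BigO{\NreprStates_h\NreprActions_h + \NreprNextStates_h}$, and applying the multiplicative recursion to the rewards-array and transitions-array costs $\BigO{\NreprStates_h\NreprActions_h\NreprNextStates_h}$; summing over $h$ gives $\BigO{H\min\pa{k^2,\MAXDISTsigmacov\SMAXDISTsigmacov}}$, which is the first term in the claimed bound (absorbing the lower-order nearest-neighbor cost).

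\textbf{Then} I would bound step (ii): backward induction in $\kernsMDP_k$ loops over the $H$ stages, and at stage $h$ computes, for each representative state-action $(\bx,\ba)$, the value $\kernsR_h^k(\bx,\ba)+\kernsP_h^k\kernsV_{h+1}^k(\bx,\ba)+\kernsBonus_h^k(\bx,\ba)$ — an inner product over the $\NreprNextStates_h$ representative next-states — followed by the maximization over actions. The transition-application part costs $\BigO{H\NreprStates_h\NreprActions_h\NreprNextStates_h}\le\BigO{H\min\pa{k^2,\MAXDISTsigmacov\SMAXDISTsigmacov}}$, already subsumed by the previous term; the maximization over $a\mapsto\kernsQ_h^k(\bx,\ba)$ is performed once per representative next-state (it only needs to be evaluated at states that can actually be reached, i.e. the representative next-states), costing $\BigO{H\NreprNextStates_h\timeForMaxA}\le\BigO{H\min\pa{k,\SMAXDISTsigmacov}\timeForMaxA}$, which is the second term. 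The interpolation step that extends $\kernsQ_h^k$ to all of $\stateactionspace$ is not part of the per-episode preprocessing — it is evaluated lazily only at the $H$ states actually visited — so it does not enter this bound (or, if one insists on charging it, it contributes $\BigO{H\NreprStates_h\NreprActions_h}$, again subsumed).

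\textbf{The main obstacle} is the accounting in step (iii): one must make sure that nothing in the online update forces a sweep over past episodes. This is precisely where the factorized form $\timekernelshort(t)=\keta^t$ of the kernel is essential — a sliding-window or a general time kernel would require remembering the transitions that fall out of the window, breaking the constant-per-episode guarantee — and also where one has to check that the regularizer $\kbeta$ and the bonus formula in Definition~\ref{def:exploration-bonuses}, when specialized to the representative grid, depend on the data only through the already-maintained arrays $\kernsGencount_h^k$ (and the logarithmic terms, which depend on $k$ but not on the data). Once these checks are in place, summing the three contributions yields
\begin{align*}
\BigO{H \min\pa{k^2, \MAXDISTsigmacov\SMAXDISTsigmacov} + H\min\pa{k,\SMAXDISTsigmacov}\timeForMaxA},
\end{align*}
as claimed.
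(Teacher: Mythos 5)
Your proposal is correct and follows essentially the same route as the paper: bound $\NreprStates_h\NreprActions_h \leq \min(k,\MAXDISTsigmacov)$ and $\NreprNextStates_h \leq \min(k,\SMAXDISTsigmacov)$ by the packing/one-per-step argument, charge backward induction $\BigO{\sum_h\NreprStates_h\NreprActions_h\NreprNextStates_h + \timeForMaxA\sum_h\NreprNextStates_h}$, and use the factorization $\timekernelshort(t)=\keta^t$ to update the model recursively in $\BigO{\sum_h\NreprStates_h\NreprActions_h\NreprNextStates_h}$ per episode, exactly as the paper does with the details deferred to Appendix~\ref{app:online-updates}. Your extra remarks (lazy interpolation, the special handling of a newly added representative pair) match the paper's online-update accounting rather than deviating from it.
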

\textit{Proof.} \ \
	By construction, in any episode $k$, we have $\NreprStates_h\NreprActions_h \leq \min\pa{k, \MAXDISTsigmacov}$ and $\NreprNextStates_h \leq \min\pa{ k,\SMAXDISTsigmacov}$. Backward induction (Algorithm~\ref{alg:RS-kernel_backward_induction-ONLINE}) is performed in $\BigO{\sum_h\NreprStates_h\NreprActions_h\NreprNextStates_h + \timeForMaxA\sum_h\NreprNextStates_h }$ time, and the choice of $\timekernel(t) $ implies that the model updates can be done in $\BigO{\sum_h\NreprStates_h\NreprActions_h\NreprNextStates_h}$ time, as detailed in Appendix~\ref{app:online-updates}.

Consequently, the constants $\maxdist$ and $\Smaxdist$  provide a trade-off between regret and computational complexity. Since $\MAXDISTsigmacov= \BigO{\maxdist^{-\XAcovdim}
}$ and $\SMAXDISTsigmacov= \BigO{\Smaxdist^{-\Xcovdim}}$, increasing $(\maxdist, \Smaxdist)$ may reduce \emph{exponentially} the runtime of \RSkerns, while having only a \emph{linear} increase in its regret.

%

\citet{kveton2012kernel} and \citet{barreto2016practical} studied the idea of using representative states to accelerate kernel-based RL (KBRL), but we provide the first regret bounds in this setting.  More precisely, our result improves previous work in the following aspects:
\begin{enumerate*}[label=(\roman*)]
	\item \cite{kveton2012kernel} and \citet{barreto2016practical} do not tackle exploration and do not have finite-time analyses: they provide approximate versions of the KBRL algorithm of \cite{Ormoneit2002} which has asymptotic guarantees assuming that transitions are generated from \emph{independent} samples;
	\item The error bounds of \cite{kveton2012kernel} scale with $\exp(1/\ksigma^2)$. In our online setting, $\ksigma$ can be chosen as a function of the horizon $K$, and their bound could result in an error that scales exponentially with $K$, instead of linearly, as ours. Our result comes from an improved analysis of the smoothness of kernel estimators, that leverages the regularization constant $\kbeta$ (Lemma \ref{lemma:gaussian-case-weighted-average-and-bonuses-are-lipschitz});
	\item \cite{barreto2016practical} propose an algorithm that also builds a set of representative states in an online way. However, their theoretical guarantees only hold when this set is \emph{fixed}, \ie cannot be updated during exploration, whereas our bounds hold in this case; \item unlike \citep{kveton2012kernel, barreto2016practical}, our theoretical results also hold in continuous action spaces.
\end{enumerate*}

\subsection{Numerical Validation}

	To illustrate the behavior of \RSkerns, we consider a continuous MDP whose state-space is the unit ball in $\RR^2$ with four actions, representing a move to the right, left, up or down. The agent starts at $(0, 0)$. Let $b_i^k \in \braces{0, 0.25, 0.5, 0.75, 1}$ and $x_i \in \braces{(0.8, 0.0), (0.0,0.8),(-0.8, 0.0), (0.0,-0.8)}$. We consider the following mean reward function:
	\begin{align*}
		\trueR_h^k(x,a) = \sum_{i=1}^4 b_i^k \max\pa{0, 1 - \frac{\norm{x-x_i}_2}{0.5}} 
	\end{align*}
	which do not depend on $h$. Every $N$ episodes, the coefficients $b_i^k$ are changed, which impact the optimal policy.

\begin{figure}[ht]
	\centering
	\includegraphics[width=0.8\linewidth]{./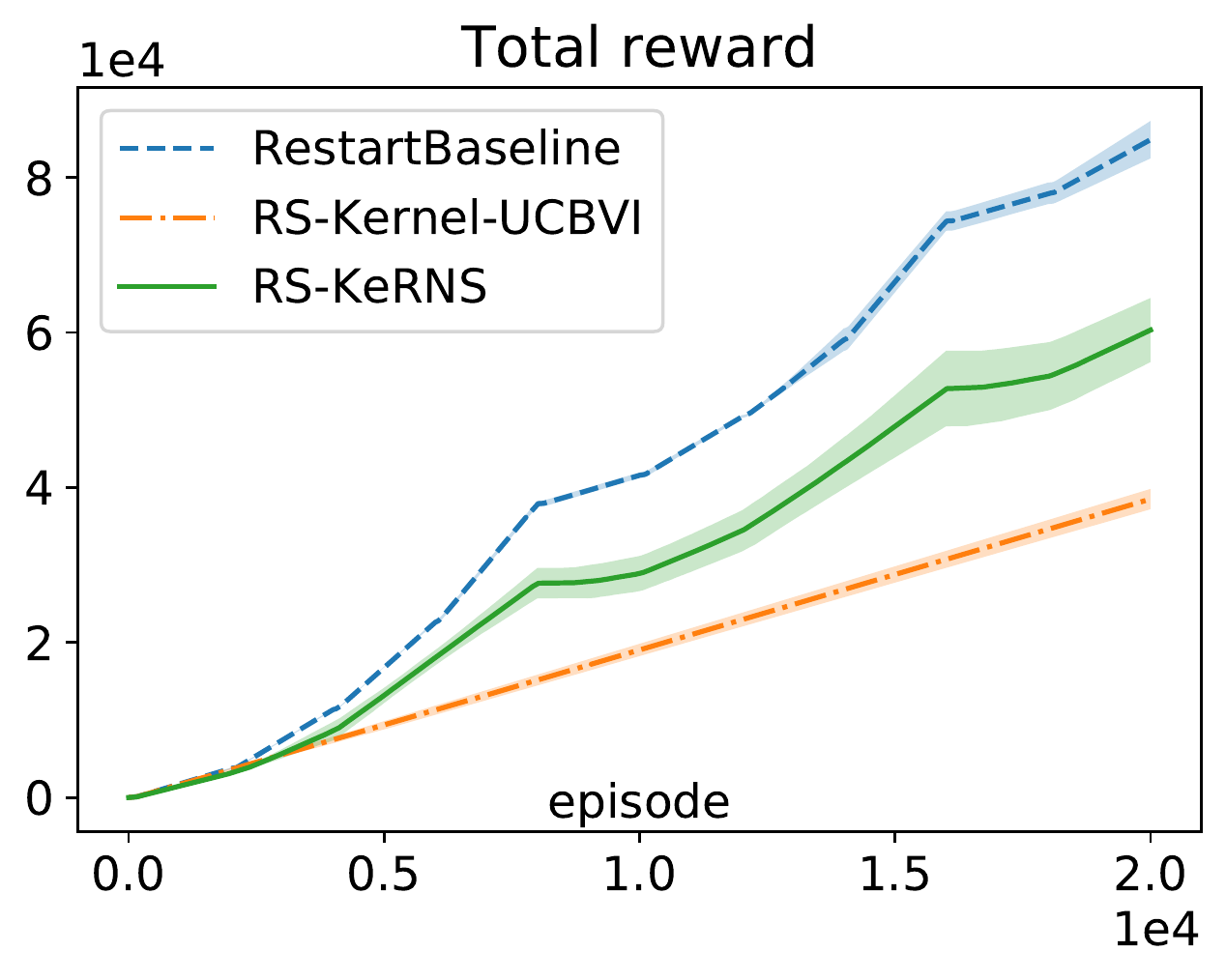}
	\captionof{figure}{ {Performance of \RSkerns compared to baselines for $N=2000$. Average over 4 runs.}}
	\label{fig:experiments-total-reward}
\end{figure}

Taking $\keta = \exp(-(1/N)^{2/3})$, we used the kernel $\fullkernel(t, u, v) = \keta^t \exp\pa{-(\dist{u, v}/\ksigma)^4/2}$. We set $\ksigma = 0.05$, $\maxdist=\Smaxdist=0.1$, $\kbeta=0.01$, $H=15$ and ran the algorithm for $2\times10^4$ episodes. \kerns was compared to two baselines:
\begin{enumerate*}[label=(\roman*)]
	\item \kernelucbvi combined with representative states, that we call \RSkernelucbvi, which is designed for stationary environments. This corresponds to \RSkerns with $\timekernelshort(t)= 1$, that is, $\keta=1$;
	\item A restart-based algorithm, called \RestartBaseline, which is implemented as \RSkernelucbvi, but it has \emph{full information} about when the environment changes, and, at every change, it restarts its reward estimator and its bonuses. We can see that, as expected, \RSkerns outperforms \RSkernelucbvi, which was not designed for non-stationary environments, and is able to ``track'' the behavior of the restart-based algorithm which has full information about how the environment changes. In Appendix~\ref{app:experiments}, we give more details about the experimental setup and provide more experiments, varying the period $N$ of changes in the MDP and the kernel function.
\end{enumerate*}

\section{Proof Outline}
\label{sec:proof-outline-main}

We now outline the proof of Theorem~\ref{theorem:regret_main_text} assuming, for simplicity, that the rewards are known. 

\paragraph{Bias due to non-stationarity}To bound the bias, we introduce an average MDP with transitions $\avP_h^k$: 
\begin{align*}
\avP_h^k (y|x, a) \eqdef \sum_{s=1}^{k-1} \normweight_h^{k, s}(x, a)\transition_h^s(y|x, a) + \frac{\kbeta\, \transition_h^k(y|x, a)}{\gencount_h^k(x, a)},
\end{align*}
where $(\trueP_h^s)_{s,h}$ are the true transitions at time $(s, h)$.  We prove that, for any $\lipQ$-\lipschitz function $f$ bounded by $H$, (Corollary~\ref{corollary:bias-between-avmdp-and-true-mdp}):
\begin{align*}
 &\abs{ \pa{\trueP_h^k- \avP_h^k}f(x, a) } \leq  \biasP(k, h)
\end{align*}
where the term $\biasP(k, h)$ is defined as 
\begin{align*}
	& \lipQ\sum_{i=1\vee(k-\kW)}^{k-1} \sup_{x, a}\Wassdist{\trueP_h^{i}(\cdot|x, a), \trueP_h^{i+1}(\cdot|x, a)}\\
	& + \frac{2\timekernelconstA H}{\kbeta} \frac{\keta^\kW}{1-\keta}\,\cdot
\end{align*}
\paragraph{Concentration} Using concentration inequalities for weighted sums, we prove that $\estP_h^k$ is close to the average transition $\avP_h^k$ using Hoeffding- and Bernstein-type inequalities (lemmas \ref{lemma:transition-hoeffding}, \ref{lemma:reward-hoeffding}, \ref{lemma:transitions-uniform-hoeffding}, and \ref{lemma:transitions-bernstein}), and define an event $\goodevent$ where our confidence sets hold (Lemma \ref{lemma:good-event}), such that $\prob{\goodevent} \geq 1 - \cdelta/2$. For instance, Lemma \ref{lemma:transition-hoeffding} gives us
\begin{align*}
\abs{(\estP_h^k - \avP_h^k)\trueV_{k, h+1}^{*}(x, a)} \lesssim 
\sqrt{  \frac{H^2}{\gencount_h^k(x, a)} } + \frac{\kbeta H}{\gencount_h^k(x, a)} + \lipQ \ksigma \,,
\end{align*}
which explains the form of the exploration bonuses.

\paragraph{Upper bound on the true value function} On the event $\goodevent$, we show that (Lemma \ref{lemma:upper-bound-on-q-functions}):
\begin{align*}
\algQ_h^k(x, a) + \sum_{h'=h}^H \bias(k, h)  \geq \trueQ_{k,h}^*(x, a)
\end{align*}
where the term $\bias(k, h)$ is the sum of $\biasP(k, h)$ defined above, and a similar term representing the bias in the reward estimation.

\paragraph{Regret bounds} Let $(\tx_h^k, \ta_h^k)$ be the state-action pair among the previously visited ones that is the closest to $(x_h^k, a_h^k)$:
\begin{align*}
(\tx_h^k, \ta_h^k)  \eqdef \argmin_{(x_h^s, a_h^s): s<k, h\in[H]} \dist{(x_h^k, a_h^k), (x_h^s, a_h^s)}.
\end{align*}
We show that (see proof of Lemma~\ref{lemma:regret-in-terms-of-sum-of-bonus-ucrl-type}):
\begin{align*}
H \sum_{k=1}^K\sum_{h=1}^H \indic{\dist{(x_h^k, a_h^k), (\tx_h^k, \ta_h^k)} >  2 \ksigma}
\leq H^2\sigmacov.
\end{align*}
Thus, to simplify the outline, for all $(k, h)$, we assume that $\dist{(x_h^k, a_h^k), (\tx_h^k, \ta_h^k)} \leq  2 \ksigma$ and add $H^2\sigmacov$ to the final regret bound.  On the event $\goodevent$, we prove that the regret of \kerns is bounded by (lemmas \ref{lemma:regret-in-terms-of-sum-of-bonus-ucrl-type} and \ref{lemma:regret-in-terms-of-sum-of-bonus}):
\begin{align*}
\regret(K) \lesssim & 
\sum_{k=1}^K\sum_{h=1}^H \pa{\frac{H}{\sqrt{\gencount_h^k(\tx_h^k, \ta_h^k)}} 
	+  \frac{\kbeta H}{\gencount_h^k(\tx_h^k, \ta_h^k)} } \\
& +  \sum_{k=1}^K\sum_{h=1}^H \bias(k, h) 
+ \lipQ K H\ksigma + H^2\sigmacov
\end{align*}
where we omitted factors involving $\sigmacov$ and $\Xsigmacov$ (which depend on the type of bound considered, $\regret_1$ or $\regret_2$), and martingale terms (which are bounded by $\approx H^{3/2}\sqrt{K}$ with probability at least $1-\cdelta/2$).

Using the properties of the kernel $\fullkernel$ (Assumption \ref{assumption:kernel-properties}), we prove that (Lemma \ref{lemma:sum-of-bonus}):

\begin{align*}
& \sum_{k=1}^K\sum_{h=1}^H  \frac{1}{\sqrt{\gencount_h^k(\tx_h^k, \ta_h^k)}} \lesssim HK\log\frac{1}{\keta}\pa{ \sigmacov + \sqrt{\frac{\sigmacov}{\log(1/\keta)}}} \\
& \sum_{k=1}^K\sum_{h=1}^H \frac{1}{\gencount_h^k(\tx_h^k, \ta_h^k)} \lesssim H \sigmacov K\log\frac{1}{\keta}
\end{align*}

Finally, in Corollary \ref{corollary:bound-on-temporal-bias}, we prove that the bias  $\sum_{k=1}^K\sum_{h=1}^H \bias(h, k)$ is bounded by 
\begin{align*}
2\kW \pa{\variationR +  \lipQ \variationP} + \frac{2\timekernelconstA (H+1)KH}{\kbeta}\frac{\keta^\kW}{1-\keta} \cdot
\end{align*}

Putting these bounds together, we prove Theorem \ref{theorem:regret_main_text}. The proof of Theorem~\ref{theorem:rs-kerns-regret-main-text} is outlined in Appendix~\ref{app:proof_outline}.

\section{Conclusion}

In this paper, we introduced and analyzed \kerns, the first algorithm for continuous MDPs with dynamic regret guarantees in changing environments. Building upon previous work on using representative states for kernel-based RL, we proposed \RSkerns, a practical version of \kerns that runs in constant time per episode. Moreover, we provide the first analysis that quantifies the trade-off between the regret and the computational complexity of this approach. In discrete environments, our regret bound matches the existing lower bound for multi-armed bandits in terms of the number of episodes and the variation of MDP, whereas finding a lower bound in continuous environments remains an open problem. 

We believe that the ideas introduced in this paper might be useful for large-scale problems. Indeed, we provide stronger online guarantees for practical kernel-based RL, which has already been shown to be empirically successful in medium-scale environments ($\totalcovdim \approx 10$) \citep{kveton2012kernel, barreto2016practical}, and we show that kernel-based RL is naturally suited to tackle non-stationarity. In larger dimension,  kernel-based exploration bonuses have been recently shown to enhance exploration in RL for Atari games \citep{badia2020never}, and our approach might inspire the design of bonuses for high-dimensional non-stationary environments.

\subsubsection*{Acknowledgements} 
The research presented was supported by European CHIST-ERA project DELTA, French Ministry of Higher Education and Research, Nord-Pas-de-Calais Regional Council,  French National Research Agency project BOLD (ANR19-CE23-0026-04), FMJH PGMO project 2018-0045, and the SFI Sachsen-Anhalt for the project RE-BCI ZS/2019/10/102024 by the Investitionsbank Sachsen-Anhalt.

\bibliography{./source_arxiv_march_2022/library2.bib}
\bibliographystyle{apalike}

\newpage
\onecolumn
\appendix

\part{Appendix}
\parttoc

\newpage
\section{Preliminaries}
\label{app:prelimiaries}

\subsection{Notation}

Throughout the proof, we use the following notation when omitting constants and logarithmic terms:

\begin{fdefinition}
	\label{def:lesssim-notation}
	\begin{align*}
	A \lesssim B \iff A \leq B \times \mathrm{polynomial}\pa{\XAcovdim, \Xcovdim, \log(K), \log(1/\cdelta), \kbeta, 1/\kbeta , \lipR, \lipP}.
	\end{align*}
\end{fdefinition}

Table \ref{tab:notations} summarizes the main notations used in the paper and in the proofs.

\begin{table}[h]
	\centering
	\caption{Table of notations}\label{tab:notations}
	\begin{tabular}{@{}l|l@{}}
		\toprule
		Notation & Meaning \\ \midrule
		$\distfunc: (\stateactionspace)^2 \to \RR_+$ &  metric on the state-action space $\stateactionspace$ \\
		$\Sdistfunc: \statespace^2 \to \RR_+$ &  metric on the state space $\statespace$ \\
		$\cN(\epsilon, \stateactionspace, \distfunc)$ & $\epsilon$-covering number of the metric space $(\stateactionspace, \distfunc)$ \\
		$K$        & number of episodes \\
		$H$        & horizon, length of each episode \\
		$\cdelta$  & confidence parameter \\
 		$\ksigma$  & kernel bandwidth parameter \\
		$\keta$    & kernel temporal decay parameter \\ 
		$\kW$      & kernel temporal window parameter \\ 
		$\kbeta$   & regularization parameter \\
		$\sigmacov$, $\Xsigmacov$  & $\ksigma$-covering numbers of $(\stateactionspace,\distfunc)$ and $(\statespace,\Sdistfunc)$, respectively\\
		$\lipR,\lipP,\lipQ$ & \lipschitz constants of the rewards, transitions and value functions \\
		$\fullkernel$ & kernel function from $\NN^*\times(\stateactionspace)^2$ to $[0, 1]$  \\
		$\kernel_{( \keta, \kW)}$ & parameterized kernel function from $\NN^*\times\RR_+$ to $[0, 1]$  \\
		$\spacekernelconstA, \spacekernelconstB, \timekernelconstA$ & constants related to kernel properties, see Assumption \ref{assumption:kernel-properties}\\ 
		$\timekernelconstB: \RR_+\to\RR_+$ & function related to kernel properties, see Assumption \ref{assumption:kernel-properties}\\
		$\mdp_k$   & true MDP at episode $k$, with rewards $\reward_h^k$ and transitions $\trueP_h^k$\\
		$\estMDP_k$ & empirical MDP built by \kerns in episode $k$ \\
		$\weight_h^{k, s}(x,a)$ & weight at $(x,a)$ in at time $(k,h)$ w.r.t the sample $(x_h^s, a_h^s)$ (Def.~\ref{definition:kernel-weights})\\
		$\normweight_h^{k, s}(x,a)$ & normalized version of $\weight_h^{k, s}(x,a)$ (Def.~\ref{definition:kernel-weights}) \\  
		$(\trueQ_{k,h}^*, \trueV_{k,h}^*)_{h\in[H]}$  & true value functions in episode $k$\\
		$(\algQ_h^k, \algV_h^k)_{h\in[H]}$  & value functions computed by \kerns in episode $k$\\
		$(\kernsQ_{h}^k, \kernsV_{h}^k)_{h\in[H]}$  & value functions computed by \RSkerns in episode $k$\\
		$\ReprStates_h^k, \ReprActions_h^k, \ReprNextStates_h^k$ & sets of representative states, actions and next states, at stage $h$ of episode $k$ \\
		$\maptoReprSA_h^k$ & mapping from $\stateactionspace$ to $\ReprStates_h^{k}\times\ReprActions_h^{k}$\\ 
		$\maptoReprS_h^k$  & mapping from $\statespace$ to $\ReprNextStates_h^{k}$ \\
		$\variationR, \variationP$  & temporal variation of the rewards and transitions (Def. \ref{def:mdp-variation})\\
		$\variationMDPtotal$ & temporal variation of the MDP $= \variationR+\lipQ\variationP$ \\
		$\totalcovdim$ $(=\XAcovdim)$ & covering dimension of $(\stateactionspace,\distfunc)$ \\
		$\Xcovdim$     & covering dimension of $(\statespace, \Sdistfunc)$\\
		$\maxdist$     & threshold distance to add a new representative state-action pair \\
		$\Smaxdist$    & threshold distance to add a new representative state \\
		$\boundedlipschitzclass{\lipQ}{H}$  & set of $\lipQ$-\lipschitz functions from $\statespace$ to $\RR$ bounded by $H$ \\
		$\biasP(k, h)$ & bias in transition estimation at time $(k,h)$, (Def. \ref{def:temporal-bias})\\
		$\biasR(k, h)$ & bias in reward estimation at time $(k,h)$ (Def. \ref{def:temporal-bias})\\
		$\bias(k, h)$ & sum of biases $\biasR(k, h)+\biasP(k, h)$ (Def. \ref{def:temporal-bias})\\
		$\goodevent$  & good event, on which confidence intervals hold (Lemma \ref{lemma:good-event}) \\
		$\logplus(z)$ & equal to $\log(z+e)$ for any $z\in\RR$\\
		\bottomrule
	\end{tabular}
\end{table}
\newpage

\subsection{Probabilistic model}

The interaction between the algorithm and the MDP defines a stochastic process $(x_h^s, a_h^s, x_{h+1}^s, \obsreward_h^s)$ for $h \in [H]$ and $s \in \NN^*$, representing the state, the action, the next state and the reward at step $h$ of episode $s$. Let $\history_h^s \eqdef \braces{x_{h'}^{s'}, a_{h'}^{s'}, x_{h'+1}^{s'}, \obsreward_{h'}^{s'}}_{s' < s, h' \in [H]} \cup \braces{ x_{h'}^s, a_{h'}^s, x_{h'+1}^{s},\obsreward_{h'}^{s}}_{h' < h}$ be the history of the process up to time $(s, h)$. 

We define $\cF_h^s$ as the $\sigma$-algebra generated by $\history_h^s$, and denote its corresponding filtration by $(\cF_h^s)_{s, h}$.

\subsection{Exploration Bonuses and Kernel Backward Induction}

A reinforcement learning algorithm can be seen as a mapping from the set of possible histories $\bigcup_{h\in[H], k\in \NN^*}  \pa{\stateactionspace\times\statespace\times[0, 1]}^{kh-1} $ to the set of actions $\actionspace$.

For a time $(k, h)$, \ouralgo\, performs this mapping in the following way:
\begin{enumerate}
	\item Build $\estR_h^k$ and $\estP_h^k$ as in Definition \ref{definition:reward-and-transition-estimator}, which are $\cF_h^{k-1}$-measurable.
	\item For each $h\in[H]$, with $V_{H+1}^k = 0$,
	
	\begin{itemize}
		\item Compute
		\begin{align*}
		\tQ_h^k(x, a) = \estR_h^k(x, a) + \estP_h^k V_{h+1}^k(x, a) + \bonus_h^k(x, a)  \quad\text{for all } (x, a) \in \braces{ (x_h^s, a_h^s) }_{s < k} \\
		\end{align*}
		\item Define, for any $(x, a)$,
		\begin{align*}
		& \algQ_h^k(x, a) = \min_{s\in[k-1]} \sqrbrackets{ \tQ_h^k(x_h^s, a_h^s) + \lipQ \dist{(x, a), (x_h^s, a_h^s)}  } \\
		& \algV_h^k (x) = \min\pa{H-h+1, \max_{a'}\algQ_h^k(x, a')}
		\end{align*}			
	\end{itemize} 
	
	\item Choose the action $ a_h^k \in \argmax_a \algQ_h^k(x_h^k, a)$.
\end{enumerate}

Notice the algorithmic structure of \ouralgo is the same as \kernelucbvi\citep{domingues2020}. However, \ouralgo uses non-stationary kernels to be able to adapt to changing environments.

It can be checked that Algorithm \ref{alg:kernel_backward_induction} returns the functions $\algQ_h^k$ described above.

The exploration bonus is defined below:
\begin{fdefinition}[exploration bonuses]
	\label{def:exploration-bonuses}
	The exploration bonus in $(x,a)$ at time $(k, h)$ is defined as
	\begin{align*}
		& \bonus_h^k(x, a) \eqdef \rbonus_h^k(x, a) + \pbonus_h^k(x, a)
		, \quad\text{where} \\ 
		& \rbonus_h^k(x, a) \eqdef  \sqrt{  \frac{2 \loghoeffdingR(k,\cdelta/8)}{\gencount_h^k(x, a)} } + \frac{\kbeta }{\gencount_h^k(x, a)} + \bonusbiasR(k, \cdelta/8)\ksigma
		, \quad\text{and}\\
		& \pbonus_h^k(x, a)\eqdef  \sqrt{  \frac{2H^2 \loghoeffdingP(k,\cdelta/8)}{\gencount_h^k(x, a)} } + \frac{\kbeta H}{\gencount_h^k(x, a)} + \bonusbiasP(k, \cdelta/8)\ksigma
	\end{align*}
	where 
	\begin{align*}
		& \loghoeffdingR(k,\cdelta)  = \BigOtilde{\XAcovdim} = \log\pa{\frac{ \XAcovnumber{\ksigma^2/K}\sqrt{1+k/\kbeta}}{\cdelta}} \\ 
		& \bonusbiasR(k, \cdelta) = \BigOtilde{\lipQ + \sqrt{\XAcovdim}}  = \pa{ \frac{\spacekernelconstB }{2\kbeta^{3/2}}\sqrt{ 2 \loghoeffdingR(k,\cdelta)} 
			+ \frac{4 \spacekernelconstB}{\kbeta} } + 2 \lipR \lipQ \pa{ 1+ \sqrt{\logplus(\spacekernelconstA k/\kbeta)} } \\
		& \loghoeffdingP(k,\cdelta) = \BigOtilde{\XAcovdim}  = \log\pa{\frac{H \XAcovnumber{\ksigma^2/KH}\sqrt{1+k/\kbeta}}{\cdelta}} \\ 
		& \bonusbiasP(k, \cdelta) = \BigOtilde{\lipQ + \sqrt{\XAcovdim}}  = \pa{ \frac{\spacekernelconstB }{2\kbeta^{3/2}}\sqrt{ 2 \loghoeffdingP(k,\cdelta)} 
			+ \frac{4 \spacekernelconstB}{\kbeta} } + 2 \lipP \lipQ \pa{ 1+ \sqrt{\logplus(\spacekernelconstA k/\kbeta)} }.
	\end{align*}
	and where $\XAcovdim$ is the covering dimension of $(\stateactionspace, \distfunc)$ and, for any $z\in\RR$, $\logplus(z) = \log(z+e) $.
\end{fdefinition}

\begin{algorithm}[H]
	\centering
	\caption{Kernel Backward Induction with Exploration  Bonuses}\label{alg:kernel_backward_induction}
	\begin{algorithmic}[1]
		\State {\bfseries Input:} $k$, $H$, $\fullkernel$, $\lipQ$, $\kbeta$ , transitions $(x_h^s, a_h^s, x_{h+1}^s, \obsreward_h^s)_{s=1}^{k-1}$ for all $h \in [H]$. 
		\State {\bfseries Initialization: } $\algV_{H+1}(x) = 0$ for all $x\in\statespace$
		\For{$h = H,\ldots, 1$}
		\For{$m=1, \ldots, k-1$}
		\State {\color{darkgreen2} //  Using weights given in Def.\ref{definition:kernel-weights} and bonus given in Def. \ref{def:exploration-bonuses}, compute: }
		\State $\tQ_{h}(x_h^m, a_h^m) = \sum_{s=1}^{k-1}\normweight_h^{k, s}(x_h^m, a_h^m)\pa{\obsreward_h^s+\algV_{h+1}(x_{h+1}^s)} + \bonus_h^k(x_h^m, a_h^m)$
		\EndFor
		\State{\color{darkgreen2} // Interpolated $Q$-function. Defined, but not computed, for all $(x, a)$}
		\State $\algQ_h(x, a) = \underset{m\in[k-1]}{\min}\pa{\tQ_{h}(x_h^m, a_h^m)+\lipQ \dist{(x,a), (x_h^m, a_h^m)}}$ 
		\For{$m=1, \ldots, k-1$}
		\State $\algV_{h}(x_h^m) = \min\pa{H-h+1, \max_{a}\algQ_h(x_h^m, a)}$ 
		\EndFor
		\EndFor 
		\State {\bfseries Return:} $(\algQ_h)_{h\in[H]}$
	\end{algorithmic}
\end{algorithm}

\newpage
\section{Proof Outline}
\label{app:proof_outline}


In this section, we outline the proof of the regret bound of \RSkerns (Theorem \ref{theorem:rs-kerns-regret-main-text}).

\subsection{Theorem \ref{theorem:rs-kerns-regret-main-text}}

To prove the regret bound in Theorem \ref{theorem:rs-kerns-regret-main-text} for \RSkerns, we consider the kernel:
\begin{align*}
	\fullkernel(t, u, v) = \timekernel(t) \spacekernel\pa{u, v}
	, \text{ where }\quad 
	\spacekernel\pa{u, v} \eqdef \exp\pa{-\dist{u, v}^2/(2\ksigma^2)},
\end{align*}
for a given function $\timekernel:\NN \to [0, 1]$. In each episode $k$, \RSkerns has build representative sets of states $\ReprStates_h^k$, actions $\ReprActions_h^k$ and next states $\ReprNextStates_h^k$, for each $h\in[H]$. We define of the projections:
\begin{align*}
\maptoReprSA_h^k(x, a) \eqdef \argmin_{(\bx,\ba) \in \ReprStates_h^k\times\ReprActions_h^k} \dist{(x,a),  (\bx,\ba)}
,\quad 
\maptoReprS_h^k(y) \eqdef \argmin_{\by \in \ReprNextStates_h^k} \Sdist{y, \by}.
\end{align*}
from any $(x, a, y)$ to their representatives.

Let $\kernsW_{h}^{k+1}(x, a) = \sum_{s=1}^{k}\timekernelshort(k-s)\spacekernel\pa{\maptoReprSA_h^{k+1}(x, a), \maptoReprSA_h^{s+1}(x_h^s, a_h^s)}$.
In episode $k+1$, \RSkerns computes the following  estimate of the rewards
\begin{align*}
\kernsR_{h}^{k+1}(x, a) 
& = \frac{1}{ \kbeta + \kernsW_{h}^{k+1}(x, a)} 
\sum_{s=1}^{k} \timekernelshort(k-s)\spacekernel\pa{\maptoReprSA_h^{k+1}(x, a), \maptoReprSA_h^{s+1}(x_h^s, a_h^s)}  \obsreward_h^s
\end{align*}
and the following estimate of the transitions
\begin{align*}
\kernsP_{h}^{k+1}(y|x,a) & = 
\frac{1}{ \kbeta + \kernsW_{h}^{k+1}(x, a)}
\sum_{s=1}^{k} \timekernelshort(k-s)\spacekernel\pa{\maptoReprSA_h^{k+1}(x, a), \maptoReprSA_h^{s+1}(x_h^s, a_h^s)} \dirac{\maptoReprS_h^{s+1}(x_{h+1}^s)}(y)   .
\end{align*}
which are similar to the estimates that would be computed by \kerns, but using the projections $\maptoReprSA$ and $\maptoReprS$ to the representative states and actions. The values of $\kernsR_{h}^{k+1}(x, a) $ and $\kernsP_{h}^{k+1}(y|x,a)$ are defined for all $(x, a, y) \in \stateactionspace\times\statespace$, but they only need to be stored for $(x,a,y)\in\ReprStates_h^k\times\ReprActions_h^k\times\ReprNextStates_h^k$, which corresponds to storing a \emph{finite} representation of the MDP. The exploration bonuses of \RSkerns are defined similarly:
\begin{align*}
& \kernsBonus_{h}^{k+1}(x, a)
\eqdef
\BigOtilde{
	\frac{H}{ \sqrt{\kbeta+\kernsW_{h}^{k+1}(x, a)}}
	+ \frac{\kbeta H}{ \kbeta + \kernsW_{h}^{k+1}(x, a)}
	+ \lipQ \ksigma 
}
\end{align*}

We prove that the estimates used by \RSkerns are close to the ones used by \kerns up to bias terms. Then, this result is used to prove that the regret bound of \RSkerns is the same as \kerns, but adding a bias term multiplied by the number of episodes. For any $(x_h^s, a_h^s)$ with $s<k$ and $h\in[H]$, we show that (consequence of Lemma \ref{lemma:error-in-P-when-usin-representative-states_NEW}):
\begin{align*}
\abs{\pa{\estP_h^k - \kernsP_{h}^k}V(x_h^s, a_h^s)} 
\lesssim & \lipQ \Smaxdist + 8H\frac{\maxdist}{\ksigma}
\end{align*}
and similar bounds are obtained for the rewards $\kernsR_{h}^k(x, a)$ (Lemma \ref{lemma:error-in-R-when-usin-representative-states_NEW}) and the exploration bonuses (Lemma \ref{lemma:error-in-bonus-when-usin-representative-states_NEW}). This allows us to prove that the regret of \RSkerns is bounded as (theorems \ref{theorem:regret-RS-kerns-ucrl-type} and \ref{theorem:regret-RS-kerns-ucbvi-type})
\begin{align*}
	\regret^{\RSkerns}(K) \lesssim  \regret^{\kerns}(K) +  \lipQ(\maxdist + \Smaxdist)KH^2 + \frac{\maxdist}{\ksigma}KH^3.
\end{align*}

If we choose  $\timekernel(t) = \keta^t$, the estimators used by \RSkerns can be updated online. Indeed, as detailed in Appendix \ref{app:rs_kerns_new}, we can related the estimates at time $(k+1, h)$ to the ones at time $(k, h)$:
\begin{align*}
&\kernsW_{h}^{k+1}(\bx, \ba) = \spacekernel\pa{(\bx, \ba), \maptoReprSA_h^{k+1}(x_h^k, a_h^k)} + \keta\kernsW_{h}^{k}(\bx, \ba),
\\
&
\kernsR_{h}^{k+1}(\bx, \ba) 
= \frac{\spacekernel\pa{(\bx, \ba), \maptoReprSA_h^{k+1}(x_h^k, a_h^k)}}{\kbeta +\kernsW_{h}^{k+1}(\bx, \ba)} \obsreward_h^k
+ \keta\cdot\pa{\frac{\kbeta +\kernsW_{h}^{k}(\bx, \ba)}{\kbeta +\kernsW_{h}^{k+1}(\bx, \ba)}} \kernsR_{h}^{k}(\bx, \ba), 
\quad\text{and}
\\
& 
\kernsP_{h}^{k+1}(y | \bx, \ba) =  \frac{\spacekernel\pa{(\bx, \ba), \maptoReprSA_h^{k+1}(x_h^k, a_h^k)}}{\kbeta +\kernsW_{h}^{k+1}(\bx, \ba)} \dirac{\maptoReprS_h^{k+1}(x_{h+1}^k)}(y)
+ \keta\cdot\pa{\frac{\kbeta +\kernsW_{h}^{k}(\bx, \ba)}{\kbeta +\kernsW_{h}^{k+1}(\bx, \ba)}} \kernsP_{h}^{k}(y | \bx, \ba).
\end{align*}

One issue that we need to solve is that $\kernsW_{h}^{k}(\bx, \ba)$, $\kernsR_{h}^{k}(\bx, \ba)$ and $\kernsP_{h}^{k}(y | \bx, \ba)$ were not necessarily computed before episode $k+1$. This happens when $(\bx,\ba)$ is a new representative state-action pair added in episode $k$. In Section \ref{app:online-updates}, we show that this can be easily handled by defining some auxiliary quantities that can be updated online and that can be used to initialize the values $\kernsW_{h}^{k}(\bx, \ba)$, $\kernsR_{h}^{k}(\bx, \ba)$ and $\kernsP_{h}^{k}(y | \bx, \ba)$ when necessary, with little overhead to the runtime of the algorithm.

\subsection{Optimized Kernel Parameters and Regret Bounds}
\label{sec:full-table}
	\begin{table}[ht!]
		\centering
		\caption{Regret bound for optimized kernel parameters, for $\kW =  \log_{\keta}\pa{\pa{1-\keta}/K}$.}
		\label{tab:regret_with_optimized_constants-full}
		\begin{tabular}{@{}c|c|c|l|cl@{}}
			\toprule
			& $\ksigma$ & $\log\pa{\frac{1}{\keta}}$ & condition & bound & regret \\
			\midrule
			\multirow{2}{*}{$\totalcovdim = 0$}
			& $0$   & $\variationMDPtotal^{\frac{2}{3}} K^{-\frac{2}{3}}$    & $\variationMDPtotal < K$ & $\regret_1$      & $H^2 \Nstates \sqrt{\Nactions} \variationMDPtotal^{\frac{1}{3}} K^{\frac{2}{3}}$        \\
			& $0$   &  $\variationMDPtotal^{\frac{2}{3}} K^{-\frac{2}{3}}$  & $\variationMDPtotal < K$    & $\regret_2$       &  $H^2 \sqrt{\Nstates\Nactions } \variationMDPtotal^{\frac{1}{3}}K^{\frac{2}{3}}    + H^3 \Nstates^2\Nactions\variationMDPtotal^{\frac{2}{3}}K^{\frac{1}{3}}$      \\
			\midrule
			\multirow{2}{*}{$\totalcovdim > 0$}
			& $\pa{\frac{1}{K}}^{\frac{1}{2\totalcovdim + 3}}$ &  $\variationMDPtotal^{\frac{2}{3}}K^{-\frac{2\totalcovdim+2}{2\totalcovdim+3}}$ & $\variationMDPtotal <  K^{\frac{3}{2\totalcovdim+3}}$  & $\regret_1$         &  $H^2 \variationMDPtotal^{\frac{1}{3}} K^{\frac{2\totalcovdim+2}{2\totalcovdim+3}}$      \\
			& $\pa{\frac{1}{K}}^{\frac{1}{2\totalcovdim + 2}}$  & $ \frac{\variationMDPtotal^{\frac{1}{2}}}{H}K^{-\frac{2\totalcovdim+1}{2\totalcovdim+2}} $ & $\variationMDPtotal <  K^{\frac{1}{\totalcovdim+1}}$   & $\regret_2$ & $H^2 \variationMDPtotal^{\frac{1}{2}} K^{\frac{2\totalcovdim+1}{2\totalcovdim+2}} + H^{\frac{3}{2}} \variationMDPtotal^{\frac{1}{4}}K^{\frac{3}{4}}$        \\
			\cmidrule(l){1-6}
		\end{tabular}
	\end{table}

\newpage
\section{Handling the bias due to non-stationarity}
\label{app:handling_bias}

\begin{flemma}[temporal bias]
	\label{lemma:bouding-the-temporal-bias}
	Let $(F_h^s)_{h, s}$ be an arbitrary sequence of functions from $\stateactionspace$ to $\RR$ bounded by $M$. Then,
	\begin{align*}
	\abs{\frac{1}{\gencount_h^k(x, a)}   \sum_{s=1}^{k-1} \weight_h^{k, s}(x, a)   \pa{F_h^s(x, a)-F_h^k(x, a)}} \leq \sum_{i=1\vee(k-\kW)}^{k-1}\abs{F_h^{i}(x, a)-F_h^{i+1}(x, a)}
	+ \frac{2M\timekernelconstA}{\kbeta} \frac{\keta^\kW}{1-\keta}\,.
	\end{align*}
\end{flemma}
\begin{proof}
	The result is straightforward when $k \leq \kW$. Assuming $k > \kW$, we have
	\begin{align*}
	& \frac{1}{\gencount_h^k(x, a)}   \sum_{s=1}^{k-1} \weight_h^{k, s}(x, a)   \pa{F_h^s(x, a)-F_h^k(x, a)}\\
	& = \frac{1}{\gencount_h^k(x, a)}   \sum_{s=k-\kW}^{k-1} \weight_h^{k, s}(x, a)  \pa{F_h^s(x, a)-F_h^k(x, a)}
	+ \frac{1}{\gencount_h^k(x, a)}   \sum_{s=1}^{k-\kW-1} \weight_h^{k, s}(x, a)   \pa{F_h^s(x, a)-F_h^k(x, a)} \\
	& \leq \frac{1}{\gencount_h^k(x, a)}   \sum_{s=k-\kW}^{k-1} \weight_h^{k, s}(x, a)\sum_{i=s}^{k-1}\pa{F_h^{i}(x, a)-F_h^{i+1}(x, a)}
	+ \frac{2M\timekernelconstA}{\kbeta}   \sum_{s=1}^{k-\kW-1} \keta^{k-1-s}   \\
	& \leq \frac{1}{\gencount_h^k(x, a)}   \sum_{i=k-\kW}^{k-1}\pa{\sum_{s=k-\kW}^{i} \weight_h^{k, s}(x, a)}\pa{F_h^{i}(x, a)-F_h^{i+1}(x, a)}
	+ \frac{2M\timekernelconstA}{\kbeta} \frac{\keta^\kW - \keta^{k-1}}{1-\keta} \\
	& \leq   \sum_{i=k-\kW}^{k-1}\abs{F_h^{i}(x, a)-F_h^{i+1}(x, a)}
	+ \frac{2M\timekernelconstA}{\kbeta} \frac{\keta^\kW}{1-\keta}\,,
	\end{align*}
	where in the first inequality we used by Assumption~\ref{assumption:kernel-properties} that
	\begin{align*}
	 \weight_h^{k, s}(x, a) &=\fullkernel\pa{k-s-1, (x, a), (x_h^s, a_h^s)}\\
	 &= \kernel_{(\keta, \kW)}(k-s-1, \dist{(x,a), (x_h^s,a_h^s)}) \\
	 &\leq  \timekernelconstA \keta^{k-s-1}\,.
 \end{align*}
	By symmetry, we obtain
	\begin{align*}
	\frac{1}{\gencount_h^k(x, a)}   \sum_{s=1}^{k-1} \weight_h^{k, s}(x, a)   \pa{F_h^k(x, a)-F_h^s(x, a)} \leq \sum_{i=k-\kW}^{k-1}\abs{F_h^{i}(x, a)-F_h^{i+1}(x, a)}
	+ \frac{2M\timekernelconstA}{\kbeta} \frac{\keta^\kW}{1-\keta}.
	\end{align*}
	which concludes the proof.
\end{proof}

\begin{fdefinition}[temporal bias of the MDP]
	\label{def:temporal-bias}
	The temporal bias at time $(k, h)$ is defined by
	\begin{align*}
	\bias(k, h) = \biasR(k, h) + \biasP(k, h)
	\end{align*}
	where
	\begin{align*}
	\biasR(k, h) = \sum_{i=1\vee(k-\kW)}^{k-1}\sup_{x, a}\abs{\reward_h^{i}(x, a)-\reward_h^{i+1}(x, a)}
	+ \frac{2\timekernelconstA}{\kbeta} \frac{\keta^\kW}{1-\keta}
	\end{align*}
	\begin{align*}
	\biasP(k, h) = \lipQ\sum_{i=1\vee(k-\kW)}^{k-1} \sup_{x, a}\Wassdist{\trueP_h^{i}(\cdot|x, a), \trueP_h^{i+1}(\cdot|x, a)}
	+ \frac{2\timekernelconstA H}{\kbeta} \frac{\keta^\kW}{1-\keta}
	\end{align*}
\end{fdefinition}

\begin{fdefinition}[Average MDP at episode $k$]
	\label{def:average_mdp_at_episode_k}
	Let
	\begin{align*}
	\avR_h^k (x, a) \eqdef  \sum_{s=1}^{k-1} \normweight_h^{k, s}(x, a) \reward_h^s(x, a) + \frac{\kbeta}{\gencount_h^k(x, a)}\reward_h^k(y|x, a)
	\end{align*}
	\begin{align*}
	\avP_h^k (y|x, a) \eqdef  \sum_{s=1}^{k-1} \normweight_h^{k, s}(x, a) \transition_h^s(y|x, a) + \frac{\kbeta}{\gencount_h^k(x, a)}\transition_h^k(y|x, a)
	\end{align*}
	and let $\avmdp_k$ be the MDP with transitions $\braces{\avP_h^k}_h$ and rewards $\braces{\avR_h^k}_h$.
\end{fdefinition}

\begin{fcorollary}
	\label{corollary:bias-between-avmdp-and-true-mdp}
	Let $\boundedlipschitzclass{\lipQ}{H}$ be the class of $\lipQ$-\lipschitz functions from $\statespace$ to $\RR$ bounded by $H$. Then,
	\begin{align*}
	\sup_{x, a} & \abs{ \reward_h^k(x, a) - \avR_h^k(x, a) } \leq \biasR(k, h) \\
	\sup_{f \in \boundedlipschitzclass{\lipQ}{H}} &\abs{ \pa{\trueP_h^k- \avP_h^k}f(x, a) } \leq \biasP(k, h)
	\end{align*}
\end{fcorollary}
\begin{proof}
	For the reward term, we have from Lemma \ref{lemma:bouding-the-temporal-bias}:
	\begin{align*}
	\abs{ \reward_h^k(x, a) - \avR_h^k(x, a) }
	& = \abs{ \sum_{s=1}^{k-1}\normweight_h^{k, s}(x, a) \pa{\reward_h^k(x, a) - \avR_h^k(x, a)} } \\
	& \leq \sum_{i=1\vee(k-\kW)}^{k-1}\abs{\reward_h^{i}(x, a)-\reward_h^{i+1}(x, a)}
	+ \frac{2\timekernelconstA}{\kbeta} \frac{\keta^\kW}{1-\keta} \\
	& \leq \biasR(k, h).
	\end{align*}
	For the transitions term, we also apply Lemma \ref{lemma:bouding-the-temporal-bias} and the definition of the $1$-Wasserstein distance:
	\begin{align*}
	\abs{ \pa{\trueP_h^k- \avP_h^k}f(x, a) }
	& = \abs{ \sum_{s=1}^{k-1}\normweight_h^{k, s}(x, a) \pa{\trueP_h^k f(x, a) - \avP_h^k f(x, a)} } \\
	& \leq  \sum_{i=1\vee(k-\kW)}^{k-1}\abs{\trueP_h^{i} f(x, a)-\trueP_h^{i+1} f(x, a)}
	+ \frac{2\timekernelconstA H}{\kbeta} \frac{\keta^\kW}{1-\keta} \\
	& \leq \lipQ\sum_{i=1\vee(k-\kW)}^{k-1} \Wassdist{\trueP_h^{i}(\cdot|x, a), \trueP_h^{i+1}(\cdot|x, a)}
	+ \frac{2\timekernelconstA H}{\kbeta} \frac{\keta^\kW}{1-\keta} \\
	& \leq \biasP(k, h)\,.
	\end{align*}

\begin{remark}
	Since the functions in $\boundedlipschitzclass{\lipQ}{H}$ are bounded, the 1-Wasserstein distance could be replaced by the total variation (TV) distance $\normm{\trueP_h^i(\cdot|x, a) - \trueP_h^{i+1}(\cdot|x, a)}_1$.
\end{remark}
\end{proof}

\newpage
\section{Concentration}

In this Section, we provide confidence intervals that will be used to prove our regret bounds. The main concentration results are presented in Lemma \ref{lemma:good-event}, which defines an event $\goodevent$ where all the confidence intervals hold, and we show that $\prob{\goodevent} \geq 1-\cdelta/2$.

\subsection{Concentration inequalities for weighted sums}

We reproduce here the concentration inequalities for weighted sums proved by \cite{domingues2020}, which we will need.

\begin{lemma}[Hoeffding type inequality \citep{domingues2020}]
	\label{lemma:hoeffding-weighted-sum}
	Consider the sequences of random variables $(w_t)_{t\in\NN^*}$ and $(Y_t)_{t\in\NN^*}$ adapted to a filtration $(\cF_t)_{t\in\NN}$. Assume that, for all $t\geq 1$, $w_t$ is $\cF_{t-1}$ measurable and $\expect{ \exp(\lambda Y_t)\given \cF_{t-1}} \leq \exp(\lambda^2 c^2/2)$ for all $\lambda > 0$.

	Let $S_t \eqdef \sum_{s=1}^t w_s Y_s$ and $V_t \eqdef \sum_{s=1}^t w_s^2$, and assume $w_s \leq 1$ almost surely for all $s$.	Then,for any $\beta > 0$, with probability at least $1-\delta$, for all $t\geq 1$,
	\begin{align*}
	\frac{\abs{S_t}}{\sum_{s=1}^t w_s + \beta} \leq \sqrt{ 2c^2  \log\pa{\frac{\sqrt{1 + t/\beta}}{\delta}}  \frac{1}{\sum_{s=1}^t w_s+\beta}}\,.
	\end{align*}
\end{lemma}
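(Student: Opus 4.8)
The plan is to prove this via the method of mixtures (pseudo-maximization over the tilting parameter), following the standard recipe for self-normalized concentration.

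\textbf{Step 1 — exponential supermartingales.} For each fixed $\lambda > 0$, define $M_t^\lambda \eqdef \exp\pa{\lambda S_t - \tfrac{\lambda^2 c^2}{2} V_t}$, with $M_0^\lambda = 1$. Since $w_t$ is $\cF_{t-1}$-measurable and $\lambda w_t > 0$, the sub-Gaussian hypothesis applied with tilting $\lambda w_t$ gives $\expect{\exp(\lambda w_t Y_t)\given\cF_{t-1}} \le \exp\pa{\tfrac{\lambda^2 w_t^2 c^2}{2}}$, hence $\expect{M_t^\lambda \given \cF_{t-1}} \le M_{t-1}^\lambda$; thus $(M_t^\lambda)_{t\ge 0}$ is a nonnegative supermartingale with $\expect{M_0^\lambda}=1$. (The hypothesis is stated for $\lambda>0$; in every application of this lemma $Y_t$ is bounded and hence two-sided sub-Gaussian, so one may take $\lambda\in\RR$. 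In general one runs the argument twice — for $S_t$ and for $-S_t$ — each with a half-Gaussian prior on $(0,\infty)$ and confidence $\delta/2$.)

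\textbf{Step 2 — mixing over $\lambda$.} Let $\pi$ be the $\cN\pa{0, 1/(\beta c^2)}$ density and set $M_t \eqdef \int_{\RR} M_t^\lambda\,\pi(\lambda)\,\rmd\lambda$. By Tonelli (using nonnegativity) $M_t$ is again a nonnegative supermartingale with $M_0 = 1$. Completing the square in $\lambda$, the Gaussian integral evaluates in closed form to
\[
M_t = \frac{1}{\sqrt{1 + V_t/\beta}}\,\exp\pa{\frac{S_t^2}{2c^2\pa{V_t + \beta}}},
\]
the particular prior variance $1/(\beta c^2)$ being chosen precisely so that $\beta$ appears inside the normalization.

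\textbf{Step 3 — Ville's inequality and the $w_s\le 1$ reduction.} By Ville's maximal inequality, $\prob{\exists t \ge 1 : M_t \ge 1/\delta} \le \expect{M_0} \delta = \delta$. On the complementary event, for every $t \ge 1$ the bound $M_t < 1/\delta$ rearranges to $S_t^2 < 2c^2\pa{V_t + \beta}\log\pa{\sqrt{1+V_t/\beta}/\delta}$. Finally, since $w_s \in [0,1]$ we have $V_t = \sum_{s\le t} w_s^2 \le \sum_{s\le t} w_s$, so $V_t + \beta \le \sum_{s\le t} w_s + \beta$ and $1 + V_t/\beta \le 1 + t/\beta$; substituting, dividing by $\pa{\sum_{s\le t} w_s + \beta}^2$ and taking square roots gives the claim (combining the two signs yields the bound on $|S_t|$).

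\textbf{Expected main obstacle.} The argument is routine once set up; the two points requiring care are (i) justifying that the supermartingale property survives the mixture over $\lambda$ (Tonelli plus nonnegativity, and keeping $\lambda>0$ so that the one-sided sub-Gaussian hypothesis suffices), and (ii) the normalization bookkeeping: the mixture method intrinsically produces a bound self-normalized by $V_t = \sum w_s^2$, and one must invoke $w_s\le 1$ to pass to the normalization by $\sum w_s$ and the $\sqrt{1+t/\beta}$ inside the logarithm that appear in the stated form.
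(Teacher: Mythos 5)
Your proof is correct and follows essentially the same route as the paper, which does not reprove this lemma but cites Lemma 2 of \citet{domingues2020}, itself established by exactly this method of mixtures (Gaussian prior with variance tuned to $\beta$, Ville's maximal inequality, then $w_s\le 1$ to replace $V_t$ by $\sum_{s\le t} w_s$ and $t$ in the normalization and the logarithm). Your parenthetical about the one-sided $\lambda>0$ hypothesis is the right reading: the two-sided conclusion on $\abs{S_t}$ implicitly uses the sub-Gaussian bound for both signs of $\lambda$ (which holds in every application here, where $Y_t$ is bounded and conditionally centered), since the half-prior/two-run fallback only recovers the bound with $\delta/2$ inside the logarithm.
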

\begin{proof}
	See Lemma 2 of \cite{domingues2020}.
\end{proof}

\begin{lemma}[Bernstein type inequality \citep{domingues2020}]
	\label{lemma:bernstein-freedman-weighted-sum}
	Consider the sequences of random variables $(w_t)_{t\in\NN^*}$ and $(Y_t)_{t\in\NN^*}$ adapted to a filtration $(\cF_t)_{t\in\NN}$. Let
	\begin{align*}
	S_t \eqdef \sum_{s=1}^t w_s Y_s, \quad V_t \eqdef \sum_{s=1}^t w_s^2\expect{Y_s^2\given \cF_{s-1}} \quad \mbox{and} \quad W_t \eqdef \sum_{s=1}^t w_s\,,
	\end{align*}
	Assume that, for all $t\geq 1$,
	\begin{enumerate*}[label=(\roman*)]
		\item $w_t$ is $\cF_{t-1}$ measurable,
		\item $\expect{Y_t\given \cF_{t-1}} = 0$,
		\item $w_t \in [0, 1]$ almost surely,
		\item there exists $b > 0$ such that $\abs{Y_t} \leq b$ almost surely.
	\end{enumerate*}
	Then, for all $\beta > 0$, with probability at least $1-\delta$, for all $t\geq 1$,
	\[
	\frac{|S_t|}{\beta + \sum_{s=1}^t w_s}\leq \sqrt{2\log\big(4e(2t+1)/\delta\big) \frac{V_t+b^2}{\left(\beta + \sum_{s=1}^t w_s\right)^2}} + \frac{2b}{3}\frac{\log\!\big(4e(2t+1)/\delta\big)}{\beta + \sum_{s=1}^t w_s}\,.
	\]
\end{lemma}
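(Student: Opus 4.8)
\textit{Proof proposal.}\ \ The plan is to treat $(S_t)_{t\ge0}$ (with $S_0=0$) as a martingale with respect to $(\cF_t)$ and apply a time-uniform Bernstein/Freedman-type argument; the ``weighted'' aspect adds nothing essential, because each $w_s$ is $\cF_{s-1}$-measurable, so the increments $\xi_s \eqdef w_s Y_s$ satisfy $\expect{\xi_s\given\cF_{s-1}}=0$, $|\xi_s|\le b$ almost surely (using $w_s\le1$ and $|Y_s|\le b$), and $\expect{\xi_s^2\given\cF_{s-1}} = w_s^2\,\expect{Y_s^2\given\cF_{s-1}}$, so that $V_t = \sum_{s=1}^t \expect{\xi_s^2\given\cF_{s-1}}$ is exactly the predictable quadratic variation of $S_t$, and $W_t=\sum_{s=1}^t w_s$ is the normalization that will appear in the denominator.

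First I would build the exponential supermartingale. Using the elementary Bennett bound $\expect{e^{\lambda\xi}\given\cF_{s-1}} \le \exp\!\big(g(\lambda)\,\expect{\xi^2\given\cF_{s-1}}\big)$ with $g(\lambda)=(e^{\lambda b}-1-\lambda b)/b^2$, valid for any centered $\xi$ with $|\xi|\le b$, together with $g(\lambda)\le \frac{\lambda^2}{2(1-\lambda b/3)}$ for $\lambda\in[0,3/b)$, one checks that $M_t(\lambda)\eqdef \exp\!\big(\lambda S_t - g(\lambda) V_t\big)$ is a nonnegative supermartingale with $M_0(\lambda)=1$. By Ville's maximal inequality, for each fixed $\lambda\in(0,3/b)$ and each $\delta'>0$, with probability at least $1-\delta'$, simultaneously for all $t\ge1$, $\lambda S_t \le g(\lambda)V_t + \log(1/\delta')$, hence $S_t \le \frac{\lambda}{2(1-\lambda b/3)}V_t + \frac{1}{\lambda}\log(1/\delta')$.

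Next I would make this adaptive in the random value of $V_t$ by peeling. Since the target bound carries $V_t+b^2$, which lies in $[b^2,(t+1)b^2]$, only $O(\log t)$ dyadic scales of the variance are relevant at time $t$; I would apply the fixed-$\lambda$ estimate along a geometric grid $\lambda_j$ chosen so that $\lambda_j$ is near-optimal when $V_t+b^2\in[b^2 2^{j-1}, b^2 2^{j})$, allocating the failure probabilities over $j$ (and, if needed, over dyadic blocks of $t$) so as to produce the factor $L\eqdef\log\!\big(4e(2t+1)/\delta\big)$. On the resulting event, choosing on each scale $\lambda\approx\sqrt{2L/(V_t+b^2)}$, clipped to stay below $3/b$, and substituting into $S_t \le \frac{\lambda}{2(1-\lambda b/3)}(V_t+b^2) + \frac{1}{\lambda}L$ gives $S_t \le \sqrt{2(V_t+b^2)L} + \frac{2b}{3}L$; dividing by $\beta+W_t$ yields the stated one-sided bound. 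Finally I would run the same argument for $(-Y_t)_t$ and take a union bound to get the two-sided statement for $|S_t|$, absorbing the extra factor $2$ into the constants already present.

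I expect the main obstacle to be the bookkeeping in the peeling step: obtaining exactly the clean constants ($\sqrt2$ in front of the square-root term, $2b/3$ in front of the linear term, and the explicit $4e(2t+1)$ inside the logarithm) while keeping the bound uniform over $t$ \emph{and} over the variance scale with only a single logarithmic factor. The ``$+b^2$'' inflation of $V_t$ is precisely the device that caps the number of peeling levels and lets one avoid an iterated-logarithm term; threading this through with the right union-bound weights is the delicate part, whereas the exponential-supermartingale construction and Ville's inequality are routine. This is the weighted analogue of the standard anytime Bernstein/Freedman inequality; as with Lemma~\ref{lemma:hoeffding-weighted-sum}, the precise statement and constants follow \citep{domingues2020}.
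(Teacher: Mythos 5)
The paper does not actually prove this lemma: it is imported verbatim from \citet{domingues2020} (``See Lemma 3 of \cite{domingues2020}''), so there is no in-paper argument to compare against, and your closing move of deferring the exact constants to that reference is exactly what the paper itself does. Your outline --- centered increments $w_sY_s$ with predictable variance $V_t$, the Bennett/Freedman exponential supermartingale $\exp(\lambda S_t - g(\lambda)V_t)$, Ville's inequality for time-uniformity, and adaptivity in $V_t$ via the $+b^2$ inflation --- is the standard route to such a self-normalized Bernstein bound and is the same family of argument that underlies the cited result, so as a blind reconstruction it is sound in spirit. One concrete caution on the step you yourself flag as delicate: a dyadic (factor-$2$) peeling in the variance, with $\lambda_j$ optimized per cell, does not recover the stated leading constant --- the mismatch between $V_t+b^2$ and the cell endpoint inflates the $\sqrt{2(V_t+b^2)L}$ term by up to $\sqrt{2}$, which the $2b/3$ slack in the linear term cannot absorb. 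To land exactly on $\sqrt{2}$, $2b/3$, and the $4e(2t+1)$ inside the logarithm one needs a finer geometric grid in $\lambda$ (or a grid whose cardinality is tied to $t$, which is precisely where a factor like $2t+1$ enters), so your sketch would either need that refinement or, as you do, an explicit appeal to \citet{domingues2020} for the constants.
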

\begin{proof}
	See Lemma 3 of \cite{domingues2020}.
\end{proof}


\subsection{Hoeffding-type concentration inequalities}

\begin{flemma}
	\label{lemma:transition-hoeffding}
	For all $(x, a, k, h) \in \stateactionspace \times [K] \times [H]$, we have
	\begin{align*}
	\abs{(\estP_h^k - \avP_h^k)\trueV_{k, h+1}^{*}(x, a)} \leq \sqrt{  \frac{2H^2 \loghoeffdingP(k,\cdelta)}{\gencount_h^k(x, a)} } + \frac{\kbeta H}{\gencount_h^k(x, a)} + \bonusbiasP(k, \cdelta)\ksigma
	\end{align*}
	with probability at least $1-\cdelta$, where
	\begin{align*}
	& \loghoeffdingP(k,\cdelta) = \BigOtilde{\XAcovdim}  = \log\pa{\frac{KH \XAcovnumber{\ksigma^2/(KH)}\sqrt{1+k/\kbeta}}{\cdelta}} \\
	& \bonusbiasP(k, \cdelta) = \BigOtilde{\lipQ + \sqrt{\XAcovdim}}  = \pa{ \frac{\spacekernelconstB }{2\kbeta^{3/2}}\sqrt{ 2 \loghoeffdingP(k,\cdelta)}
		+ \frac{4 \spacekernelconstB}{\kbeta} } + 2 \lipP \lipQ \pa{ 1+ \sqrt{\log(\spacekernelconstA k/\kbeta)} }
	\end{align*}
	and where $\XAcovdim$ is the covering dimension of $(\stateactionspace, \distfunc)$.
\end{flemma}
\begin{proof}
	Let $V = \trueV_{k, h+1}^{*}$. For fixed $(x, a, h)$, we have
	\begin{align*}
	& \abs{(\estP_h^k - \avP_h^k)\trueV_{k, h+1}^{*}(x,a)} \\
	& = \abs{\sum_{s=1}^{k-1} \normweight_h^{k,s}(x, a) \pa{ V(x_{h+1}^s) - \int_{\statespace} V(y) \rmd \transition_h^s(y|x, a)  } - \frac{\kbeta}{\gencount_h^k(x, a)}  \int_{\statespace} V(y) \rmd \transition_h^k(y|x, a)} \\
	& \leq \underbrace{
		\abs{ \sum_{s=1}^{k-1} \normweight_h^{k,s}(x, a) \pa{ V(x_{h+1}^s) - \int_{\statespace} V(y) \rmd \transition_h^s(y|x_h^s, a_h^s)}}
	    }_{\dingone} \\
	&\quad\;  + \underbrace{
		\abs{ \sum_{s=1}^{k-1} \normweight_h^{k,s}(x, a) \pa{\int_{\statespace} V(y) \rmd \transition_h^s(y|x_h^s, a_h^s) - \int_{\statespace} V(y) \rmd \transition_h^s(y|x, a)}}
		}_{\dingtwo} \\
	&\quad\quad\;    + \frac{\kbeta H}{\gencount_h^k(x, a)}\,.
	\end{align*}

	\paragraph{Bounding \dingone (martingale term)} Let $Y_s = V(x_{h+1}^s) -   \transition_h^s V(x_h^s, a_h^s)$. From Lemma~\ref{lemma:hoeffding-weighted-sum}, we have, for a fixed tuple $(x, a, k, h)$,
	\begin{align*}
	\dingone = \abs{ \sum_{s=1}^{k-1} \normweight_h^{k,s}(x, a) Y_s} \leq \sqrt{ 2H^2 \log\pa{\frac{\sqrt{1+k/\kbeta}}{\cdelta}} \frac{1}{\gencount_h^k(x, a)} }
	\end{align*}
	with probability at least $1-\delta$, since $(Y_s)_s$ is a martingale difference sequence with respect to $(\cF_h^s)_s$.

	From Lemma \ref{lemma:weighted-average-and-bonuses-are-lipschitz}, we verify that the functions
	\begin{align*}
	(x, a) \mapsto \sqrt{1/\gencount_h^k(x, a)}
	\quad\text{and}\quad
	(x, a) \mapsto \sum_{s=1}^{k-1} \normweight_h^{k,s}(x, a) Y_s
	\end{align*}
	are \lipschitz continuous, with \lipschitz constants bounded by $\spacekernelconstB k/(2\ksigma\kbeta^{3/2})$ and $4H \spacekernelconstB k/(\kbeta\ksigma)$, respectively. Let $\XAcoverset{\ksigma^2/KH}$ be a $(\ksigma^2/KH)$-covering of $\stateactionspace$. Using the Lipschitz continuity of the functions above and a union bound over $\XAcoverset{\ksigma^2/(KH)}$ and over $h \in [H]$, we have
	\begin{align*}
	\dingone =  \abs{ \sum_{s=1}^{k-1} \normweight_h^{k,s}(x, a) Y_s}
	& \leq
	\sqrt{ 2H^2 \log\pa{\frac{\sqrt{1+k/\kbeta}}{\cdelta}} \frac{1}{\gencount_h^k(x, a)} } \\
	& + \pa{ \frac{\spacekernelconstB k}{2\ksigma\kbeta^{3/2}}\sqrt{ 2H^2 \log\pa{\frac{\sqrt{1+k/\kbeta}}{\cdelta}}}
		+ \frac{4H \spacekernelconstB k}{\kbeta\ksigma} } \frac{\ksigma^2}{KH}
	\end{align*}
	for all $(x, a, k, h)$ with probability at least $1 -  \cdelta K H \XAcovnumber{\ksigma^2/(KH)}$.

	\paragraph{Bounding \dingtwo (spatial bias term)} We have
	\begin{align*}
	\dingtwo  & = \abs{ \sum_{s=1}^{k-1} \normweight_h^{k,s}(x, a) \pa{\int_{\statespace} V(y) \rmd \transition_h^s(y|x_h^s, a_h^s) - \int_{\statespace} V(y) \rmd \transition_h^s(y|x, a)}} \\
	& \leq \lipQ \sum_{s=1}^{k-1} \normweight_h^{k,s}(x, a) \Wassdist{\transition_h^s(\cdot|x_h^s, a_h^s), \transition_h^s(\cdot|x, a)}  \quad \text{by the definition of } \Wassdist{\cdot, \cdot}\\
	& \leq \lipP \lipQ  \sum_{s=1}^{k-1} \normweight_h^{k,s}(x, a) \dist{(x_h^s, a_h^s), (x, a)}  \quad \text{by Assumption \ref{assumption:lipschitz-rewards-and-transitions}} \\
	& \leq 2 \ksigma \lipP \lipQ \pa{ 1+ \sqrt{\logplus(\spacekernelconstA k/\kbeta)} } \quad \text{by Lemma \ref{lemma:kernel-bias}}.
	\end{align*}
	Putting together the bounds for \dingone and \dingtwo concludes the proof.
\end{proof}

\begin{flemma}
	\label{lemma:reward-hoeffding}
	For all $(x, a, k, h) \in \stateactionspace \times [K] \times [H]$, we have
	\begin{align*}
	\abs{\estR_h^k(x, a) - \avR_h^k(x, a)} \leq \sqrt{  \frac{2 \loghoeffdingR(k,\cdelta)}{\gencount_h^k(x, a)} } + \frac{\kbeta }{\gencount_h^k(x, a)} + \bonusbiasR(k, \cdelta)\ksigma
	\end{align*}
	with probability at least $1-\cdelta$, where
	\begin{align*}
	& \loghoeffdingR(k,\cdelta)  = \BigOtilde{\XAcovdim} = \log\pa{\frac{ \XAcovnumber{\ksigma^2/K}\sqrt{1+k/\kbeta}}{\cdelta}} \\
	& \bonusbiasR(k, \cdelta) = \BigOtilde{\lipQ + \sqrt{\XAcovdim}}  = \pa{ \frac{\spacekernelconstB }{2\kbeta^{3/2}}\sqrt{ 2 \loghoeffdingR(k,\cdelta)}
		+ \frac{4 \spacekernelconstB}{\kbeta} } + 2 \lipR \lipQ \pa{ 1+ \sqrt{\log(\spacekernelconstA k/\kbeta)} }
	\end{align*}
	and where $\XAcovdim$ is the covering dimension of $(\stateactionspace, \distfunc)$.
\end{flemma}
\begin{proof}
	Almost identical to the proof of Lemma \ref{lemma:transition-hoeffding}, except for the fact that the rewards are bounded by $1$ instead of $H$.
\end{proof}

\begin{flemma}
	\label{lemma:transitions-uniform-hoeffding}
	Let $\boundedlipschitzclass{2\lipQ}{2H}$ be the class of $2\lipQ$-\lipschitz functions from $\statespace$ to $\RR$ bounded by $2H$. With probability at least $1-\cdelta$, for all $(x, a, h, k) \in \stateactionspace \times [K] \times [H]$ and for all $f \in \boundedlipschitzclass{2\lipQ}{2H}$, we have
	\begin{align*}
		\abs{(\estP_h^k - \avP_h^k)f(x, a)} \leq \sqrt{  \frac{8H^2 \unifloghoeffdingP(k,\cdelta)}{\gencount_h^k(x, a)} } + \frac{2\kbeta H}{\gencount_h^k(x, a)} + \unifbiashoeffdingONE(k, \cdelta)\ksigma^{1+\Xcovdim/2} +  \unifbiashoeffdingTWO(k, \cdelta)\ksigma
	\end{align*}
	where
	\begin{align*}
		& \unifloghoeffdingP(k,\cdelta) = \BigOtilde{\Xsigmacov + \XAcovdim\Xcovdim}  = \log\pa{\frac{K H \XAcovnumber{\ksigma^{2+\Xcovdim/2}/KH}\sqrt{1+k/\kbeta}}{\cdelta} \pa{\frac{2H}{\lipQ\ksigma}}^{\Xcovnumber{\ksigma}} } \\
		& \unifbiashoeffdingONE(k, \cdelta)  = \BigOtilde{\sqrt{\Xsigmacov} + \sqrt{\XAcovdim\Xcovdim}}  =   \frac{4\spacekernelconstB }{\kbeta} + \frac{\spacekernelconstB }{2\kbeta^{3/2}}\sqrt{ 8\unifloghoeffdingP(k,\cdelta)} \\
		& \unifbiashoeffdingTWO(k, \cdelta) = \BigOtilde{\lipQ} = 4 \lipP \lipQ \pa{ 1+ \sqrt{\logplus(\spacekernelconstA k/\kbeta)}} + 32\lipQ
	\end{align*}
	and where $\XAcovdim$ is the covering dimension of $(\stateactionspace, \distfunc)$, $\Xcovdim$ is the covering dimension of $(\statespace, \Sdistfunc)$ and $\Xsigmacov = \Xcovnumber{\ksigma}$.
\end{flemma}
\begin{proof}
	Fix a function $f \in \boundedlipschitzclass{2\lipQ}{2H}$. Proceeding as in the proof of Lemma \ref{lemma:transition-hoeffding}, we have
	\begin{align*}
		& \abs{(\estP_h^k - \avP_h^k)f(x,a)} \\
		& = \abs{\sum_{s=1}^{k-1} \normweight_h^{k,s}(x, a) \pa{ f(x_{h+1}^s) - \int_{\statespace} f(y) \rmd \transition_h^s(y|x, a)  } - \frac{\kbeta}{\gencount_h^k(x, a)}  \int_{\statespace} f(y) \rmd \transition_h^k(y|x, a)} \\
		& \leq  \abs{ \sum_{s=1}^{k-1} \normweight_h^{k,s}(x, a)Y_s(f)}   + 4 \ksigma \lipP \lipQ \pa{ 1+ \sqrt{\logplus(\spacekernelconstA k/\kbeta)} }   + \frac{2\kbeta H}{\gencount_h^k(x, a)}.
	\end{align*}
	where $Y_s(f) = f(x_{h+1}^s) -   \transition_h^s f(x_h^s, a_h^s)$.

	Now, for fixed $(x, a, k, h, f)$, Lemma~\ref{lemma:hoeffding-weighted-sum} gives us
	\begin{align*}
	\abs{ \sum_{s=1}^{k-1} \normweight_h^{k,s}(x, a) Y_s(f)} \leq \sqrt{ 8H^2 \log\pa{\frac{\sqrt{1+k/\kbeta}}{\cdelta}} \frac{1}{\gencount_h^k(x, a)} }
	\end{align*}
	with probability at least $1-\delta$, since $(Y_s(f))_s$ is a martingale difference sequence with respect to $(\cF_h^s)_s$ bounded by $4H$.

	 \paragraph{Covering $\boundedlipschitzclass{2\lipQ}{2H}$} Now let $\cC_{\cL}$ be a $8\lipQ\ksigma$-covering of $\pa{\boundedlipschitzclass{2\lipQ}{2H}, \norm{\cdot}_\infty}$. Using the fact that the function $f \mapsto Y_s(f)$ is 2-\lipschitz with respect to $\norm{\cdot}_\infty$, we do a union bound over  $\cC_{\cL}$ to obtain

	\begin{align*}
		\abs{ \sum_{s=1}^{k-1} \normweight_h^{k,s}(x, a) Y_s(f)} \leq \sqrt{ 8H^2 \log\pa{\frac{\sqrt{1+k/\kbeta}}{\cdelta}} \frac{1}{\gencount_h^k(x, a)} } +  32\lipQ\ksigma
	\end{align*}
	for all $k$ and all $f \in \boundedlipschitzclass{2\lipQ}{2H}$, with probability at least $1-\delta\pa{\frac{2H}{\lipQ\ksigma}}^{\Xcovnumber{\ksigma}}$, since the $8\lipQ\ksigma$-covering  number of  $\cC_{\cL}$ is bounded by $\pa{\frac{2H}{\lipQ\ksigma}}^{\Xcovnumber{\ksigma}}$, by Lemma 5 of \cite{domingues2020}.

	\paragraph{Covering $(\stateactionspace,\distfunc)$} 	By Lemma \ref{lemma:weighted-average-and-bonuses-are-lipschitz}, the functions
	\begin{align*}
	(x, a) \mapsto \abs{ \sum_{s=1}^{k-1} \normweight_h^{k,s}(x, a)Y_s(f)}
	\quad\text{and}\quad
	(x, a) \mapsto \sqrt{\frac{1}{\gencount_h^k(x, a)}}
	\end{align*}
	are $4H\spacekernelconstB k/(\kbeta\ksigma)$-\lipschitz and $\spacekernelconstB k/(2\kbeta^{3/2}\ksigma)$, respectively, with respect to the distance $\distfunc$. Let $\cC_{\stateactionspace}$ be a
	$\ksigma^{2+\Xcovdim/2}/KH$ covering of $(\stateactionspace,\distfunc)$. Using the continuity of the functions above, a union bound over $\cC_{\stateactionspace}$ gives us\footnote{see, for instance, Lemma 6 of \cite{domingues2020}.}
	\begin{align*}
		\abs{ \sum_{s=1}^{k-1} \normweight_h^{k,s}(x, a) Y_s(f)} & \leq \sqrt{ 8H^2 \log\pa{\frac{\sqrt{1+k/\kbeta}}{\cdelta}} \frac{1}{\gencount_h^k(x, a)} } +  32\lipQ\ksigma \\
		& + \frac{\ksigma^{2+\Xcovdim/2}}{KH}\pa{  \frac{4H\spacekernelconstB k}{\kbeta\ksigma} + \frac{\spacekernelconstB k}{2\kbeta^{3/2}\ksigma}\sqrt{ 8H^2 \log\pa{\frac{\sqrt{1+k/\kbeta}}{\cdelta}}}  }
	\end{align*}
	for all $k$, all $f \in \boundedlipschitzclass{2\lipQ}{2H}$ and all $(x, a)\in\stateactionspace$, with probability at least $$1-\delta\pa{\frac{2H}{\lipQ\ksigma}}^{\Xcovnumber{\ksigma}}\XAcovnumber{\ksigma^{2+\Xcovdim/2}/KH}$$
	and a union bound over $(k, h)\in [K]\times[H]$ concludes the proof.
\end{proof}
\subsection{Bernstein-type concentration inequality}
\begin{flemma}
	\label{lemma:transitions-bernstein}
	Let $\boundedlipschitzclass{2\lipQ}{2H}$ be the class of $2\lipQ$-\lipschitz functions from $\statespace$ to $\RR$ bounded by $2H$. With probability at least $1-\cdelta$, for all $(x, a, h, k) \in \stateactionspace \times [K] \times [H]$ and for all $f \in \boundedlipschitzclass{2\lipQ}{2H}$, we have
	\begin{align*}
	\abs{(\estP_h^k - \avP_h^k)f(x,a)}  & \leq
	\frac{1}{H} \trueP_h^k\abs{f}(x,a)
	+  \frac{14 H^2\spacekernelconstB\logbernstein(k,\cdelta) + 2\kbeta H }{\gencount_h^k(x, a)} \\
	&   + \bernbiasone(k, \cdelta) \ksigma^{1+\Xcovdim}
	+ \bernbiastwo(k, \cdelta) \ksigma
	+ \frac{2}{H}\biasP(k, h)
	\end{align*}
	where $\XAcovdim$ is the covering dimension of $(\stateactionspace, \distfunc)$, $\Xcovdim$ is the covering dimension of $(\statespace, \Sdistfunc)$ and
	\begin{align*}
	& \logbernstein(k,\cdelta) = \BigOtilde{\Xsigmacov + \XAcovdim\Xcovdim} = \log\pa{\tfrac{4e(2k+1) }{\cdelta}KH\XAcovnumber{\frac{\ksigma^{2+\Xcovdim}}{H^2K}} \pa{\frac{2H}{\lipQ\ksigma}}^{\Xcovnumber{\ksigma}}} \\
	& \bernbiasone(k, \cdelta)  = \BigOtilde{\Xsigmacov + \XAcovdim\Xcovdim + \lipQ \ksigma} =  \frac{2\lipP\lipQ\ksigma}{H^2K} + \frac{4\spacekernelconstB}{H\kbeta} + \frac{14\logbernstein(k,\cdelta)  \spacekernelconstB}{\kbeta^2} \\
	& \bernbiastwo(k, \cdelta)= \BigOtilde{\lipQ} = 32\lipQ + 6\lipP \lipQ \pa{ 1+ \sqrt{\logplus(\spacekernelconstA k/\kbeta)} }
	\end{align*}
	where $\Xsigmacov = \BigO{1/\ksigma^\Xcovdim}$ is the $\ksigma$-covering number of $(\statespace, \Sdistfunc)$.
\end{flemma}
\begin{proof}
	We have
	\begin{align*}
	& \abs{(\estP_h^k - \avP_h^k)f(x,a)} \\
	& = \abs{ \sum_{s=1}^{k-1} \normweight_h^{k, s}(x, a) \pa{ f(x_{h+1}^s) - \transition_h^s f(x, a)}  -  \frac{\kbeta \transition_h^k f(x, a) }{\gencount_h^k(x, a)} } \\
	&  \leq
	    \underbrace{
		\abs{\sum_{s=1}^{k-1} \normweight_h^{k, s}(x, a) \pa{ f(x_{h+1}^s) - \int_{\statespace} f(y)\rmd P_h^s(y|x_h^s, a_h^s) }}
	          }_{\dingone} \\
	& \quad + \underbrace{
		 \abs{\sum_{s=1}^{k-1} \normweight_h^{k, s}(x, a) \pa{ \int_{\statespace} f(y)\rmd P_h^s(y|x_h^s, a_h^s) - \int_{\statespace} f(y)\rmd P_h^s(y|x, a) }}
		}_{\dingtwo}
	& + \frac{2 \kbeta H }{\gencount_h^k(x, a)}\,.
	\end{align*}

	\paragraph{Bounding \dingtwo (spatial bias term)} As in the proof of Lemma \ref{lemma:transition-hoeffding}, we can show that
	\begin{align*}
	\dingtwo  & = \abs{ \sum_{s=1}^{k-1} \normweight_h^{k,s}(x, a) \pa{\int_{\statespace} f(y) \rmd \transition_h^s(y|x_h^s, a_h^s) - \int_{\statespace} f(y) \rmd \transition_h^s(y|x, a)}}
	\leq 4 \ksigma \lipP \lipQ \pa{ 1+ \sqrt{\logplus(\spacekernelconstA k/\kbeta)} }
	\end{align*}

	\paragraph{Bounding the martingale term (\dingone) with a Bernstein-type inequality}

	Notice that $(x, a) \mapsto \int_{\statespace} f(y) \rmd \trueP_h^k(y|x,a)$ is bounded by $2H$ and $$\expect{f(x_{h+1}^s) | \cF_h^s} = \int_{\statespace} f(y) \rmd \trueP_h^s(y|x_h^s,a_h^s).$$
	The conditional variance of $f(x_{h+1}^s)$ is bounded as follows
	\begin{align*}
	\variance{f(x_{h+1}^s) | \cF_h^s}
	& = \expect{f(x_{h+1}^s)^2 | \cF_h^s} - \pa{\int_{\statespace} f(y) \rmd \trueP_h^s(y|x_h^s,a_h^s)}^2 \\
	& \leq 2H  \expect{ \abs{f(x_{h+1}^s)} | \cF_h^s} \\
	& = 2H \int_{\statespace} \abs{f(y)} \rmd \trueP_h^s(y|x_h^s,a_h^s)
	\end{align*}
	which we use to bound its weighted average
	\begin{align*}
	& \frac{1}{\gencount_h^k(x, a)} \sum_{s=1}^{k-1} \weight_h^{k, s}(x, a)^2 \variance{f(x_{h+1}^s) | \cF_h^s} \\
	& \leq \frac{1}{\gencount_h^k(x, a)} \sum_{s=1}^{k-1} \weight_h^{k, s}(x, a) \variance{f(x_{h+1}^s) | \cF_h^s} \\
	& \leq \frac{2H}{\gencount_h^k(x, a)} \sum_{s=1}^{k-1} \weight_h^{k, s}(x, a) \int_{\statespace} \abs{f(y)} \rmd \trueP_h^s(y|x_h^s,a_h^s) \\
	& = \frac{2H}{\gencount_h^k(x, a)} \sum_{s=1}^{k-1} \weight_h^{k, s}(x, a) \trueP_h^s \abs{f}(x, a)
	+ \frac{2H}{\gencount_h^k(x, a)} \sum_{s=1}^{k-1} \weight_h^{k, s}(x, a)\pa{\trueP_h^s \abs{f}(x_h^s, a_h^s) - \trueP_h^s \abs{f}(x, a)}
	\\
	& \leq 2H \pa{  \avP_h^k \abs{f}(x, a) - \frac{\kbeta \trueP_h^k\abs{f}(x, a) }{\gencount_h^k(x, a)}  }
	+ \frac{4H\lipP\lipQ}{\gencount_h^k(x, a)} \sum_{s=1}^{k-1} \weight_h^{k, s}(x, a) \dist{(x_h^s, a_h^s), (x, a)}
	\\
	& \leq 2H \avP_h^k \abs{f}(x, a) + 8H\lipP\lipQ \ksigma  \pa{1 + \sqrt{\logplus (\spacekernelconstA k/\kbeta)}}
	\end{align*}
	where, in the last inequality, we used Lemma \ref{lemma:kernel-bias}.

	Let $\triangle(k,\cdelta) = \log\pa{4e(2k+1)/\cdelta}$. Let $Y_s(f) = f(x_{h+1}^s) -   \transition_h^s f(x_h^s, a_h^s)$. By Lemma~\ref{lemma:bernstein-freedman-weighted-sum}, we have
	\begin{align*}
	\dingone = \abs{ \sum_{s=1}^{k-1} \normweight_h^{k,s}(x, a) Y_s(f)}  \leq \sqrt{2 \triangle(k,\cdelta)  \frac{\sum_{s=1}^{k-1} \weight_h^{k, s}(x, a)^2 \variance{f(x_{h+1}^s) | \cF_h^s}}{\gencount_h^k(x, a)^2}} + \frac{10H \triangle(k,\cdelta) }{\gencount_h^k(x, a)}
	\end{align*}
	with probability at least $1-\cdelta$, since, for a fixed $f$, $(Y_s(f))_s$ is a martingale difference sequence with respect to $(\cF_h^s)_s$. Using the fact that  $\sqrt{uv}\leq (u+v)/2$ for all $u, v > 0$,
	\begin{align*}
	\abs{ \sum_{s=1}^{k-1} \normweight_h^{k,s}(x, a) Y_s(f)} & \leq
	\frac{4H^2\triangle(k,\cdelta) }{\gencount_h^k(x, a)}
	+ 	\frac{1}{4H^2} \frac{\sum_{s=1}^{k-1} \weight_h^{k, s}(x, a)^2 \variance{f(x_{h+1}^s) | \cF_h^s}}{\gencount_h^k(x, a)}
	+ \frac{10 H \triangle(k,\cdelta) }{\gencount_h^k(x, a)} \\
	&  \leq \frac{1}{H} \int_{\statespace} \abs{f(y)} \rmd \avP_h^k(y|x,a)
	+  \frac{(4H^2+10H)\triangle(k,\cdelta) }{\gencount_h^k(x, a)} +
	 \frac{2 \lipP\lipQ \ksigma}{H}   \pa{1 + \sqrt{\logplus (\spacekernelconstA t/\kbeta)}}
	\end{align*}

	From Corollary \ref{corollary:bias-between-avmdp-and-true-mdp}, we have
	\begin{align*}
	\int_{\statespace} \abs{f(y)} \rmd \avP_h^k(y|x,a) = (\avP_h^k-\trueP_h^k)\abs{f(y)}(x,a) + \trueP_h^k\abs{f(y)}(x,a) \leq 2\biasP(k, h) + \trueP_h^k\abs{f(y)}(x,a)
	\end{align*}
	which gives us
	\begin{align*}
	\abs{ \sum_{s=1}^{k-1} \normweight_h^{k,s}(x, a) Y_s(f)} & \leq \frac{1}{H} \trueP_h^k\abs{f(y)}(x,a)
	+  \frac{(4H^2+10H)\triangle(k,\cdelta) }{\gencount_h^k(x, a)} \\
	& + \frac{2}{H}\biasP(k, h) +
	\frac{2 \lipP\lipQ \ksigma}{H}   \pa{1 + \sqrt{\logplus (\spacekernelconstA t/\kbeta)}}
	\end{align*}
	with probability $1-\cdelta$.

	\paragraph{Covering of $\stateactionspace$} As a consequence of Assumption \ref{assumption:lipschitz-rewards-and-transitions}, the function $(x, a) \mapsto (1/H) \trueP_h^k\abs{f(y)}(x,a)$ is $2\lipP\lipQ$-\lipschitz. Also, the functions
	\begin{align*}
	(x, a) \mapsto \abs{ \sum_{s=1}^{k-1} \normweight_h^{k,s}(x, a) Y_s(f)}
	\quad\text{and}\quad
	(x, a) \mapsto  \frac{1}{\gencount_h^k(x, a)}
	\end{align*}
	are $4H\spacekernelconstB k/(\kbeta\ksigma)$-\lipschitz and $\spacekernelconstB k/(\kbeta^2\ksigma)$, respectively, by Lemma \ref{lemma:weighted-average-and-bonuses-are-lipschitz}.
	Consequently, a union bound over a $(\ksigma^{2+\Xcovdim}/(H^2K))$-covering of $(\stateactionspace, \distfunc)$ and over $[H]$ gives us
	\begin{align*}
	\abs{ \sum_{s=1}^{k-1} \normweight_h^{k,s}(x, a) Y_s(f)} & \leq \frac{1}{H} \trueP_h^k\abs{f(y)}(x,a)
	+  \frac{(4H^2+10H)\triangle(k,\cdelta) }{\gencount_h^k(x, a)} \\
	&  + \frac{2}{H}\biasP(k, h) +
	\frac{2 \lipP\lipQ \ksigma}{H}   \pa{1 + \sqrt{\logplus (\spacekernelconstA t/\kbeta)}}\\
	& + \pa{ 2\lipP\lipQ + \frac{4H\spacekernelconstB k}{\kbeta\ksigma} + \frac{(4H^2+10H)\triangle(k,\cdelta)  \spacekernelconstB k}{\kbeta^2\ksigma} }\frac{\ksigma^{2+\Xcovdim}}{H^2K}
	\end{align*}
	for all $(x, a, h, k)$ with probability at least $1 - \cdelta KH\XAcovnumber{\frac{\ksigma^{2+\Xcovdim}}{H^2K}}$.

	\paragraph{Covering of $\boundedlipschitzclass{2\lipQ}{2H}$} The bounds for \dingone and \dingtwo give us
	\begin{align*}
	\abs{(\estP_h^k - \avP_h^k)f(x,a)}  & \leq
	\frac{1}{H} \trueP_h^k\abs{f(y)}(x,a)
	+  \frac{(4H^2+10H)\triangle(k,\cdelta) }{\gencount_h^k(x, a)} \\
	&  + \frac{2}{H}\biasP(k, h) + \pa{ 2\lipP\lipQ + \frac{4H\spacekernelconstB k}{\kbeta\ksigma} + \frac{(4H^2+10H)\triangle(k,\cdelta)  \spacekernelconstB k}{\kbeta^2\ksigma} }\frac{\ksigma^{2+\Xcovdim}}{H^2K} \\
	&  + 6 \ksigma \lipP \lipQ \pa{ 1+ \sqrt{\logplus(\spacekernelconstA k/\kbeta)} }
	+ \frac{2 \kbeta H }{\gencount_h^k(x, a)}\,.
	\end{align*}
	The $8\lipQ\ksigma$-covering number of $\boundedlipschitzclass{2\lipQ}{2H}$ with respect to the infinity norm is bounded by $(2H/(\lipQ\ksigma))^{\Xcovnumber{\ksigma}}$, by Lemma 5 of \cite{domingues2020}.  The functions $f \mapsto \abs{(\transition_h^k - \estP_h^k)f(x,a)}$ and $f \mapsto \frac{1}{H} \int_{\statespace} \abs{f(y)} \rmd \avP_h^k(y|x,a)$ are $2$-Lipschitz with respect to $\norm{\cdot}_\infty$. Consequently, with probability at least
	$$1- \cdelta KH\XAcovnumber{\frac{\ksigma^{2+\Xcovdim}}{H^2K}} \pa{\frac{2H}{\lipQ\ksigma}}^{\Xcovnumber{\ksigma}}, $$
	for all $\boundedlipschitzclass{2\lipQ}{2H}$ and for all $(x, a, h,k)$, we have
	\begin{align*}
	\abs{(\estP_h^k - \avP_h^k)f(x,a)}  & \leq
	\frac{1}{H} \trueP_h^k\abs{f(y)}(x,a)
	+  \frac{(4H^2+10H)\triangle(k,\cdelta) }{\gencount_h^k(x, a)} \\
	&  + \frac{2}{H}\biasP(k, h) + \pa{ 2\lipP\lipQ + \frac{4H\spacekernelconstB k}{\kbeta\ksigma} + \frac{(4H^2+10H)\triangle(k,\cdelta)  \spacekernelconstB k}{\kbeta^2\ksigma} }\frac{\ksigma^{2+\Xcovdim}}{H^2K} \\
	&  + 6 \ksigma \lipP \lipQ \pa{ 1+ \sqrt{\logplus(\spacekernelconstA k/\kbeta)} }
	+ \frac{2 \kbeta H }{\gencount_h^k(x, a)} 
	 + 32\lipQ\ksigma
	\end{align*}
	which concludes the proof.
\end{proof}
\subsection{Good event}

\begin{flemma}
	\label{lemma:good-event}
	Let $\goodevent = \goodevent_1 \cap \goodevent_2 \cap \goodevent_3 \cap \goodevent_4$, where
	\begin{align*}
	& \goodevent_1 \eqdef \braces{\forall (x, a, k, h),\; \abs{\estR_h^k(x, a) - \avR_h^k(x, a)} \leq \sqrt{  \frac{2 \loghoeffdingR(k,\cdelta/8)}{\gencount_h^k(x, a)} } + \frac{\kbeta }{\gencount_h^k(x, a)} + \bonusbiasR(k, \cdelta/8)\ksigma} \\
	& \goodevent_2 \eqdef \braces{\forall (x, a, k, h),\; \abs{(\estP_h^k - \avP_h^k)\trueV_{k, h+1}^{*}(x, a)} \leq \sqrt{  \frac{2H^2 \loghoeffdingP(k,\cdelta/8)}{\gencount_h^k(x, a)} } + \frac{\kbeta H}{\gencount_h^k(x, a)} + \bonusbiasP(k, \cdelta/8)\ksigma } \\
	& \goodevent_3 \eqdef \Bigg\lbrace \forall (x, a, k, h, f),\;    \abs{(\estP_h^k - \avP_h^k)f(x,a)}   \leq
	\sqrt{  \frac{2H^2 \unifloghoeffdingP(k,\cdelta/8)}{\gencount_h^k(x, a)} } + \frac{\kbeta H}{\gencount_h^k(x, a)} \\
	& \quad\quad\quad\quad\quad\quad\quad\quad\quad\quad\quad\quad\quad
	+ \unifbiashoeffdingONE(k, \cdelta/8)\ksigma^{1+\Xcovdim/2} +  \unifbiashoeffdingTWO(k, \cdelta/8)\ksigma \Bigg\rbrace \\
	& \goodevent_4 \eqdef \Bigg\lbrace \forall (x, a, k, h, f),\;    \abs{(\estP_h^k - \avP_h^k)f(x,a)}   \leq
	\frac{1}{H} \trueP_h^k\abs{f(y)}(x,a)
	+  \frac{14 H^2\spacekernelconstB\logbernstein(k,\cdelta/8) + 2\kbeta H  }{\gencount_h^k(x, a)} \\
	& \quad\quad\quad\quad\quad\quad\quad\quad\quad\quad\quad\quad\quad
	+ \bernbiasone(k, \cdelta/8) \ksigma^{1+\Xcovdim}
	+ \bernbiastwo(k, \cdelta/8) \ksigma
	+ \frac{2}{H}\biasP(k, h)  \Bigg\rbrace
	\end{align*}
	for $(x, a, k, h) \in \stateactionspace\times[K]\times[H]$ and $f \in \boundedlipschitzclass{2\lipQ}{2H}$, and where
	\begin{align*}
	& \loghoeffdingP(k,\cdelta) = \BigOtilde{\XAcovdim},  \quad
	\bonusbiasP(k, \cdelta) = \BigOtilde{\lipQ + \sqrt{\XAcovdim}}, \quad
	\loghoeffdingR(k,\cdelta)  = \BigOtilde{\XAcovdim}, \quad
	\bonusbiasR(k, \cdelta) = \BigOtilde{\lipQ + \sqrt{\XAcovdim}} \\
	& \unifloghoeffdingP(k,\cdelta) = \BigOtilde{\Xsigmacov + \XAcovdim\Xcovdim}
	,\quad
	 \unifbiashoeffdingONE(k, \cdelta)  = \BigOtilde{\sqrt{\Xsigmacov} + \sqrt{\XAcovdim\Xcovdim}}
	,\quad
	\unifbiashoeffdingTWO(k, \cdelta) = \BigOtilde{\lipQ} \\
	& \logbernstein(k,\cdelta) = \BigOtilde{\Xsigmacov+ \XAcovdim\Xcovdim}, \quad
	\bernbiasone(k, \cdelta)  = \BigOtilde{\Xsigmacov + \XAcovdim\Xcovdim + \lipQ \ksigma}, \quad
	\bernbiastwo(k, \cdelta)= \BigOtilde{\lipQ}
	\end{align*}
	are defined in Lemmas \ref{lemma:transition-hoeffding}, \ref{lemma:reward-hoeffding}, \ref{lemma:transitions-uniform-hoeffding} and \ref{lemma:transitions-bernstein}, respectively. Then,
	\begin{align*}
	\prob{\goodevent} \geq 1 - \cdelta/2.
	\end{align*}
\end{flemma}

\begin{proof}
	Immediate consequence of Lemmas \ref{lemma:transition-hoeffding}, \ref{lemma:reward-hoeffding}, \ref{lemma:transitions-uniform-hoeffding} and \ref{lemma:transitions-bernstein}.
\end{proof}

\newpage
\section{Upper bound on true value function}

In this section, we show that the true value functions can be upper bounded by the value functions computed by \kerns plus a bias term. This result will be used to upper bound the regret in the next section.

\begin{flemma}[upper bound on $Q$ functions]
	\label{lemma:upper-bound-on-q-functions}
	On $\goodevent$, for all $(x, a, k, h) \in \stateactionspace \times [K] \times [H]$, we have
	\begin{align*}
		\algQ_h^k(x, a) + \sum_{h'=h}^H \bias(k, h)  \geq \trueQ_{k,h}^*(x, a)
	\end{align*}
	where $\bias(k, h) = \biasR(k, h) + \biasP(k, h)$ is the temporal bias of the algorithm at time $(k, h)$ (see Definition \ref{def:temporal-bias}).
\end{flemma}
\begin{proof}
	We proceed by induction on $h$. For $h = H+1$, both quantities are zero, so the inequality is trivially verified. Now, assume that it is true for $h+1$ and let's prove it for $h$.

	From the induction hypothesis, we have $\algV_{h+1}^k(x)  + \sum_{h'=h+1}^H \bias(k, h) \geq \trueV_{k, h+1}^*(x)$. Indeed,
	\begin{align*}
		\max_a \algQ_{h+1}^k(x, a) + \sum_{h'=h+1}^H \bias(k, h)  \geq \max_a \trueQ_{k,h}^*(x, a) = \trueV_{k,h+1}^*(x)
	\end{align*}
	and, since $ \trueV_{k,h+1}^*(x)  \leq H-h$, we have
	\begin{align*}
		\algV_{h+1}^k(x) + \sum_{h'=h+1}^H \bias(k, h) = \min\pa{H-h, \max_a \algQ_{h+1}^k(x, a)} + \sum_{h'=h+1}^H \bias(k, h) \geq \trueV_{k,h+1}^*(x).
	\end{align*}

	From the definition of the algorithm, we have
	\begin{align*}
		\algQ_h^k(x, a) = \min_{s\in[k-1]} \sqrbrackets{ \tQ_h^k(x_h^s, a_h^s) + \lipQ \dist{(x, a), (x_h^s, a_h^s)}  }
	\end{align*}
	where $\tQ_h^k(x, a) = \estR_h^k(x, a) + \estP_h^k\algV_{h+1}^k(x, a) + \bonus_h^k(x, a)$. Hence,
	\begin{align*}
		& \tQ_h^k(x, a) - \trueQ_{k,h}^*(x, a)   \\
		& = \underbrace{
			    \estR_h^k(x, a) - \trueR_h^k(x, a) + \rbonus_h^k(x, a)
		     }_{\termA}
		    + \underbrace{
		    	\estP_h^k \algV_{h+1}^k(x, a) - \trueP_h^k \trueV_{k, h+1}^*(x, a) + \pbonus_h^k(x, a)
	    	 }_{\termB}.
	\end{align*}
	The term $\termA$ is lower bounded as follows
	\begin{align*}
		\termA & = \estR_h^k(x, a) - \avR_h^k(x, a) + \rbonus_h^k(x, a) + \avR_h^k(x, a) - \trueR_h^k(x, a) \geq -\biasR(k, h)
	\end{align*}
	by Corollary \ref{corollary:bias-between-avmdp-and-true-mdp} and the fact that $\estR_h^k(x, a) - \avR_h^k(x, a) + \rbonus_h^k(x, a) \geq 0$ on $\goodevent$.

	Similarly, for the term $\termB$, we have
	\begin{align*}
		\termB & =
		\estP_h^k \pa{\algV_{h+1}^k - \trueV_{k,h+1}^*}(x, a)
		+ \pa{\estP_h^k - \avP_h^k} \trueV_{k,h+1}^*(x, a)
		+ \pa{\avP_h^k - \trueP_h^k} \trueV_{k,h+1}^*(x, a) + \pbonus_h^k(x, a) \\
		& \geq \estP_h^k \pa{\algV_{h+1}^k - \trueV_{k,h+1}^*}(x, a) - \biasP(k, h)
	\end{align*}
	which gives us
	\begin{align*}
		& \tQ_h^k(x, a) - \trueQ_{k,h}^*(x, a) \\
		& \geq \estP_h^k \pa{\algV_{h+1}^k - \trueV_{k,h+1}^*}(x, a) -\pa{ \biasR(k, h) + \biasP(k, h)} \\
		& = \estP_h^k \pa{\algV_{h+1}^k - \trueV_{k,h+1}^*}(x, a) + \sum_{h'=h+1}^H \bias(k, h) - \sum_{h'=h}^H \bias(k, h) \\
		& \geq \estP_h^k \pa{\algV_{h+1}^k + \sum_{h'=h+1}^H \bias(k, h) - \trueV_{k,h+1}^*}(x, a)  - \sum_{h'=h}^H \bias(k, h) \\
		& \geq  - \sum_{h'=h}^H \bias(k, h) \quad \text{by the induction hypothesis.}
	\end{align*}
	Consequently, for all $(x, a)$ and all $s \in [k-1]$, we have
	\begin{align*}
		\trueQ_{k,h}^*(x, a) - \sum_{h'=h}^H \bias(k, h)
		& \leq \trueQ_{k,h}^*(x_h^s, a_h^s) + \lipQ \dist{(x, a), (x_h^s, a_h^s)} - \sum_{h'=h}^H \bias(k, h)   \\
		& \leq \tQ_h^k(x_h^s, a_h^s) + \lipQ \dist{(x, a), (x_h^s, a_h^s)}
	\end{align*}
	since $\trueQ_{k,h}^*$ is $\lipQ$-\lipschitz. Which implies the result
	\begin{align*}
		\trueQ_{k,h}^*(x, a) - \sum_{h'=h}^H \bias(k, h)  \leq \min_{s\in[k-1]}\sqrbrackets{\tQ_h^k(x_h^s, a_h^s) + \lipQ \dist{(x, a), (x_h^s, a_h^s)}} = \algQ_h^k(x, a).
	\end{align*}
\end{proof}

\begin{fcorollary}
	\label{corollary:error-wrt-true-upper-bound}
	Let $\upperQ_{k,h}$ and $\upperV_{k,h}$ be defined as as
	\begin{align*}
	& \upperQ_{k,h}(x, a) \eqdef \algQ_h^k(x, a) + \sum_{h'=h}^H \bias(k, h')
	,\quad
	\upperV_{k,h}(x) \eqdef  \min\pa{H, \max_a \upperQ_{k,h}(x, a)}
	\end{align*}
	Then,
	\begin{align*}
	\sup_{x\in\statespace} \abs{\algV_h^k(x) - \upperV_{k,h}(x)} \leq  \sum_{h'=h}^H \bias(k, h')
	\end{align*}
	and, by Lemma \ref{lemma:upper-bound-on-q-functions}, we have $\upperV_{k,h} \geq \trueV_{k,h}^*$ on the event $\goodevent$.
\end{fcorollary}
\begin{proof}
	For any $x \in \statespace$,
	\begin{align*}
	\abs{\algV_h^k(x) - \upperV_{k,h}(x)}
	& = \abs{\min\pa{H, \max_a \algQ_h^k(x, a)} -  \min\pa{H, \max_a \upperQ_{k,h}(x, a)}}  \\
	& \leq \abs{\max_a \algQ_h^k(x, a) - \max_a \upperQ_{k,h}(x, a)} \\
	& \leq \max_a \abs{\algQ_h^k(x, a) - \upperQ_{k,h}(x, a)} \\
	& =  \sum_{h'=h}^H \bias(k, h').
	\end{align*}
	where we used the fact that, for any $a, b, c \in \RR$, we have $\abs{\min(a, b) -\min(a, c)} \leq \abs{b-c}$.
\end{proof}

\section{Regret bounds}
\label{app:regret_bounds}

Using the results proved in the previous sections, we are now ready to prove our regret bounds. We first start by proving that the regret is bounded by sums involving
$\sqrt{1/\gencount_h^k(x,a)}$, $1/\gencount_h^k(x,a)$ and bias terms. Then, we provide upper bounds for these sums, which result in the final regret bounds.

In Theorem \ref{theorem:regret_main_text}, we prove two regret bounds, $\regret_1$ and $\regret_2$. Here, we refer to $\regret_1$ as a UCRL-type regret bound and to $\regret_2$ as a UCBVI-type bound, due to the technique used to bound the difference between $\estP_h^k$ and $\trueP_h^k$. Making an analogy with finite MDPs, in UCRL \citep{jaksch2010near}, a term analogous to $\normm{\estP_h-\trueP_h^k}_1$ is bounded (as in Lemma \ref{lemma:transitions-uniform-hoeffding}), whereas in UCBVI \citep{Azar2017}, the term $|(\estP_h-\trueP_h^k)\trueV_{k, h+1}^*|$ is bounded (as in Lemma \ref{lemma:transition-hoeffding}).

	\begin{fcorollary}
		\label{corollary:regret-is-bounded-by-delta}
		Let $\delta_h^k \eqdef \algV_h^k(x_h^k) - \trueV_{k,h}^{\pi_k}(x_h^k)$. Then, on $\goodevent$
		\begin{align*}
			\regret(K) \leq \sum_{k=1}^K\delta_1 ^k + \sum_{k=1}^K\sum_{h=1}^H \bias(k, h).
		\end{align*}
	\end{fcorollary}
	\begin{proof}
		It follows directly from Lemma \ref{lemma:upper-bound-on-q-functions}:
		\begin{align*}
			\regret(K) = \sum_{k=1}^{K} \trueV_{k,1}^*(x_1^k) - \trueV_{k,h}^{\pi_k}(x_1^k) \leq \sum_{k=1}^{K}\pa{ \algV_{1}^k(x_1^k)  + \sum_{h=1}^H \bias(k, h)  - \trueV_{k,h}^{\pi_k}(x_1^k)}.
		\end{align*}
	\end{proof}

	\begin{definition}
		For any $(k, h)$, let  $(\tx_h^k, \ta_h^k)$ be defined as
		\begin{align*}
			(\tx_h^k, \ta_h^k)  \eqdef \argmin_{(x_h^s, a_h^s): s<k} \dist{(x_h^k, a_h^k), (x_h^s, a_h^s)}
		\end{align*}
		that is, the state-action pair in the history that is the closest to $(x_h^k, a_h^k)$.
	\end{definition}

	\subsection{Regret bound in terms of the sum of exploration bonuses (UCRL-type)}

	\begin{flemma}[UCRL-type bound with sum of bonuses]
		\label{lemma:regret-in-terms-of-sum-of-bonus-ucrl-type}
		On the event $\goodevent$, the regret of \ouralgo is bounded by
		\begin{align*}
		\regret(K) \lesssim &
		\sum_{k=1}^K\sum_{h=1}^H \pa{\frac{H\sqrt{\Xsigmacov}}{\sqrt{\gencount_h^k(\tx_h^k, \ta_h^k)}}
			+  \frac{\kbeta H}{\gencount_h^k(\tx_h^k, \ta_h^k)} }\indic{\dist{(x_h^k, a_h^k), (\tx_h^k, \ta_h^k)} \leq  2 \ksigma}
		+ H^2 \sigmacov \\
		& +  \sum_{k=1}^K\sum_{h=1}^H \widetilde{\xi}_{h+1}^k
		+  H \sum_{k=1}^K\sum_{h=1}^H \bias(k, h)
		+ \lipQ K H\ksigma
		\end{align*}
		where $\sigmacov$ is the $\ksigma$-covering number of $(\stateactionspace, \distfunc)$, $\Xsigmacov$ is the $\ksigma$-covering number of $(\statespace, \Sdistfunc)$ and $(\widetilde{\xi}_{h+1}^k)_{k, h}$ is a martingale difference sequence with respect to $(\cF_h^k)_{k, h}$ bounded by $4H$.
	\end{flemma}
	\begin{proof}
		\textbf{Regret decomposition} \ \ On $\goodevent$, we upper bound $\delta_h^k$ using the following decomposition:
		\begin{align*}
		\delta_h^k
		& =  \algV_h^k(x_h^k) - \trueV_{k,h}^{\pi_k}(x_h^k) \\
		& \leq \algQ_h^k(x_h^k, a_h^k) - \trueQ_{k,h}^{\pi_k}(x_h^k, a_h^k) \\
		& \leq \algQ_h^k(\tx_h^k, \ta_h^k) - \trueQ_{k,h}^{\pi_k}(x_h^k, a_h^k)
		+ \lipQ\dist{(\tx_h^k, \ta_h^k), (x_h^k, a_h^k)}
		,\quad\text{since $\algQ_h^k$ is $\lipQ$-\lipschitz} \\
		& \leq  \tQ_h^k(\tx_h^k, \ta_h^k) - \trueQ_{k,h}^{\pi_k}(x_h^k, a_h^k)
		+ \lipQ\dist{(\tx_h^k, \ta_h^k), (x_h^k, a_h^k)}
		,\quad\text{since $\algQ_h^k(\tx_h^k, \ta_h^k) \leq  \tQ_h^k(\tx_h^k, \ta_h^k)$} \\
		& = \estR_h^k(\tx_h^k, \ta_h^k) - \trueR_h^k(x_h^k, a_h^k)
		+  \estP_h^k\algV_{h+1}^k(\tx_h^k, \ta_h^k) - \trueP_h^k\trueV_{k,h+1}^{\pi_k}(x_h^k, a_h^k)
		+\bonus_h^k(\tx_h^k, \ta_h^k)
		+ \lipQ\dist{(\tx_h^k, \ta_h^k), (x_h^k, a_h^k)} \\
		& =
		    \underbrace{
		    	\estR_h^k(\tx_h^k, \ta_h^k) - \trueR_h^k(x_h^k, a_h^k)
		    }_{\termA}
		    +
		     \underbrace{
		     	\sqrbrackets{\estP_h^k -\trueP_h^k}\trueV_{k,h+1}^* (\tx_h^k, \ta_h^k)
		     }_{\termB}
		  +  \underbrace{
		 \sqrbrackets{\estP_h^k -\trueP_h^k}\pa{ \algV_{h+1}^k- \trueV_{k,h+1}^*}(\tx_h^k, \ta_h^k)
		       }_{\termC}  \\
		&  +
			 \underbrace{
			 	 \trueP_h^k \algV_{h+1}^k(\tx_h^k, \ta_h^k) - \trueP_h^k\trueV_{k,h+1}^{\pi_k}(x_h^k, a_h^k)
			 }_{\termD}
		   +\bonus_h^k(\tx_h^k, \ta_h^k)
	    	+ 2\lipQ\dist{(\tx_h^k, \ta_h^k), (x_h^k, a_h^k)}.
		\end{align*}
		Now, we bound each term $\termA$-$\termD$ separately.

	\underline{Term $\termA$}:
	\begin{align*}
	\termA & = \estR_h^k(\tx_h^k, \ta_h^k) - \trueR_h^k(\tx_h^k, \ta_h^k) + \trueR_h^k(\tx_h^k, \ta_h^k) - \trueR_h^k(x_h^k, a_h^k) \\
	& \leq \estR_h^k(\tx_h^k, \ta_h^k) - \trueR_h^k(\tx_h^k, \ta_h^k) +\lipR \dist{(\tx_h^k, \ta_h^k), (x_h^k, a_h^k)} \\
	& = \estR_h^k(\tx_h^k, \ta_h^k) - \avR_h^k(\tx_h^k, \ta_h^k) + \avR_h^k(\tx_h^k, \ta_h^k) -  \trueR_h^k(\tx_h^k, \ta_h^k) + \lipR \dist{(\tx_h^k, \ta_h^k), (x_h^k, a_h^k)}  \\
	& \leq \rbonus_h^k(\tx_h^k, \ta_h^k) + \biasR(k, h) +\lipR \dist{(\tx_h^k, \ta_h^k), (x_h^k, a_h^k)}
	\end{align*}
	by the definition of $\goodevent$ and Corollary \ref{corollary:bias-between-avmdp-and-true-mdp}.

	\underline{Term $\termB$}:
	\begin{align*}
	\termB & = \sqrbrackets{\estP_h^k -\trueP_h^k}\trueV_{k,h+1}^* (\tx_h^k, \ta_h^k)
	= \sqrbrackets{\estP_h^k -\avP_h^k}\trueV_{k,h+1}^* (\tx_h^k, \ta_h^k) +
	\sqrbrackets{\avP_h^k -\trueP_h^k}\trueV_{k,h+1}^* (\tx_h^k, \ta_h^k) \\
	& \leq \pbonus_h^k(\tx_h^k, \ta_h^k) + \biasP(k, h)\,.
	\end{align*}

	\underline{Term $\termC$}: Using Corollary \ref{corollary:bias-between-avmdp-and-true-mdp}, we obtain
	\begin{align*}
		\termC &= \sqrbrackets{\estP_h^k -\trueP_h^k}\pa{ \algV_{h+1}^k- \trueV_{k,h+1}^*}(\tx_h^k, \ta_h^k) \\
		& \leq \sqrbrackets{\estP_h^k -\avP_h^k}\pa{ \algV_{h+1}^k- \trueV_{k,h+1}^*}(\tx_h^k, \ta_h^k) + 2\biasP(k, h) \\
		& \leq
		 \sqrt{  \frac{8H^2 \unifloghoeffdingP(k,\cdelta/8)}{\gencount_h^k(\tx_h^k, \ta_h^k)} }
		 + \frac{2 \kbeta H}{\gencount_h^k(\tx_h^k, \ta_h^k)}
		 + \unifbiashoeffdingONE(k, \cdelta/8)\ksigma^{1+\Xcovdim/2}
		 +  \unifbiashoeffdingTWO(k, \cdelta/8)\ksigma
		 + 2\biasP(k, h) \\
		& \lesssim \sqrt{  \frac{ H^2 \Xsigmacov }{\gencount_h^k(\tx_h^k, \ta_h^k)} }
		 + \frac{\kbeta H}{\gencount_h^k(\tx_h^k, \ta_h^k)}  + \lipQ \ksigma + 2\biasP(k, h)
	\end{align*}
	by the definition of $\goodevent$.

	\underline{Term $\termD$}: From Assumption \ref{assumption:lipschitz-rewards-and-transitions}, for any $\lipQ$-\lipschitz function, the mapping $(x, a) \mapsto \trueP_h^k f(x, a)$ is $\lipP\lipQ$-\lipschitz. Consequently,
	\begin{align*}
	\termD & =  \trueP_h^k \algV_{h+1}^k(\tx_h^k, \ta_h^k) - \trueP_h^k\trueV_{k,h+1}^{\pi_k}(x_h^k, a_h^k)\\
	& \leq \trueP_h^k \algV_{h+1}^k(x_h^k, a_h^k) - \trueP_h^k\trueV_{k,h+1}^{\pi_k}(x_h^k, a_h^k) + \lipP\lipQ\dist{(x_h^k, a_h^k), (\tx_h^k, \ta_h^k)}\\
	& = \trueP_h^k\pa{ \algV_{h+1}^k - \trueV_{k,h+1}^{\pi_k}}(x_h^k, a_h^k) + \lipP\lipQ\dist{(x_h^k, a_h^k), (\tx_h^k, \ta_h^k)} \\
	& = \delta_{h+1}^k + \xi_{h+1}^k + \lipP\lipQ\dist{(x_h^k, a_h^k), (\tx_h^k, \ta_h^k)}
	\end{align*}
	where
	\begin{align*}
	\xi_{h+1}^k \eqdef  \trueP_h^k\pa{ \algV_{h+1}^k - \trueV_{k,h+1}^{\pi_k}}(x_h^k, a_h^k) - \delta_{h+1}^k
	\end{align*}
	is a martingale difference sequence with respect to $(\cF_h^k)_{k, h}$ bounded by $4H$.

	Putting together the bounds for $\termA$-$\termD$ and using the definition of the bonuses $\bonus_h^k$, we obtain
	\begin{align*}
	& \delta_h^k \lesssim
	\delta_{h+1}^k + \xi_{h+1}^k
	+ \lipQ\dist{(x_h^k, a_h^k), (\tx_h^k, \ta_h^k)}
	+ \sqrt{  \frac{ H^2 \Xsigmacov }{\gencount_h^k(\tx_h^k, \ta_h^k)} }
	+ \frac{\kbeta H}{\gencount_h^k(\tx_h^k, \ta_h^k)}
	+ \lipQ\ksigma
	+ \bias(k, h)
	\end{align*}
	where the constant in front of $\delta_{h+1}^k$ is \emph{exact} (\ie not omitted by $\lesssim$).

	Let $E_h^k \eqdef \braces{\dist{(x_h^k, a_h^k), (\tx_h^k, \ta_h^k)} \leq  2 \ksigma}$. The inequality above implies
	\begin{align}
	\label{eq:aux-indicator-times-delta}
	 \indic{E_h^k}\delta_h^k
	\lesssim
	\indic{E_h^k} \delta_{h+1}^k
	+
	\indic{E_h^k}\pa{
		\xi_{h+1}^k
		+ \sqrt{  \frac{ H^2 \Xsigmacov }{\gencount_h^k(\tx_h^k, \ta_h^k)} }
		+ \frac{\kbeta H}{\gencount_h^k(\tx_h^k, \ta_h^k)}
	}
	+ 3 \lipQ\ksigma + \bias(k, h).
	\end{align}

	Now, we bound $	\indic{E_h^k} \delta_{h+1}^k$ in terms of $\delta_{h+1}^k$, which will be later used to bound $\delta_{h}^k$ in terms of $\delta_{h+1}^k$.  On $\goodevent$, we have
	\begin{align*}
		\indic{E_h^k} \delta_{h+1}^k
		& =  \indic{E_h^k} \pa{  \algV_{h+1}^k(x_{h+1}^k) - \trueV_{k,h+1}^{\pi_k}(x_{h+1}^k) } \\
		& = \indic{E_h^k} \pa{
				\underbrace{
			 \algV_{h+1}^k(x_{h+1}^k) +  \sum_{h'=h+1}^H \bias(k, h') - \trueV_{k,h+1}^{\pi_k}(x_{h+1}^k)
			}_{ \geq 0 \text{ by Lemma \ref{lemma:upper-bound-on-q-functions}}}
			 - \sum_{h'=h+1}^H \bias(k, h') } \\
		& \leq \algV_{h+1}^k(x_{h+1}^k) +  \sum_{h'=h+1}^H \bias(k, h') - \trueV_{k,h+1}^{\pi_k}(x_{h+1}^k) - \indic{E_h^k}\sum_{h'=h+1}^H \bias(k, h') \\
		& = \delta_{h+1}^k + \sum_{h'=h+1}^H \bias(k, h') - \indic{E_h^k}\sum_{h'=h+1}^H \bias(k, h') \\
		& \leq \delta_{h+1}^k + \sum_{h'=h+1}^H \bias(k, h')\,.
	\end{align*}

	The inequality above, combined with \eqref{eq:aux-indicator-times-delta} yields
	\begin{align*}
		\indic{E_h^k}\delta_h^k
		\lesssim
		 \delta_{h+1}^k
		+ \sum_{h'=h}^H \bias(k, h')
		+ \indic{E_h^k}\pa{
			\xi_{h+1}^k
			+ \sqrt{  \frac{ H^2 \Xsigmacov }{\gencount_h^k(\tx_h^k, \ta_h^k)} }
			+ \frac{\kbeta H}{\gencount_h^k(\tx_h^k, \ta_h^k)}
		}
		+ 3 \lipQ\ksigma .
	\end{align*}

	Let $\overline{E}_h^k$ be the complement of $E_h^k$. Since $\delta_h^k \leq H$, we have
	\begin{align*}
		\delta_h^k
		& = \indic{\overline{E}_h^k}\delta_h^k  + \indic{E_h^k}\delta_h^k  \\
		& \leq H  \indic{\overline{E}_h^k}  + \indic{E_h^k}\delta_h^k  \\
		& \lesssim H  \indic{\overline{E}_h^k} +  \delta_{h+1}^k
		+ \sum_{h'=h}^H \bias(k, h')
		+ \indic{E_h^k}\pa{
			\xi_{h+1}^k
			+ \sqrt{  \frac{ H^2 \Xsigmacov }{\gencount_h^k(\tx_h^k, \ta_h^k)} }
			+ \frac{\kbeta H}{\gencount_h^k(\tx_h^k, \ta_h^k)}
		}
		+  \lipQ\ksigma\,.
	\end{align*}

	 This yields
	\begin{align*}
		\delta_1^k \lesssim
		& \sum_{h=1}^H  \indic{E_h^k}\sqrt{  \frac{ H^2 \Xsigmacov }{\gencount_h^k(\tx_h^k, \ta_h^k)} }
		+ \sum_{h=1}^H \indic{E_h^k}\frac{\kbeta H}{\gencount_h^k(\tx_h^k, \ta_h^k)} \\
		& + \sum_{h=1}^H \indic{E_h^k} \xi_{h+1}^k
		+ H \sum_{h=1}^H \bias(k, h)
		+ H  \sum_{h=1}^H \indic{\overline{E}_h^k}
		+ H\lipQ \ksigma
	\end{align*}

	Using Corollary \ref{corollary:regret-is-bounded-by-delta}, we obtain
	\begin{align*}
	 \regret(K)
	& \leq \sum_{k=1}^K\delta_1 ^k + \sum_{k=1}^K\sum_{h=1}^H \bias(k, h) \\
	& \lesssim \sum_{k=1}^K
	   \sum_{h=1}^H  \indic{E_h^k}\sqrt{  \frac{ H^2 \Xsigmacov }{\gencount_h^k(\tx_h^k, \ta_h^k)} }
	  + \sum_{k=1}^K \sum_{h=1}^H \indic{E_h^k}\frac{\kbeta H}{\gencount_h^k(\tx_h^k, \ta_h^k)} \\
	& + \sum_{k=1}^K \sum_{h=1}^H \indic{E_h^k} \xi_{h+1}^k
      + H \sum_{k=1}^K \sum_{h=1}^H \bias(k, h)
	  + H  \sum_{k=1}^K\sum_{h=1}^H \indic{\overline{E}_h^k}
	  + K H\lipQ \ksigma \,.
	\end{align*}

	For each $h$, the number of episodes $k$ where the event $\braces{\dist{(x_h^k, a_h^k), (\tx_h^k, \ta_h^k)} >  2 \ksigma}$ occurs is bounded by $\sigmacov$. Hence, we can bound the sum
	\begin{align*}
		H \sum_{k=1}^K\sum_{h=1}^H \indic{\overline{E}_h^k}
		= H \sum_{h=1}^H \sum_{k=1}^K \indic{\dist{(x_h^k, a_h^k), (\tx_h^k, \ta_h^k)} >  2 \ksigma}
		\leq H^2\sigmacov.
	\end{align*}

	We conclude the proof by recalling the definition $E_h^k \eqdef \braces{\dist{(x_h^k, a_h^k), (\tx_h^k, \ta_h^k)} \leq  2 \ksigma}$ and using the fact that $\widetilde{\xi}_{h+1}^k\eqdef \indic{E_h^k} \xi_{h+1}^k$ is a martingale difference sequence with respect to $(\cF_h^k)_{k, h}$ bounded by $4H$.
	\end{proof}
	\subsection{Regret bound in terms of the sum of exploration bonuses (UCBVI-type)}
	\begin{flemma}[UCBVI-type bound with sum of bonuses]
		\label{lemma:regret-in-terms-of-sum-of-bonus}
		In the event $\goodevent$, the regret of \ouralgo is bounded by
		\begin{align*}
			\regret(K) \lesssim &
			\sum_{k=1}^K\sum_{h=1}^H \pa{\frac{H}{\sqrt{\gencount_h^k(\tx_h^k, \ta_h^k)}}
			+  \frac{H^2\Xsigmacov}{\gencount_h^k(\tx_h^k, \ta_h^k)} }\indic{\dist{(x_h^k, a_h^k), (\tx_h^k, \ta_h^k)} \leq  2 \ksigma}
			+ H^2 \sigmacov \\
			& +  \sum_{k=1}^K\sum_{h=1}^H \pa{1+\frac{1}{H}}^{h}\widetilde{\xi}_{h+1}^k
		     +  H \sum_{k=1}^K\sum_{h=1}^H \bias(k, h)
			 + \lipQ K H\ksigma
		\end{align*}
		where $\sigmacov$ is the $\ksigma$-covering number of $(\stateactionspace, \distfunc)$, $\Xsigmacov$ is the $\ksigma$-covering number of $(\statespace, \Sdistfunc)$ and $(\widetilde{\xi}_{h+1}^k)_{k, h}$ is a martingale difference sequence with respect to $(\cF_h^k)_{k, h}$ bounded by $4H$.
	\end{flemma}
	\begin{proof}
		The proof follows the one of Proposition 5 of \cite{domingues2020}. The key difference is that we need to handle the temporal bias. In particular, $\algV_h^k$ is not an upper bound on $\trueV_{k,h+1}^*$ due to the temporal bias, which makes our proof slightly more technical by introducing $\upperV_{k,h}$ (see Cor. \ref{corollary:error-wrt-true-upper-bound}) when applying the Bernstein-type concentration of Lemma \ref{lemma:transitions-bernstein}.

		\paragraph{Regret decomposition} We use the same regret decomposition as in the proof of Lemma \ref{lemma:regret-in-terms-of-sum-of-bonus-ucrl-type}. The terms $\termA, \termB$ and $\termD$ are bounded in the same way, but we handle the term $\termC$ differently.

		\underline{Term $\termC$}: To bound this term, we use corollaries \ref{corollary:bias-between-avmdp-and-true-mdp} and \ref{corollary:error-wrt-true-upper-bound}:
		\begin{align*}
			\termC
			& = \sqrbrackets{\estP_h^k -\trueP_h^k}\pa{ \algV_{h+1}^k- \trueV_{k,h+1}^*}(\tx_h^k, \ta_h^k) \\
			& = \sqrbrackets{\estP_h^k -\trueP_h^k}\pa{ \upperV_{k,h+1}- \trueV_{k,h+1}^*}(\tx_h^k, \ta_h^k)
			    + \sqrbrackets{\estP_h^k -\trueP_h^k}\pa{ \algV_{h+1}^k- \upperV_{k,h+1}}(\tx_h^k, \ta_h^k) \\
			& \leq \sqrbrackets{\estP_h^k -\trueP_h^k}\pa{ \upperV_{k,h+1}- \trueV_{k,h+1}^*}(\tx_h^k, \ta_h^k)
			        + 2 \sum_{h'=h}^H \bias(k, h')
			   , \quad \text{by Cor. \ref{corollary:error-wrt-true-upper-bound}}\\
			& =  \sqrbrackets{\estP_h^k -\avP_h^k}\pa{ \upperV_{k,h+1}- \trueV_{k,h+1}^*}(\tx_h^k, \ta_h^k)
			     + \sqrbrackets{\avP_h^k -\trueP_h^k}\pa{ \upperV_{k,h+1}- \trueV_{k,h+1}^*}(\tx_h^k, \ta_h^k)
			     + 2 \sum_{h'=h}^H \bias(k, h')	 \\
			& \leq \sqrbrackets{\estP_h^k -\avP_h^k}\pa{ \upperV_{k,h+1}- \trueV_{k,h+1}^*}(\tx_h^k, \ta_h^k)
			     + 2\biasP(k, h)
			     + 2 \sum_{h'=h}^H \bias(k, h')
			     , \quad \text{by Cor. \ref{corollary:bias-between-avmdp-and-true-mdp}}\\
			& \leq  \frac{1}{H} \trueP_h^k\pa{ \upperV_{k,h+1}- \trueV_{k,h+1}^*}(\tx_h^k, \ta_h^k)  +  \frac{14 H^2\spacekernelconstB\logbernstein(k,\cdelta/8) + 2\kbeta H  }{\gencount_h^k(\tx_h^k, \ta_h^k)} \\
			   & \quad + \bernbiasone(k, \cdelta/8) \ksigma^{1+\Xcovdim}
			           + \bernbiastwo(k, \cdelta/8) \ksigma
			           + \frac{2}{H}\biasP(k, h)
			           + 2\biasP(k, h)
			           + 2 \sum_{h'=h}^H \bias(k, h')	 \\
			& \leq  \frac{1}{H} \trueP_h^k\pa{ \upperV_{k,h+1}- \trueV_{k,h+1}^{*}}(x_h^k, a_h^k)  +  \frac{14 H^2\spacekernelconstB\logbernstein(k,\cdelta/8) + 2\kbeta H  }{\gencount_h^k(\tx_h^k, \ta_h^k)} +  \frac{2 \lipP\lipQ}{H}\dist{(x_h^k, a_h^k), (\tx_h^k, \ta_h^k)} \\
			& \quad + \bernbiasone(k, \cdelta/8) \ksigma^{1+\Xcovdim}
			+ \bernbiastwo(k, \cdelta/8) \ksigma
			+ \frac{2}{H}\biasP(k, h)
			+ 2\biasP(k, h)
			+ 2 \sum_{h'=h}^H \bias(k, h'),
		\end{align*}
		where we also used the definition of $\goodevent$ and the fact that the function $(x, a)\mapsto \trueP_h^k\pa{ \upperV_{k,h+1}- \trueV_{k,h+1}^*}(x, a)$ is $2\lipP\lipQ$-\lipschitz, from Assumption \ref{assumption:lipschitz-rewards-and-transitions}. Now, since
		\begin{align*}
			\logbernstein(k,\cdelta) = \BigOtilde{\Xsigmacov + \XAcovdim\Xcovdim}
			, \quad
			\bernbiasone(k, \cdelta)  = \BigOtilde{\Xsigmacov + \XAcovdim\Xcovdim + \lipQ \ksigma}
			, \quad
			\bernbiastwo(k, \cdelta)= \BigOtilde{\lipQ}
		\end{align*}
		and $\Xsigmacov = \BigO{1/\ksigma^\Xcovdim}$, we have
		\begin{align*}
			\termC \lesssim &  \frac{1}{H} \trueP_h^k\pa{ \upperV_{k,h+1}- \trueV_{k,h+1}^*}(x_h^k, a_h^k) + \frac{H^2\Xsigmacov}{\gencount_h^k(\tx_h^k, \ta_h^k)} + \lipQ\ksigma \\
			&  +  \pa{2 + \frac{2}{H}}\biasP(k, h)
			+ 2 \sum_{h'=h}^H \bias(k, h') + \frac{2 \lipP\lipQ}{H}\dist{(x_h^k, a_h^k), (\tx_h^k, \ta_h^k)}\,.
		\end{align*}
		Using again Corollary \ref{corollary:error-wrt-true-upper-bound}, we have
		\begin{align*}
			\frac{1}{H} \trueP_h^k\pa{ \upperV_{k,h+1}- \trueV_{k,h+1}^*}(x_h^k, a_h^k) \leq \frac{1}{H} \trueP_h^k\pa{ \algV_{h+1}^k- \trueV_{k,h+1}^*}(x_h^k, a_h^k) + \frac{1}{H}\sum_{h'=h}^H \bias(k, h')
		\end{align*}
		which gives us, since $\trueV_{k,h+1}^{\pi_k} \leq \trueV_{k,h+1}^*$,
		\begin{align*}
			\termC & \lesssim \frac{1}{H} \trueP_h^k\pa{ \algV_{h+1}^k- \trueV_{k,h+1}^*}(x_h^k, a_h^k) + \frac{H^2\Xsigmacov}{\gencount_h^k(\tx_h^k, \ta_h^k)} + \lipQ\ksigma +  \sum_{h'=h}^H \bias(k, h') + \lipQ\dist{(x_h^k, a_h^k), (\tx_h^k, \ta_h^k)}\\
			& \lesssim \frac{1}{H} \trueP_h^k\pa{ \algV_{h+1}^k- \trueV_{k,h+1}^{\pi_k}}(x_h^k, a_h^k) + \frac{H^2\Xsigmacov}{\gencount_h^k(\tx_h^k, \ta_h^k)} + \lipQ\ksigma
			+ \sum_{h'=h}^H \bias(k, h') + \lipQ\dist{(x_h^k, a_h^k), (\tx_h^k, \ta_h^k)}
		\end{align*}
		where we omit constants. Notice, however, that there are no constants omitted in the term $ \frac{1}{H} \trueP_h^k\pa{ \algV_{h+1}^k- \trueV_{k,h+1}^{\pi_k}}(x_h^k, a_h^k)$.

		Putting together the bounds for $\termA$-$\termD$, we obtain
		\begin{align*}
			& \delta_h^k \lesssim
			    \pa{1+\frac{1}{H}}\pa{\delta_{h+1}^k + \xi_{h+1}^k }
			    + \lipQ\dist{(x_h^k, a_h^k), (\tx_h^k, \ta_h^k)}
			    + 2 \bonus_h^k(\tx_h^k, \ta_h^k)
			    + \frac{H^2\Xsigmacov}{\gencount_h^k(\tx_h^k, \ta_h^k)}
			    + \lipQ\ksigma
			    + \sum_{h'=h}^H \bias(k, h')
		\end{align*}
		where the constant in front of $\delta_{h+1}^k$ is \emph{exact} (\ie not omitted by $\lesssim$).

		Let $E_h^k \eqdef \braces{\dist{(x_h^k, a_h^k), (\tx_h^k, \ta_h^k)} \leq  2 \ksigma}$. 	Using the definition of the bonus
		\begin{align*}
			\bonus_h^k(\tx_h^k, \ta_h^k) \lesssim \frac{H}{\sqrt{\gencount_h^k(\tx_h^k, \ta_h^k)}} + \frac{\kbeta H}{\gencount_h(\tx_h^k, \ta_h^k)} + \lipQ \ksigma \,,
		\end{align*}
		and the same argument as in the proof of Lemma \ref{lemma:regret-in-terms-of-sum-of-bonus-ucrl-type}, we obtain
		\begin{align*}
			\regret(K)
			& \leq \sum_{k=1}^K\delta_1 ^k + \sum_{k=1}^K\sum_{h=1}^H \bias(k, h) \\
			& \lesssim \sum_{k=1}^K
			\sum_{h=1}^H  \indic{E_h^k}\sqrt{  \frac{ H^2 }{\gencount_h^k(\tx_h^k, \ta_h^k)} }
			+ \sum_{k=1}^K \sum_{h=1}^H \indic{E_h^k}\frac{H^2\Xsigmacov}{\gencount_h^k(\tx_h^k, \ta_h^k)} \\
			& + \sum_{k=1}^K \sum_{h=1}^H \indic{E_h^k} \xi_{h+1}^k
			+ H \sum_{k=1}^K \sum_{h=1}^H \bias(k, h)
			+ H^2 \sigmacov
			+ K H\lipQ \ksigma.
		\end{align*}
		As in Lemma \ref{lemma:regret-in-terms-of-sum-of-bonus-ucrl-type}, we conclude the proof by recalling the definition $E_h^k \eqdef \braces{\dist{(x_h^k, a_h^k), (\tx_h^k, \ta_h^k)} \leq  2 \ksigma}$ and using the fact that $\widetilde{\xi}_{h+1}^k\eqdef \indic{E_h^k} \xi_{h+1}^k$ is a martingale difference sequence with respect to $(\cF_h^k)_{k, h}$ bounded by $4H$.
	\end{proof}

\subsection{Bounding the sum of bonuses and bias}

\begin{flemma}
	\label{lemma:useful-to-bound-bias}
	Let $(\mu_i)_{i\geq}$ be a sequence of non-negative numbers. Then,
	\begin{align*}
	\sum_{k=1}^K \sum_{i= 1 \vee (k-W)}^{k-1} \mu_i \leq 2 W \sum_{i=1}^K \mu_i.
	\end{align*}
\end{flemma}
\begin{proof}
	We have
	\begin{align*}
	\sum_{k=1}^K \sum_{i= 1 \vee (k-W)}^{k-1} \mu_i
	& = \sum_{k=1}^{W} \sum_{i=1}^{k-1} \mu_i +  \sum_{k=W+1}^{K}\sum_{i=k-W}^{k-1} \mu_i
	 \leq W\sum_{i=1}^K \mu_i + \sum_{i=1}^{K-1} \sum_{k=i+1}^{i + W}\mu_i \\
	& \leq W \sum_{i=1}^K \mu_i + W \sum_{i=1}^K \mu_i
	 =  2 W \sum_{i=1}^K \mu_i.
	\end{align*}
\end{proof}

\begin{fcorollary}[bound on the temporal bias]
	\label{corollary:bound-on-temporal-bias}
	Let $\variationR$ and $\variationP$ be the variation of the MDP over $KH$ time steps,
	\begin{align*}
		& \variationR \eqdef \sum_{i=1}^{K}\sum_{h=1}^H \sup_{x, a}\abs{\trueR_h^i(x, a) - \trueR_h^{i+1}(x, a)}
		,\quad
		 \variationP \eqdef \sum_{i=1}^{K}\sum_{h=1}^H \sup_{x, a}\Wassdist{\trueP_h^i(\cdot|x, a), \trueP_h^{i+1}(\cdot|x, a)}.
	\end{align*}
	Then,
	\begin{align*}
		\sum_{k=1}^K\sum_{h=1}^H \bias(h, k) \leq 2\kW \pa{\variationR +  \lipQ \variationP} + \frac{2\timekernelconstA (H+1)KH}{\kbeta}\frac{\keta^\kW}{1-\keta}\,.
	\end{align*}
\end{fcorollary}
\begin{proof}
	From Lemma \ref{lemma:useful-to-bound-bias}, we have
	\begin{align*}
		\sum_{k=1}^K\sum_{h=1}^H \biasP(h, k)
		& =  \sum_{k=1}^K\sum_{h=1}^H \lipQ\sum_{i=1\vee(k-\kW)}^{k-1} \sup_{x, a}\Wassdist{\trueP_h^{i}(\cdot|x, a), \trueP_h^{i+1}(\cdot|x, a)}
		 +  \sum_{k=1}^K\sum_{h=1}^H  \frac{2\timekernelconstA H}{\kbeta} \frac{\keta^\kW}{1-\keta} \\
		& \leq 2 \kW \lipQ \sum_{h=1}^H \sum_{i=1}^{K} \sup_{x, a}\Wassdist{\trueP_h^{i}(\cdot|x, a), \trueP_h^{i+1}(\cdot|x, a)} +   \frac{2\timekernelconstA K H^2}{\kbeta} \frac{\keta^\kW}{1-\keta}.
	\end{align*}
	The sum $\sum_{k=1}^K\sum_{h=1}^H \biasR(h, k) $ is bounded in the same way, which concludes the proof.
\end{proof}

\begin{flemma}[bounding sum on sliding window]
	\label{lemma:bound-sum-sliding-window}
	Let $\braces{a_n}_{n\geq 1}$ be a sequence of real numbers such that $0 \leq a \leq c$ for some constant $c > 0$. Let $A_{t} = \sum_{n= 1\vee(t-W) }^{t-1} a_n$. Then, for any $p, b > 0$,
	\begin{align*}
	\sum_{t=1}^T \frac{a_t}{(1+b A_{t})^p} \leq \sum_{n=1}^{\ceil{T/W}}\pa{ c + \int_0^{A_{nW+1}-c} \frac{1}{(1+bz)^p} \mathrm{d}z }\,.
	\end{align*}
\end{flemma}
\begin{proof}
	We have
	\begin{align*}
	\sum_{t=1}^T \frac{a_t}{(1+b A_{t})^p}
	=  \underbrace{\sum_{t=1}^W \frac{a_t}{(1+b A_{t})^p}}_{\text{\ding{192}}}
	+ \underbrace{\sum_{t=W+1}^T \frac{a_t}{(1+b A_{t})^p}}_{\text{\ding{193}}}.
	\end{align*}
	By Lemma 9 of \cite{domingues2020}, we have
	\begin{align*}
	\text{\ding{192}  } = \sum_{t=1}^W \frac{a_t}{(1+b A_{t})^p} \leq c + \int_{0}^{A_{W+1}-c} \frac{1}{(1+bz)^p} \mathrm{d}z.
	\end{align*}
	Now, we handle \ding{193}:
	\begin{align*}
	\text{\ding{193}  } & \leq \sum_{n=1}^{\ceil{T/W}-1} \sum_{t=nW+1}^{(n+1)W}  \frac{a_t}{(1+b A_{t})^p}
	  =  \sum_{n=1}^{\ceil{T/W}-1} \sum_{l=1}^{W} \frac{a_{l+nW}}{(1+b A_{l+nW})^p} \\
	& \leq \sum_{n=1}^{\ceil{T/W}-1} \pa{ c + \int_0^{A_{(n+1)W+1}-c} \frac{1}{(1+bz)^p} \mathrm{d}z }\,.
	\end{align*}
\end{proof}

\begin{definition}
	Consider a $\ksigma$-covering of $(\stateactionspace, \distfunc)$, $\sigmacovset = \braces{(x_j, a_j) \in \stateactionspace,\; j = 1, \ldots, \sigmacov}$. We define a partition $\braces{B_j}_{j\in[\sigmacov]}$ such that
	\begin{align*}
	B_j = \braces{(x, a) \in \stateactionspace: (x_j, a_j) = \argmin_{(x_i, a_i)\in\sigmacovset}\dist{(x, a), (x_i, a_i)}}
	\end{align*}
	with ties broken arbitrarily.
\end{definition}

\begin{flemma}
	\label{lemma:lower-bound-generalized-counts}
	Let $U_\keta \eqdef \ceil{1/\log(1/\keta)}$ and
	\begin{align*}
		\partitioncount_h^k(B_j, U_\keta) \eqdef \sum_{s=1\vee(k-U_\keta)}^{k-1}  \indic{(x_h^s, a_h^s) \in B_j}.
	\end{align*}
	If  $U_\keta \leq \kW $, $\dist{(x_h^k, a_h^k), (\tx_h^k, \ta_h^k)} \leq  2 \ksigma$ and $(x_h^k, a_h^k) \in B_j$ then
	\begin{align*}
		\gencount_h^k(\tx_h^k, \ta_h^k)
		&\geq \kbeta + \timekernelconstB(4)  e^{-1} \partitioncount_h^k(B_j, U_\keta).
	\end{align*}
\end{flemma}
\begin{proof}
	This result is based on the proof of Proposition 6 of \cite{domingues2020}, which we generalize to the case where the kernel is time-dependent.
		From Assumption \ref{assumption:kernel-properties}, we have $\kernel_{(\ksigma, \keta, \kW)}(k-1-s, z) \geq \timekernelconstB(z)\keta^{k-1-s}$ for all $s \geq k-W$. Consequently, if $U_\keta \leq W$, for all $s \geq k-U_\keta$:
		\begin{align}
			\label{eq:aux_time_kernel_bound}
			\kernel_{(\ksigma, \keta, \kW)}(k-1-s, z) \geq \timekernelconstB(z)\keta^{k-1-s} \geq \timekernelconstB(z)\keta^{U_\keta} \geq \timekernelconstB(z) \exp(-1).
		\end{align}

	Also, if  $\dist{(x_h^k, a_h^k), (\tx_h^k, \ta_h^k)} \leq  2 \ksigma$, $(x_h^k, a_h^k) \in B_j$, and $(x_h^s, a_h^s) \in B_j$, we have
	\begin{align}
		\dist{(\tx_h^k, \ta_h^k), (x_h^s, a_h^s)} & \leq \dist{(\tx_h^k, \ta_h^k), (x_h^k, a_h^k)} + \dist{(x_h^k, a_h^k), (x_h^s, a_h^s)} \nonumber \\
		& \leq 2\ksigma + \dist{(x_h^k, a_h^k), (x_j, a_j)} + \dist{(x_j, a_j), (x_h^s, a_h^s)}
		\leq 4\ksigma.		\label{eq:aux_bounding_dist_for_counts}
	\end{align}
	By Assumption \ref{assumption:kernel-properties}, the function $z \mapsto \kernel_{(\keta, \kW)}(t, z)$ is non-increasing. Together with \eqref{eq:aux_time_kernel_bound} and \eqref{eq:aux_bounding_dist_for_counts}, this yields:
	\begin{align*}
		\gencount_h^k(\tx_h^k, \ta_h^k)
		& = \kbeta + \sum_{s=1}^{k-1}\kernel_{(\ksigma, \keta, \kW)}\pa{ k-1-s, \frac{\dist{(\tx_h^k, \ta_h^k), (x_h^s, a_h^s)}}{\ksigma}} \\
		& \geq \kbeta + \sum_{s=1}^{k-1}\kernel_{(\ksigma, \keta, \kW)}\pa{ k-1-s, \frac{\dist{(\tx_h^k, \ta_h^k), (x_h^s, a_h^s)}}{\ksigma}} \indic{(x_h^s, a_h^s) \in B_j} \\
		& \geq \kbeta + \sum_{s=1\vee(k-U_\keta)}^{k-1}\kernel_{(\ksigma, \keta, \kW)}\pa{ k-1-s, 4}\indic{(x_h^s, a_h^s) \in B_j} \\
		& \geq \kbeta + \timekernelconstB(4) e^{-1} \sum_{s=k-U_\keta}^{k-1} \indic{(x_h^s, a_h^s) \in B_j},
	\end{align*}
	which concludes the proof.
\end{proof}

\begin{flemma}
	\label{lemma:sum-of-bonus}
	Let $U_\keta = \ceil{1/\log(1/\keta)}$. If $U_\keta \leq \kW $, we have
	\begin{align*}
		& \sum_{k=1}^K\sum_{h=1}^H  \frac{1}{\sqrt{\gencount_h^k(\tx_h^k, \ta_h^k)}}\indic{\dist{(x_h^k, a_h^k), (\tx_h^k, \ta_h^k)} \leq  2 \ksigma} \lesssim H\ceil{\frac{K}{U_\keta}}\pa{ \sigmacov + \sqrt{\sigmacov U_\keta}} \\
		& \sum_{k=1}^K\sum_{h=1}^H \frac{1}{\gencount_h^k(\tx_h^k, \ta_h^k)}\indic{\dist{(x_h^k, a_h^k), (\tx_h^k, \ta_h^k)} \leq  2 \ksigma} \lesssim H \sigmacov \ceil{\frac{K}{U_\keta}}\,.
	\end{align*}
\end{flemma}
\begin{proof}

The proof relies on Lemmas \ref{lemma:bound-sum-sliding-window} and \ref{lemma:lower-bound-generalized-counts}.

Here, we define the constant $c$ as $ c = \timekernelconstB(4) \kbeta^{-1}e^{-1} > 0$, since $\timekernelconstB(4) > 0$ by Assumption \ref{assumption:kernel-properties}.
\paragraph{Bounding the sum $\sum_k 1/\sqrt{\gencount_h^k}$}

\begin{align*}
	& \sum_{k=1}^K  \frac{1}{\sqrt{\gencount_h^k(\tx_h^k, \ta_h^k)}}\indic{\dist{(x_h^k, a_h^k), (\tx_h^k, \ta_h^k)} \leq  2 \ksigma} \\
	& = \sum_{j=1}^{\sigmacov}\sum_{k=1}^K  \frac{1}{\sqrt{\gencount_h^k(\tx_h^k, \ta_h^k)}}\indic{\dist{(x_h^k, a_h^k), (\tx_h^k, \ta_h^k)} \leq  2 \ksigma}\indic{(x_h^k, a_h^k)\in B_j} \\
	& \leq \kbeta^{-1/2}\sum_{j=1}^{\sigmacov}\sum_{k=1}^K \frac{\indic{(x_h^k, a_h^k)\in B_j}}{\sqrt{1 +    c \partitioncount_h^k(B_j, U_\keta)}}
	  ,\quad \text{by Lemma \ref{lemma:lower-bound-generalized-counts}} \\
	& \leq \kbeta^{-1/2}\sum_{j=1}^{\sigmacov} \sum_{n=1}^{\ceil{K/U_\keta}}\pa{  1 + \int_{0}^{\partitioncount_h^{nU_\keta+1}(B_j, U_\keta)}  \frac{1}{\sqrt{1+ cz}}\rmd z  }
	,\quad \text{by Lemma \ref{lemma:bound-sum-sliding-window}} \\
	& = \kbeta^{-1/2}\sigmacov\ceil{\frac{K}{U_\keta}}
	    + \frac{2\kbeta^{-1/2}}{c} \sum_{n=1}^{\ceil{K/U_\keta}} \sum_{j=1}^{\sigmacov} \sqrt{1 + c \partitioncount_h^{nU_\keta+1}(B_j, U_\keta) } \\
    & \leq \kbeta^{-1/2}\sigmacov\ceil{\frac{K}{U_\keta}}
           + \frac{2\kbeta^{-1/2}}{c} \sum_{n=1}^{\ceil{K/U_\keta}} \sqrt{\sigmacov}\sqrt{\sigmacov + c\sum_{j=1}^{\sigmacov} \partitioncount_h^{nU_\keta+1}(B_j, U_\keta)}
           ,\quad \text{by Cauchy-Schwarz inequality} \\
    & \leq \kbeta^{-1/2}\sigmacov\ceil{\frac{K}{U_\keta}}
	    + \frac{2\kbeta^{-1/2}}{c} \sum_{n=1}^{\ceil{K/U_\keta}} \sqrt{\sigmacov}\sqrt{\sigmacov + c U_\keta}
	   \lesssim \sigmacov\ceil{\frac{K}{U_\keta}} + \sqrt{\sigmacov U_\keta}\ceil{\frac{K}{U_\keta}}\,.
\end{align*}

\paragraph{Bounding the sum $\sum_k 1/\gencount_h^k$}

	\begin{align*}
		& \sum_{k=1}^K  \frac{1}{\gencount_h^k(\tx_h^k, \ta_h^k)}\indic{\dist{(x_h^k, a_h^k), (\tx_h^k, \ta_h^k)} \leq  2 \ksigma} \\
		& = \sum_{j=1}^{\sigmacov}\sum_{k=1}^K  \frac{1}{\gencount_h^k(\tx_h^k, \ta_h^k)}\indic{\dist{(x_h^k, a_h^k), (\tx_h^k, \ta_h^k)} \leq  2 \ksigma}\indic{(x_h^k, a_h^k)\in B_j} \\
		& \leq \kbeta^{-1}\sum_{j=1}^{\sigmacov}\sum_{k=1}^K  \frac{\indic{(x_h^k, a_h^k)\in B_j}}{1 +    c \partitioncount_h^k(B_j, U_\keta)}
		,\quad \text{by Lemma \ref{lemma:lower-bound-generalized-counts}} \\
		& \leq \kbeta^{-1}\sum_{j=1}^{\sigmacov} \sum_{n=1}^{\ceil{K/U_\keta}}\pa{  1 + \int_{0}^{\partitioncount_h^{nU_\keta+1}(B_j, U_\keta)}  \frac{1}{1+ cz}\rmd z  }
		,\quad \text{by Lemma \ref{lemma:bound-sum-sliding-window}} \\
		& = \kbeta^{-1}\sigmacov\ceil{\frac{K}{U_\keta}}
		    + \frac{\kbeta^{-1}\sigmacov}{c}  \sum_{n=1}^{\ceil{K/U_\keta}}\sum_{j=1}^{\sigmacov} \frac{1}{\sigmacov} \log\pa{1 + c \partitioncount_h^{nU_\keta+1}(B_j, U_\keta)} \\
		& \leq \kbeta^{-1}\sigmacov\ceil{\frac{K}{U_\keta}}
		+ \frac{\kbeta^{-1}\sigmacov}{c}  \sum_{n=1}^{\ceil{K/U_\keta}} \log\pa{1 + \frac{c}{\sigmacov}\sum_{j=1}^{\sigmacov}\partitioncount_h^{nU_\keta+1}(B_j, U_\keta)}
		,\quad\text{by Jensen's inequality} \\
		& \leq  \kbeta^{-1}\sigmacov\ceil{\frac{K}{U_\keta}}
		+  \frac{\kbeta^{-1}\sigmacov}{c} \ceil{\frac{K}{U_\keta}}\log\pa{1 + c\frac{U_\keta}{\sigmacov}}
		\lesssim \sigmacov \ceil{\frac{K}{U_\keta}}\,.
	\end{align*}
\end{proof}

\subsection{Final regret bounds}

\subsubsection{UCRL-type regret bounds}
\begin{ftheorem}[UCRL-type regret bound]
	\label{theorem:regret-bound-ucrl-type}
	If $U_\keta = \ceil{1/\log(1/\keta)} \leq \kW$, the regret of \ouralgo is bounded by
	\begin{align*}
	\regret(K) \lesssim &
	H^2\ceil{\frac{K}{U_\keta}}\sqrt{\Xsigmacov} \pa{ \sigmacov + \sqrt{\sigmacov U_\keta}} +
	H^2 \sigmacov \ceil{\frac{K}{U_\keta}} +  H^{3/2}\sqrt{K}  \\
	& +   \kW \pa{\variationR +  \lipQ \variationP}H + \frac{\keta^\kW}{1-\keta}KH^3 \\
	& + H^2 \sigmacov + \lipQ K H\ksigma
	\end{align*}
	with probability at least $1-\cdelta$, where $\sigmacov$ and  $\Xsigmacov$ are the $\ksigma$-covering numbers of $(\stateactionspace, \distfunc)$ and $(\statespace, \Sdistfunc)$, respectively, and
	\begin{align*}
	& \variationR \eqdef \sum_{i=1}^{K}\sum_{h=1}^H \sup_{x, a}\abs{\trueR_h^i(x, a) - \trueR_h^{i+1}(x, a)}
	,\quad
	\variationP \eqdef \sum_{i=1}^{K}\sum_{h=1}^H \sup_{x, a}\Wassdist{\trueP_h^i(\cdot|x, a), \trueP_h^{i+1}(\cdot|x, a)}
	\end{align*}
	represent the variation of the rewards and transitions, respectively.
\end{ftheorem}
\begin{proof}
	We apply Lemma \ref{lemma:regret-in-terms-of-sum-of-bonus-ucrl-type}, Lemma \ref{lemma:sum-of-bonus} and Corollary \ref{corollary:bound-on-temporal-bias} and the fact that $\prob{\goodevent} \geq 1-\cdelta/2$ by Lemma \ref{lemma:good-event}. To conclude, notice that $\sum_{k=1}^K\sum_{h=1}^H \widetilde{\xi}_{h+1}^k \lesssim H \sqrt{KH}$ with probability at least $1-\cdelta/2$ by Hoeffding-Azuma's inequality.
\end{proof}

\begin{fcorollary}
	\label{corollary:regret-bound-ucrl-type-with-optimized-constants}
	Let $\XAcovdim$ and $\Xcovdim$ be the covering dimensions of $(\stateactionspace, \distfunc)$ and $(\statespace,\Sdistfunc)$, respectively. Let
	$\alpha = \frac{1}{\XAcovdim+\Xcovdim+ 3}$,  $\variationMDPtotal = \variationR +  \lipQ \variationP$ and
	\begin{align*}
		\ksigma = K^{-\alpha}
		,\quad
		\log\pa{\frac{1}{\keta}} =\pa{\frac{\variationMDPtotal}{K^{1+\alpha(\XAcovdim+\Xcovdim)/2}}}^{2/3}
		,\quad
		\kW = \frac{ \log\pa{K/(1-\keta)}  }{ \log\pa{1/\keta}  }
	\end{align*}
	Since $\kW \geq U_\keta = \ceil{1/\log(1/\keta)}$, we have, with probability at least $1-\cdelta$,
	\begin{align*}
		\regret(K)
		\lesssim
		& H^{ 2 } \variationMDPtotal^{\frac{2}{3}}K^{\frac{\XAcovdim+\Xcovdim/6+1}{\XAcovdim+\Xcovdim+3}}
		+ H^{ 2 } \variationMDPtotal^{\frac{1}{3}}K^{\frac{\XAcovdim+\Xcovdim+2}{\XAcovdim+\Xcovdim+3}}  \\
		& + H^{ 2 } \variationMDPtotal^{\frac{2}{3}}K^{\frac{\XAcovdim+\Xcovdim/6+1}{\XAcovdim+\Xcovdim+3}}
		+ H^{\frac{3}{2}}\sqrt{K}
		+ H \variationMDPtotal^{\frac{1}{3}} K^{\frac{\XAcovdim+\Xcovdim+2}{\XAcovdim+\Xcovdim+3}} \log\pa{ \frac{HK}{1-\keta}}
		+ H^3 \\
		& + H^2 K^{\frac{\XAcovdim}{\XAcovdim+\Xcovdim+3}}
		+ \lipQ H K^{\frac{\XAcovdim+\Xcovdim+2}{\XAcovdim+\Xcovdim+3}}
	\end{align*}
	that is,
	\begin{align*}
		\regret(K) = \BigOtilde{   \variationMDPtotal^{\frac{1}{3}} K^{\frac{\XAcovdim+\Xcovdim+2}{\XAcovdim+\Xcovdim+3}} \pa{H^2 + H \log\pa{ \frac{HK}{1-\keta}} } }.
	\end{align*}
	Furthermore, if $\limsup_{K\to\infty} \variationMDPtotal/K = 0$, we have $$\keta = \exp\pa{ - \variationMDPtotal^{2/3} / K^{\frac{\XAcovdim+\Xcovdim+2}{\XAcovdim+\Xcovdim+3}} }
	\underset{K\to\infty}{\sim} 1 - \variationMDPtotal^{2/3} / K^{\frac{\XAcovdim+\Xcovdim+2}{\XAcovdim+\Xcovdim+3}} $$
	which implies
	\begin{align*}
		\regret(K) = \BigOtilde{  H^2 \variationMDPtotal^{\frac{1}{3}} K^{\frac{\XAcovdim+\Xcovdim+2}{\XAcovdim+\Xcovdim+3}} }.
	\end{align*}
\end{fcorollary}
\begin{proof}
	Immediate consequence of Theorem \ref{theorem:regret-bound-ucrl-type} and the fact that $\sigmacov = \BigO{\ksigma^{-\XAcovdim}}$ and $\Xsigmacov = \BigO{\ksigma^{-\Xcovdim}}$.
\end{proof}

\begin{fcorollary}[UCRL-type regret bound in discrete case]
	\label{corollary:regret-bound-ucrl-type-discrete-case}
	If $\stateactionspace$ is finite, we can take $\ksigma = 0$ and $\sigmacov = \Nstates\Nactions$, $\Xsigmacov = \Nstates$, where $\Nstates=\abs{\statespace}$ and $\Nactions = \abs{\actionspace}$. In this case, Theorem \ref{theorem:regret-bound-ucrl-type} and Corollary \ref{corollary:regret-bound-ucrl-type-with-optimized-constants} give us
	\begin{align*}
	\regret(K) = \BigOtilde{ H^2 \Nstates \sqrt{\Nactions} \variationMDPtotal^{\frac{1}{3}} K^{\frac{2}{3}}}.
	\end{align*}
\end{fcorollary}

\subsubsection{UCBVI-type regret bounds}
\begin{ftheorem}[UCBVI-type regret bound]
	\label{theorem:regret-bound-ucbvi-type}
	If $U_\keta = \ceil{1/\log(1/\keta)} \leq \kW$, the regret of \ouralgo is bounded by
	\begin{align*}
	\regret(K) \lesssim &
	H^2\ceil{\frac{K}{U_\keta}}\pa{ \sigmacov + \sqrt{\sigmacov U_\keta}} +
	H^3 \sigmacov\Xsigmacov \ceil{\frac{K}{U_\keta}} +  H^{3/2}\sqrt{K}  \\
	& +   \kW \pa{\variationR +  \lipQ \variationP}H + \frac{\keta^\kW}{1-\keta}KH^3 \\
	& + H \sigmacov + \lipQ K H\ksigma
	\end{align*}
	with probability at least $1-\cdelta$, where $\sigmacov$ and  $\Xsigmacov$ are the $\ksigma$-covering numbers of $(\stateactionspace, \distfunc)$ and $(\statespace, \Sdistfunc)$, respectively, and
	\begin{align*}
		& \variationR \eqdef \sum_{i=1}^{K}\sum_{h=1}^H \sup_{x, a}\abs{\trueR_h^i(x, a) - \trueR_h^{i+1}(x, a)}
		,\quad
		\variationP \eqdef \sum_{i=1}^{K}\sum_{h=1}^H \sup_{x, a}\Wassdist{\trueP_h^i(\cdot|x, a), \trueP_h^{i+1}(\cdot|x, a)}
	\end{align*}
	represent the variation of the rewards and transitions, respectively.
\end{ftheorem}
\begin{proof}
	We apply Lemma \ref{lemma:regret-in-terms-of-sum-of-bonus}, Lemma \ref{lemma:sum-of-bonus} and Corollary \ref{corollary:bound-on-temporal-bias} and the fact that $\prob{\goodevent} \geq 1-\cdelta/2$ by Lemma \ref{lemma:good-event}. To conclude, notice that $\sum_{k=1}^K\sum_{h=1}^H  \pa{1+\frac{1}{H}}^{h} \widetilde{\xi}_{h+1}^k \lesssim H \sqrt{KH}$ with probability at least $1-\cdelta/2$ by Hoeffding-Azuma's inequality.
\end{proof}

\begin{fcorollary}[UCBVI-type regret bound in discrete case]
	\label{corollary:regret-bound-ucbvi-type-discrete-case}
	If $\stateactionspace$ is finite, we can take $\ksigma = 0$ and $\sigmacov = \Nstates\Nactions$, $\Xsigmacov = \Nstates$, where $\Nstates=\abs{\statespace}$ and $\Nactions = \abs{\actionspace}$. In this case, Theorem \ref{theorem:regret-bound-ucbvi-type} gives us
	\begin{align*}
		\regret(K) \lesssim &
		H^2\ceil{\frac{K}{U_\keta}}\pa{ \Nstates\Nactions + \sqrt{\Nstates\Nactions U_\keta}} +
		H^3 \Nstates^2\Nactions \ceil{\frac{K}{U_\keta}} +  H^{3/2}\sqrt{K}  \\
		& +   \kW \pa{\variationR +  \lipQ \variationP}H + \frac{\keta^\kW}{1-\keta}KH^3
		  + H^2 \Nstates\Nactions
	\end{align*}
	Let $\variationMDPtotal = \variationR +  \lipQ \variationP$. By choosing
	\begin{align*}
		\log\pa{\frac{1}{\keta}} =\pa{\frac{\variationMDPtotal}{K}}^{2/3}
		,\quad
		\kW = \frac{ \log\pa{K/(1-\keta)}  }{ \log\pa{1/\keta}  }
	\end{align*}
	we obtain
	\begin{align*}
		\regret(K)  \lesssim
		& H^2 \Nstates\Nactions  \variationMDPtotal^{\frac{2}{3}}K^{\frac{1}{3}}
		+ H^2 \sqrt{\Nstates\Nactions } \variationMDPtotal^{\frac{1}{3}}K^{\frac{2}{3}}
		+ H^3 \Nstates^2\Nactions\variationMDPtotal^{\frac{2}{3}}K^{\frac{1}{3}} \\
		& +  H^{3/2}\sqrt{K}
		  +  \log\pa{\frac{K}{1-\keta}} H \variationMDPtotal^{\frac{1}{3}}K^{\frac{2}{3}}
		  + H^3 + H \Nstates\Nactions.
	\end{align*}
	since $\kW \geq U_\keta = \ceil{1/\log(1/\keta)}$.	Furthermore, if $\limsup_{K\to\infty} \variationMDPtotal/K = 0$, we have $$\keta = \exp\pa{ - \variationMDPtotal^{\frac{2}{3}} / K^{\frac{2}{3}} }
	\underset{K\to\infty}{\sim} 1 - \variationMDPtotal^{\frac{2}{3}} / K^{\frac{2}{3}} $$
	which implies
	\begin{align*}
	\regret(K) = \BigOtilde{ H^2 \sqrt{\Nstates\Nactions } \variationMDPtotal^{\frac{1}{3}}K^{\frac{2}{3}}    + H^3 \Nstates^2\Nactions\variationMDPtotal^{\frac{2}{3}}K^{\frac{1}{3}}}.
	\end{align*}
\end{fcorollary}

\begin{fcorollary}
	\label{corollary:regret-bound-ucbvi-type-with-optimized-constants}
	Let $\XAcovdim$ and $\Xcovdim$ be the covering dimensions of $(\stateactionspace, \distfunc)$ and $(\statespace,\Sdistfunc)$, respectively. Let $\alpha = \frac{1}{\XAcovdim+\Xcovdim+2}$, $\variationMDPtotal = \variationR +  \lipQ \variationP$ and
	\begin{align*}
	& \ksigma = K^{\alpha}
	,\quad
	\log\pa{\frac{1}{\keta}} = \pa{\frac{\variationMDPtotal}{H K^{1+ \alpha(\XAcovdim+\Xcovdim)}}}^{1/2}
	,\quad
	\kW = \ceil{\frac{\log\pa{ K/(1-\keta)}}{\log(1/\keta)}}
	\end{align*}
	Since $\kW \geq U_\keta = \ceil{1/\log(1/\keta)}$, we have, with probability at least $1-\cdelta$,
	\begin{align*}
	\regret(K)  \lesssim
	    H^2\pa{ 1 +
		\log\pa{ \frac{K}{1-\keta}}}\variationMDPtotal^\frac{1}{2} K^{\frac{\XAcovdim+\Xcovdim+1}{\XAcovdim+\Xcovdim+2}}
		+ H^{\frac{3}{2}} \variationMDPtotal^{\frac{1}{4}}K^{\frac{3}{4}}
		+ \lipQ H K^{\frac{\XAcovdim+\Xcovdim+1}{\XAcovdim+\Xcovdim+2}} + H^2 .
	\end{align*}
		Furthermore, if $\limsup_{K\to\infty} \variationMDPtotal/K = 0$, we have $$\keta = \exp\pa{ - \variationMDPtotal^{1/2} / K^{\frac{\XAcovdim+\Xcovdim+1}{\XAcovdim+\Xcovdim+2}} }
	\underset{K\to\infty}{\sim} 1 - \variationMDPtotal^{1/2} / K^{\frac{\XAcovdim+\Xcovdim+1}{\XAcovdim+\Xcovdim+2}} $$
	which implies
	\begin{align*}
	\regret(K) = \BigOtilde{  H^2 \variationMDPtotal^{\frac{1}{2}} K^{\frac{\XAcovdim+\Xcovdim+1}{\XAcovdim+\Xcovdim+2}} + H^{\frac{3}{2}} \variationMDPtotal^{\frac{1}{4}}K^{\frac{3}{4}} }.
	\end{align*}
\end{fcorollary}
\begin{proof}
	Immediate consequence of Theorem \ref{theorem:regret-bound-ucbvi-type} and the fact that $\sigmacov = \BigO{\ksigma^{-\XAcovdim}}$ and $\Xsigmacov = \BigO{\ksigma^{-\Xcovdim}}$.
\end{proof}

\newpage
\section{\RSkerns: An efficient version of \kerns using representative states}
\label{app:rs_kerns_new}

\RSkerns is described in Algorithm~\ref{alg:RS-kerns-ONLINE}, which uses a backward induction on representative states (Algorithm~\ref{alg:RS-kernel_backward_induction-ONLINE}) and updates the model online (algorithms \ref{alg:update-representative-states-appendix} and  \ref{alg:online-kerns-updates}). In this section, we introduce the main definitions used by \RSkerns, and we analyze its runtime and regret.

\subsection{Definitions}

In each episode $k$ and for each $h$, \RSkerns keeps and updates sets of representative states $\ReprStates_h^k$, actions $\ReprActions_h^k$, and next-states $\ReprNextStates_h^k$, with cardinalities $\NreprStates_h^k, \NreprActions_h^k$ and $\NreprNextStates_h^k$, respectively. 
These sets are built using the data observed up to episode $k-1$. We define the following projections:
\begin{align*}
\maptoReprSA_h^{k+1}(x, a) \eqdef \argmin_{(\bx,\ba) \in \ReprStates_h^k\times\ReprActions_h^k} \dist{(x,a),  (\bx,\ba)}
,\quad 
\maptoReprS_h^{k+1}(y) \eqdef \argmin_{\by \in \ReprNextStates_h^k} \Sdist{y, \by}.
\end{align*}
where we also assume to have access to the metric $\Sdistfunc$. The definitions below introduce the kernel function and the estimated MDP used by \RSkerns.

\begin{fdefinition}[kernel function for \RSkerns]
	\label{def:rs-kerns-kernel-choice_new}
	Let $\keta \in ]0, 1]$. 
	\RSkerns uses a kernel of the form
	$
	\fullkernel(t, u, v) =  \timekernelshort(t) \spacekernel\pa{u, v}
	$, 
	where 
	\begin{align*}
	\timekernelshort(t) \eqdef \keta^t
	,\quad 
	\spacekernel\pa{u, v} \eqdef \exp\pa{-\dist{u, v}^2/(2\ksigma^2)}.
	\end{align*}
\end{fdefinition}

\begin{fdefinition}[empirical MDP for \RSkerns]
	\label{def:empirical-mdp-rs-kerns}
	Let 
	\begin{align*}
		\kernsW_{h}^{k+1}(x, a) & = \sum_{s=1}^{k}\timekernelshort(k-s)\spacekernel\pa{\maptoReprSA_h^{k+1}(x, a), \maptoReprSA_h^{s+1}(x_h^s, a_h^s)}.
	\end{align*}
	In episode $k+1$, \RSkerns uses the following estimate of the reward function
	\begin{align*}
	\kernsR_{h}^{k+1}(x, a) 
	& = \frac{1}{ \kbeta + \kernsW_{h}^{k+1}(x, a)} 
	 \sum_{s=1}^{k} \timekernelshort(k-s)\spacekernel\pa{\maptoReprSA_h^{k+1}(x, a), \maptoReprSA_h^{s+1}(x_h^s, a_h^s)}  \obsreward_h^s
	\end{align*}
	and the follow estimate of the transitions
	\begin{align*}
	\kernsP_{h}^{k+1}(y|x,a) & = 
	\frac{1}{ \kbeta + \kernsW_{h}^{k+1}(x, a)}
	\sum_{s=1}^{k} \timekernelshort(k-s)\spacekernel\pa{\maptoReprSA_h^{k+1}(x, a), \maptoReprSA_h^{s+1}(x_h^s, a_h^s)} \dirac{\maptoReprS_h^{s+1}(x_{h+1}^s)}(y)   .
	\end{align*}
	Also, its exploration bonuses are computed as
	\begin{align*}
		& \kernsBonus_{h}^{k+1}(x, a)
	\eqdef
	\BigOtilde{
		\frac{H}{ \sqrt{\kbeta+\kernsW_{h}^{k+1}(x, a)}}
		+ \frac{\kbeta H}{ \kbeta + \kernsW_{h}^{k+1}(x, a)}
		+ \lipQ \ksigma 
	}
	\end{align*}
	where the factors hidden by $\BigOtilde{\cdot}$ are the same as in Definition~\ref{def:exploration-bonuses}.
\end{fdefinition}

At step $h$, \RSkerns needs to store the quantities in Def. \ref{def:empirical-mdp-rs-kerns} \emph{only for the representatives $(x, a)$ in $\ReprStates_h^{k+1}\times\ReprActions_h^{k+1}$ and $y\in\ReprNextStates_h^{k+1}$}. We will show that, using the auxiliary quantities defined below, the values of $\kernsW_{h}^{k}$, $\kernsR_{h}^{k}$ and $\kernsP_{h}^{k}$ can be updated online in $\BigO{\sum_h \NreprStates_h^{k}\NreprActions_h^{k}\NreprNextStates_h^{k}}$ time per episode $k$. 

\begin{fdefinition}[auxiliary quantities for online updates]
	\label{def:rs-kerns-auxiliary}
	For any $(h, x, a)$, we define 
	\begin{align*}
	& \kernsN_{h}^{k+1}(x, a, y) \eqdef \sum_{s=1}^k \timekernelshort(k-s) \indic{ \maptoReprSA_h^{s+1}(x_h^s, a_h^s) = (x, a)} \dirac{\maptoReprS_h^{s+1}(x_{h+1}^s)}(y) \\
	& \kernsN_{h}^{k+1}(x, a) \eqdef \sum_{s=1}^k \timekernelshort(k-s) \indic{ \maptoReprSA_h^{s+1}(x_h^s, a_h^s) = (x, a) } \\
	& \kernsSumR_{h}^{k+1}(x, a) \eqdef \sum_{s=1}^k \timekernelshort(k-s) \indic{ \maptoReprSA_h^{s+1}(x_h^s, a_h^s) = (x, a) } \obsreward_h^s.
	\end{align*}
	Notice that, if $(x, a) \notin \ReprStates_h^{k+1}\times\ReprActions_h^{k+1}$, the quantities above are equal to zero. 
\end{fdefinition}

The following Lemma will be necessary in order to derive online updates.

\begin{flemma}
	\label{lemma:aux-lemma-for-online-updates}
	The empirical MDP used by \RSkerns can be computed as
	\begin{align}
	& \kernsR_{h}^{k+1}(x, a)
	= \frac{ \sum_{(\bx,\ba)} \spacekernel\pa{ \maptoReprSA_h^{k+1}(x,a), (\bx, \ba) }\kernsSumR_{h}^{k+1}(\bx, \ba)  }{ \kbeta + \sum_{(\bx,\ba)} \spacekernel\pa{ \maptoReprSA_h^{k+1}(x,a), (\bx, \ba) }\kernsN_{h}^{k+1}(\bx, \ba) }
	\label{eq:online-R-kerns} \\
	& \kernsP_{h}^{k+1}(y|x, a)
	=  \frac{ \sum_{(\bx,\ba)} \spacekernel\pa{ \maptoReprSA_h^{k+1}(x,a), (\bx, \ba) }\kernsN_{h}^{k+1}(\bx, \ba, y) }{ \kbeta + \sum_{(\bx,\ba)} \spacekernel\pa{ \maptoReprSA_h^{k+1}(x,a), (\bx, \ba) }\kernsN_{h}^{k+1}(\bx, \ba) }
	\label{eq:online-P-kerns} \\
	& \kernsW_{h}^{k+1}(x, a)
	=
	\sum_{(\bx,\ba)} \spacekernel\pa{ \maptoReprSA_h^{k+1}(x,a), (\bx, \ba) }\kernsN_{h}^{k+1}(\bx, \ba)
	\label{eq:online-Counts-kerns}
	\end{align}
	where the sums are over $(\bx,\ba)\in \ReprStates_h^{k+1}\times\ReprActions_h^{k+1}$.
\end{flemma}
\begin{proof}
	It is an immediate consequence of the definitions. For instance,
	\begin{align*}
		& \kernsW_{h}^{k+1}(x, a) = \sum_{s=1}^{k}\timekernelshort(k-s)\spacekernel\pa{\maptoReprSA_h^{k+1}(x, a), \maptoReprSA_h^{s+1}(x_h^s, a_h^s)}
		 \\
		 &
		 = 	\sum_{s=1}^{k}\timekernelshort(k-s)\spacekernel\pa{\maptoReprSA_h^{k+1}(x, a), \maptoReprSA_h^{s+1}(x_h^s, a_h^s)} 
		 \sum_{(\bx,\ba)\in \ReprStates_h^{k+1}\times\ReprActions_h^{k+1}}	
		 \indic{ \maptoReprSA_h^{s+1}(x_h^s, a_h^s) = (\bx, \ba) }
		 \\
		&
		= 	\sum_{(\bx,\ba)}	\sum_{s=1}^{k}\timekernelshort(k-s)\spacekernel\pa{\maptoReprSA_h^{k+1}(x, a), (\bx, \ba)} 
		\indic{ \maptoReprSA_h^{s+1}(x_h^s, a_h^s) = (\bx, \ba) }
		 \\
		&
		= 	\sum_{(\bx,\ba)}	\spacekernel\pa{\maptoReprSA_h^{k+1}(x, a), (\bx, \ba)} 
		\sum_{s=1}^{k}\timekernelshort(k-s)
		\indic{ \maptoReprSA_h^{s+1}(x_h^s, a_h^s) = (\bx, \ba) }
		 \\
		&
		= 	\sum_{(\bx,\ba)}	\spacekernel\pa{\maptoReprSA_h^{k+1}(x, a), (\bx, \ba)} \kernsN_{h}^{k+1}(\bx, \ba).
	\end{align*}
\end{proof}

\begin{algorithm}[H]
	\centering
	\caption{\RSkerns}\label{alg:RS-kerns-ONLINE}
	\begin{algorithmic}[1]
		\State {\bfseries Input:} global parameters $K$, $H$, $\lipQ$, $\lipR$ , $\lipP$, $\kbeta$, $\cdelta$, $\totalcovdim$, $\ksigma$, $\keta$, $\kW$, $\Smaxdist$, $\maxdist$.
		\State Initialize representative states, actions and next states: $\ReprStates_h = \emptyset$, $\ReprActions_h = \emptyset$, $\ReprNextStates_h=\emptyset$, for $h \in [H]$.
		\For{episode $k=1, \ldots, K$}
		\State get initial state $x_1^k$
		\State compute $(\kernsQ_{h}^k)_h$ using kernel backward induction on the representative sets (Alg. \ref{alg:RS-kernel_backward_induction-ONLINE}).
		\For{$h = 1, \ldots, H$}
		\State execute $a_h^k = \argmax_a \kernsQ_{h}^k(x_h^k, a)$, observe reward $\obsreward_h^k$ and next state $x_{h+1}^k$
		\State update representatives $\ReprStates_h$, $\ReprActions_h, \ReprNextStates_h$ using $\braces{ x_h^k, a_h^k, x_{h+1}^k}$ with Alg. \ref{alg:update-representative-states-appendix}
		\State update model using $x_h^k, a_h^k, x_{h+1}^k, \obsreward_h^k$ with Alg. \ref{alg:online-kerns-updates} 
		\EndFor
		\EndFor 
	\end{algorithmic}
\end{algorithm}

\begin{algorithm}[H]
	\centering
	\caption{Kernel Backward Induction on Representative States}\label{alg:RS-kernel_backward_induction-ONLINE}
	\begin{algorithmic}[1]
		\State {\bfseries Input:} $\kernsR_{h}^k(\bx, \ba)$, $\kernsP_h^k(\by|\bx, \ba)$, $\kernsBonus_{h}^k(\bx, \ba)$ for all $ (\bx,\ba,\by)\in\ReprStates_h^{k}\times\ReprActions_h^{k}\times\ReprNextStates_h^{k}$ and all $h \in [H]$. 
		\State {\bfseries Initialization: } $\kernsV_{H+1}(x) = 0$ for all $x\in\statespace$
		\For{$h = H,\ldots, 1$}
		\For{$(\bx, \ba) \in \ReprStates_h^{k}\times\ReprActions_h^{k}$}
		\State $\tQ_{h,\mapzeta}^k(\bx, \ba) = \kernsR_{h}^k(\bx, \ba) + \kernsP_{h}^k \kernsV_{h+1}(\bx, \ba) + \kernsBonus_{h}^k(\bx, \ba)  $ 
		\EndFor
		\State{\color{darkgreen2} // Interpolated $Q$-function. Defined, but not computed for all $(x, a)$}
		\State $\kernsQ_{h}^k(x, a) = \underset{(\bx,\ba) \in\ReprStates_h^{k}\times\ReprActions_h^{k} }{\min}\pa{\tQ_{h,\mapzeta}^k(\bx,\ba)+\lipQ \dist{(x,a), (\bx,\ba)}}$ 
		\If{$h>1$}
		\State{\color{darkgreen2} // Compute $V$-function at the next states for the stage $h-1$}
		\For{$\by \in \ReprNextStates_{h-1}^{k}$}
		\State $\kernsV_{h}^k(\by) = \min\pa{H-h+1, \max_{a}\kernsQ_{h}^k(\by, a)}$ 
		\EndFor
		\EndIf
		\EndFor 
		\State {\bfseries Return:} $(\kernsQ_{h}^k)_{h\in[H]}$
	\end{algorithmic}
\end{algorithm} 

\begin{algorithm}[H]
	\centering
	\caption{Update Representative Sets}\label{alg:update-representative-states-appendix}
	\begin{algorithmic}[1]
		\State {\bfseries Input: $\ReprStates_h^k$, $\ReprActions_h^k$, $\ReprNextStates_h^k$, $\braces{x_h^k, a_h^k, x_{h+1}^k}$}, $\maxdist$, $\Smaxdist$.
		\If{$\min_{(\bx,\ba)\in\ReprStates_h^k\times\ReprActions_h^k}\dist{(\bx,\ba), (x_h^k, a_h^k)} > \maxdist$}
		\State $\ReprStates_h^{k+1} = \ReprStates_h^k \cup \braces{x_h^k}$, \ \  $\ReprActions_h^{k+1} = \ReprActions_h^k \cup \braces{a_h^k}$
		\Else
		\State $\ReprStates_h^{k+1} = \ReprStates_h^k$, \ \ $\ReprActions_h^{k+1} = \ReprActions_h^k$
		\EndIf
		\If{$\min_{\by\in\ReprNextStates_h^k}\Sdist{\bx, x_{h+1}^k} > \Smaxdist$}
		\State $\ReprNextStates_h^{k+1} = \ReprNextStates_h^k \cup \braces{x_{h+1}^k}$
		\Else
		\State  $\ReprNextStates_h^{k+1} = \ReprNextStates_h^k$
		\EndIf
	\end{algorithmic}
\end{algorithm}

\begin{algorithm}[H]
	\centering
	\caption{Online update of \RSkerns Model}\label{alg:online-kerns-updates}
	\begin{algorithmic}[1]
		\State {\bfseries Input: $k, h, x_h^k, a_h^k, x_{h+1}^k, \obsreward_h^k$}.
		\State {\color{darkgreen2} // Map to representatives}
		\State Map $(\tx, \ta) = \maptoReprSA_h^{k+1}(x_h^k, a_h^k)$ and $\ty = \maptoReprS_h^{k+1}(x_{h+1}^k)$
		\State {\color{darkgreen2} // Update auxiliary quantities}
		\State $ \kernsN_{h}^{k+1}(\tx, \ta, \ty) 
		=  1
		+
		\keta \kernsN_{h}^{k}(\tx, \ta, \ty)  $
		\State $ \kernsN_{h}^{k+1}(\tx, \ta) 
		= 1
		+
		\keta \kernsN_{h}^{k}(\tx, \ta)  $
		\State $ \kernsSumR_{h}^{k+1}(\tx, \ta) 
		=  \obsreward_h^k
		+
		\keta \kernsSumR_{h}^{k}(\tx, \ta)  $
		\State {\color{darkgreen2} // Update empirical MDP}
		\For{$(\bx, \ba) \in \ReprStates_h^{k+1}\times\ReprActions_h^{k+1}$}
		\If{$(\bx, \ba) \in \ReprStates_h^{k}\times\ReprActions_h^{k}$}
		\State {\color{darkgreen2} // $(\bx,\ba)$ was added before episode $k$}
		\State $ \kernsW_{h}^{k+1}(\bx, \ba) = \spacekernel\pa{(\bx, \ba), (\tx, \ta)} + \keta\kernsW_{h}^{k}(\bx, \ba)$
		\State $ \kernsR_{h}^{k+1}(\bx, \ba) = \frac{\spacekernel\pa{(\bx, \ba), (\tx, \ta)}}{\kbeta +\kernsW_{h}^{k+1}(\bx, \ba)} \obsreward_h^k
		+ \keta\cdot\pa{\frac{\kbeta +\kernsW_{h}^{k}(\bx, \ba)}{\kbeta +\kernsW_{h}^{k+1}(\bx, \ba)}} \kernsR_{h}^{k}(\bx, \ba) $
		\For{$y \in \ReprNextStates_h^{k+1}$}
		\State $\kernsP_{h}^{k+1}(y | \bx, \ba) =  \frac{\spacekernel\pa{(\bx, \ba), (\tx, \ta)}}{\kbeta +\kernsW_{h}^{k+1}(\bx, \ba)} \dirac{\ty}(y)
		+ \keta\cdot\pa{\frac{\kbeta +\kernsW_{h}^{k}(\bx, \ba)}{\kbeta +\kernsW_{h}^{k+1}(\bx, \ba)}} \kernsP_{h}^{k}(y | \bx, \ba)$
		\EndFor
		\Else 
		\State {\color{darkgreen2} // $(\bx,\ba)$ was added in episode $k$}
		\State Initialize $ \kernsR_{h}^{k+1}(\bx, \ba), \kernsP_{h}^{k+1}(\cdot | \bx, \ba), \kernsW_{h}^{k+1}(\bx, \ba)$ using equations \eqref{eq:online-R-kerns}, \eqref{eq:online-P-kerns} and \eqref{eq:online-Counts-kerns}
		\EndIf
		\EndFor
	\end{algorithmic}
\end{algorithm}

\subsection{Online updates \& runtime}
\label{app:online-updates}

Assume that we observed a transition $\braces{x_h^k, a_h^k, x_{h+1}^k, \obsreward_h^k}$ at time $(k,h)$, updated the representative sets, and mapped the transition to the representatives $(\tx, \ta, \ty) \in \ReprStates_h^{k+1}\times\ReprActions_h^{k+1} \times \ReprNextStates_h^{k+1}$. We wish to update the estimated MDP given in Def.~\ref{def:empirical-mdp-rs-kerns}, which, at step $h$, are only stored for $(x, a)$ in $\ReprStates_h^{k+1}\times\ReprActions_h^{k+1}$ and $y\in\ReprNextStates_h^{k+1}$.

The auxiliary quantities (Def.~\ref{def:rs-kerns-auxiliary}) are updated as:
\begin{align*}
	& \kernsN_{h}^{k+1}(\tx, \ta, \ty) 
	=  1
	+
	\keta \kernsN_{h}^{k}(\tx, \ta, \ty)
	\\
	&
	 \kernsN_{h}^{k+1}(\tx, \ta) 
	= 1
	+
	\keta \kernsN_{h}^{k}(\tx, \ta) 
	\\
	& 
	\kernsSumR_{h}^{k+1}(\tx, \ta) 
	=  \obsreward_h^k
	+
	\keta \kernsSumR_{h}^{k}(\tx, \ta).
\end{align*}

We need to update $\kernsW_{h}^{k}$, $\kernsR_{h}^{k}$ and $\kernsP_{h}^{k}$ for all $(\bx,\ba, \by) \in \ReprStates_h^{k+1}\times\ReprActions_h^{k+1}\times\ReprNextStates_h^{k+1}$. The update rule will depend on whether the $(\bx, \ba)$ is a \emph{new} representative state-action pair (included in episode $k$) or it was \emph{visited before episode $k$}. These two cases are studied below.

\paragraph{Case 1: $(\bx, \ba) \in \ReprStates_h^{k+1}\times\ReprActions_h^{k+1}$ and $(\bx, \ba) \notin \ReprStates_h^{k}\times\ReprActions_h^{k}$} This means that the representative state-action pair $(\bx, \ba)$ was added at time $(k, h)$. In this case, for all $y \in \ReprNextStates_h^{k+1}$, the quantities $\kernsR_{h}^{k+1}(\bx, \ba)$, $\kernsP_{h}^{k+1}(y|\bx, \ba)$ and $\kernsW_{h}^{k+1}(\bx, \ba)$ can be initialized using equations \eqref{eq:online-R-kerns}, \eqref{eq:online-P-kerns} and \eqref{eq:online-Counts-kerns}. This is done in $\BigO{\NreprStates_h^{k+1} \NreprActions_h^{k+1} \NreprNextStates_h^{k+1}}$ time and can happen, at most, for one pair $(\bx,\ba)$: the one that was newly added. Therefore, we have a total per-episode runtime  of  $\BigO{ \sum_{h=1}^H\NreprStates_h^{k+1} \NreprActions_h^{k+1} \NreprNextStates_h^{k+1}}$ taking this case into account.

\paragraph{Case 2: $(\bx, \ba) \in \ReprStates_h^{k}\times\ReprActions_h^{k}$} This means that the representative state-action pair $(\bx, \ba)$ was added \emph{before} episode $k$, which implies that $\maptoReprSA_h^{k+1}(\bx, \ba) = \maptoReprSA_h^k(\bx, \ba) = (\bx, \ba)$. Hence,
\begin{align*}
\kernsW_{h}^{k+1}(\bx, \ba) & = \sum_{s=1}^{k}\timekernelshort(k-s)\spacekernel\pa{\maptoReprSA_h^{k+1}(\bx, \ba), \maptoReprSA_h^{s+1}(x_h^s, a_h^s)} \\
& = \spacekernel\pa{\maptoReprSA_h^{k+1}(\bx, \ba), \maptoReprSA_h^{k+1}(x_h^k, a_h^k)} + \sum_{s=1}^{k-1}\keta^{k-s}\spacekernel\pa{\maptoReprSA_h^{k}(\bx, \ba), \maptoReprSA_h^{s+1}(x_h^s, a_h^s)} \\
& = \spacekernel\pa{(\bx, \ba), \maptoReprSA_h^{k+1}(x_h^k, a_h^k)} + \keta\sum_{s=1}^{k-1}\keta^{k-s-1}\spacekernel\pa{\maptoReprSA_h^{k}(\bx, \ba), \maptoReprSA_h^{s+1}(x_h^s, a_h^s)} \\
& = \spacekernel\pa{(\bx, \ba), \maptoReprSA_h^{k+1}(x_h^k, a_h^k)} + \keta\kernsW_{h}^{k}(\bx, \ba).
\end{align*} 
This implies that, for a fixed $(\bx,\ba)$, the quantity $\kernsW_{h}^{k+1}(\bx, \ba)$ can be updated in $\BigO{1}$ time, assuming that the mapping $\maptoReprSA_h^{k+1}(x_h^k, a_h^k)$ was previously computed (this mapping is only computed \emph{once} for all the updates, and takes $\BigO{\NreprStates_h^{k+1}\times\NreprActions_h^{k+1}}$ time).

Now, notice that 
\begin{align*}
\kernsR_{h}^{k+1}(\bx, \ba) 
& =  \frac{ \sum_{s=1}^{k} \timekernelshort(k-s)\spacekernel\pa{\maptoReprSA_h^{k+1}(\bx, \ba), \maptoReprSA_h^{s+1}(x_h^s, a_h^s)}  \obsreward_h^s}{ \kbeta +\kernsW_{h}^{k+1}(\bx, \ba)} \\
&  = \frac{\spacekernel\pa{(\bx, \ba), \maptoReprSA_h^{k+1}(x_h^k, a_h^k)}}{\kbeta +\kernsW_{h}^{k+1}(\bx, \ba)} \obsreward_h^k
+ \keta \cdot \pa{\frac{\kbeta +\kernsW_{h}^{k}(\bx, \ba)}{\kbeta +\kernsW_{h}^{k+1}(\bx, \ba)}} \kernsR_{h}^{k}(\bx, \ba) 
\end{align*}
where we used again the fact that, in this case, $\maptoReprSA_h^{k+1}(\bx, \ba) = \maptoReprSA_h^k(\bx, \ba)$. Hence, similarly to $\kernsW_{h}^{k+1}(\bx, \ba)$, the quantity $\kernsR_{h}^{k+1}(\bx, \ba)$ can be updated in $\BigO{1}$ time. A similar reasoning shows that $\kernsP_{h}^{k+1}(y, \bx, \ba)$ can be updated, for all $y \in \ReprNextStates_h^{k+1}$, in $\BigO{\NreprNextStates_h^{k+1}}$ time:
\begin{align*}
	\kernsP_{h}^{k+1}(y | \bx, \ba) =  \frac{\spacekernel\pa{(\bx, \ba), \maptoReprSA_h^{k+1}(x_h^k, a_h^k)}}{\kbeta +\kernsW_{h}^{k+1}(\bx, \ba)} \dirac{\maptoReprS_h^{k+1}(x_{h+1}^k)}(y)
	+ \keta \cdot \pa{\frac{\kbeta +\kernsW_{h}^{k}(\bx, \ba)}{\kbeta +\kernsW_{h}^{k+1}(\bx, \ba)}} \kernsP_{h}^{k}(y | \bx, \ba).
\end{align*}

\paragraph{Summary}  Every time a new transition is observed at time $(k, h)$, the estimators for all $(x, a, y) \in \ReprStates_h^{k+1}\times\ReprActions_h^{k+1}\times\ReprNextStates_h^{k+1}$ must be updated. For a given representative $(x, a)$, the updates can be done in $\BigO{\NreprNextStates_h^{k+1}}$ time if it has been observed before episode $k$ (case 2). This results in a total runtime, per episode, of $\BigO{ \sum_h \NreprStates_h^{k+1} \NreprActions_h^{k+1} \NreprNextStates_h^{k+1} }$ for all the representatives observed before episode $k$. If the representative $(x, a)$ has not been observed before episode $k$ (case 1), the updates require  $\BigO{\NreprStates_h^{k+1} \NreprActions_h^{k+1} \NreprNextStates_h^{k+1} }$ time, and this can happen, at most, for one state-action pair at each time $(k, h)$.  Hence, the total runtime required for the updates is $\BigO{\sum_h \NreprStates_h^{k+1} \NreprActions_h^{k+1} \NreprNextStates_h^{k+1} }$ per episode.


\subsection{Regret analysis}

The regret analysis of \RSkerns is based on the following result, which is a corollary of Lemma \ref{lemma:gaussian-case-weighted-average-and-bonuses-are-lipschitz}, and is used to bound the bias introduced by using representative states. 

\begin{fcorollary}
	\label{corollary:lipschitznes_of_kerns_with_gaussian_kernel}
	Let $\timekernel:\NN \to [0, 1]$, consider the following kernel
	\begin{align*}
	\fullkernel(t, u, v) = \timekernel(t) \exp\pa{-\dist{u, v}^2/(2\ksigma^2)}
	\end{align*}
	where $u, v\in\stateactionspace$. For $s < k$, let 
	\begin{align*}
	\weight_h^{k, s}(x, a) = \fullkernel(k-s-1, (x, a), (x_h^s,a_h^s)), \; \weight_{h,\mapzeta}^{k, s}(x, a) \eqdef \fullkernel\pa{k-s-1, \maptoReprSA_h^{k}(x, a), \maptoReprSA_h^k(x_h^s, a_h^s)} 
	\end{align*}
	and consider the functions
	\begin{align*}
	& g_1(x, a) = \frac{\sum_{s=1}^{k-1} \weight_h^{k, s}(x, a) Y_s}{\kbeta+\sum_{s=1}^{k-1} \weight_h^{k, s}(x, a)}
	,\quad
	g_1^{\mapzeta}(x, a) = \frac{\sum_{s=1}^{k-1} \weight_{h,\mapzeta}^{k, s}(x, a) Y_s}{\kbeta+\sum_{s=1}^{k-1} \weight_{h,\mapzeta}^{k, s}(x, a)}
	, \\ 
	& g_2(x, a) = \sqrt{\frac{1}{\kbeta+\sum_{s=1}^{k-1} \weight_h^{k, s}(x, a)}}
	, \quad
	g_2^{\mapzeta}(x, a) = \sqrt{\frac{1}{\kbeta+\sum_{s=1}^{k-1} \weight_{h,\mapzeta}^{k, s}(x, a)}}
	,\\
	& g_3(x, a) = \frac{1}{\kbeta+\sum_{s=1}^{k-1} \weight_h^{k, s}(x, a)}
	,\quad
	g_3^{\mapzeta}(x, a) = \frac{1}{\kbeta+\sum_{s=1}^{k-1} \weight_{h,\mapzeta}^{k, s}(x, a)}
	.
	\end{align*}
	where $(Y_s)_{s=1}^{k-1}$ is an arbitrary sequence. 
	Then, $g_1$, $g_2$ and $g_3$ are \lipschitz continuous, whose \lipschitz constants are bounded by $L_1$, $L_2$ and $L_3$, respectively, with
	\begin{align*}
	L_1 = \frac{ 4 \max_s\abs{Y_s}}{\ksigma} & \pa{1+\sqrt{\logplus(k/\kbeta)}}
	,\quad 
	L_2 = \frac{1+\sqrt{\logplus(k/\kbeta)}}{2 \kbeta^{1/2}\ksigma}
	,\quad 
	L_3 = \frac{1+\sqrt{\logplus(k/\kbeta)}}{\kbeta \ksigma}
	\end{align*}
	Furthermore, for any $(x, a)$ and for $i \in \braces{1, 2, 3}$,
	\begin{align*}
	\abs{g_i^{\mapzeta}(x, a)-g_i(\maptoReprSA_h^k(x, a))} \leq L_i \max_s \dist{(x_h^s, a_h^s), \maptoReprSA_h^k(x_h^s, a_h^s)}.
	\end{align*}
\end{fcorollary}
\begin{proof} First, let's prove that $g_1$, $g_2$ and $g_3$ are Lipschitz continuous.
	From Lemma \ref{lemma:gaussian-case-weighted-average-and-bonuses-are-lipschitz}, taking $z= (\dist{(x,a),(x_h^s,a_h^s)})_{s=1}^{k-1}$ and $y = (\dist{(x',a'),(x_h^s,a_h^s)})_{s=1}^{k-1}$ we have
	\begin{align*}
	\abs{g_1(x, a)-g_1(x', a')} \leq \frac{ 4 \max_s\abs{Y_s}}{\ksigma} \pa{1+\sqrt{\logplus(k/\kbeta)}} \max_{s\in[k-1]} \abs{z_s - y_s}.
	\end{align*}
	By the triangle inequality, for all $s$,
	\begin{align*}
	\abs{z_s - y_s} = \abs{ \dist{(x,a),(x_h^s,a_h^s)} - \dist{(x',a'),(x_h^s,a_h^s)}} \leq \dist{(x,a),(x',a')}
	\end{align*}
	which implies
	\begin{align*}
	\abs{g_1(x, a)-g_1(x', a')} \leq \frac{ 4 \max_s\abs{Y_s}}{\ksigma} \pa{1+\sqrt{\logplus(k/\kbeta)}} \dist{(x,a),(x',a')}.
	\end{align*}
	giving the \lipschitz constant of $g_1$. The \lipschitz constants of $g_2$ and $g_3$ follow similarly from Lemma \ref{lemma:gaussian-case-weighted-average-and-bonuses-are-lipschitz}.
	
	Now, let's bound the differences $\abs{g_i^{\mapzeta}(x, a)-g_i(\maptoReprSA_h^k(x, a))}$. Let $\chi(s) = \timekernel(k-s-1)$. For $i=1$,  and applying again Lemma  \ref{lemma:gaussian-case-weighted-average-and-bonuses-are-lipschitz}, we obtain
	\begin{align*}
	& \abs{g_1^{\mapzeta}(x, a)-g_1(\maptoReprSA_h^k(x, a))}  \\
	& = \abs{
		\frac{\sum_{s=1}^{k-1} \chi(s) \exp\pa{-\frac{\dist{\maptoReprSA_h^k(x,a), \maptoReprSA_h^k(x_h^s,a_h^s)}^2}{2\ksigma^2}} Y_s}{\kbeta+\sum_{s=1}^{k-1} \chi(s) \exp\pa{-\frac{\dist{\maptoReprSA_h^k(x,a), \maptoReprSA_h^k(x_h^s,a_h^s)}^2}{2\ksigma^2}}}
		-
		\frac{\sum_{s=1}^{k-1} \chi(s) \exp\pa{-\frac{\dist{\maptoReprSA_h^k(x,a), (x_h^s,a_h^s)}^2}{2\ksigma^2}} Y_s}{\kbeta+\sum_{s=1}^{k-1} \chi(s) \exp\pa{-\frac{\dist{\maptoReprSA_h^k(x,a), (x_h^s,a_h^s)}^2}{2\ksigma^2}}}
	} \\
	& \leq \frac{ 4 \max_s\abs{Y_s}}{\ksigma} \pa{1+\sqrt{\logplus(k/\kbeta)}} \max_{s\in[k-1]} \abs{\dist{\maptoReprSA_h^k(x,a), \maptoReprSA_h^k(x_h^s,a_h^s)} - \dist{\maptoReprSA_h^k(x,a), (x_h^s,a_h^s)}} \\
	& \leq \frac{ 4 \max_s\abs{Y_s}}{\ksigma} \pa{1+\sqrt{\logplus(k/\kbeta)}} \max_{s\in[k-1]} \dist{\maptoReprSA_h^k(x_h^s,a_h^s), (x_h^s,a_h^s)},
	\end{align*}
	where, in the last line, we used the triangle inequality.  The proof for $i \in \braces{2, 3}$ also follow from Lemma \ref{lemma:gaussian-case-weighted-average-and-bonuses-are-lipschitz}.
\end{proof}

We use Corollary \ref{corollary:lipschitznes_of_kerns_with_gaussian_kernel} to bound the difference between the estimators and the bonuses of \kerns,  $(\estR_h^k, \estP_h^k,\bonus_h^k)$, and the ones of \RSkerns, $(\kernsR_h^k, \kernsP_h^k,\kernsBonus_h^k)$.
\begin{flemma}
	\label{lemma:error-in-P-when-usin-representative-states_NEW}
	Let $V$ be an arbitrary $\lipQ$-\lipschitz function bounded by $H$. Then, for any $(x, a)$,
	\begin{align*}
	\abs{\pa{\estP_h^k - \kernsP_h^k}V(x, a)} 
	\leq &  
	  \frac{4H}{\ksigma}\pa{1+\sqrt{\logplus(k/\kbeta)}}\pa{ \dist{(x,a),\maptoReprSA_h^k(x,a)} +\maxdist  } \\
	& + 3 \lipQ \Smaxdist  + 8H\pa{1+\sqrt{\logplus(k/\kbeta)}} \frac{\maxdist}{\ksigma}
	\end{align*}
\end{flemma}
\begin{proof}
	To simplify the notations, let $f(s) \eqdef \timekernelshort(k-s-1)$ for $s\in[k-1]$. We have
	\begin{align*}
		& 
		\abs{\pa{\estP_h^k - \kernsP_h^k}V(x, a)}  
		\\
		& 
		= \Bigg|
		  \frac{\sum_{s=1}^{k-1} f(s)\spacekernel\pa{(x, a), (x_h^s, a_h^s)}  V(x_{h+1}^s) }
		 { \kbeta + \sum_{s=1}^{k-1}f(s)\spacekernel\pa{(x, a), (x_h^s, a_h^s)}}
		 - 
		 \frac{\sum_{s=1}^{k-1} f(s)\spacekernel\pa{\maptoReprSA_h^{k}(x, a), \maptoReprSA_h^{s+1}(x_h^s, a_h^s)}  V(\maptoReprS_h^{s+1}(x_{h+1}^s)) }
		 { \kbeta + \sum_{s=1}^{k-1}f(s)\spacekernel\pa{\maptoReprSA_h^{k}(x, a), \maptoReprSA_h^{s+1}(x_h^s, a_h^s)}}
	    \Bigg| 
	    \\
	    & \leq \dingone + \dingtwo 
	\end{align*}
	where $\dingone$ and $\dingtwo$ are defined and bounded below. First, 
	\begin{align*}
		\dingone 
		& =
		\Bigg|
		\frac{\sum_{s=1}^{k-1} f(s)\spacekernel\pa{(x, a), (x_h^s, a_h^s)}  V(x_{h+1}^s) }
		{ \kbeta + \sum_{s=1}^{k-1}f(s)\spacekernel\pa{(x, a), (x_h^s, a_h^s)}}
		-
		\frac{\sum_{s=1}^{k-1}f(s)\spacekernel\pa{\maptoReprSA_h^{k}(x, a), \maptoReprSA_h^{k}(x_h^s, a_h^s)}  V(\maptoReprS_h^{k}(x_{h+1}^s)) }
		{ \kbeta + \sum_{s=1}^{k-1}f(s)\spacekernel\pa{\maptoReprSA_h^{k}(x, a), \maptoReprSA_h^{k}(x_h^s, a_h^s)}}
		\Bigg|
		\\
		&
		\leq 
		\abs{
			 \frac{\sum_{s=1}^{k-1} f(s)\spacekernel\pa{(x, a), (x_h^s, a_h^s)}  V(x_{h+1}^s) }
			 { \kbeta + \sum_{s=1}^{k-1}f(s)\spacekernel\pa{(x, a), (x_h^s, a_h^s)}}
			 -
			 \frac{\sum_{s=1}^{k-1} f(s)\spacekernel\pa{(x, a), (x_h^s, a_h^s)}  V(\maptoReprS_h^{k}(x_{h+1}^s)) }
			 { \kbeta + \sum_{s=1}^{k-1}f(s)\spacekernel\pa{(x, a), (x_h^s, a_h^s)}}
		 }
	 	\\
	 	&
	 	+ \abs{
	 		\frac{\sum_{s=1}^{k-1}f(s)\spacekernel\pa{(x, a), (x_h^s, a_h^s)}  V(\maptoReprS_h^{k}(x_{h+1}^s)) }
	 		{ \kbeta + \sum_{s=1}^{k-1}f(s)\spacekernel\pa{(x, a), (x_h^s, a_h^s)}}
	 		-
	 		\frac{\sum_{s=1}^{k-1} f(s)\spacekernel\pa{\maptoReprSA_h^{k}(x, a), (x_h^s, a_h^s)}  V(\maptoReprS_h^{k}(x_{h+1}^s)) }
	 		{ \kbeta + \sum_{s=1}^{k-1}f(s)\spacekernel\pa{\maptoReprSA_h^{k}(x, a), (x_h^s, a_h^s)}}
	 	}
 		\\
 		& 
 		+ \abs{
 			\frac{\sum_{s=1}^{k-1} f(s)\spacekernel\pa{\maptoReprSA_h^{k}(x, a), (x_h^s, a_h^s)}  V(\maptoReprS_h^{k}(x_{h+1}^s)) }
 			{ \kbeta + \sum_{s=1}^{k-1}f(s)\spacekernel\pa{\maptoReprSA_h^{k}(x, a), (x_h^s, a_h^s)}}
 			-
 			\frac{\sum_{s=1}^{k-1} f(s)\spacekernel\pa{\maptoReprSA_h^{k}(x, a), \maptoReprSA_h^{k}(x_h^s, a_h^s)}  V(\maptoReprS_h^{k}(x_{h+1}^s)) }
 			{ \kbeta + \sum_{s=1}^{k-1}f(s)\spacekernel\pa{\maptoReprSA_h^{k}(x, a), \maptoReprSA_h^{k}(x_h^s, a_h^s)}}
 		}
 		\\
 		& 
 		\leq 
 		\lipQ \max_{s\in[k-1]}\Sdist{x_{h+1}^s, \maptoReprS_h^k(x_{h+1}^s)} 
 		\\
 		&
 		+  \frac{ 4 H}{\ksigma} \pa{1+\sqrt{\logplus(k/\kbeta)}}\dist{(x,a),\maptoReprSA_h^k(x,a)}
 		\\
 		& 
 		+ \frac{ 4 H}{\ksigma} \pa{1+\sqrt{\logplus(k/\kbeta)}} \max_s \dist{(x_h^s, a_h^s), \maptoReprSA_h^k(x_h^s, a_h^s)}
	\end{align*}
	by using the fact that $V$ is $\lipQ$-\lipschitz and Corollary \ref{corollary:lipschitznes_of_kerns_with_gaussian_kernel}. 
	
	To bound $\dingtwo$, let $z_s = \dist{ \maptoReprSA^{k}(x, a), \maptoReprSA^{s+1}(x_h^s, a_h^s) }$ and $y_s = \dist{\maptoReprSA^{k}(x, a), \maptoReprSA^{k}(x_h^s, a_h^s)}$, for $s\in[k-1]$. Using again the fact that $V$ is $\lipQ$-\lipschitz and Corollary \ref{corollary:lipschitznes_of_kerns_with_gaussian_kernel}, we obtain
	\begin{align*}
		\dingtwo
		& =
		\abs{
			\frac{\sum_{s=1}^{k-1}f(s)\spacekernel\pa{\maptoReprSA_h^{k}(x, a), \maptoReprSA_h^{k}(x_h^s, a_h^s)}  V(\maptoReprS_h^{k}(x_{h+1}^s)) }
			{ \kbeta + \sum_{s=1}^{k-1}f(s)\spacekernel\pa{\maptoReprSA_h^{k}(x, a), \maptoReprSA_h^{k}(x_h^s, a_h^s)}}
			-
			\frac{\sum_{s=1}^{k-1} f(s)\spacekernel\pa{\maptoReprSA_h^{k}(x, a), \maptoReprSA_h^{s+1}(x_h^s, a_h^s)}  V(\maptoReprS_h^{s+1}(x_{h+1}^s)) }
			{ \kbeta + \sum_{s=1}^{k-1}f(s)\spacekernel\pa{\maptoReprSA_h^{k}(x, a), \maptoReprSA_h^{s+1}(x_h^s, a_h^s)}}
		}
		\\
		& 
		\leq \lipQ \max_s \Sdist{\maptoReprS^{s+1}(x_{h+1}^s), \maptoReprS^{k}(x_{h+1}^s)}
		+  \frac{4H}{\ksigma}\pa{1+\sqrt{\logplus(k/\kbeta)}} \max_s \abs{z_s-y_s}\\
		& \leq 2\lipQ\Smaxdist + \frac{4H}{\ksigma}\pa{1+\sqrt{\logplus(k/\kbeta)}} \max_s \abs{\dist{ \maptoReprSA^{k}(x_h^s, a_h^s), \maptoReprSA^{s+1}(x_h^s, a_h^s) }} \\
		& \leq  2 \lipQ \Smaxdist  + 8H\pa{1+\sqrt{\logplus(k/\kbeta)}} \frac{\maxdist}{\ksigma}.
	\end{align*}
	By the construction of the algorithm, $\Sdist{x_{h+1}^s, \maptoReprS_h^k(x_{h+1}^s)}\leq\Smaxdist$, $\dist{(x_h^s,a_h^s),\maptoReprSA_h^k(x_h^s,a_h^s)}\leq\maxdist$ ,  $\Sdist{\maptoReprS^{s+1}(x_{h+1}^s), \maptoReprS^{k}(x_{h+1}^s)} \leq 2\Smaxdist$ and $\dist{ \maptoReprSA^{k}(x_h^s, a_h^s), \maptoReprSA^{s+1}(x_h^s, a_h^s) }\leq 2\maxdist$, which concludes the proof.
\end{proof}

\begin{flemma}
	\label{lemma:error-in-R-when-usin-representative-states_NEW}
	For any $(x, a)$,
	\begin{align*}
	\abs{\estR_h^k(x,a) - \kernsR_h^k(x, a)} 
	\leq 
	& \frac{4}{\ksigma}\pa{1+\sqrt{\logplus(k/\kbeta)}}\pa{ \dist{(x,a),\maptoReprSA_h^k(x,a)} + \maxdist } \\
	&   + 8\pa{1+\sqrt{\logplus(k/\kbeta)}} \frac{\maxdist}{\ksigma}
	\end{align*}
\end{flemma}
\begin{proof}
	It follows from a similar decomposition as in the proof of Lemma \ref{lemma:error-in-P-when-usin-representative-states_NEW}, and from Corollary \ref{corollary:lipschitznes_of_kerns_with_gaussian_kernel}.
\end{proof}

\begin{flemma}
	\label{lemma:error-in-bonus-when-usin-representative-states_NEW}
	For any $(x, a)$, we have,
	\begin{align*}
	\abs{ \bonus_h^k(x, a) - \kernsBonus_{h}^k(x, a)} \lesssim  
	& \frac{H}{\ksigma}\pa{1+\frac{1}{2 \kbeta^{1/2}}}\pa{ \dist{(x,a),\maptoReprSA_h^k(x,a)} + \maxdist}
	+ H\pa{1+\frac{1}{\kbeta^{1/2}}}\frac{\maxdist}{\ksigma}
	\end{align*}
\end{flemma}
\begin{proof}
	To simplify the notations, let $f(s) \eqdef \timekernelshort(k-s-1)$ for $s\in[k-1]$. Using definitions \ref{def:exploration-bonuses}, \ref{def:rs-kerns-kernel-choice_new}, and \ref{def:empirical-mdp-rs-kerns}, we have
	\begin{align*}
	& \abs{ \bonus_h^k(x, a) - \kernsBonus_{h}^k(x, a)}  \\
	& \lesssim
	H 
	\underbrace{
	\abs{ 
		\sqrt{  \frac{1}{\kbeta + \sum_{s=1}^{k-1}f(s)\spacekernel\pa{(x, a), (x_h^s, a_h^s)}}} 
	- 
	\sqrt{\frac{1}{	\kbeta + \sum_{s=1}^{k-1}f(s)\spacekernel\pa{\maptoReprSA_h^{k}(x, a), \maptoReprSA_h^{s+1}(x_h^s, a_h^s)}}}
	} 
	}_{\dingone}
	\\
	& 
	+ \kbeta H
	\underbrace{
	\abs{ 
		 \frac{1}{\kbeta + \sum_{s=1}^{k-1}f(s)\spacekernel\pa{(x, a), (x_h^s, a_h^s)}}
		- 
		\frac{1}{	\kbeta + \sum_{s=1}^{k-1}f(s)\spacekernel\pa{\maptoReprSA_h^{k}(x, a), \maptoReprSA_h^{s+1}(x_h^s, a_h^s)}}
	}
	}_{\dingtwo}.
	\end{align*}
	Using Corollary \ref{corollary:lipschitznes_of_kerns_with_gaussian_kernel}, we obtain
	\begin{align*}
		\dingone & \leq
			\abs{ 
			\sqrt{  \frac{1}{\kbeta + \sum_{s=1}^{k-1}f(s)\spacekernel\pa{(x, a), (x_h^s, a_h^s)}}} 
			- 
			\sqrt{\frac{1}{	\kbeta + \sum_{s=1}^{k-1}f(s)\spacekernel\pa{\maptoReprSA_h^{k}(x, a), (x_h^s, a_h^s)}}}
		} 
		\\
		& +
			\abs{ 
			\sqrt{  \frac{1}{\kbeta + \sum_{s=1}^{k-1}f(s)\spacekernel\pa{\maptoReprSA_h^{k}(x, a), (x_h^s, a_h^s)}}} 
			- 
			\sqrt{\frac{1}{	\kbeta + \sum_{s=1}^{k-1}f(s)\spacekernel\pa{\maptoReprSA_h^{k}(x, a), \maptoReprSA_h^{k}(x_h^s, a_h^s)}}}
		}  
		\\
		& +
		\abs{ 
			\sqrt{  \frac{1}{\kbeta + \sum_{s=1}^{k-1}f(s)\spacekernel\pa{\maptoReprSA_h^{k}(x, a), \maptoReprSA_h^{k}(x_h^s, a_h^s)}}} 
			- 
			\sqrt{\frac{1}{	\kbeta + \sum_{s=1}^{k-1}f(s)\spacekernel\pa{\maptoReprSA_h^{k}(x, a), \maptoReprSA_h^{s+1}(x_h^s, a_h^s)}}}
		}  \\
	    & 
	    \leq  \pa{\frac{1+\sqrt{\logplus(k/\kbeta)}}{2 \kbeta^{1/2}\ksigma}}
	    \Bigg( 
	    	\dist{(x,a),\maptoReprSA_h^k(x, a)} 
	    	+ \max_{s\in[k-1]} \dist{(x_h^s, a_h^s), \maptoReprSA_h^k(x, a)}
	   \\
	    & 
	    \quad\quad\quad\quad\quad\quad\quad\quad\quad\quad\quad\quad
	    + \max_{s\in[k-1]} \dist{ \maptoReprSA_h^k(x_h^s, a_h^s), \maptoReprSA_h^{s+1}(x, a)}
	     \Bigg).
	\end{align*}
	
	Similarly, Corollary \ref{corollary:lipschitznes_of_kerns_with_gaussian_kernel} yields
	\begin{align*}
		\dingtwo & 
		\leq  \pa{\frac{1+\sqrt{\logplus(k/\kbeta)}}{\kbeta \ksigma}}
		\Bigg( 
		\dist{(x,a),\maptoReprSA_h^k(x, a)} 
		+ \max_{s\in[k-1]} \dist{(x_h^s, a_h^s), \maptoReprSA_h^k(x, a)}
		\\
		& 
		\quad\quad\quad\quad\quad\quad\quad\quad\quad\quad\quad\quad
		+ \max_{s\in[k-1]} \dist{ \maptoReprSA_h^k(x_h^s, a_h^s), \maptoReprSA_h^{s+1}(x, a)}.
	\end{align*}
	 By the construction of the algorithm, $\dist{(x_h^s, a_h^s), \maptoReprSA_h^k(x, a)} \leq \maxdist$ and $\dist{ \maptoReprSA^{k}(x_h^s, a_h^s), \maptoReprSA^{s+1}(x_h^s, a_h^s) }\leq 2\maxdist$, which concludes the proof.
\end{proof}

\begin{flemma}
	\label{lemma:technical-inequality-for-KeRNS-with-minimum}
	Let $\kernsQ_{h}^k$ and $\tQ_{h, \mapzeta}^k$ be the $Q$-functions defined in Algorithm~\ref{alg:RS-kernel_backward_induction-ONLINE}. Then,
	\begin{align*}
	\kernsQ_{h}^k(x, a) & \eqdef \underset{(\bx,\ba) \in\ReprStates_h^{k}\times\ReprActions_h^{k} }{\min}\pa{\tQ_{h, \mapzeta}^k(\bx,\ba)+\lipQ \dist{(x,a), (\bx,\ba)}} \\
	& = \underset{s\in[k-1]}{\min}\pa{\tQ_{h, \mapzeta}^k(x_h^s, a_h^s)+\lipQ \dist{(x,a), \maptoReprSA_h^{k}(x_h^s, a_h^s)}}.
	\end{align*}
\end{flemma}
\begin{proof}
	Notice that, although $\tQ_{h, \mapzeta}^k$ is only computed for the representative state-action pairs, it is defined for any $(x, a)$ as $$\tQ_{h,\mapzeta}^k(x, a) = \kernsR_{h}^k(x, a) + \kernsP_{h}^k \kernsV_{h+1}(x, a) + \kernsBonus_{h}^k(x, a). $$
		
	We claim that 
	\begin{align*}
	\underbrace{\braces{\maptoReprSA_h^{k}(x_h^s,a_h^s): s\in[k-1]}}_{A} = \underbrace{\ReprStates_h^{k}\times\ReprActions_h^{k}}_{B}.
	\end{align*}
	First, $A\subset B$, since $\forall (s,h)$, we have $\maptoReprSA_h^{k}(x_h^s,a_h^s) \in \ReprStates_h^{k}\times\ReprActions_h^{k}$. Second, for any $(\bx, \ba) \in \ReprStates_h^{k}\times\ReprActions_h^{k}$, there exists $(s,h)$ such that $(\bx, \ba) = (x_h^s,a_h^s) = \maptoReprSA_h^{s+1}(x_h^s,a_h^s) = \maptoReprSA_h^{k}(x_h^s,a_h^s) \in A$, which implies that $B \subset A$.

	Together with the fact that $\tQ_{h, \mapzeta}^k(x_h^s, a_h^s) = \tQ_{h, \mapzeta}^k(\maptoReprSA_h^k(x_h^s, a_h^s))$, which holds by Definition~\ref{def:empirical-mdp-rs-kerns} and Algorithm~\ref{alg:RS-kernel_backward_induction-ONLINE}, this concludes the proof. 
\end{proof}

\begin{flemma}
	\label{lemma:diff-between-Q-in-kerns-and-RS-kerns}
	The difference between the $Q$-values computed by \kerns and \RSkerns are bounded as follows
	\begin{align*}
	\sup_{x, a}\abs{ \algQ_h^k(x, a) - \kernsQ_{h}^k(x, a)} \lesssim \pa{\lipQ (\maxdist + \Smaxdist) + \frac{\maxdist}{\ksigma}H}(H-h+1).
	\end{align*}
\end{flemma}
\begin{proof}
	We proceed by induction on $h$. Let
	\begin{align*}
	\epsilon_h \eqdef \pa{\lipQ (\maxdist + \Smaxdist) + \frac{\maxdist}{\ksigma}H}(H-h+1).
	\end{align*}
	For $h=H+1$, $\epsilon_{H+1} = 0$, $\algQ_{H+1}^k = \kernsQ_{H+1}^k=0$ and the claim holds. 
	
	Now, assume that the claim is true for $h+1$. In this case, we have, for any $x$, 
	\begin{align*}
	& 
	\abs{\algV_{h+1}^{k}(x) - \kernsV_{h+1}^{k}(x)}
	\\
	& 
	= \abs{\min \pa{H-h, \max_a\algQ_{h+1}^{k}(x, a)} - \min \pa{H-h, \max_a\kernsQ_{h+1}^{k}(x, a)}}
	\\
	& 
	\leq
	\abs{\max_a\algQ_{h+1}^{k}(x, a) - \max_a\kernsQ_{h+1}^{k}(x, a)}
	\\
	& 
	\leq \max_{a}\abs{\algQ_{h+1}^{k}(x, a) - \kernsQ_{h+1}^{k}(x, a)} \leq \epsilon_{h+1},
	\end{align*}
	where we used induction hypothesis and the fact that, for any $a, b, c \in \RR$, we have $\abs{\min(a, b) -\min(a, c)} \leq \abs{b-c}$ (Fact~\ref{fact:small-useful-result-with-min}).
	
	For any $(x_h^s, a_h^s)$ with $s\in[k-1]$, we have
	\begin{align*}
	& \abs{\tQ_{h}^k(x_h^s, a_h^s) -\tQ_{h, \mapzeta}^k(x_h^s, a_h^s)} \\
	& \leq  \abs{\estR_h^k(x_h^s, a_h^s) - \kernsR_{h}^k(x_h^s, a_h^s)}
	+  \abs{\bonus_h^k(x_h^s, a_h^s) - \kernsBonus_{h}^k(x_h^s, a_h^s)} \\
	&\quad + \abs{\pa{\estP_h^k - \kernsP_{h}^k}\algV_{h+1}^{k}(x_h^s, a_h^s) 
		+ \kernsP_{h}^k\pa{ \algV_{h+1}^{k} - \kernsV_{h+1}^{k}}(x_h^s, a_h^s)} \\
	& \lesssim \epsilon_{h+1} + \frac{H}{\ksigma}\pa{ \dist{(x_h^s,a_h^s),\maptoReprSA_h^k(x_h^s,a_h^s)} +\maxdist  } 
	 +  \lipQ \Smaxdist  +  \frac{\maxdist}{\ksigma} H
	\end{align*}
	where, in the last line, we used the induction hypothesis and lemmas \ref{lemma:error-in-P-when-usin-representative-states_NEW}, \ref{lemma:error-in-R-when-usin-representative-states_NEW} and \ref{lemma:error-in-bonus-when-usin-representative-states_NEW}.

	By the construction of \RSkerns, we have
	$\max_{s'\in[k-1]} \dist{(x_h^{s'},a_h^{s'}),\maptoReprSA_h^k(x_h^{s'},a_h^{s'})} \leq \maxdist$.
	Consequently,
	\begin{align*}
	& \abs{\tQ_{h}^k(x_h^s, a_h^s) -\tQ_{h, \mapzeta}^k(x_h^s, a_h^s)} \lesssim  \epsilon_{h+1} +  \lipQ\Smaxdist +  \frac{\maxdist}{\ksigma}H.
	\end{align*}
	Now, take an arbitrary $(x, a)$. By Lemma \ref{lemma:technical-inequality-for-KeRNS-with-minimum},
	\begin{align*}
	\kernsQ_{h}^k(x, a) & = \underset{(\bx,\ba) \in\ReprStates_h^k\times\ReprActions_h^k }{\min}\pa{\tQ_{h, \mapzeta}^k(\bx,\ba)+\lipQ \dist{(x,a), (\bx,\ba)}} \\
	& = \underset{s\in[k-1]}{\min}\pa{\tQ_{h, \mapzeta}^k(x_h^s, a_h^s)+\lipQ \dist{(x,a), \maptoReprSA_h^k(x_h^s, a_h^s)}} 
	\end{align*}
	and, by definition,
	\begin{align*}
	\algQ_{h}^k(x, a)= \underset{s\in[k-1]}{\min}\pa{\tQ_{h}^k(x_h^s, a_h^s)+\lipQ \dist{(x,a), (x_h^s, a_h^s)}},
	\end{align*}
	we obtain, for any $(x, a)$,
	\begin{align*}
	& \abs{\algQ_{h}^k(x, a) - \kernsQ_{h}^k(x, a)} \\
	&  \lesssim 
	\min_{s\in[k-1]}\abs{ \tQ_{h}^k(x_h^s, a_h^s) -  \tQ_{h, \mapzeta}^k(x_h^s, a_h^s) } 
	+ \lipQ \min_{s\in[k-1]}\abs{ \dist{(x,a), (x_h^s, a_h^s)} - \dist{(x,a), \maptoReprSA_h^k(x_h^s, a_h^s)}} \\
	& \leq \epsilon_{h+1} +  \lipQ\Smaxdist +  \frac{\maxdist}{\ksigma}H + \lipQ \max_{s\in[k-1]}\dist{ (x_h^s, a_h^s), \maptoReprSA_h^k(x_h^s, a_h^s) } \\
	& \leq \epsilon_{h+1} +  \lipQ(\maxdist + \Smaxdist) + \frac{\maxdist}{\ksigma}H = \epsilon_h.
	\end{align*}
	
	which concludes the proof.
\end{proof}


\begin{ftheorem}[UCRL-type regret bound for \RSkerns]
	\label{theorem:regret-RS-kerns-ucrl-type}
	With probability at least $1-\cdelta$, the regret of \RSkerns is bounded as follows
	\begin{align*}
	\regret^{\RSkerns}(K) \lesssim   \regret_1^{\kerns}(K) + \lipQ(\maxdist + \Smaxdist)KH^2 + \frac{\maxdist}{\ksigma}KH^3.
	\end{align*}
	where $\regret_1^{\kerns}(K)$ is the UCRL-type regret bound given in Theorem \ref{theorem:regret-bound-ucrl-type} for \kerns.
\end{ftheorem}
\begin{proof} 
	Let $\kernsPolicy_k$ be the policy followed by \RSkerns in episode $k$ and let  $\kernsDelta_h^k \eqdef \kernsV_{h}^k(x_h^k) - \trueV_{k,h}^{\kernsPolicy_k}(x_h^k) $

	\paragraph{Regret decomposition} Consider the following decomposition, also used in the proof of Lemma \ref{lemma:regret-in-terms-of-sum-of-bonus-ucrl-type}:
	\begin{align*}
	\kernsDelta_{h}^k 
	& =  \kernsV_{h}^k(x_h^k) - \trueV_{k,h}^{\kernsPolicy_k}(x_h^k) \\
	& \leq \kernsQ_{h}^k(x_h^k, a_h^k) - \trueQ_{k,h}^{\kernsPolicy_k}(x_h^k, a_h^k) \\
	& \leq \kernsQ_{h}^k(\tx_h^k, \ta_h^k) - \trueQ_{k,h}^{\kernsPolicy_k}(x_h^k, a_h^k)
	+ \lipQ\dist{(\tx_h^k, \ta_h^k), (x_h^k, a_h^k)}
	,\quad\text{since $\kernsQ_{h}^k$ is $\lipQ$-\lipschitz} \\
	& \leq  \tQ_{h,\mapzeta}^k(\tx_h^k, \ta_h^k) - \trueQ_{k,h}^{\kernsPolicy_k}(x_h^k, a_h^k)
	+ \lipQ\dist{(\tx_h^k, \ta_h^k), (x_h^k, a_h^k)}
	,\quad\text{since $\kernsQ_{h}^k(\tx_h^k, \ta_h^k) \leq  \tQ_{h,\mapzeta}^k(\tx_h^k, \ta_h^k)$} \\
	& = \kernsR_{h}^k(\tx_h^k, \ta_h^k) - \trueR_h^k(x_h^k, a_h^k) 
	+  \kernsP_{h}^k\algV_{h+1}^k(\tx_h^k, \ta_h^k) - \trueP_h^k\trueV_{k,h+1}^{\kernsPolicy_k}(x_h^k, a_h^k)
	+\kernsBonus_{h}^k(\tx_h^k, \ta_h^k) 
	+ \lipQ\dist{(\tx_h^k, \ta_h^k), (x_h^k, a_h^k)} \\
	& = \quad 
	\underbrace { 
		\kernsR_{h}^k(\tx_h^k, \ta_h^k) - \trueR_h^k(x_h^k, a_h^k) 
	}_{\termA} 
	+ 
	\underbrace{
		\sqrbrackets{\kernsP_{h}^k -\trueP_h^k}\trueV_{k,h+1}^* (\tx_h^k, \ta_h^k)
	}_{\termB}
	+ 
	\underbrace{
		\sqrbrackets{\kernsP_{h}^k -\trueP_h^k}\pa{ \kernsV_{h+1}^k- \trueV_{k,h+1}^*}(\tx_h^k, \ta_h^k) 
	}_{\termC} \\
	&  + \underbrace{ 
		\trueP_h^k \kernsV_{h+1}^k(\tx_h^k, \ta_h^k) - \trueP_h^k\trueV_{k,h+1}^{\kernsPolicy_k}(x_h^k, a_h^k)
	}_{\termD}
	+\kernsBonus_{h}^k(\tx_h^k, \ta_h^k) 
	+ 2\lipQ\dist{(\tx_h^k, \ta_h^k), (x_h^k, a_h^k)}.
	\end{align*}
	
	We will use the following results, which are a consequence of Lemmas  \ref{lemma:error-in-P-when-usin-representative-states_NEW}, \ref{lemma:error-in-R-when-usin-representative-states_NEW} and \ref{lemma:error-in-bonus-when-usin-representative-states_NEW} and the fact that $ \dist{(\tx_h^k, \ta_h^k),\maptoReprSA(\tx_h^k, \ta_h^k)} \leq \maxdist$:
	\begin{align*}
	\abs{\estR_h^k(\tx_h^k, \ta_h^k) - \kernsR_{h}^k(\tx_h^k, \ta_h^k)} 
	& \lesssim  \frac{4}{\ksigma}\pa{ \dist{(\tx_h^k, \ta_h^k),\maptoReprSA_h^k(\tx_h^k, \ta_h^k)} + \maxdist } + \frac{8\maxdist}{\ksigma}\lesssim \frac{\maxdist}{\ksigma}.
	\end{align*}
	\begin{align*}
	\kernsBonus_{h}^k(\tx_h^k, \ta_h^k) & = \bonus_h^k(\tx_h^k, \ta_h^k) + \kernsBonus_{h}^k(\tx_h^k, \ta_h^k)  - \bonus_h^k(\tx_h^k, \ta_h^k)  \\
	& \lesssim \bonus_h^k(\tx_h^k, \ta_h^k) + \frac{H}{\ksigma}\pa{ \dist{(\tx_h^k, \ta_h^k),\maptoReprSA_h^k(\tx_h^k, \ta_h^k)} + \maxdist} + \frac{H\maxdist}{\ksigma} \\
	& \lesssim \bonus_h^k(\tx_h^k, \ta_h^k) +\frac{\maxdist}{\ksigma}H.
	\end{align*}
	and, for any function $f$ that is $\lipQ$-\lipschitz and bounded by $H$,
	\begin{align*}
	& \abs{\pa{\estP_h^k - \kernsP_{h}^k}f(\tx_h^k, \ta_h^k)} 
	 \lesssim \lipQ \Smaxdist 
	+ \frac{4H}{\ksigma}\pa{ \dist{(\tx_h^k, \ta_h^k),\maptoReprSA(\tx_h^k, \ta_h^k)} + \maxdist } + \frac{H\maxdist}{\ksigma}
	\lesssim \lipQ\Smaxdist +\frac{\maxdist}{\ksigma}H.
	\end{align*}
	
	Now, we bound each term $\termA-\termD$.
	
	\underline{Term $\termA$}: by Lemma \ref{lemma:error-in-R-when-usin-representative-states_NEW}, the definition of $\goodevent$ and Corollary \ref{corollary:bias-between-avmdp-and-true-mdp}, we have
	\begin{align*}
	\termA & = \kernsR_{h}^k(\tx_h^k, \ta_h^k) - \estR_h^k(\tx_h^k, \ta_h^k) + \estR_h^k(\tx_h^k, \ta_h^k) - \trueR_h^k(\tx_h^k, \ta_h^k) + \trueR_h^k(\tx_h^k, \ta_h^k) - \trueR_h^k(x_h^k, a_h^k) \\
	& \lesssim \frac{\maxdist}{\ksigma} + \estR_h^k(\tx_h^k, \ta_h^k) - \trueR_h^k(\tx_h^k, \ta_h^k) + \trueR_h^k(\tx_h^k, \ta_h^k) - \trueR_h^k(x_h^k, a_h^k) \\
	& \leq \frac{\maxdist}{\ksigma} +  \rbonus_h^k(\tx_h^k, \ta_h^k) + \biasR(k, h) +\lipR \dist{(\tx_h^k, \ta_h^k), (x_h^k, a_h^k)}.
	\end{align*}

	\underline{Term $\termB$}:
	\begin{align*}
	\termB & = \sqrbrackets{\kernsP_{h}^k -\estP_h^k}\trueV_{k,h+1}^* (\tx_h^k, \ta_h^k) + \sqrbrackets{\estP_h^k -\trueP_h^k}\trueV_{k,h+1}^* (\tx_h^k, \ta_h^k)\\ 
	& \lesssim \lipQ\Smaxdist +\frac{\maxdist}{\ksigma}H + \sqrbrackets{\estP_h^k -\avP_h^k}\trueV_{k,h+1}^* (\tx_h^k, \ta_h^k) + 
	\sqrbrackets{\avP_h^k -\trueP_h^k}\trueV_{k,h+1}^* (\tx_h^k, \ta_h^k) \\
	& \leq \lipQ\Smaxdist +\frac{\maxdist}{\ksigma}H + \pbonus_h^k(\tx_h^k, \ta_h^k) + \biasP(k, h)
	\end{align*}

	\underline{Term $\termC$}: Using Corollary \ref{corollary:bias-between-avmdp-and-true-mdp}, we obtain
	\begin{align*}
	\termC &= \sqrbrackets{\kernsP_{h}^k -\trueP_h^k}\pa{ \kernsV_{h+1}^k- \trueV_{k,h+1}^*}(\tx_h^k, \ta_h^k) \\
	&= \sqrbrackets{\kernsP_{h}^k -\estP_h^k}\pa{ \kernsV_{h+1}^k- \trueV_{k,h+1}^*}(\tx_h^k, \ta_h^k) + \sqrbrackets{\estP_h^k -\trueP_h^k}\pa{ \kernsV_{h+1}^k- \trueV_{k,h+1}^*}(\tx_h^k, \ta_h^k)  \\
	& \lesssim  \lipQ\Smaxdist +\frac{\maxdist}{\ksigma}H +  \sqrbrackets{\estP_h^k -\avP_h^k}\pa{ \kernsV_{h+1}^k- \trueV_{k,h+1}^*}(\tx_h^k, \ta_h^k) + \biasP(k, h) \\
	& \lesssim \lipQ\Smaxdist +\frac{\maxdist}{\ksigma}H +  \sqrt{  \frac{ H^2 \Xsigmacov }{\gencount_h^k(\tx_h^k, \ta_h^k)} }
	+ \frac{\kbeta H}{\gencount_h^k(\tx_h^k, \ta_h^k)}  + \lipQ \ksigma + \biasP(k, h)
	\end{align*}
	by the definition of $\goodevent$.

	\underline{Term $\termD$}: From Assumption \ref{assumption:lipschitz-rewards-and-transitions}, for any $\lipQ$-\lipschitz function, the mapping $(x, a) \mapsto \trueP_h^k f(x, a)$ is $\lipP\lipQ$-\lipschitz. Consequently,
	\begin{align*}
	\termD & =  \trueP_h^k \kernsV_{h+1}^k(\tx_h^k, \ta_h^k) - \trueP_h^k\trueV_{k,h+1}^{\kernsPolicy_k}(x_h^k, a_h^k)\\
	& \leq \trueP_h^k \kernsV_{h+1}^k(x_h^k, a_h^k) - \trueP_h^k\trueV_{k,h+1}^{\kernsPolicy_k}(x_h^k, a_h^k) + \lipP\lipQ\dist{(x_h^k, a_h^k), (\tx_h^k, \ta_h^k)}\\
	& = \trueP_h^k\pa{ \kernsV_{h+1}^k - \trueV_{k,h+1}^{\kernsPolicy_k}}(x_h^k, a_h^k) + \lipP\lipQ\dist{(x_h^k, a_h^k), (\tx_h^k, \ta_h^k)} \\
	& = \kernsDelta_{h+1}^k + \xi_{h+1}^k + \lipP\lipQ\dist{(x_h^k, a_h^k), (\tx_h^k, \ta_h^k)}.
	\end{align*}
	where 
	\begin{align*}
	\xi_{h+1}^k \eqdef  \trueP_h^k\pa{ \kernsV_{h+1}^k - \trueV_{k,h+1}^{\kernsPolicy_k}}(x_h^k, a_h^k) - \kernsDelta_{h+1}^k
	\end{align*}
	is a martingale difference sequence with respect to $(\cF_h^k)_{k, h}$ bounded by $4H$. 
	
	Putting together the bounds for $\termA$-$\termD$ and using the definition of the bonuses $\bonus_h^k$, we obtain
	\begin{align*}
	\kernsDelta_{h}^k \lesssim
	& \kernsDelta_{h+1}^k + \xi_{h+1}^k 
	+ \lipQ\dist{(x_h^k, a_h^k), (\tx_h^k, \ta_h^k)}
	+ \sqrt{  \frac{ H^2 \Xsigmacov }{\gencount_h^k(\tx_h^k, \ta_h^k)} } \\
	& + \frac{\kbeta H}{\gencount_h^k(\tx_h^k, \ta_h^k)} 
	+ \lipQ\ksigma  
	+ \bias(k, h')
	+ \lipQ\Smaxdist +\frac{\maxdist}{\ksigma}H
	\end{align*}
	where the constant in front of $\kernsDelta_{h+1}^k$ is \emph{exact} (\ie not omitted by $\lesssim$). 
	
	Now, we follow the same arguments as in the proof of Lemma \ref{lemma:regret-in-terms-of-sum-of-bonus-ucrl-type}. Consider the event $E_h^k \eqdef \braces{\dist{(x_h^k, a_h^k), (\tx_h^k, \ta_h^k)} \leq  2 \ksigma}$. The inequality above gives us
	\begin{align}
	\indic{E_h^k}\kernsDelta_{h,}^k \lesssim
	&  \indic{E_h^k} \kernsDelta_{h+1}^k +  \indic{E_h^k}\xi_{h+1}^k 
	+  \indic{E_h^k}\sqrt{  \frac{ H^2 \Xsigmacov }{\gencount_h^k(\tx_h^k, \ta_h^k)} } \nonumber \\
	& +  \indic{E_h^k}\frac{\kbeta H}{\gencount_h^k(\tx_h^k, \ta_h^k)} 
	+ 3\lipQ\ksigma  
	+ \bias(k, h')
	+ \lipQ\Smaxdist +\frac{\maxdist}{\ksigma}H 		\label{eq:rs-kerns-delta-trick}.
	\end{align}
	
	Using Lemmas \ref{lemma:upper-bound-on-q-functions} and \ref{lemma:diff-between-Q-in-kerns-and-RS-kerns}, we upper bound $ \indic{E_h^k} \kernsDelta_{h+1}^k$ in terms of $\kernsDelta_{h+1}^k$:
	\begin{align}
	\indic{E_h^k} \kernsDelta_{h+1}^k
	&
	= \indic{E_h^k}\pa{ \kernsV_{h+1}^k(x_{h+1}^k) - \trueV_{k,h}^{\kernsPolicy_k}(x_{h+1}^k)} 
	\nonumber
	\\
	&
	\lesssim \indic{E_h^k}\pa{ 
		H\lipQ (\maxdist + \Smaxdist) + \frac{\maxdist}{\ksigma}H^2
		+ \algV_{h+1}^k(x_{h+1}^k) - \trueV_{k,h+1}^{\kernsPolicy_k}(x_{h+1}^k)
	} 
	\nonumber
	\\
	& 
	\leq H\lipQ (\maxdist + \Smaxdist) + \frac{\maxdist}{\ksigma}H^2
	+ \indic{E_h^k}
	\underbrace{
		\pa{ 
			\algV_{h+1}^k(x_{h+1}^k) + \sum_{h'=h+1}^H \bias(k, h)  - \trueV_{k,h+1}^{\kernsPolicy_k}(x_{h+1}^k)
		} 
	}_{\geq 0 \text{ by Lemma \ref{lemma:upper-bound-on-q-functions}}}
	\nonumber
	\\
	& 
	\leq 
	H\lipQ (\maxdist + \Smaxdist) + \frac{\maxdist}{\ksigma}H^2
	+ \algV_{h+1}^k(x_{h+1}^k) -  \trueV_{k,h+1}^{\kernsPolicy_k}(x_{h+1}^k)
	+ \sum_{h'=h+1}^H \bias(k, h)
	\nonumber
	\\
	& 
	\leq 
	2H\lipQ (\maxdist + \Smaxdist) + 2\frac{\maxdist}{\ksigma}H^2
	+ \kernsV_{h+1}^k(x_{h+1}^k) -  \trueV_{k,h+1}^{\kernsPolicy_k}(x_{h+1}^k)
	+ \sum_{h'=h+1}^H \bias(k, h) 
	\nonumber
	\\
	&
	= \kernsDelta_{h+1}^k        
	+ 2H\lipQ (\maxdist + \Smaxdist) + 2\frac{\maxdist}{\ksigma}H^2
	+ \sum_{h'=h+1}^H \bias(k, h)
	\label{eq:rs-kerns-delta-trick-two}
	\end{align}
	
	Let $\overline{E}_h^k$ be the complement of $E_h^k$. Using the inequality above combined with \eqref{eq:rs-kerns-delta-trick}, and the fact that $\kernsDelta_{h}^k \leq H$, we obtain
	\begin{align}
	\kernsDelta_{h}^k 
	& = \indic{\overline{E}_h^k}\kernsDelta_{h}^k  + \indic{E_h^k}\kernsDelta_{h}^k 
	\label{eq:rs-kerns-delta-trick-three}
	\\
	& 
	\leq  H \indic{\overline{E}_h^k} + \indic{E_h^k}\kernsDelta_{h}^k 
	\nonumber
	\\
	& 
	\lesssim  
	H \indic{\overline{E}_h^k}
	+ \kernsDelta_{h+1}^k        
	+ H\lipQ (\maxdist + \Smaxdist) + \frac{\maxdist}{\ksigma}H^2
	+ \sum_{h'=h}^H \bias(k, h)
	+  \indic{E_h^k}\xi_{h+1}^k
	\nonumber 
	\\
	& 
	+  \indic{E_h^k}\sqrt{  \frac{ H^2 \Xsigmacov }{\gencount_h^k(\tx_h^k, \ta_h^k)} } 
	+  \indic{E_h^k}\frac{\kbeta H}{\gencount_h^k(\tx_h^k, \ta_h^k)} 
	+ \lipQ\ksigma  
	\nonumber
	\end{align}

	Consequently, 
	\begin{align*}
	\sum_{k=1}^K \kernsDelta_{1}^k \lesssim & 
	\sum_{k=1}^K\sum_{h=1}^H \pa{\frac{H\sqrt{\Xsigmacov}}{\sqrt{\gencount_h^k(\tx_h^k, \ta_h^k)}} 
		+  \frac{\kbeta H}{\gencount_h^k(\tx_h^k, \ta_h^k)} }\indic{E_h^k}
	+ H \sum_{k=1}^K\sum_{h=1}^H \indic{\overline{E}_h^k}  \\
	& +  \sum_{k=1}^K\sum_{h=1}^H \indic{E_h^k} \xi_{h+1}^k 
	+  H \sum_{k=1}^K\sum_{h=1}^H \bias(k, h) 
	+ \lipQ (\maxdist + \Smaxdist)KH^2 + \frac{\maxdist}{\ksigma}KH^3
	\end{align*}

	Now, as in the proof of Lemma  \ref{lemma:regret-in-terms-of-sum-of-bonus-ucrl-type}, we show that 
	\begin{align*}
	H \sum_{k=1}^K\sum_{h=1}^H \indic{\overline{E}_h^k} \leq H^2\sigmacov
	\end{align*}
	
	and, from lemmas \ref{lemma:upper-bound-on-q-functions} and \ref{lemma:diff-between-Q-in-kerns-and-RS-kerns}, we have:
	\begin{align*}
	\regret^{\RSkerns}(K) & = \sum_{k=1}^{K} \pa{\trueV_{k,1}^*(x_1^k) - \trueV_{k,h}^{\kernsPolicy_k}(x_1^k)} \\
	& \leq \sum_{k=1}^{K}\pa{ \algV_{1}^k(x_1^k)  - \trueV_{k,h}^{\kernsPolicy_k}(x_1^k)} + \sum_{k=1}^{K}\sum_{h=1}^H \bias(k, h) \\
	& \lesssim  \sum_{k=1}^{K}\pa{ \kernsV_{1}^k(x_1^k)  - \trueV_{k,h}^{\kernsPolicy_k}(x_1^k)} + \sum_{k=1}^{K}\sum_{h=1}^H \bias(k, h) + \lipQ KH(\maxdist + \Smaxdist) + \frac{\maxdist}{\ksigma}KH^2 \\
	& =  \sum_{k=1}^K \kernsDelta_{1}^k + \sum_{k=1}^{K}\sum_{h=1}^H \bias(k, h) + \lipQ KH(\maxdist + \Smaxdist) + \frac{\maxdist}{\ksigma}KH^2 \\
	& \lesssim \sum_{k=1}^K\sum_{h=1}^H \pa{\frac{H\sqrt{\Xsigmacov}}{\sqrt{\gencount_h^k(\tx_h^k, \ta_h^k)}} 
		+  \frac{\kbeta H}{\gencount_h^k(\tx_h^k, \ta_h^k)} }\indic{E_h^k}
	+ H^2\sigmacov  \\
	& +  \sum_{k=1}^K\sum_{h=1}^H \indic{E_h^k} \xi_{h+1}^k 
	+  H \sum_{k=1}^K\sum_{h=1}^H \bias(k, h) 
	+ \lipQ (\maxdist + \Smaxdist)KH^2 + \frac{\maxdist}{\ksigma}KH^3.
	\end{align*}
	
	Recall the definition $E_h^k \eqdef \braces{\dist{(x_h^k, a_h^k), (\tx_h^k, \ta_h^k)} \leq  2 \ksigma}$ and the fact that $\widetilde{\xi}_{h+1}^k\eqdef \indic{E_h^k} \xi_{h+1}^k$ is a martingale difference sequence with respect to $(\cF_h^k)_{k, h}$ bounded by $4H$. 
	Then, as in the proof of Theorem \ref{theorem:regret-bound-ucrl-type}, we obtain
	
	\begin{align*}
	\regret^{\RSkerns}(K) \lesssim & \regret_1^{\kerns}(K) +  \lipQ KH^2 (\maxdist + \Smaxdist) +\frac{\maxdist}{\ksigma} KH^3
	\end{align*}
	with probability at least $1-\cdelta$.
\end{proof}


\begin{ftheorem}[UCBVI-type regret bound for \RSkerns]
	\label{theorem:regret-RS-kerns-ucbvi-type}
	With probability at least $1-\cdelta$, the regret of \RSkerns is bounded as follows
	\begin{align*}
	\regret^{\RSkerns}(K) \lesssim   \regret_2^{\kerns}(K) + \lipQ(\maxdist + \Smaxdist)KH^2 + \frac{\maxdist}{\ksigma}KH^3.
	\end{align*}
	where $\regret_2^{\kerns}(K)$ is the UCBVI-type regret bound given in Theorem \ref{theorem:regret-bound-ucbvi-type} for \kerns.
\end{ftheorem}

\begin{proof}
	We use the same regret decomposition as in the proof of Theorem \ref{theorem:regret-RS-kerns-ucrl-type}, but the term $\termC$ is bounded differently (as in Lemma \ref{lemma:regret-in-terms-of-sum-of-bonus}).
	
	Using Lemma \ref{lemma:error-in-P-when-usin-representative-states_NEW}, Lemma \ref{lemma:diff-between-Q-in-kerns-and-RS-kerns}, and the same arguments as in the proof of Lemma \ref{lemma:regret-in-terms-of-sum-of-bonus}, we have
	\begin{align*}
	\termC 
	& = \sqrbrackets{\kernsP_h^k -\trueP_h^k}\pa{ \kernsV_{h+1}^k- \trueV_{k,h+1}^*}(\tx_h^k, \ta_h^k) \\
	& = \sqrbrackets{\kernsP_h^k -\estP_h^k}\pa{ \kernsV_{h+1}^k- \trueV_{k, h+1}^*}(\tx_h^k, \ta_h^k) 
	+ \sqrbrackets{\estP_h^k -\trueP_h^k}\pa{ \kernsV_{h+1}^k- \trueV_{k,h+1}^*}(\tx_h^k, \ta_h^k) 
	\\
	& \lesssim  \lipQ (\maxdist + \Smaxdist)H + \frac{\maxdist}{\ksigma}H^2 + \sqrbrackets{\estP_h^k -\trueP_h^k}\pa{ \algV_{h+1}^k- \trueV_{k,h+1}^*}(\tx_h^k, \ta_h^k) 
	\quad \text{(by lemmas \ref{lemma:error-in-P-when-usin-representative-states_NEW} and \ref{lemma:diff-between-Q-in-kerns-and-RS-kerns})}
	\\
	& \lesssim   \lipQ (\maxdist + \Smaxdist)H + \frac{\maxdist}{\ksigma}H^2 +  \frac{1}{H} \trueP_h^k\pa{ \algV_{h+1}^k- \trueV_{k,h+1}^*}(x_h^k, a_h^k) + \frac{H^2\Xsigmacov}{\gencount_h^k(\tx_h^k, \ta_h^k)} \\
	& + \lipQ\ksigma +  \sum_{h'=h}^H \bias(k, h') + \lipQ\dist{(x_h^k, a_h^k), (\tx_h^k, \ta_h^k)}
	\quad \text{(following the proof of Lemma \ref{lemma:regret-in-terms-of-sum-of-bonus})}
	\\
	& \lesssim   \lipQ (\maxdist + \Smaxdist)H + \frac{\maxdist}{\ksigma}H^2 +  \frac{1}{H} \trueP_h^k\pa{ \kernsV_{h+1}^k- \trueV_{k,h+1}^{\kernsPolicy_k}}(x_h^k, a_h^k) + \frac{H^2\Xsigmacov}{\gencount_h^k(\tx_h^k, \ta_h^k)} \\
	& + \lipQ\ksigma +  \sum_{h'=h}^H \bias(k, h') + \lipQ\dist{(x_h^k, a_h^k), (\tx_h^k, \ta_h^k)}
	\end{align*}
	where, in the last line, we used the fact that $\trueV_{k,h+1}^{\kernsPolicy_k} \leq \trueV_{k,h+1}^*$ and that $\algV_{k+1}^k
 \leq \kernsV_{k+1}^k + \lipQ(\maxdist+\Smaxdist)H + (\maxdist/\ksigma)H^2$ by Lemma \ref{lemma:diff-between-Q-in-kerns-and-RS-kerns}.
	
	Putting together the bounds for $\termA-\termD$ and using the same arguments as in the proof of Theorem \ref{theorem:regret-RS-kerns-ucrl-type}, especially the inequalities \eqref{eq:rs-kerns-delta-trick}, \eqref{eq:rs-kerns-delta-trick-two} and \eqref{eq:rs-kerns-delta-trick-three}, we obtain
	
	\begin{align*}
	\sum_{k=1}^K \kernsDelta_{1}^k \lesssim & 
	\sum_{k=1}^K\sum_{h=1}^H \pa{\frac{H}{\sqrt{\gencount_h^k(\tx_h^k, \ta_h^k)}} 
		+  \frac{H^2\Xsigmacov}{\gencount_h^k(\tx_h^k, \ta_h^k)} }\indic{\dist{(x_h^k, a_h^k), (\tx_h^k, \ta_h^k)} \leq  2 \ksigma}
	+ H^2 \sigmacov \\
	& +  \sum_{k=1}^K\sum_{h=1}^H \pa{1+\frac{1}{H}}^{H-h+1}\widetilde{\xi}_{h+1}^k 
	+  H \sum_{k=1}^K\sum_{h=1}^H \bias(k, h) 
	+ \lipQ K H\ksigma 
	+ \lipQ (\maxdist + \Smaxdist)H^2 + \frac{\maxdist}{\ksigma}H^3
	\end{align*}
	
	where $\widetilde{\xi}_{h+1}^k$	is a martingale difference sequence with respect to $(\cF_h^k)_{k, h}$ bounded by $4H$.

	Now, from lemmas \ref{lemma:upper-bound-on-q-functions} and \ref{lemma:diff-between-Q-in-kerns-and-RS-kerns} , we have:
	\begin{align*}
	\regret^{\RSkerns}(K) & = \sum_{k=1}^{K} \pa{\trueV_{k,1}^*(x_1^k) - \trueV_{k,h}^{\kernsPolicy_k}(x_1^k)} 
	\leq \sum_{k=1}^{K}\pa{ \algV_{1}^k(x_1^k)  - \trueV_{k,h}^{\kernsPolicy_k}(x_1^k)} + \sum_{k=1}^{K}\sum_{h=1}^H \bias(k, h) \\
	& \lesssim  \sum_{k=1}^{K}\pa{ \kernsV_{1}^k(x_1^k)  - \trueV_{k,h}^{\kernsPolicy_k}(x_1^k)} + \sum_{k=1}^{K}\sum_{h=1}^H \bias(k, h) + \lipQ KH(\maxdist + \Smaxdist) + \frac{\maxdist}{\ksigma}KH^2 \\
	& =  \sum_{k=1}^K \kernsDelta_{1}^k + \sum_{k=1}^{K}\sum_{h=1}^H \bias(k, h) + \lipQ KH(\maxdist + \Smaxdist) + \frac{\maxdist}{\ksigma}KH^2 \\
	& \lesssim 		\sum_{k=1}^K\sum_{h=1}^H \pa{\frac{H}{\sqrt{\gencount_h^k(\tx_h^k, \ta_h^k)}} 
		+  \frac{H^2\Xsigmacov}{\gencount_h^k(\tx_h^k, \ta_h^k)} }\indic{\dist{(x_h^k, a_h^k), (\tx_h^k, \ta_h^k)} \leq  2 \ksigma}
	+ H^2 \sigmacov \\
	& +  \sum_{k=1}^K\sum_{h=1}^H \pa{1+\frac{1}{H}}^{h}\widetilde{\xi}_{h+1}^k 
	+  H \sum_{k=1}^K\sum_{h=1}^H \bias(k, h) 
	+ \lipQ K H\ksigma 
	+ \lipQ (\maxdist + \Smaxdist)KH^2 + \frac{\maxdist}{\ksigma}KH^3
	\end{align*}
	
	Then, as in the proof of Theorem \ref{theorem:regret-bound-ucbvi-type}, we obtain
	
	\begin{align*}
	\regret^{\RSkerns}(K) \lesssim & \regret_2^{\kerns}(K) +  \lipQ (\maxdist + \Smaxdist)KH^2  +\frac{\maxdist}{\ksigma} KH^3
	\end{align*}
	with probability at least $1-\cdelta$.
\end{proof}

\newpage
\section{Technical Lemmas}

\begin{lemma}[adapted from \cite{domingues2020}]
	\label{lemma:kernel-bias}
	Consider a sequence of non-negative real numbers $\braces{z_s}_{s=1}^t$ and let $\kernel_{(\keta, \kW)}: \RR_+ \to [0, 1]$ satisfy Assumption \ref{assumption:kernel-properties}.  For a given $t$, let
	\begin{align*}
	w_s \eqdef \kernel_{(\keta, \kW)}\pa{t-s-1, \frac{z_s}{\ksigma}}  \; \mbox{ and } \; \widetilde{w}_s \eqdef \frac{w_s}{\kbeta + \sum_{s'=1}^t w_{s'}}
	\end{align*}
	for $\kbeta > 0$. Then, we have
	\begin{align*}
	\sum_{s=1}^t \widetilde{w}_s z_s \leq 2\ksigma \pa{1 + \sqrt{\log (\spacekernelconstA t/\kbeta+e)}}.
	\end{align*}
\end{lemma}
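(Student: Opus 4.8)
The plan is to combine a layer-cake (distribution-function) rewriting of the weighted sum with the Gaussian upper bound on the kernel provided by condition (1) of Assumption~\ref{assumption:kernel-properties}; note that the temporal conditions (3)--(4) and the monotonicity of $z\mapsto\kernel_{(\keta,\kW)}(t,z)$ are not needed for this lemma (this is essentially the argument of \cite{domingues2020}, restated for our kernel).

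First I would record two elementary bounds on the truncated weighted mass $\Phi(r) \eqdef \sum_{s=1}^t \widetilde{w}_s\,\indic{z_s > r}$. On one hand, $\Phi(r) \le \sum_{s=1}^t \widetilde{w}_s = \frac{\sum_s w_s}{\kbeta + \sum_s w_s} \le 1$. On the other hand, since the denominator of $\widetilde w_s$ is at least $\kbeta$ and, by Assumption~\ref{assumption:kernel-properties}(1), $w_s = \kernel_{(\keta,\kW)}\pa{t-s-1,\, z_s/\ksigma} \le \spacekernelconstA \exp\pa{-z_s^2/(2\ksigma^2)}$, for every $r\ge 0$ we get $\Phi(r) \le \frac{1}{\kbeta}\sum_{s:\,z_s > r} w_s \le \frac{\spacekernelconstA t}{\kbeta}\exp\pa{-r^2/(2\ksigma^2)}$ (there are at most $t$ indices in the sum, and $z_s>r$ forces $e^{-z_s^2/(2\ksigma^2)}\le e^{-r^2/(2\ksigma^2)}$). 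Since each $z_s\ge 0$ we have $z_s = \int_0^\infty \indic{z_s > r}\,\rmd r$, and swapping the finite sum with the integral gives $\sum_{s=1}^t \widetilde{w}_s z_s = \int_0^\infty \Phi(r)\,\rmd r$.

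Next I would pick the crossover threshold $u \eqdef \ksigma\sqrt{2\log(\spacekernelconstA t/\kbeta + e)}$, chosen so that $\frac{\spacekernelconstA t}{\kbeta}\exp\pa{-u^2/(2\ksigma^2)} = \frac{\spacekernelconstA t/\kbeta}{\spacekernelconstA t/\kbeta + e} \le 1$, and bound $\Phi$ by $1$ on $[0,u]$ and by $\frac{\spacekernelconstA t}{\kbeta}\exp\pa{-r^2/(2\ksigma^2)}$ on $(u,\infty)$, using the standard Gaussian tail estimate $\int_u^\infty e^{-r^2/(2\ksigma^2)}\rmd r \le \int_u^\infty \frac{r}{u}\, e^{-r^2/(2\ksigma^2)}\rmd r = \frac{\ksigma^2}{u}e^{-u^2/(2\ksigma^2)}$:
\[
\sum_{s=1}^t \widetilde{w}_s z_s \;\le\; u + \frac{\spacekernelconstA t}{\kbeta}\int_u^\infty \exp\pa{-\frac{r^2}{2\ksigma^2}}\rmd r \;\le\; u + \frac{\spacekernelconstA t}{\kbeta}\cdot\frac{\ksigma^2}{u}\exp\pa{-\frac{u^2}{2\ksigma^2}} \;\le\; u + \frac{\ksigma^2}{u}.
\]
Writing $L \eqdef \log(\spacekernelconstA t/\kbeta + e) \ge 1$, this is $\sum_s \widetilde w_s z_s \le \ksigma\sqrt{2L} + \ksigma/\sqrt{2L} \le \ksigma\pa{1 + \sqrt{2L}} \le 2\ksigma\pa{1 + \sqrt{L}}$, using $1/\sqrt{2L}\le 1$ and $\sqrt{2L} = \sqrt 2\,\sqrt L \le 2\sqrt L$; this is exactly the claimed bound.

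There is no genuine obstacle here; the only point requiring care is the calibration of $u$ so that both the linear term $u$ and the tail term come out with the constant $2\ksigma$ rather than something larger. A cruder single-threshold split — bounding the number of ``far'' indices by $t$ and evaluating the unimodal function $z \mapsto z\,e^{-z^2/(2\ksigma^2)}$ at $u$ (which also forces $u\ge\ksigma$) — yields a bound of order $2\sqrt 2\,\ksigma\sqrt{\log(\cdots)}$, which exceeds $2\ksigma(1+\sqrt L)$ for large $L$. The layer-cake form, together with the extra $1/u$ factor gained from the Gaussian tail integral, is precisely what brings the constant down to the value stated in the lemma.
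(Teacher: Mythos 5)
Your proof is correct: the layer-cake identity, the two bounds on the truncated mass $\Phi$, the calibration of $u$, and the final constants all check out, and you are right that only condition (1) of Assumption~\ref{assumption:kernel-properties} (plus the $\kbeta$ floor in the denominator) is needed — the paper's proof uses nothing else either. The route differs from the paper's in its mechanics rather than its ideas. The paper keeps the sum discrete: it splits at $c = 2\ksigma\sqrt{\log(\spacekernelconstA t/\kbeta + e)}$, bounds the near part by $c$ via $\sum_s \widetilde w_s \le 1$, and calibrates $c$ so that $z_s \ge c$ forces $\widetilde w_s \le \tfrac{2\ksigma^2}{t z_s^2}$, whence the far part is at most $\tfrac{2\ksigma^2}{t}\sum_{s: z_s\ge c} 1/z_s \le 2\ksigma^2/c$. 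You instead pass to the continuous representation $\sum_s\widetilde w_s z_s=\int_0^\infty\Phi(r)\,\rmd r$, bound $\Phi(r)\le\min\pa{1,\,(\spacekernelconstA t/\kbeta)e^{-r^2/(2\ksigma^2)}}$, and finish with the Gaussian tail estimate $\int_u^\infty e^{-r^2/(2\ksigma^2)}\rmd r \le (\ksigma^2/u)e^{-u^2/(2\ksigma^2)}$. The ingredients (sub-Gaussian kernel bound, regularization $\kbeta$, crude count of at most $t$ indices, normalized weights summing to at most one) are identical, so this is a repackaging rather than a new idea; what it buys is a slightly cleaner tail computation and a threshold $u=\ksigma\sqrt{2L}$ that is a factor $\sqrt2$ smaller than the paper's $c=2\ksigma\sqrt L$ (with $L=\log(\spacekernelconstA t/\kbeta+e)$), giving the marginally tighter intermediate bound $\ksigma\sqrt{2L}+\ksigma/\sqrt{2L}$ versus the paper's $2\ksigma\sqrt L+\ksigma/\sqrt L$, before both are relaxed to the stated $2\ksigma(1+\sqrt L)$.
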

\begin{proof}
	For completeness, we reproduce here the proof of Lemma 7 of \cite{domingues2020}, which also applies to our setting.
	We split the sum into two terms:
	\begin{align*}
	\sum_{s=1}^t \widetilde{w}_s z_s &= \sum_{s: z_s < c} \widetilde{w}_s z_s + \sum_{s: z_s \geq c} \widetilde{w}_s z_s  \leq c + \sum_{s: z_s \geq c} \widetilde{w}_s \,.
	\end{align*}
	From Assumption \ref{assumption:kernel-properties}, we have $w_s \leq \spacekernelconstA \exp\pa{-z_s^2/(2\ksigma^2)}$. Hence, $ \widetilde{w}_s \leq (\spacekernelconstA/\kbeta) \exp\pa{-z_s^2/(2\ksigma^2)}$, since $\kbeta + \sum_{s'=1}^t w_{s'} \geq \kbeta$.

	We want to find $c$ such that:
	\begin{align*}
	z_s \geq c \implies \frac{\spacekernelconstA}{\kbeta}\exp\pa{-\frac{z_s^2}{2\ksigma^2}} \leq \frac{1}{t}\frac{2\ksigma^2}{z_s^2}
	\end{align*}
	which implies, for $z_s \geq c$, that $\widetilde{w}_s \leq \frac{1}{t}\frac{2\ksigma^2}{z_s^2}$.

	Let $ x = z_s^2/2\ksigma^2 $. Reformulating, we want to find a value $c'$ such that $\spacekernelconstA\exp(-x) \leq \kbeta/(xt)$ for all $x \geq c'$. Let $c' = 2\log (\spacekernelconstA t/\kbeta+e)$. If $x \geq c'$, we have:
	\begin{align*}
	 \frac{x}{2} \geq \log \!\!\left(\frac{\spacekernelconstA t}{\kbeta}+e\right)  \implies x \geq \frac{x}{2} + \log\left( \frac{\spacekernelconstA t}{\kbeta}+e\right)
	&\implies x \geq \log x + \log (\spacekernelconstA t/\kbeta+e)\\
	&\implies (\spacekernelconstA/\kbeta)\exp(-x) \leq 1/(xt)
	\end{align*}
	as we wanted.

	Now, $x \geq c'$ is equivalent to $z_s \geq \sqrt{2\ksigma^2 c'} = 2\ksigma\sqrt{\log (\spacekernelconstA t/\kbeta+e)}$. Therefore, we take $c = 2\ksigma\sqrt{\log (\spacekernelconstA t/\kbeta+e)}$, which gives us
	\begin{align*}
	\sum_{s: z_s \geq c} \widetilde{w}_s z_s & \leq \sum_{s: z_s \geq c} \frac{1}{t}\frac{2\ksigma^2}{z_s^2}z_s \leq \frac{2\ksigma^2}{t}\sum_{s: z_s \geq c} \frac{1}{z_s} \leq \frac{2\ksigma^2}{c} \frac{\abs{\braces{s: z_s \geq c}}}{t} \leq \frac{2\ksigma^2}{c}\,.
	\end{align*}

	Finally, we obtain:
	\begin{align*}
	\sum_{s=1}^t \widetilde{w}_s z_s &\leq c + \sum_{s: z_s \geq c} \widetilde{w}_s z_s  \leq c + \frac{2\ksigma^2}{c} \\
	&=  2\ksigma\sqrt{\log (\spacekernelconstA t/\kbeta+e)} + \frac{\ksigma}{\sqrt{\log (\spacekernelconstA t/\kbeta+e)}} \leq 2\ksigma\pa{1 + \sqrt{\log (\spacekernelconstA t/\kbeta)}}\,.
	\end{align*}
\end{proof}

\begin{lemma}
	\label{lemma:weighted-average-and-bonuses-are-lipschitz}
	Let $\kernel_{(\keta, \kW)}: \RR_+ \to [0, 1]$ be a kernel that satisfies Assumption \ref{assumption:kernel-properties}. Let $a \in \RR_+^t$ and $f_1, f_2, f_3$ be functions from  $\RR_+^t $ to $\RR$ defined as
	\begin{align*}
	 & f_1(z) = \frac{\sum_{s=1}^t \kernel_{(\keta, \kW)}\pa{t-s-1, z_s/\ksigma} a_s}{\kbeta + \sum_{s=1}^t \kernel_{(\keta, \kW)}\pa{t-s-1, z_s/\ksigma}}, \\
	 & f_2(z) = \sqrt{\frac{1}{\kbeta + \sum_{s=1}^t \kernel_{(\keta, \kW)}\pa{t-s-1, z_s/\ksigma}}},\\
	 &  f_3(z) = \frac{1}{\kbeta + \sum_{s=1}^t \kernel_{(\keta, \kW)}\pa{t-s-1, z_s/\ksigma}}
	\end{align*}
	Then, for any $y, z \in \RR_+ $, we have
	\begin{align*}
	 & \abs{f_1(z)-f_1(y)} \leq \frac{2\spacekernelconstB \norm{a}_\infty t}{\kbeta\ksigma} \norm{z-y}_\infty \\
	 & \abs{f_2(z)-f_2(y)} \leq \frac{\spacekernelconstB t}{2 \kbeta^{3/2}\ksigma} \norm{z-y}_\infty \\
	 & \abs{f_3(z)-f_3(y)} \leq \frac{\spacekernelconstB  t}{\kbeta^2 \ksigma} \norm{z-y}_\infty
	\end{align*}
\end{lemma}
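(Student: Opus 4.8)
The plan is to reduce all three estimates to one elementary observation: for each fixed $s\in[t]$, the map $z\mapsto w_s(z)\eqdef\kernel_{(\keta,\kW)}\pa{t-s-1,z_s/\ksigma}$ is $(\spacekernelconstB/\ksigma)$-Lipschitz on $\RR_+^t$ with respect to $\norm{\cdot}_\infty$. This is immediate from property $\mathbf{(2)}$ of Assumption~\ref{assumption:kernel-properties}, since $\abs{z_s/\ksigma-y_s/\ksigma}\le\norm{z-y}_\infty/\ksigma$. Summing over $s$, the denominator $D(z)\eqdef\kbeta+\sum_{s=1}^t w_s(z)$ is then $(\spacekernelconstB t/\ksigma)$-Lipschitz, and crucially $D(z)\ge\kbeta>0$ everywhere and each $w_s\in[0,1]$. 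All three Lipschitz constants will come out of this plus a chain-rule step.

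\textbf{The bounds for $f_3$ and $f_2$.} I would write $f_3(z)=g_3(D(z))$ and $f_2(z)=g_2(D(z))$ with $g_3(u)=1/u$ and $g_2(u)=1/\sqrt{u}$ on $[\kbeta,\infty)$. On this interval $\abs{g_3'(u)}=u^{-2}\le\kbeta^{-2}$ and $\abs{g_2'(u)}=\tfrac12 u^{-3/2}\le\tfrac12\kbeta^{-3/2}$, so $g_3$ is $\kbeta^{-2}$-Lipschitz and $g_2$ is $\tfrac12\kbeta^{-3/2}$-Lipschitz there. Composing with the $(\spacekernelconstB t/\ksigma)$-Lipschitz map $D$ gives the claimed constants $\spacekernelconstB t/(\kbeta^2\ksigma)$ and $\spacekernelconstB t/(2\kbeta^{3/2}\ksigma)$.

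\textbf{The bound for $f_1$.} Here I would write $f_1=N/D$ with $N(z)=\sum_{s=1}^t w_s(z)\,a_s$ and split
\[
f_1(z)-f_1(y)=\frac{N(z)-N(y)}{D(z)}+N(y)\pa{\frac{1}{D(z)}-\frac{1}{D(y)}}.
\]
For the first term, $\abs{N(z)-N(y)}\le\norm{a}_\infty\sum_{s=1}^t\abs{w_s(z)-w_s(y)}\le\norm{a}_\infty(\spacekernelconstB t/\ksigma)\norm{z-y}_\infty$ and $D(z)\ge\kbeta$, so it is at most $\norm{a}_\infty\spacekernelconstB t/(\kbeta\ksigma)\norm{z-y}_\infty$. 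For the second term, $\abs{1/D(z)-1/D(y)}=\abs{D(z)-D(y)}/(D(z)D(y))\le(\spacekernelconstB t/\ksigma)\norm{z-y}_\infty/(\kbeta D(y))$, while $\abs{N(y)}\le\norm{a}_\infty\sum_{s=1}^t w_s(y)\le\norm{a}_\infty D(y)$, hence $\abs{N(y)}/D(y)\le\norm{a}_\infty$ and the second term is likewise bounded by $\norm{a}_\infty\spacekernelconstB t/(\kbeta\ksigma)\norm{z-y}_\infty$. Adding the two contributions yields the factor $2$ in the constant $2\spacekernelconstB\norm{a}_\infty t/(\kbeta\ksigma)$.

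\textbf{Main difficulty.} There is no deep obstacle: the only care needed is the quotient decomposition for $f_1$ and the bookkeeping that $\sum_s w_s(y)\le D(y)$ keeps the weighted average bounded by $\norm{a}_\infty$ uniformly in $y$; everything else is routine Lipschitz composition, and all estimates are uniform in $y,z$ because $D\ge\kbeta$ and $w_s\in[0,1]$.
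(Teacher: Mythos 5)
Your proposal is correct and follows essentially the same route as the paper: both reduce everything to the $\spacekernelconstB$-Lipschitz property of the kernel in its spatial argument, use the quotient decomposition $f_1(z)-f_1(y)=\frac{N(z)-N(y)}{D(z)}+N(y)\bigl(\frac{1}{D(z)}-\frac{1}{D(y)}\bigr)$ together with $\abs{N(y)}\le\norm{a}_\infty D(y)$ and $D\ge\kbeta$ for $f_1$, and handle $f_2,f_3$ by composing the Lipschitz denominator with $u\mapsto 1/\sqrt{u}$ and $u\mapsto 1/u$ on $[\kbeta,\infty)$. No gaps; the constants all match the statement.
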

\begin{proof}
	From Assumption \ref{assumption:kernel-properties}, the function $z \mapsto \kernel_{(\keta, \kW)}(t-s-1, z)$ is $\spacekernelconstB$-\lipschitz, which yields
	\begin{align*}
		& \abs{f_1(z)-f_1(y)} \\
		& \leq \abs{
		    \frac{ \sum_{s=1}^t \pa{\kernel_{(\keta, \kW)}\pa{t-s-1, z_s/\ksigma}-\kernel_{(\keta, \kW)}\pa{t-s-1, y_s/\ksigma}} a_s  }{\kbeta + \sum_{s=1}^t \kernel_{(\keta, \kW)}\pa{t-s-1, z_s/\ksigma} }
			 } \\
		&  	+\abs{\frac{ \sum_{s=1}^t \kernel_{(\keta, \kW)}\pa{t-s-1, y_s/\ksigma} a_s  }{\kbeta + \sum_{s=1}^t \kernel_{(\keta, \kW)}\pa{t-s-1, y_s/\ksigma}}}
			 \abs{ \frac{ \sum_{s=1}^t \pa{\kernel_{(\keta, \kW)}\pa{t-s-1, z_s/\ksigma}-\kernel_{(\keta, \kW)}\pa{t-s-1, y_s/\ksigma}} }{ \kbeta + \sum_{s=1}^t \kernel_{(\keta, \kW)}\pa{t-s-1, z_s/\ksigma} } } \\
		& \leq
			\frac{ \spacekernelconstB \sum_{s=1}^t (1/\ksigma)\abs{z_s-y_s }a_s  }{\kbeta + \sum_{s=1}^t \kernel_{(\keta, \kW)}\pa{t-s-1, z_s/\ksigma} }
			+  \norm{a}_\infty \frac{ \spacekernelconstB \sum_{s=1}^t (1/\ksigma)\abs{z_s-y_s } }{ \kbeta + \sum_{s=1}^t \kernel_{(\keta, \kW)}\pa{t-s-1, z_s/\ksigma} } \\
		& \leq \frac{2\spacekernelconstB \norm{a}_\infty t}{\kbeta\ksigma} \norm{z-y}_\infty.
	\end{align*}
	The proofs for $f_2$ and $f_3$ are analogous. For $f_2$, we also use the fact that the function $x \mapsto (1/\sqrt{\kbeta+x})$ is $1/(2\beta^{3/2})$-\lipschitz.
\end{proof}

\begin{lemma}
	\label{lemma:gaussian-case-weighted-average-and-bonuses-are-lipschitz}
	 Let $a \in \RR_+^t$ and $f_1, f_2, f_3$ be functions from  $\RR_+^t $ to $\RR$ defined as
	\begin{align*}
	& f_1(z) = \frac{\sum_{s=1}^t g(s) \exp\pa{ -z_s^2/ (2\ksigma^2)}  a_s}{\kbeta + \sum_{s=1}^t g(s) \exp\pa{ -z_s^2/ (2\ksigma^2)}}, \\
	& f_2(z) = \sqrt{\frac{1}{\kbeta + \sum_{s=1}^t g(s) \exp\pa{ -z_s^2/ (2\ksigma^2)}}},\\
	&  f_3(z) = \frac{1}{\kbeta + \sum_{s=1}^t g(s) \exp\pa{ -z_s^2/ (2\ksigma^2)}}
	\end{align*}
	where $g:\NN^*\mapsto [0, 1]$ is an arbitrary function bounded by $1$.
	Then, for any $y, z \in \RR_+ $, we have
	\begin{align*}
	& \abs{f_1(z)-f_1(y)} \leq \frac{ 4 \norm{a}_\infty}{\ksigma} \pa{1+\sqrt{\logplus(t/\kbeta)}} \norm{z-y}_\infty \\
	& \abs{f_2(z)-f_2(y)} \leq \frac{1}{2 \kbeta^{1/2}\ksigma}\pa{1+\sqrt{\logplus(t/\kbeta)}} \norm{z-y}_\infty \\
	& \abs{f_3(z)-f_3(y)} \leq \frac{1}{\kbeta \ksigma}\pa{1+\sqrt{\logplus(t/\kbeta)}} \norm{z-y}_\infty
	\end{align*}
\end{lemma}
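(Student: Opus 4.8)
The plan is to view each $f_i$ as a $C^1$ function of the vector $z\in\RR_+^t$, bound its $\norm{\cdot}_\infty$-Lipschitz constant by $\sup_{\xi\in\RR_+^t}\norm{\nabla f_i(\xi)}_1$, and conclude by integrating $\nabla f_i$ along the segment from $y$ to $z$, which stays in the convex set $\RR_+^t$. Introduce, for $\xi\in\RR_+^t$, the weights $w_s(\xi)\eqdef g(s)\exp\!\big(-\xi_s^2/(2\ksigma^2)\big)$, the regularized sum $N(\xi)\eqdef\kbeta+\sum_{s=1}^t w_s(\xi)$, and the normalized weights $\widetilde{w}_s(\xi)\eqdef w_s(\xi)/N(\xi)$, so that $f_1(\xi)=\big(\sum_s w_s(\xi)a_s\big)/N(\xi)$, $f_2(\xi)=N(\xi)^{-1/2}$ and $f_3(\xi)=N(\xi)^{-1}$. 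The only derivative needed is $\partial_{\xi_j}w_j(\xi)=-\tfrac{\xi_j}{\ksigma^2}\,w_j(\xi)$, whence $\partial_{\xi_j}N(\xi)=-\tfrac{\xi_j}{\ksigma^2}\,w_j(\xi)$.

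Applying the quotient rule and using $\big|\sum_s w_s(\xi)a_s\big|\le\norm{a}_\infty\sum_s w_s(\xi)\le\norm{a}_\infty N(\xi)$ together with $N(\xi)\ge\kbeta$, a routine computation gives, for every $\xi\in\RR_+^t$,
\begin{align*}
\abs{\partial_{\xi_j}f_1(\xi)}&\le\frac{2\norm{a}_\infty}{\ksigma^2}\,\xi_j\,\widetilde{w}_j(\xi),\\
\abs{\partial_{\xi_j}f_2(\xi)}&\le\frac{1}{2\ksigma^2\sqrt{N(\xi)}}\,\xi_j\,\widetilde{w}_j(\xi),\\
\abs{\partial_{\xi_j}f_3(\xi)}&\le\frac{1}{\ksigma^2 N(\xi)}\,\xi_j\,\widetilde{w}_j(\xi),
\end{align*}
so that, after bounding $N(\xi)\ge\kbeta$ in the last two, each $\norm{\nabla f_i(\xi)}_1$ is controlled by a constant times $\tfrac{1}{\ksigma^2}\sum_{j=1}^t\xi_j\,\widetilde{w}_j(\xi)$ with the prefactors appearing in the statement. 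The decisive step is then to bound $\sum_{j}\xi_j\,\widetilde{w}_j(\xi)$ by $O(\ksigma)$ rather than the crude $O(t\ksigma)$: since $w_s(\xi)\le\exp(-\xi_s^2/(2\ksigma^2))$ and $w_s(\xi)\in[0,1]$, the family $\big(w_s(\xi)\big)_s$ satisfies exactly the Gaussian-domination and boundedness properties used in the proof of Lemma~\ref{lemma:kernel-bias} (there with constant $\spacekernelconstA$, here with constant $1$ and with the distances $z_s$ replaced by $\xi_s$); that argument applies verbatim and yields $\sum_{j=1}^t\xi_j\,\widetilde{w}_j(\xi)\le 2\ksigma\big(1+\sqrt{\logplus(t/\kbeta)}\big)$, uniformly over $\xi\in\RR_+^t$. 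Substituting this into the three bounds above and using $\abs{f_i(z)-f_i(y)}\le\big(\sup_\xi\norm{\nabla f_i(\xi)}_1\big)\norm{z-y}_\infty$ gives the claimed Lipschitz constants (exactly the comparison made in the text with Lemma~\ref{lemma:weighted-average-and-bonuses-are-lipschitz}, where the spurious factor $t$ came only from a crude termwise bound).

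The main obstacle is precisely this middle step: noticing that, once differentiated, the quantity one has to sum is a genuine weighted average of the $\xi_j$'s with respect to the normalized weights $\widetilde{w}_s$, so that the regularized ``kernel bias'' estimate of Lemma~\ref{lemma:kernel-bias} applies directly. The naive termwise bound $\xi_j e^{-\xi_j^2/(2\ksigma^2)}/\ksigma^2\le e^{-1/2}/\ksigma$ would instead produce a factor $t$ and recover only Lemma~\ref{lemma:weighted-average-and-bonuses-are-lipschitz}; it is the presence of $\kbeta$ in the denominator of $\widetilde{w}_s$ that makes the kernel-bias argument go through and removes the linear dependence on $t$. The remaining work is the routine quotient-rule bookkeeping already carried out for Lemma~\ref{lemma:weighted-average-and-bonuses-are-lipschitz}, with only the numerical constants (and their interplay with the factor $2$ of Lemma~\ref{lemma:kernel-bias}) to be tracked.
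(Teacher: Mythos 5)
Your proposal is correct and follows essentially the same route as the paper: bound $\abs{f_i(z)-f_i(y)}$ by $\sup_{\xi}\norm{\nabla f_i(\xi)}_1\,\norm{z-y}_\infty$, note that each $\abs{\partial_{\xi_j}f_i(\xi)}$ is controlled by $\xi_j\widetilde{w}_j(\xi)/\ksigma^2$ times the appropriate prefactor, and apply the kernel-bias estimate (Lemma~\ref{lemma:kernel-bias}, here with $\spacekernelconstA=1$) to the weighted average $\sum_j\xi_j\widetilde{w}_j(\xi)$, exactly as the paper does for $f_1$. The only caveat --- which the paper shares, since it details only $f_1$ and calls $f_2,f_3$ ``analogous'' --- is that this argument literally yields $\tfrac{1+\sqrt{\logplus(t/\kbeta)}}{\kbeta^{1/2}\ksigma}$ and $\tfrac{2\left(1+\sqrt{\logplus(t/\kbeta)}\right)}{\kbeta\ksigma}$ for $f_2$ and $f_3$, i.e.\ twice the stated constants, a harmless factor given that these constants are absorbed by $\lesssim$ downstream.
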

\begin{proof}
	We use the fact that, for any differentiable $f: \RR_+^t \to \RR$,
	$$\abs{f(x)-f(y)} \leq \sup_{z\in\RR_+^t} \norm{\nabla f(z)}_1 \norm{x-y}_\infty.$$
	We have
	\begin{align*}
		\abs{ \frac{\partial f_1(z)}{\partial z_i}  } \leq  \frac{2\norm{a}_\infty}{\ksigma^2} \frac{ g(i) z_i \exp( -z_i^2/(2\ksigma^2) ) }{ \kbeta + \sum_{s=1}^t g(s) z_s \exp( -z_s^2/(2\ksigma^2) }
	\end{align*}
	which implies, by Lemma \ref{lemma:kernel-bias},
	\begin{align*}
		\norm{\nabla f_1(z)}_1
		 & \leq  \frac{2\norm{a}_\infty}{\ksigma^2} \frac{ \sum_{i=1}^t g(i) z_i \exp( -z_i^2/(2\ksigma^2) ) }{ \kbeta + \sum_{s=1}^t g(s) z_s \exp( -z_s^2/(2\ksigma^2) } \\
		 & \leq \frac{2\norm{a}_\infty}{\ksigma^2} 2\ksigma \pa{1 + \sqrt{\logplus (t/\kbeta)}}
		 = \frac{ 4 \norm{a}_\infty}{\ksigma} \pa{1+\sqrt{\logplus(t/\kbeta)}}.
	\end{align*}
	The proofs for $f_2$ and $f_3$ are analogous.
\end{proof}

\begin{lemma}[value functions are \lipschitz continuous]
	\label{lemma:value-functions-are-lipschitz}
	Under Assumptions \ref{assumption:metric-state-space} and \ref{assumption:lipschitz-rewards-and-transitions}, for all $(k, h)$, the functions $\trueV_{k,h}^*$ and $\trueQ_{k,h}^*$ are $L_h$-\lipschitz, where $L_h \eqdef \sum_{h'=h}^H \lipR\lipP^{H-h'}$.
\end{lemma}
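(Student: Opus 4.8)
The plan is to prove both claims simultaneously by backward induction on $h$, establishing the strengthened statement that $\trueQ_{k,h}^*$ is $L_h$-Lipschitz with respect to $\distfunc$ and $\trueV_{k,h}^*$ is $L_h$-Lipschitz with respect to $\Sdistfunc$. The base case is $h = H+1$: since $\trueV_{k,H+1}^* \equiv 0$ and $L_{H+1} = 0$ (empty sum), both functions are trivially $0$-Lipschitz.

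For the inductive step, I would assume $\trueV_{k,h+1}^*$ is $L_{h+1}$-Lipschitz with respect to $\Sdistfunc$. Using the Bellman equation $\trueQ_{k,h}^*(x,a) = \reward_h^k(x,a) + \transition_h^k \trueV_{k,h+1}^*(x,a)$, bound for any $(x,a),(x',a')$
\begin{align*}
\abs{\trueQ_{k,h}^*(x,a) - \trueQ_{k,h}^*(x',a')} \leq \abs{\reward_h^k(x,a) - \reward_h^k(x',a')} + \abs{\transition_h^k\trueV_{k,h+1}^*(x,a) - \transition_h^k\trueV_{k,h+1}^*(x',a')}.
\end{align*}
The first term is at most $\lipR\dist{(x,a),(x',a')}$ by Assumption~\ref{assumption:lipschitz-rewards-and-transitions}. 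For the second term, since $\trueV_{k,h+1}^*/L_{h+1}$ is $1$-Lipschitz with respect to $\Sdistfunc$ when $L_{h+1}>0$ (and $\trueV_{k,h+1}^*$ is constant, so the term vanishes, when $L_{h+1}=0$), the definition of $\Wassdist{\cdot,\cdot}$ yields $\abs{\transition_h^k\trueV_{k,h+1}^*(x,a) - \transition_h^k\trueV_{k,h+1}^*(x',a')} \leq L_{h+1}\Wassdist{\transition_h^k(\cdot|x,a),\transition_h^k(\cdot|x',a')} \leq L_{h+1}\lipP\dist{(x,a),(x',a')}$, again by Assumption~\ref{assumption:lipschitz-rewards-and-transitions}. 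Hence $\trueQ_{k,h}^*$ is $(\lipR + \lipP L_{h+1})$-Lipschitz, and an index shift gives $\lipR + \lipP L_{h+1} = \lipR + \sum_{h'=h+1}^H \lipR\lipP^{H-h'+1} = \sum_{h'=h}^H \lipR\lipP^{H-h'} = L_h$.

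To pass from $\trueQ_{k,h}^*$ to $\trueV_{k,h}^*$, use $\trueV_{k,h}^*(x) = \max_{a\in\actionspace}\trueQ_{k,h}^*(x,a)$ together with $\abs{\max_a f(a) - \max_a g(a)} \leq \max_a\abs{f(a)-g(a)}$, so $\abs{\trueV_{k,h}^*(x)-\trueV_{k,h}^*(x')} \leq \max_a\abs{\trueQ_{k,h}^*(x,a)-\trueQ_{k,h}^*(x',a)} \leq L_h\max_a\dist{(x,a),(x',a)} \leq L_h\Sdist{x,x'}$, the last step being Assumption~\ref{assumption:metric-state-space}. This closes the induction and gives the claimed bound; in particular, combining with the definition of $\lipQ$ recovers $\lipQ \leq \sum_{h=1}^H \lipR\lipP^{H-h}$ as stated in Assumption~\ref{assumption:q-function-is-lipschitz}.

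There is no serious obstacle here: this is the routine Lipschitz-propagation-through-the-Bellman-operator induction. The only points needing minor care are carrying the stronger induction hypothesis that $\trueV$ is Lipschitz with respect to $\Sdistfunc$ (so that Assumption~\ref{assumption:metric-state-space} applies when reducing $\trueV$ to $\trueQ$), handling the degenerate case $L_{h+1}=0$ in the Wasserstein step, and the index bookkeeping in the identity $\lipR + \lipP L_{h+1} = L_h$.
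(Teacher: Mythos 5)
Your proof is correct and follows essentially the same argument as the paper: backward induction through the Bellman equation, bounding the reward term by $\lipR$-Lipschitzness and the transition term via the $1$-Wasserstein distance applied to the normalized value function, then passing to $\trueV_{k,h}^*$ with the max-over-actions inequality and Assumption~\ref{assumption:metric-state-space}. The only differences are cosmetic (you start the induction at $h=H+1$ with $L_{H+1}=0$ rather than at $h=H$, and you explicitly treat the degenerate case $L_{h+1}=0$, which the paper glosses over).
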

\begin{proof}
	This fact is proved in Lemma 4 of \cite{domingues2020} and also in Proposition 2.5 of \cite{Sinclair2019}. For completeness, we also present a proof here.

	We proceed by induction. For $h = H$, $\trueQ_{k, H}^*(x, a) = \reward_H^k(x, a)$ which is $\lipR$-\lipschitz by Assumption \ref{assumption:lipschitz-rewards-and-transitions}. Also,
	\begin{align}
		\label{eq:aux-V-is-lipschitz}
		\trueV_{k, H}^*(x) - \trueV_{k, H}^*(y)
		& = \max_a \trueQ_{k, H}^*(x, a) - \max_a\trueQ_{k, H}^*(y, a)
		 \leq \max_a \pa{ \trueQ_{k, H}^*(x, a) - \trueQ_{k, H}^*(y, a) } \\
		& \leq \max_a L_H \dist{(x, a), (y, a)}
		 \leq L_H \Sdist{x, y}, \quad \text{by Assumption \ref{assumption:metric-state-space}}
	\end{align}
	which verifies the induction hypothesis for $h=H$, since we can invert the roles of $x$ and $y$ to obtain $\abs{\trueV_{k, H}^*(x) - \trueV_{k, H}^*(y)} \leq L_H \Sdist{x, y}$.

	Now, assume that the hypothesis is true for $h+1$, \ie that $\trueV_{k,h+1}^*$ and $\trueQ_{k, h+1}^*$ are $L_{h+1}$-\lipschitz. We have
	\begin{align*}
	\trueQ_{k, h}^*(x, a) - \trueQ_{k, h}^*(x', a') &\leq \lipR\dist{(x, a), (x',a')} + \int_{\statespace}\trueV_{k, h+1}^*(y)(P_h(\mathrm{d}y|x,a) - P_h(\mathrm{d}y|x',a')) \\
	&\leq \lipR\dist{(x, a), (x',a')} + L_{h+1}\int_{\statespace}\frac{\trueV_{k, h+1}^*(y)}{L_{h+1}}(P_h(\mathrm{d}y|x,a) - P_h(\mathrm{d}y|x',a')) \\
	& \leq \left[ \lipR + \lipP \sum_{h'= h+1}^H \lipR \lipP^{H-h'} \right]\dist{(x, a), (x',a')} \\
	&  = \sum_{h'= h}^H \lipR \lipP^{H-h'}\dist{(x, a), (x',a')}
	\end{align*}
	where, in last inequality, we use fact that $\trueV_{k, h+1}^*/L_{h+1}$ is $1$-Lipschitz, the definition of the 1-Wasserstein distance and Assumption \ref{assumption:lipschitz-rewards-and-transitions}. The same argument used in Eq. \ref{eq:aux-V-is-lipschitz} shows that $\abs{\trueV_{k, h}^*(x) - \trueV_{k, h}^*(y)} \leq L_h \Sdist{x, y}$, which concludes the proof.
\end{proof}

\begin{fact}[small useful result]
	\label{fact:small-useful-result-with-min}
	\begin{align*}
	\abs{ \min(a, b) - \min(a, c)} \leq \abs{b - c}
	\end{align*}
\end{fact}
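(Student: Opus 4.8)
The plan is to reduce to a single case by symmetry and then split on which argument attains each minimum. Without loss of generality assume $\min(a,b) \geq \min(a,c)$, so that the left-hand side equals $\min(a,b) - \min(a,c)$; the opposite case follows by exchanging the roles of $b$ and $c$, which leaves $\abs{b-c}$ unchanged. Now distinguish according to whether $\min(a,c)$ equals $a$ or $c$. If $\min(a,c) = a$, then $\min(a,b) \leq a = \min(a,c)$, which together with the standing assumption forces $\min(a,b) = \min(a,c)$, so the left-hand side is $0 \leq \abs{b-c}$. If instead $\min(a,c) = c$, then using $\min(a,b) \leq b$ we get $\min(a,b) - \min(a,c) \leq b - c \leq \abs{b-c}$.

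An alternative, slicker route I could take is to write $\min(a,b) = \tfrac{1}{2}\bigl(a + b - \abs{a-b}\bigr)$ and likewise for $\min(a,c)$, so that
\[
\min(a,b) - \min(a,c) = \tfrac{1}{2}(b-c) - \tfrac{1}{2}\bigl(\abs{a-b} - \abs{a-c}\bigr),
\]
and then invoke the reverse triangle inequality $\bigl|\abs{a-b} - \abs{a-c}\bigr| \leq \abs{b-c}$ to conclude $\abs{\min(a,b) - \min(a,c)} \leq \tfrac{1}{2}\abs{b-c} + \tfrac{1}{2}\abs{b-c} = \abs{b-c}$. I would probably present the case-based argument since it is the most elementary and needs no auxiliary identity.

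There is essentially no obstacle here: the statement is just the $1$-Lipschitz continuity of the map $x \mapsto \min(a,x)$, and the only mild care needed is handling the sign of $\min(a,b) - \min(a,c)$, which the symmetry reduction disposes of cleanly.
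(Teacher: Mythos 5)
Your proposal is correct. Your preferred argument (reduce by symmetry to $\min(a,b)\geq\min(a,c)$, then split on whether $\min(a,c)=a$ or $\min(a,c)=c$) is a genuinely different and more elementary route than the paper's: the paper instead uses the identity $\min(x,y)=\tfrac{1}{2}(x+y)-\tfrac{1}{2}\abs{x-y}$ together with the triangle inequality $\abs{a-c}\leq\abs{a-b}+\abs{b-c}$, which is exactly the "alternative, slicker route" you sketch (your reverse-triangle-inequality bound is the same estimate the paper applies after a symmetry step). Both arguments are complete: the case analysis needs no auxiliary identity and makes transparent that the only thing happening is $1$-Lipschitzness of $x\mapsto\min(a,x)$, while the identity-based proof is shorter on the page and avoids case bookkeeping; in your case $\min(a,c)=c$, note that the chain $\min(a,b)-\min(a,c)\leq b-c\leq\abs{b-c}$ is valid as written since the last inequality holds for any real $b-c$, so no sign discussion is needed there either.
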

\begin{proof}
	Since $\min(x, y) = (x+y)/2 - \abs{x-y}/2$, we have
	\begin{align*}
	\min(a, b) - \min(a, c) 
	& = \frac{a+b}{2} - \frac{\abs{a-b}}{2}  - \frac{a+c}{2} + \frac{\abs{a-c}}{2}  \\
	& = \frac{b-c}{2}  + \frac{\abs{a-c} - \abs{a-b}}{2}  \\
	& \leq \frac{b-c}{2}  + \frac{\abs{b-c}}{2} \\
	& \leq \abs{b-c}
	\end{align*}
	where we used the fact that $\abs{a-c} \leq \abs{a-b} + \abs{b-c}$. By symmetry, $\min(a, c) - \min(a, b) \leq \abs{b-c}$.
	which gives us the result.
\end{proof}
\newpage

\section{Experiments}
\label{app:experiments}
\subsection{Setup}

We consider a continuous MDP whose state-space is the unit ball in $\RR^2$ with four actions, representing a move to the right, left, up or down. Each action results in a displacement of $0.1$ in the corresponding direction, plus a Gaussian noise, in both directions, of standard variation $0.01$. The agent starts at $(0, 0)$. Let $b_i^k \in \braces{0, 0.25, 0.5, 0.75, 1}$ and $x_i \in \braces{(0.8, 0.0), (0.0,0.8),(-0.8, 0.0), (0.0,-0.8)}$. We consider the following mean reward function:
\begin{align*}
\trueR_h^k(x,a) = \sum_{i=1}^4 b_i^k \max\pa{0, 1 - \frac{\norm{x-x_i}_2}{0.5}} 
\end{align*}
Every $N$ episodes, the coefficients $b_i^k$ are changed, which impact the optimal policy.

Taking $\keta = \exp(-(1/N)^{2/3})$, we tested the Gaussian kernel $\fullkernel(t, u, v) = \keta^t \exp\pa{-\dist{u, v}^2/(2\ksigma^2)}$ and a higher-order kernel $\fullkernel(t, u, v) = \keta^t \exp\pa{-(\dist{u, v}/\ksigma)^4/2}$. We set $\ksigma = 0.05$, $\maxdist=\Smaxdist=0.1$, $\kbeta=0.01$, $H=15$.

We ran the experiment with horizon $H=15$ for $2\times10^4$ episodes. Every $N$ episodes, the coefficients $b_i^k$ were changed, according to Table \ref{tab:experiment-reward-info}.

\begin{table}[ht!]
	\centering
	\caption{Value of $b_{i}^k$ according to $x_i$ and the episode $k$ }
	\label{tab:experiment-reward-info}
	\begin{tabular}{@{}l|llll@{}}
		\toprule
		episode / $x_i$            & $(0.8, 0.0)$ & $(0.0, 0.8)$ & $(-0.8, 0.0)$ & $(0.0, -0.8)$ \\ \midrule
		$\lfloor k / N\rfloor \mod 4 = 0$     &     $1/4$    &     $0$      &     $0$     &  $0$       \\
		$\lfloor k / N\rfloor \mod 4 = 1$  &     $1/4$    &     $1/2$    &     $0$     &  $0$       \\
		$\lfloor k / N\rfloor \mod 4 = 2$  &     $1/4$    &     $1/2$    &     $3/4$   &  $0$       \\
		$\lfloor k / N\rfloor \mod 4 = 3$ &     $1/4$    &     $1/2$    &     $3/4$   &  $1$       \\ \bottomrule
	\end{tabular}
\end{table}

We took $\kbeta = 0.01$ and used the following simplified exploration bonuses:
\begin{align}
\bonus_h^k(x, a) =  \frac{0.1}{\sqrt{\gencount_h^k(x, a)}} + \frac{\kbeta H}{\gencount_h^k(x, a)}
\end{align}
where the factor $0.1$ was chosen in order to ensure that the baseline is able to learn a good policy in less than $1000$ episodes, \ie before there is a change in the environment.

Additionally, to take into account the fact that the \lipschitz constant is rarely known in practical problems, we replaced the interpolation step (line 8 of Alg. \ref{alg:RS-kernel_backward_induction-ONLINE}) by a nearest-neighbor search in the representative states:
\begin{align*}
\kernsQ_{h}^k(x, a) = \tQ_{h, \mapzeta}^k(x', a'),
\quad\text{where}\quad
(x', a') = \argmin_{(\bx, \ba)\in\ReprStates_{h}^k\times\ReprActions_{h}^k}\dist{(x, a), (\bx, \ba)}.
\end{align*}

\subsection{Results}

Figures~\ref{fig:app-experiments-gaussian} and \ref{fig:app-experiments-ord4} show the total reward and the regret of \RSkerns compared to baselines for the two choices of kernel function (Gaussian and 4-th order kernel), for 3 different values of $\variationMDPtotal$, which is determined by the period $N$ of changes in the MDP (the reward changes every $N$ episodes).  

In all experiments we observe that \kernelucbvi is not able to adapt to the changes in the environment, whereas \RSkerns is able to track the behavior of the baseline \RestartBaseline which knows when the changes happen and resets the reward estimator when there is a change. 

\begin{figure}[ht!]
	\centering
	\includegraphics[width=0.3\textwidth]{./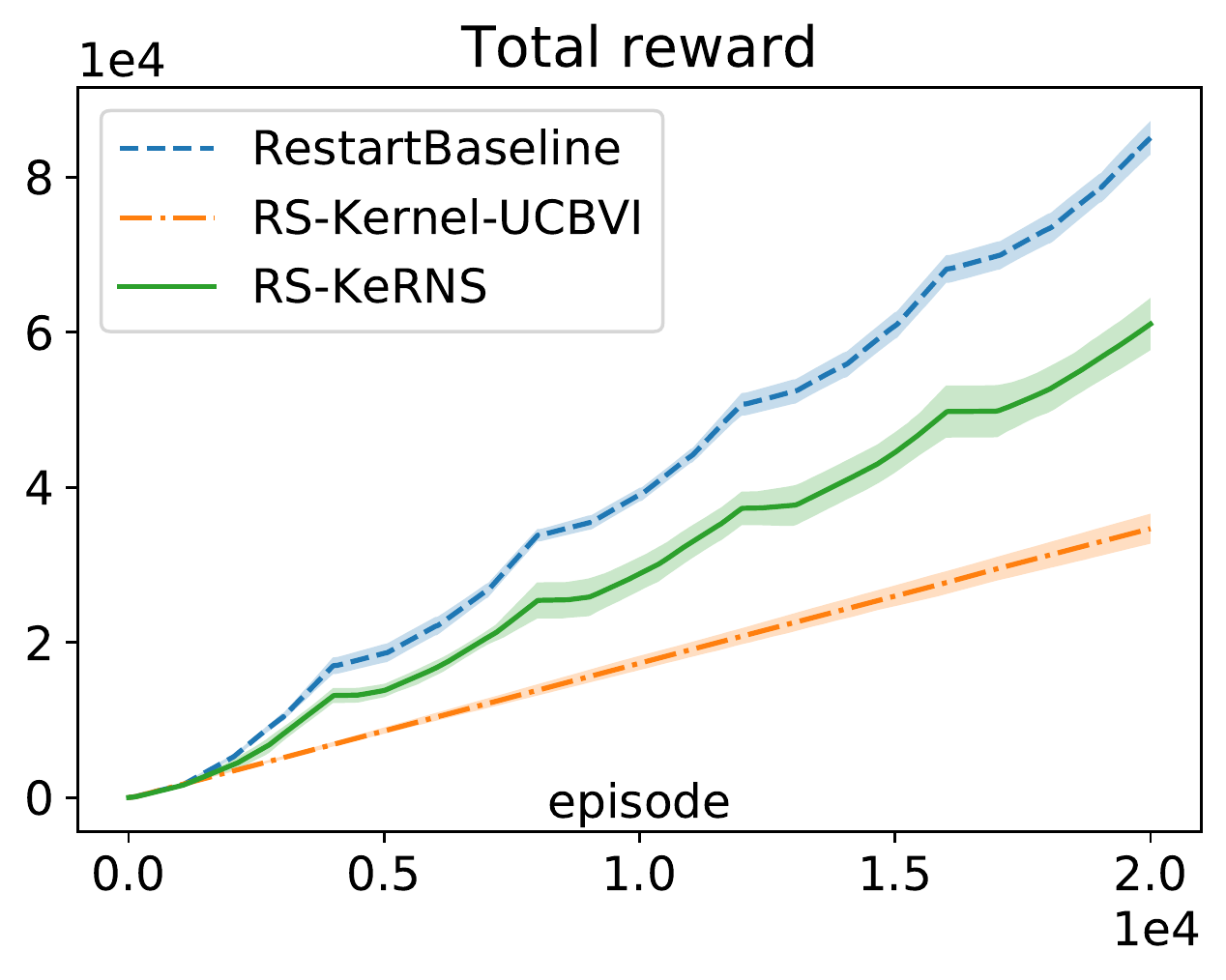}
	\includegraphics[width=0.3\textwidth]{./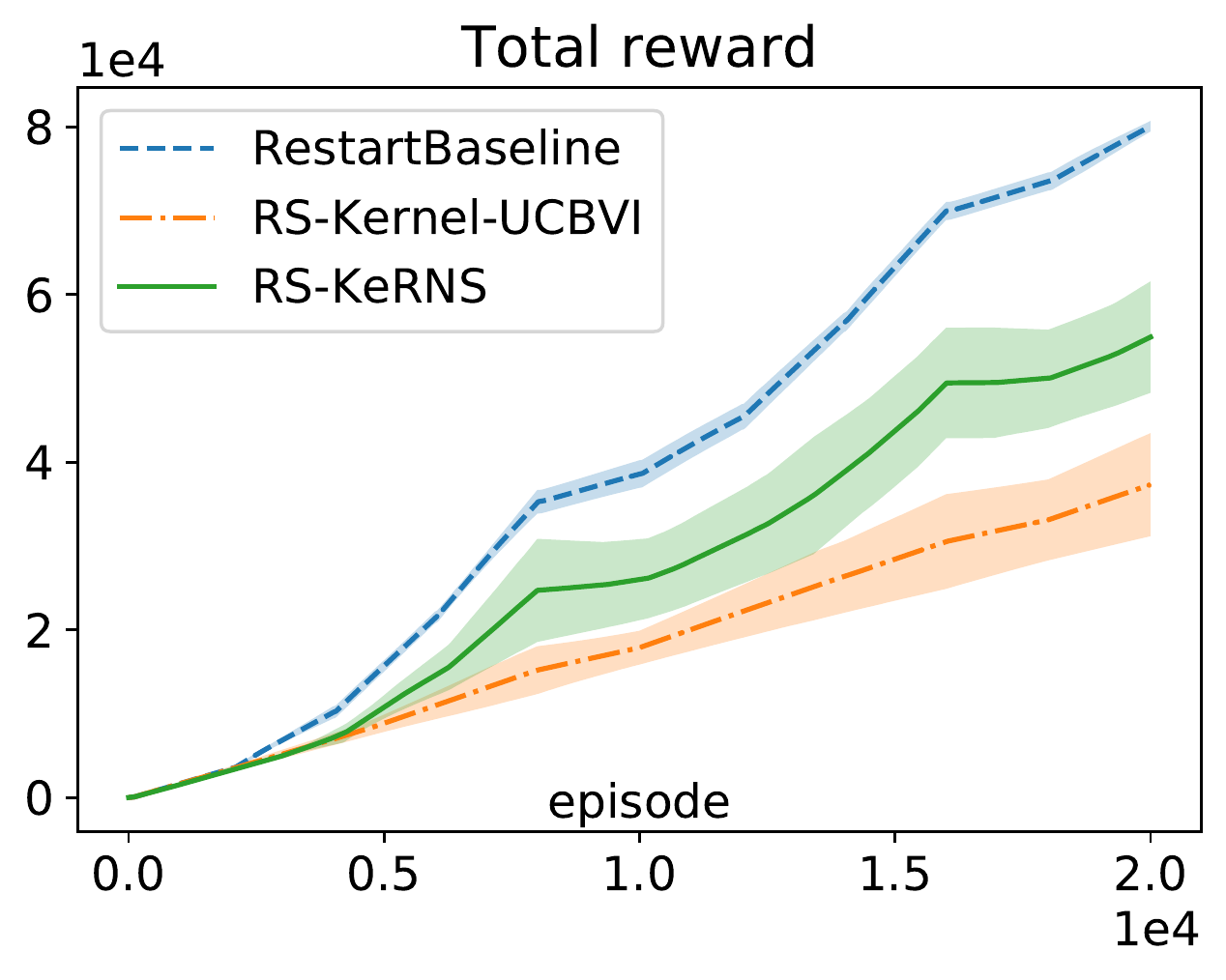}
	\includegraphics[width=0.3\textwidth]{./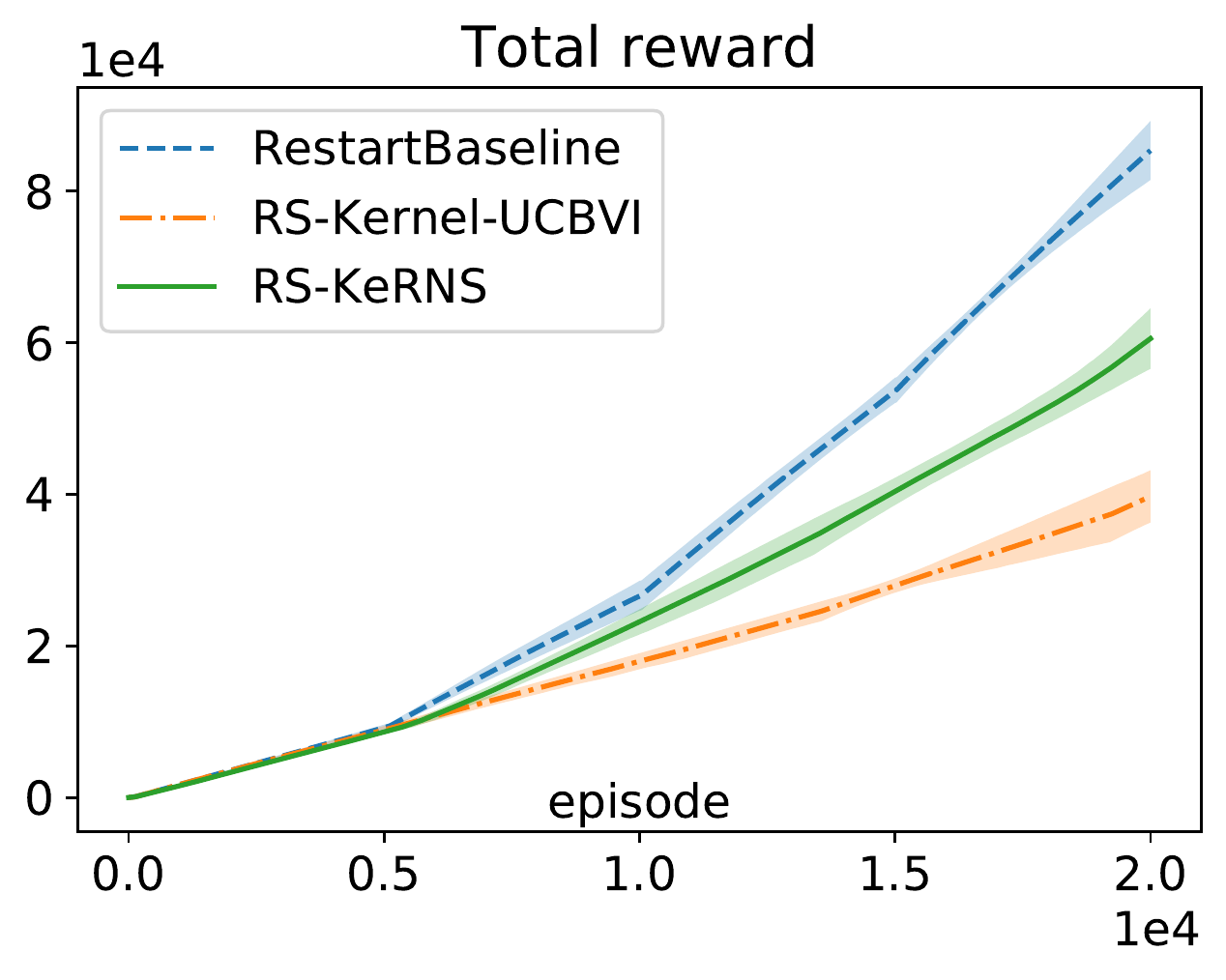}\hfill\vfill
	\includegraphics[width=0.3\textwidth]{./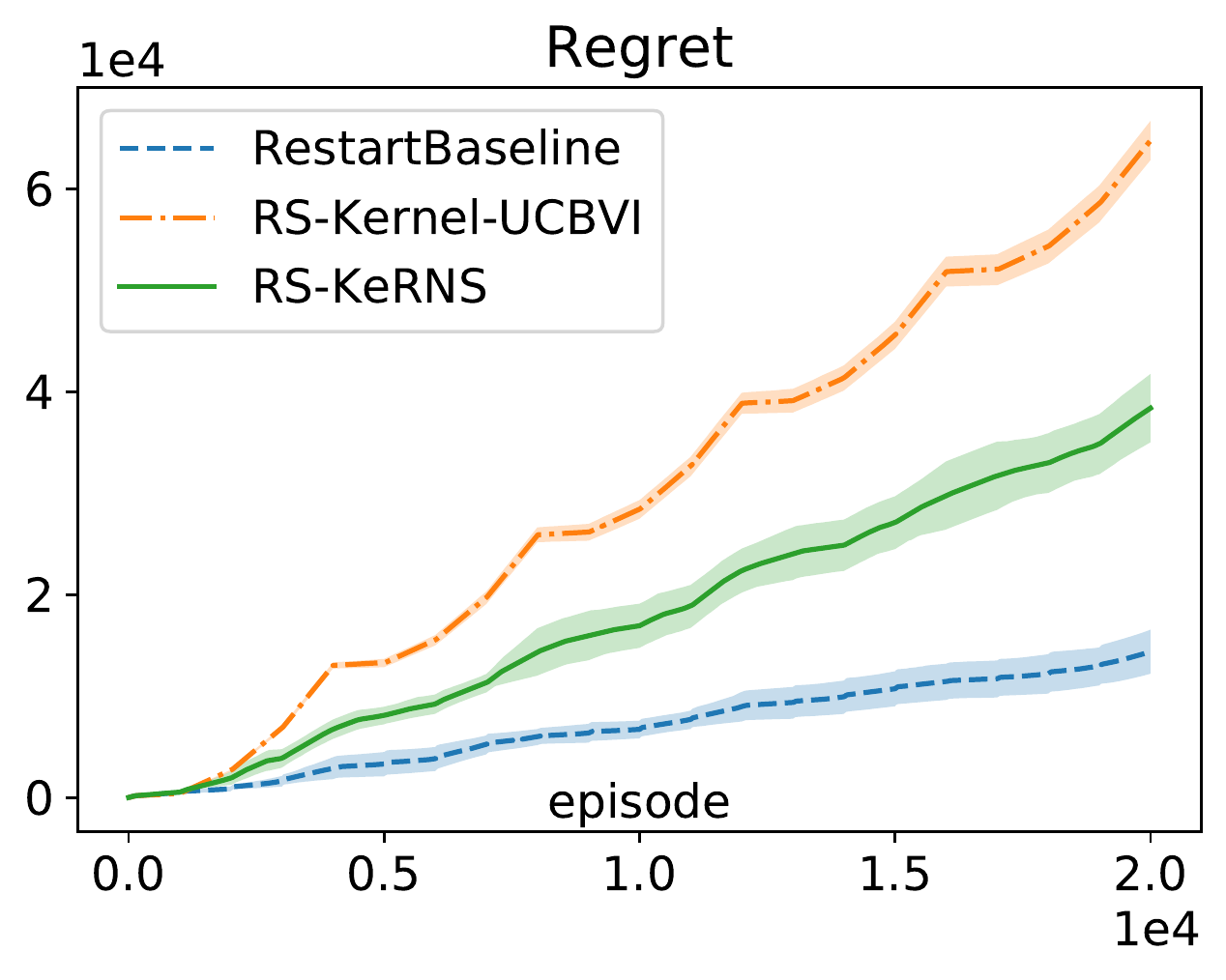}
	\includegraphics[width=0.3\textwidth]{./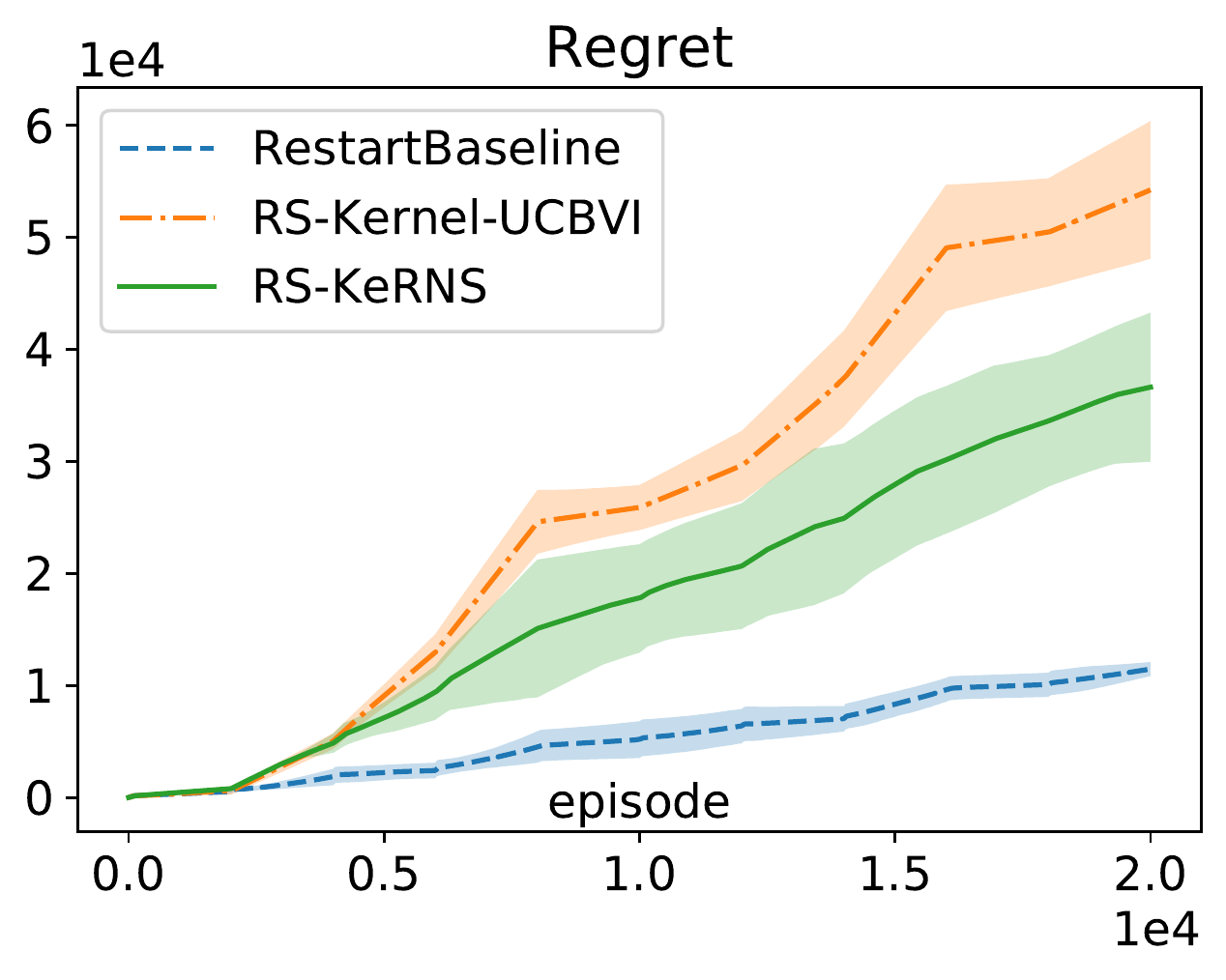}
	\includegraphics[width=0.3\textwidth]{./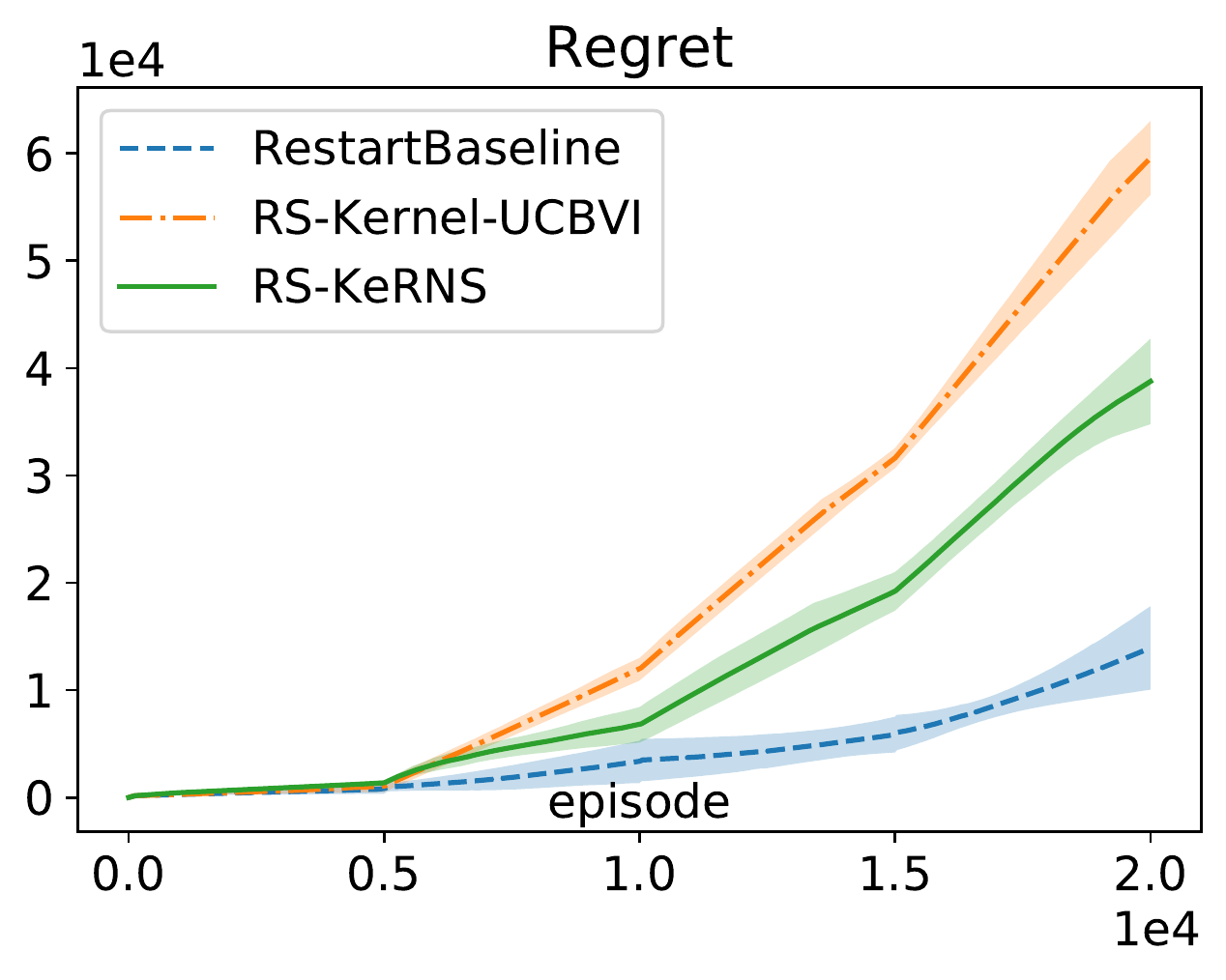}
	\caption{Total reward (top row) and regret(bottom row) of \RSkerns compared to baselines, using the Gaussian kernel $\fullkernel(t, u, v) = \keta^t \exp\pa{-\dist{u, v}^2/(2\ksigma^2)}$. The figures on the left, in the middle, and on the right correspond to $N=1000$, $N=2000$ and $N=5000$, respectively, where $N$ is the period of the changes in the MDP. Average over 4 runs.}
	\label{fig:app-experiments-gaussian}
\end{figure}

\begin{figure}[ht!]
	\centering
	\includegraphics[width=0.3\textwidth]{./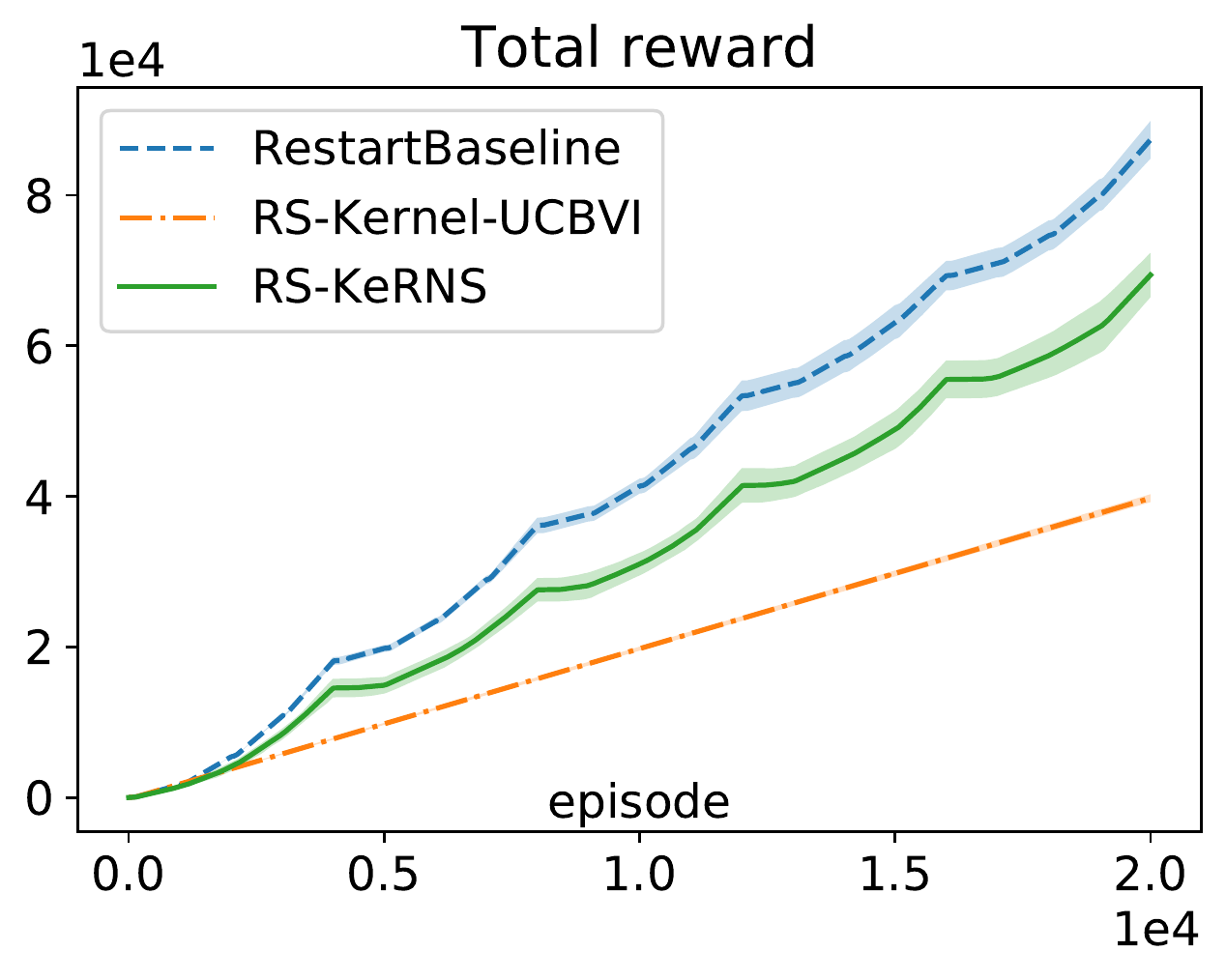}
	\includegraphics[width=0.3\textwidth]{./source_arxiv_march_2022/figures/N2000_ORD4_total_reward.pdf}
	\includegraphics[width=0.3\textwidth]{./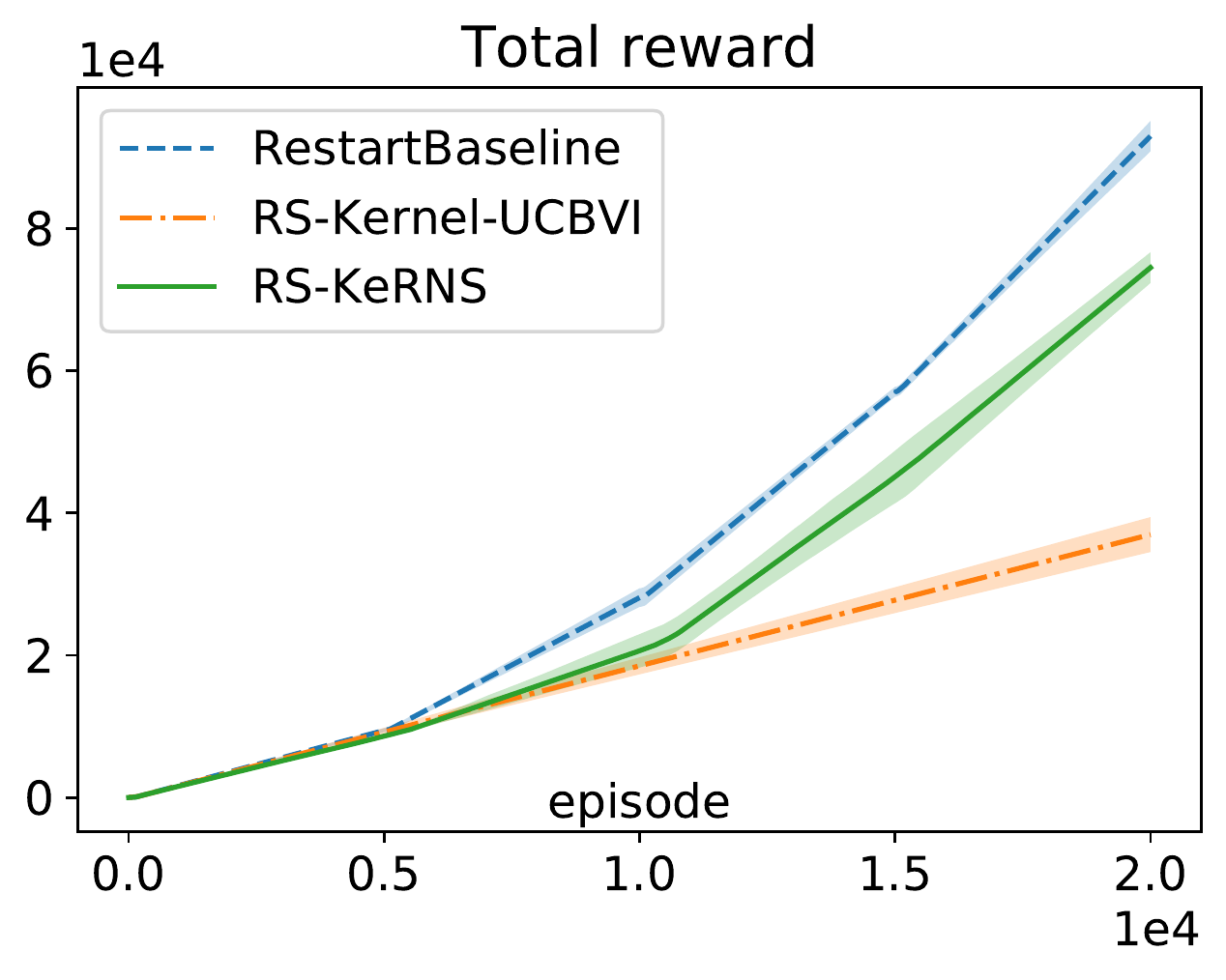}\hfill\vfill
	\includegraphics[width=0.3\textwidth]{./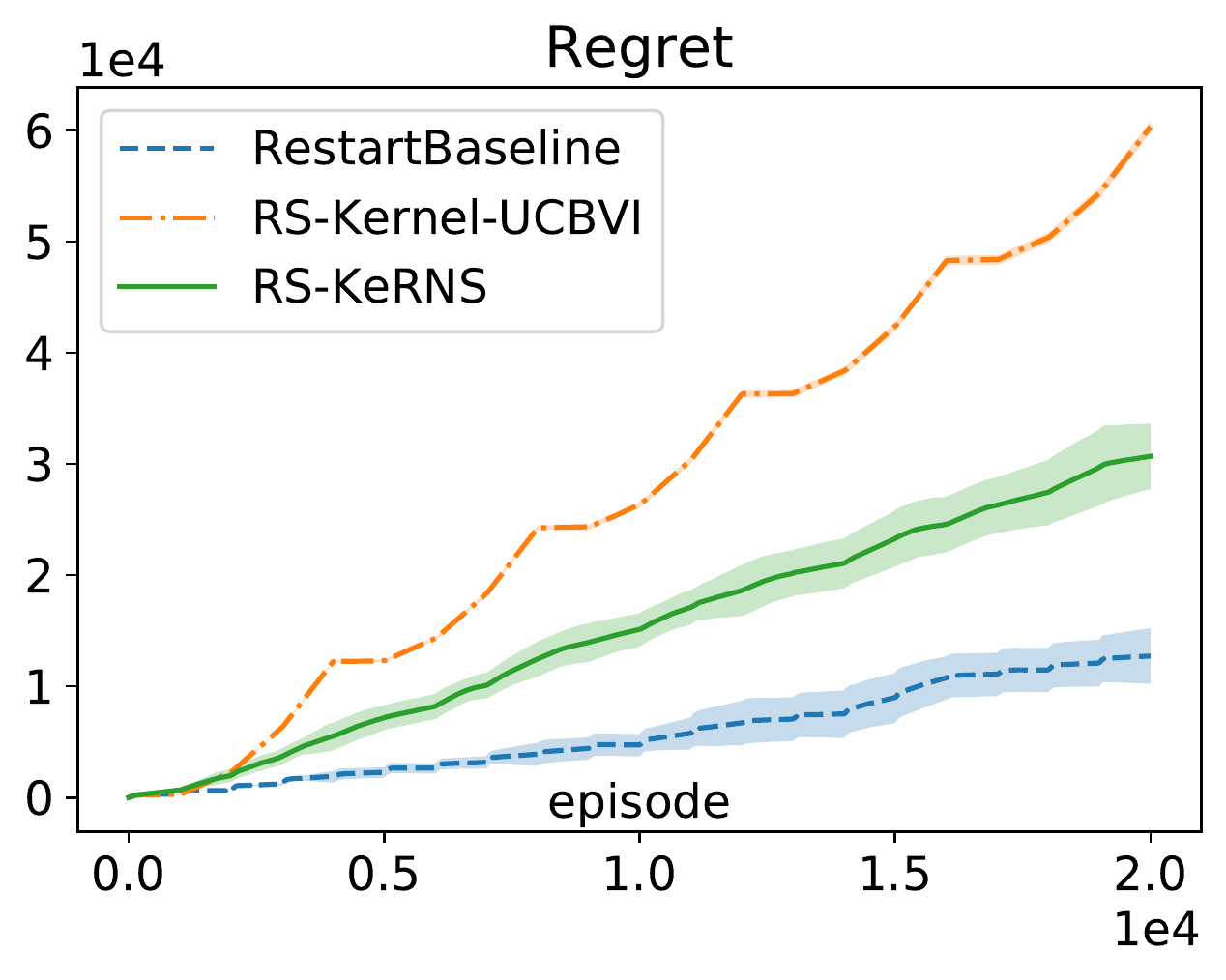}
	\includegraphics[width=0.3\textwidth]{./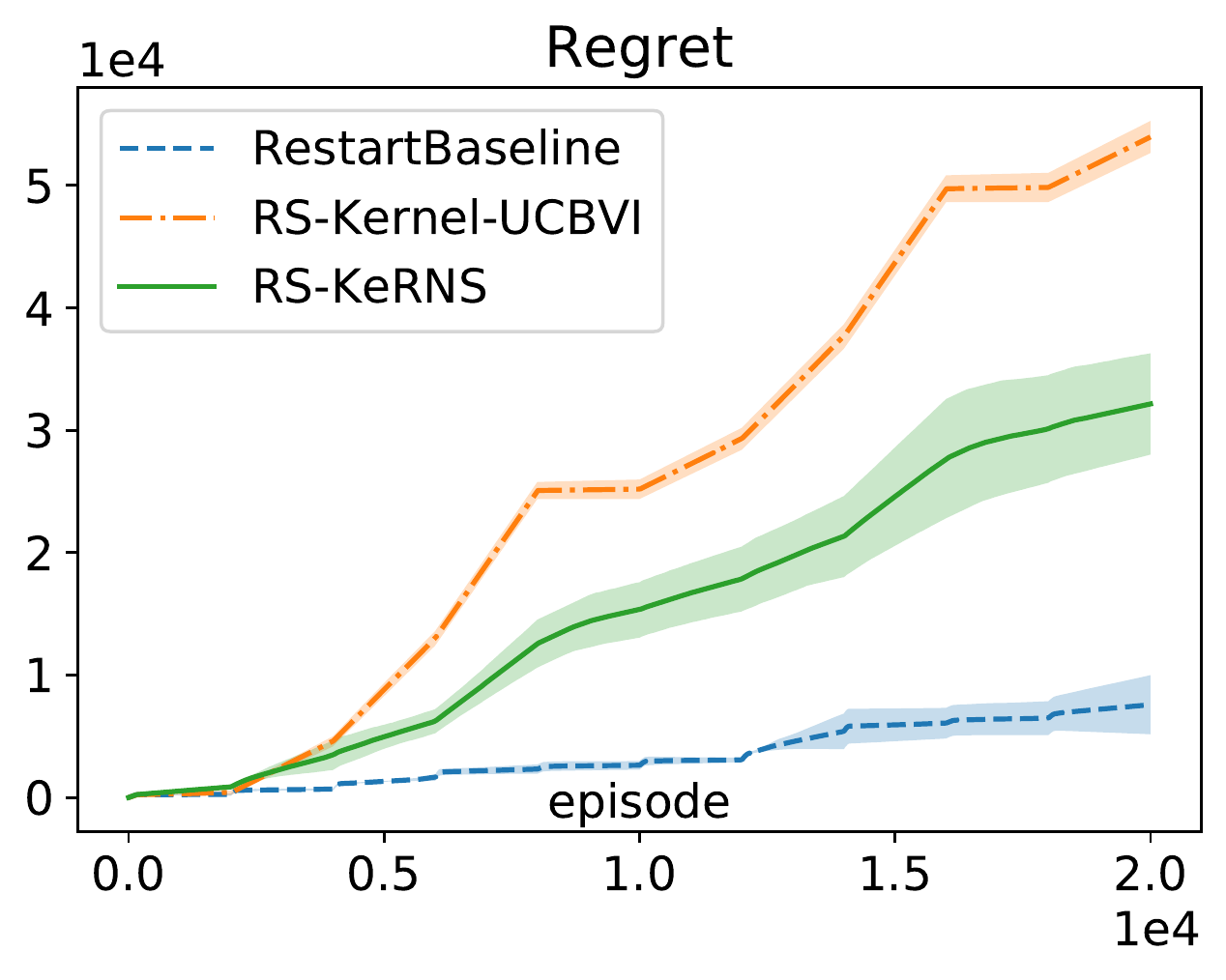}
	\includegraphics[width=0.3\textwidth]{./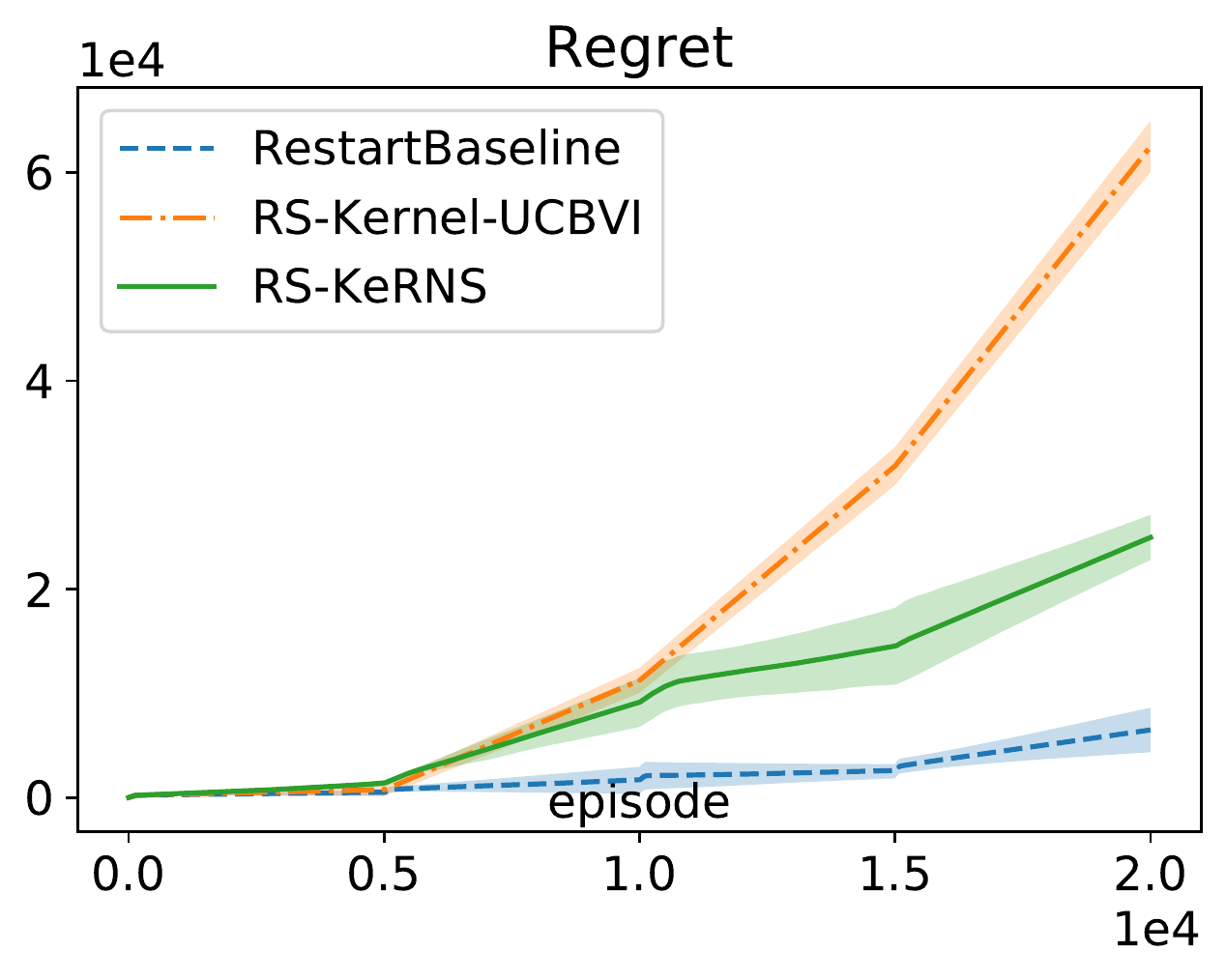}
	\caption{Total reward (top row) and regret(bottom row) of \RSkerns compared to baselines, using the kernel $\fullkernel(t, u, v) = \keta^t \exp\pa{-(\dist{u, v}/\ksigma)^4/2}$. The figures on the left, in the middle, and on the right correspond to $N=1000$, $N=2000$ and $N=5000$, respectively, where $N$ is the period of the changes in the MDP. Average over 4 runs.}
	\label{fig:app-experiments-ord4}
\end{figure}

\end{document}